\newcommand{\diff}[1]{{\textcolor{black}{#1}}}
\newtheorem{lem}{Lemma}
\newtheorem{thm}{Theorem}
\newtheorem{prop}{Proposition}
\newtheorem{cor}{Corollary}
\newtheorem{assump}{Assumption}
\newtheorem{definition}{Definition}
\DeclareMathOperator*{\argmax}{\mathop{\rm argmax}}
\DeclareMathOperator*{\argmin}{\mathop{\rm argmin}}
\DeclarePairedDelimiterX{\KL}[2]{\mathrm{KL}[}{]}{#1\;\delimsize\|\;#2}
\DeclarePairedDelimiterX\braket[2]{\langle}{\rangle}{#1 \delimsize\vert #2}
\newcommand{\const}{\mathrm{const.}}
\newcommand{\med}{\mathrm{med}}
\newcommand{\IID}{i.i.d.\ }
\def \Re {\mathbb{R}}
\def \Na {\mathbb{N}}
\newcommand{\restrict}[1]{\raisebox{-.5ex}{$|$}_{#1}}
\newcommand{\ReD}{\Re^\Dimension{}}
\def \Rgeqzero {\mathbb{R}_{\geq 0}}
\def \colinfty {{\mathrm{col},\infty}}
\newcommand{\colinftynrm}[1]{\|#1\|_\colinfty}
\def \Indicator {\mathds{1}}
\newcommand{\Ind}[1]{\Indicator\{#1\}}
\def \sumin {\sum_{i=1}^n}
\def \sumizn {\sum_{i=0}^n}
\def \sumjm {\sum_{j=1}^m}
\def \thetap {\theta'}
\def \dthp {\mathrm{d}\thetap}
\newtheorem{theorem}{Theorem}
\newtheorem{remark}{Remark}
\newcommand{\Dimension}{d}
\newcommand{\Ynum}{m}
\def \xn {X^n}
\def \xz {X_0}
\newcommand{\xncontami}[1]{X^n_{[#1]}}
\def \xzn {\xncontami{\xz}}
\def \Ym {Y^\Ynum{}}
\def \Yone {Y_1}
\def \Yj {Y_j}
\def \dYj {\mathrm{d}Y_j}
\def \ReDm {\Re^{\Ynum{} \times \Dimension{}}}
\def \Pth {P_\theta}
\def \Pthp {P_{\thetap}}
\def \Fth {F_\theta}
\def \Fthp {F_{\thetap}}
\def \Gth {G_\theta}
\def \dGth {\mathrm{d}\Gth^m}
\def \dGthY {\mathrm{d}\Gth^m(\Ym)}
\def \dGthYone {\mathrm{d}\Gth(\Yone)}
\def \dGthYj {\mathrm{d}\Gth(\Yj)}
\def \prodm {\prod_{j=1}^m}
\DeclarePairedDelimiterX{\infdivx}[2]{(}{)}{%
  #1\;\delimsize\|\;#2%
}
\def \DivMark {{\widehat D}}
\newcommand{\Div}{\DivMark\infdivx}
\newcommand{\Divznm}{\Div{\xzn}{\Ym}}
\newcommand{\Divnm}{\Div{X^{n}}{\Ym}}
\newcommand{\Dznm}{\DivMark^{n, m}_{[\xz]}}
\newcommand{\Dzjnm}{\DivMark^{n, m}_{[\xzj]}}
\newcommand{\Dnm}{\DivMark^{n, m}}
\def \cbar {\bar c}
\def \limz {\lim_{\|\xz\| \to \infty}}
\def \PseudoPosterior {{\hat \pi}}
\def \SCn {\mathrm{SC}_{n+1}^{\theta}}
\def \Hn {\Lambda_n}
\def \H {\Lambda}
\newcommand{\rhoki}{\rho_k(i)}
\newcommand{\nuki}{\nu_k(i)}
\def \rhokdi {\rho_k^d(i)}
\def \nukdi {\nu_k^d(i)}
\def \rhokz {\rho_k(0)}
\def \nukz {\nu_k(0)}
\def \rhokzinvdg {\rho_k^{-\Dimension{}\gamma}(0)}
\def \nukzinvdg {\nu_k^{-\Dimension{}\gamma}(0)}
\def \Aone {A_1}
\def \Atwo {A_2}
\newcommand{\tepsilon}{\tilde \epsilon}
\newcommand{\xzdummyj}{X_j'}
\newcommand{\xzj}{\xzdummyj}
\newcommand{\limj}{\lim_{j \to \infty}}
\newcommand{\alphafn}[2]{\eta_{#1}(#2)}
\newcommand{\Deltafn}[3]{\Delta_{#1, #2, #3}}
\newcommand{\alphafnthep}{\alphafn{\theta}{\epsilon}}
\newcommand{\Deltafnthep}{\Deltafn{\theta}{\tepsilon}{\epsilon}}
\newcommand{\Nullset}{\mathcal{Z}}
\newcommand{\mukj}{\mu_k(j)}
\newcommand{\nukdigammainv}{(\nu_k(i))^{-\gamma \Dimension{}}}
\newcommand{\mukdjgammainv}{(\mu_k(j))^{-\gamma \Dimension{}}}
\newcommand{\Uc}{U^\mathrm{c}}
\newcommand{\fl}{f_l}
\newcommand{\Ul}{U_l}
\begin{document}
\runningtitle{$\gamma$-ABC: Outlier-Robust Approximate Bayesian Computation Based on a Robust Divergence Estimator}
%

%

\twocolumn[

\aistatstitle{$\gamma$-ABC: Outlier-Robust Approximate Bayesian Computation\newline{} Based on a Robust Divergence Estimator}

\aistatsauthor{ Masahiro Fujisawa \And Takeshi Teshima \And  Issei Sato \And Masashi Sugiyama }

\aistatsaddress{ The University of Tokyo \\ RIKEN AIP \\ fujisawa@ms.k.u-tokyo.ac.jp \And The University of Tokyo \\ RIKEN AIP \\ teshima@ms.k.u-tokyo.ac.jp \And The University of Tokyo \\ RIKEN AIP \\ sato@k.u-tokyo.ac.jp \And RIKEN AIP \\ The University of Tokyo \\ sugi@k.u-tokyo.ac.jp} ]

\begin{abstract}
Approximate Bayesian computation (ABC) is a likelihood-free inference
method that has been employed in various applications. 
\diff{However, ABC can be sensitive to outliers if a data discrepancy measure is chosen inappropriately.}
In this paper, we propose to use a
nearest-neighbor-based $\gamma$-divergence estimator as a data
discrepancy measure. We show that our estimator possesses a suitable theoretical
robustness property called the \emph{redescending} property. In
addition, our estimator enjoys various desirable properties such as
high flexibility, asymptotic unbiasedness, almost sure convergence,
and linear-time computational complexity.
Through experiments, we demonstrate that our method achieves
significantly higher robustness than existing discrepancy measures.
\end{abstract}

\section{Introduction}
\label{intro}
Approximate Bayesian computation (ABC) has been proposed as a ``likelihood-free'' inference scheme to approximately perform Bayesian inference when a complex model is used and it is impossible or difficult to compute its likelihood (see \cite{Marin12} for a general overview).
Instead of investigating the explicit form of the likelihood function, \diff{ABC seeks parameters of a simulator-based model that can generate data that is close to the observed data under some discrepancy measure.}
ABC has been applied to many research fields, e.g., evolutionary biology~\cite{Tavare97}, dynamic systems~\cite{Simon10}, economics~\cite{Peters12}, epidemiology~\cite{Blum20}. aeronautics~\cite{Jason17}, and astronomy~\cite{Cameron12}.

Rejection ABC~\cite{Tavare97,Pritchard99,Jiang18}, the basic ABC algorithm, proceeds as follows: (i) we draw an independent sample of the parameter $\theta$ from some prior $\pi$,
(ii) we simulate data $Y^m = \{Y_{j}\}_{1:m}$ for each value of $\theta$,
(iii) the parameter $\theta$ is discarded if the discrepancy $D(X^{n},Y^{m})$ between the observed data $X^{n} = \{X_{i}\}_{1:n}$ and the simulated data $Y^m = \{Y_{j}\}_{1:m}$ exceeds a tolerance threshold $\epsilon$. 
The accepted $\theta$ is used in the subsequent inference as a sample from an approximation to the posterior distribution called the \emph{ABC posterior distribution}.
Many studies have been performed to enhance the computational efficiency of the rejection ABC scheme, e.g., applying Markov Chain Monte Carlo (MCMC)~\cite{Marjoram03,Wegmann09} or sequential Monte Carlo (SMC)~\cite{Sisson09,Drovandi11,Pierre12}.
\begin{table*}[t]
\centering
\caption{Relationship between previous work and our work (\emph{OR}: Outlier robustness, \emph{RP}: Redescending property~\cite{MaronnaRobust2019}, \emph{AU}: Asymptotically unbiasedness, \emph{ASC}: Almost sure convergence, \emph{QA}: ABC posterior analysis).
In MONK~\cite{Lerasle19}, the number of blocks that divide the data is denoted by $Q$.
The order of time costs for the $q$-Wasserstein distance is based on approximate optimization algorithms~\cite{Cuturi13,cuturi14}.
The order of time costs for the CAD is based on logistic regression, where $d$ is the dimension of the observed and synthesized data.
The symbol $(n \lor m)$ denotes $\max \{n, m \}$.
}
\label{Related work}
\scalebox{1}{
\begin{tabular}{l|cccccc}
\hline
\multicolumn{1}{c}{Discrepancy measure}                & \begin{tabular}[c]{@{}c@{}}OR\end{tabular} & \begin{tabular}[c]{@{}c@{}}RP\end{tabular} & \begin{tabular}[c]{@{}c@{}}AU\end{tabular} & \begin{tabular}[c]{@{}c@{}}ASC\end{tabular} & \begin{tabular}[c]{@{}c@{}}QA\end{tabular} & \begin{tabular}[c]{@{}c@{}}Time cost\end{tabular} \\ \hline \hline
MMD~\cite{Park16,smola07}                                      & -                                                            & -                                                               & -                                                                 & -                                                                 & -                                                                  & $\mathcal{O}((n+m)^{2})$                            \\
$q$-Wasserstein
distance~\cite{Bernton17}                 & -                                                            & -                                                               & -                                                                 & -                                                                 & -                                                                  & $\mathcal{O}((n+m)^{2})$                            \\
CAD~\cite{Gutmann18}                                & -                                                            & -                                                               & -                                                                 & -                                                                 & $\checkmark$                                                       & $\mathcal{O}((n+m)d)$                                  \\
MONK BCD-Fast
~\cite{Lerasle19}               & $\checkmark$                                                 & -                                                               & -                                                                 & -                                                                 & -                                                                  & $\mathcal{O}
\bigg(\frac{(n+m)^{3}}{Q^{2}}\bigg)$              \\
KL-divergence estimator~\cite{PrezCruz08,Jiang18}                  & -                                                            & -                                                               & $\checkmark$                                                      & $\checkmark$                                                      & $\checkmark$                                                       & $\mathcal{O}((n \lor m) \log (n \lor m))$           \\
\textbf{$\gamma$-divergence estimator (ours)} & $\checkmark$                                                 & $\checkmark$                                                    & $\checkmark$                                                      & $\checkmark$                                                      & $\checkmark$                                                       & $\mathcal{O}((n \lor m) \log (n \lor m))$          
\end{tabular}
}
\end{table*}

The core element of ABC is the data discrepancy measure $D(X^{n},Y^{m})$ and the \diff{accuracy of parameters} from the ABC posterior distribution crucially depends on its choice.
While many discrepancies for ABC have been proposed, such as the distance between summary statistics~\cite{blum13,Drovandi15,Wegmann09}, the maximum mean discrepancy (MMD)~\cite{Park16}, the Wasserstein distance~\cite{Bernton17}, a Kullback-Leibler (KL) divergence estimator~\cite{Jiang18}, and the classification accuracy discrepancy (CAD)~\cite{Gutmann18}, these are often not robust to severe contamination of data~\cite{Ruli20,Lerasle19,staerman20}.
Recently, two outlier-robust discrepancies have been proposed:
one is a robust discrepancy based on MMD~\cite{Lerasle19}, and the other is using a robust M-estimator, e.g., Huber's estimation function~\cite{Ruli20}.
However, these methods do not possess an ideal robust property for an extreme outlier, called the \textit{redescending property}~\cite{MaronnaRobust2019}.
In addition, the former method has the cubic time cost $\mathcal{O}((n + m)^{3}/Q^{2})$, where $Q$ is the number of blocks that divide the data.
When ABC is applied to astronomy~\cite{Kremer17,Cameron12}, for example, we have to deal with noisy large-scale datasets, and the cubic time cost can be intractable.
For the CAD, we may improve the robustness of the CAD by employing a robust classifier, such as robust LDA~\cite{croux01}, robust FDA~\cite{kim06}, or robust logistic regression~\cite{feng14}; however, its performance depends on the choice of the classifier, and its validity and robustness for heavy contamination data are not guaranteed in the ABC framework.
\diff{Outside the ABC framework, recently, many robust inference schemes have been proposed, e.g., using robust divergences for \emph{parametric} model inference~\cite{Basu98,Fujisawa08}, Bayesian inference~\cite{Knoblauch18,Jewson18,Nakagawa20}, variational inference~\cite{futami18}, \diff{and} constructing Bayesian inference through a pseudo-likelihood via MMD~\cite{Cherief20}.
Unfortunately, these methods cannot be used as a data discrepancy measure for ABC because these assume a \emph{tractable} likelihood or \emph{parametric} models.}
Therefore, there is no discrepancy measure that has both \emph{well-guaranteed} robustness for an extreme outlier and reasonable time costs.

In this paper, we propose a novel outlier-robust and computationally-efficient discrepancy measure based on the $\gamma$-divergence~\cite{Fujisawa08}.
Our discrepancy measure results in a robustness property of the ABC posterior called the \emph{redescending property}~\cite{MaronnaRobust2019}, i.e., it automatically ignores extreme outliers in the observed data.
Furthermore, we show that the $\gamma$-divergence estimation using a naive $k$-nearest neighbor density estimate has desirable asymptotic properties, which is not straightforward to prove unlike divergence estimators in $f$-divergence class, such as $\alpha$-divergence estimator~\cite{poczos11a}.
Table~\ref{Related work} summarizes the relations among our method and major existing discrepancy measures.
Our contributions are as follows.
\begin{itemize}
\setlength{\parskip}{-0.2cm}
\setlength{\itemsep}{0.15cm}
    \item We construct a non-parametric and robust divergence estimator based on the $\gamma$-divergence (Section \ref{subsec:gamma_div_estimator_derivation}).
    \item We show that our method theoretically enjoys the robustness and validity (Sections \ref{sec:robust_analysis} and \ref{sec:asymp_quasi_posterior}).
    \item \diff{We show that our method has the asymptotic unbiasedness and the almost sure convergence property, which are mathematically much harder to show than those for the divergence estimators belonging to the $f$-divergence family (Section \ref{subsec:asymp_analysis}).}
    \item Through experiments, we show that our estimator can significantly reduce the influence of the outliers even when the observed data have a large number of outliers (Section \ref{sec:experiments}).
\end{itemize}

The rest of this paper is organized as follows.
We summarize the ABC framework in Section \ref{sec:ABC}.
In Sections~\ref{sec:robust_div_est} and \ref{sec:asm_anals}, we introduce the $k$-nearest neighborhood ($k$-NN) based density estimation, explain how to derive our divergence estimator based on $k$-NN, and conduct the theoretical analyses.
Finally, we show the experimental results and the conclusion in Sections~\ref{sec:experiments} and \ref{sec:conclusion}.

\section{Preliminaries}
\label{sec:ABC}
In this section, we give an overview of ABC.
More detailed descriptions of the discrepancy measures which are often used in ABC can be found in Appendix~\ref{app:detail_discrepancy}.

\subsection{Approximate Bayesian Computation}
\diff{We define $\mathcal{X} \subset \mathbb{R}^{d}$ as the data space and $\Theta$ as the parameter space.}
The model $\{p_{\theta}: \theta \in \Theta \}$ is a family of probability distributions on $\mathcal{X}$ and has no explicit formula, but we assume that we can generate \IID random samples from $p_\theta$ given the value of $\theta$.
The purpose of ABC is to seek the model parameter $\theta$ by comparing the observed data $ X_{1},\ldots,X_{n}\overset{\textrm{i.i.d.}}{\sim} p_{\theta^{*}}$ and the synthetic data $ Y_{1},\ldots,Y_{m} \overset{\textrm{i.i.d.}}{\sim} p_{\theta}$, where $\theta^*\in\Theta$ is the true parameter.
The criterion used to compare these datasets $X^n=\{X_1,\ldots,X_n\}$ and $Y^m=\{Y_1,\ldots,Y_m\}$ is the data discrepancy measure $D(X^{n},Y^{m})$ \diff{defined over $\mathcal{X}^{n} \times \mathcal{X}^{m}$.}

\begin{algorithm}[t]                      
                \caption{Rejection ABC Algorithm~\cite{Tavare97,Pritchard99}}
                \label{alg:reject_abc}
                \footnotesize
                \begin{algorithmic}[1]
                        \REQUIRE {Observed data $\{X_{i}\}_{i=1}^{n}$, prior $\pi(\theta)$ on the parameter space $\Theta$, tolerance threshold $\epsilon$, data discrepancy measure $D$}
                        \STATE \textbf{Initialize:} $\epsilon$
                        \FOR {$t=1$ to $T$}
                        \STATE \textbf{repeat:} propose $\theta \sim \pi(\theta)$ and draw $Y_{1} \ldots,Y_{m} \overset{\textrm{i.i.d.}}{\sim} p_{\theta}$
                        \STATE \textbf{until:} $D(X^{n}, Y^{m}) < \epsilon$
                        \STATE Obtain $\theta^{(t)} = \theta$
                        \ENDFOR
                        \RETURN $\{ \theta^{(t)} \}_{t=1}^{T}$
                \end{algorithmic}
\end{algorithm}

\diff{A well known algorithm of ABC is the rejection ABC~\cite{Tavare97,Pritchard99,Jiang18}, which proceeds as follows: (i) we draw an independent sample of the parameter $\theta$ from some prior $\pi$,
(ii) we simulate data $Y^m = \{Y_{j}\}_{1:m}$ for each value of $\theta$,
(iii) the parameter $\theta$ is discarded if the discrepancy $D(X^{n},Y^{m})$ between the observed data $X^{n} = \{X_{i}\}_{1:n}$ and the simulated data $Y^m = \{Y_{j}\}_{1:m}$ exceeds a tolerance threshold $\epsilon$. 
}
The rejection ABC algorithm is shown in Algorithm~\ref{alg:reject_abc}.
It enables us to obtain \IID random samples $\{\theta^{(t)} \}_{t=1}^{T}$ from the ABC posterior distribution defined as follows.
\begin{definition}[ABC posterior distribution]
\diff{Let $\epsilon$ be fixed.}
Then, the ABC posterior distribution is defined by

\footnotesize
\begin{align}
\label{eq:quasi_posterior}
    \pi(\theta&|X^{n},D, \epsilon) 
    \propto \int \pi(\theta) \Ind{D(X^{n},Y^{m}) < \epsilon}p_{\theta}(Y^{m}) \mathrm{d}Y^{m},
\end{align}
\normalsize
where $\pi(\theta)$ is a prior over the parameter space $\Theta$, $\epsilon > 0$ is a tolerance threshold, and $p_{\theta}(Y^{m}) = \prod_{j=1}^{m}p_{\theta}(Y_{j})$.
\end{definition}

We only focus on the rejection ABC~\cite{Tavare97,Pritchard99,Jiang18} throughout this paper.
\diff{The main reason is two-fold: (i) to make a fair comparison and followed the experimental setting of the recent paper~\cite{Jiang18} proposing discrepancy measures for ABC, and (ii) to give theoretical guarantees to the ABC posterior explicitly, e.g., Theorem~\ref{thm:redescending-sensitivity-curve} and Corollary~\ref{cor:asymp_quasi_posterior_gamma} in this paper.
The rejection ABC is a reasonable choice for this purpose.}
\diff{While there are many sophisticated ABC algorithms, they are often extensions of the rejection ABC~\cite{Marjoram03,Wegmann09,Sisson09,Drovandi11,Pierre12}; therefore, we can easily combine our development of the rejection ABC with these algorithms.}

\subsection{Model of Data Contamination}
\label{subsec:def_contami}
In this paper, \diff{we assume that the \emph{observed} data are contaminated with outliers} and focus on Huber's contamination-by-outlier case~\cite{Huber64}, where observed data are sampled \IID from the following distribution:
\begin{align}
\label{def_huber_contami}
    (1-\eta)G(x) + \eta H(x),
\end{align}
where $G(x)$ is a distribution we are interested in, $H(x)$ is an arbitrary contamination distribution, and $\eta \in [0, 1]$ is the proportion of contamination.
If $\eta$ is relatively high, e.g., $\eta=0.2$, the observed data are highly contaminated by $H$.

Due to distribution contamination described above, severe bias occurs in parameter estimation.
Many robust estimation methods have been proposed to reduce the estimation bias caused by outliers~\cite{Huber64,Huber09,windham95,Basu98}.
However, these methods tend to exhibit undesirable behaviors both empirically and theoretically, for heavily contaminated data \cite{Fujisawa08}.
Furthermore, for \emph{non-parametric} inference schemes such as ABC, many of such robust inference frameworks cannot be used because they normally assume that the likelihood is tractable.
Although \citet{Lerasle19} and \citet{Ruli20} have proposed robust discrepancy measures that can be compatible with non-parametric inference, these methods also cannot deal with a heavy contamination and the former method has high time costs.
To conduct a robust non-parametric inference for heavily contaminated data, it is necessary to construct an alternative discrepancy measure with both robustness for an extreme outlier and reasonable time costs.

\section{\texorpdfstring{$\gamma$}{Lg}-ABC and Its Robustness}
\label{sec:robust_div_est}
In this section, we construct a \emph{non-parametric} ``likelihood-free'' inference scheme based on the $\gamma$-divergence that has been used in robust parameter estimation from heavily contaminated data.
In Section \ref{subsec:derivation_gamma}, we introduce the $\gamma$-divergence and explain why we choose a $k$-NN based density estimation to derive our estimator.
Next, we overview a $k$-NN based density estimation in Section \ref{subsec:knn_density_est} and derive our estimator in Section \ref{subsec:gamma_div_estimator_derivation}.
Finally, we guarantee the robustness of the ABC based on our estimator in Section \ref{sec:robust_analysis}.

\subsection{\texorpdfstring{$\gamma$}{Lg}-divergence and Its Estimation}
\label{subsec:derivation_gamma}
To make a robust parameter estimation in the heavily contamination situation described in Section \ref{subsec:def_contami}, \citet{Fujisawa08} proposed the \emph{$\gamma$-divergence}, which possesses strong robustness for heavily contaminated data.
\begin{definition}[$\gamma$-divergence~\cite{Fujisawa08}]
\label{def:gamma_div}
Let $p$ and $q$ be positive measurable functions from
a measurable set $\mathcal{M}_0 \subseteq \mathbb{R}^d$ to $\mathbb{R}$.
Let $\gamma > 0$.
Then, the $\gamma$-divergence is defined as
\small
\begin{align}
\label{gamma_div}
    D_{\gamma}&(p\|q) \notag \\
    &= \frac{1}{\gamma(1+\gamma)} \log \frac{\bigg( \int_{\mathcal{M}_{0}} p^{1+\gamma}(x) \mathrm{d}x \bigg) \bigg( \int_{\mathcal{M}_{0}} q^{1+\gamma}(x) \mathrm{d}x \bigg)^{\gamma} }{\bigg( \int_{\mathcal{M}_{0}} p(x)q^{\gamma}(x)\mathrm{d}x \bigg)^{1+\gamma}},
\end{align}
\normalsize
\end{definition}
To combine the $\gamma$-divergence with ABC, we need to estimate Eq.~\eqref{gamma_div} from observed and synthesized data.
\diff{A potential approach we can consider} to estimating Eq.~\eqref{gamma_div} is extending $f$-divergence estimation frameworks, e.g., kernel density estimation (KDE)~\cite{Hardle06} and direct density ratio estimation methods such as KLIEP~\cite{sugi08} or uLSIF~\cite{kanamori09}.
However, the former method suffers from high time cost due to a kernel function and the necessity to select appropriate kernels and its parameters.
Furthermore, the latter methods need to construct a model directly for a density ratio; therefore, it is hard to use this approach in the estimation of Eq.~\eqref{gamma_div} since $\gamma$-divergence, which is not included in the $f$-divergence class, is not expressed as a functional of the density ratio.

For these reasons, we consider using a $k$-NN based density estimation to estimate $\gamma$-divergence.
This approach has only one hyper-parameter, $k$.
Furthermore, this approach does not depend on any additional models.

\subsection{\texorpdfstring{$k$}{Lg}-Nearest Neighbor based Density Estimation}
\label{subsec:knn_density_est}

Let $X^{n}$ be an \IID sample of size $n$ from a probability distribution with density $p$, and $Y^{m}$ be an \IID sample of size $m$ from $q$.
Furthermore, we define $\rho_{k}(i)$ as the Euclidean distance between the $i$-th sample $X_i$ of $X^{n}$ and its $k$-th nearest neighbor ($k$-NN) among $X^{n}\setminus X_{i}$.
Similarly, we define $\nu_{k}(i)$ as the Euclidean distance between the $i$-th sample $X_{i}$ and its $k$-NN among $Y^{m}$.
Let $\mathcal{B}(x,R)$ be a closed ball with radius $R$ around $x \in \mathbb{R}^{d}$.
Finally, $\mathcal{V}(\mathcal{B}(x,R)) = \bar{c}R^{d}$ is defined as its volume, where
$\bar{c}$ is the volume of the $d$-dimensional unit ball.

\citet{loftsgaarden65} constructed the density estimators of $p$ and $q$ at the $i$-th sample $X_{i}$ via $k$-NN as follows:
\normalsize
\begin{align}
\label{def:knn_est_p}
\hat{p}_{k}(x_{i}) &= \frac{k}{(n-1)\mathcal{V}(\mathcal{B}(x_i,\rho_{k}(i)))} = \frac{k}{(n-1)\bar{c}\rho_{k}^{d}(i)}, \\
\label{def:knn_est_q}
\hat{q}_{k}(x_{i}) &= \frac{k}{m\mathcal{V}(\mathcal{B}(x_i,\nu_{k}(i))} = \frac{k}{m\bar{c}\nu_{k}^{d}(i)}.
\end{align}
\normalsize

These density estimators, Eqs.~\eqref{def:knn_est_p} and \eqref{def:knn_est_q}, are consistent estimators of the density only when the number of neighbors $k$ goes to infinity as the sample size $n$ goes to infinity.
We use these estimators for constructing our robust divergence estimator.
Hereafter, we fix the value of $k$ and show that our divergence estimator still has desirable asymptotic properties, including consistency.

\subsection{Robust Divergence Estimator on \texorpdfstring{$\gamma$}{Lg}-divergence}
\label{subsec:gamma_div_estimator_derivation}
Now we derive a \emph{non-parametric} $\gamma$-divergence estimator based on $k$-NN density estimation.
In ABC settings, outliers could be included in the observed data $X^{n}$.
To reduce the influence of outliers, we rewrite the term $\int_{\mathcal{M}_{0}} q^{1+\gamma}(x) \mathrm{d}x$ in Eq.~\eqref{gamma_div} to be $k$-NN estimatable from $\mathcal{M}' \subseteq \mathbb{R}^d$, where $\mathcal{M}'$ is the support of $q$,
i.e.,
\begin{align*}
    \int_{\mathcal{M}'} q^{1+\gamma}(y) \mathrm{d}y.
\end{align*}
We can use the same notion of Eq.~\eqref{def:knn_est_p} when we focus on the synthetic data $Y^{m}$; therefore, the density estimation for $q(y)$ based on $k$-NN can be written as
\begin{align}
\label{def:knn_est_q_y}
\hat{q}_{k}(y_{j}) &= \frac{k}{(m-1)\mathcal{V}(\mathcal{B}(y_j,\bar{\rho}_{k}(j))} = \frac{k}{(m-1)\bar{c}\bar{\rho}_{k}^{d}(j)},
\end{align}
where $\bar{\rho}_{k}(j)$ is the Euclidean distance between the $j$-th sample $Y_j$ of $Y^{m}$ and its $k$-NN among $Y^{m}\setminus Y_{j}$.

By plugging in the $k$-NN density estimator into Eqs.~\eqref{def:knn_est_p}, \eqref{def:knn_est_q}, and \eqref{def:knn_est_q_y}, we derive the $k$-NN based $\gamma$-divergence estimator as
\begin{align}
\label{eq:default_gamma_est}
    &\widehat{D}_{\gamma}(X^{n}\|Y^{m}) =  \frac{1}{\gamma(1+\gamma)} \notag \\
    &\times\left(
    \log \frac{\bigg(\displaystyle{\frac{1}{n} \sum_{i=1}^{n} \left(\frac{\bar{c}}{k}\hat{p}_{k}(x_{i})\right)^{\gamma}\bigg) } \bigg(\frac{1}{m} \displaystyle{\sum_{j=1}^{m} \left(\frac{\bar{c}}{k}\hat{q}_{k}(y_{j})\right)^{\gamma} \bigg)^{\gamma}}}
    {\bigg(\displaystyle \frac{1}{n}\sum_{i=1}^{n} \left(\frac{\bar{c}}{k}\hat{q}_{k}(x_{i})\right)^{\gamma} \bigg)^{1+\gamma}}
    \right).
\end{align}
The details of the derivation is in Appendix \ref{proof:gamma_estimator}.

The estimator in Eq.~\eqref{eq:default_gamma_est} involves $2n$ and $2m$ operations of nearest neighbor search. If we implement them by $KD$ trees~\cite{Bentley75,Maneewongvatana01}, the time cost of finding $\widehat{D}_{\gamma}(X^{n}\|Y^{m})$ is $\mathcal{O}((n\lor m)\log (n\lor m))$, where $(n\lor m) = \max \{n,m\}$, which is among the fastest (up to log factors) of the existing robust discrepancy approximators (see Table~\ref{Related work}).
Furthermore, this estimator fortunately enjoys ideal asymptotic properties: \emph{asymptotic unbiasedness} and \emph{almost surely convergence} under mild assumptions.
We will show them in Section \ref{subsec:asymp_analysis}.

\subsection{Robustness Property of \texorpdfstring{$\gamma$}{Lg}-ABC against Outliers}
\label{sec:robust_analysis}
Here, we investigate the behavior of the \emph{sensitivity curve} (SC), which is a finite-sample analogue of the \emph{influence function} (IF), both of which are used in quantifying the robustness of statistics~\cite{futami18,Ruli20}.
We fix the observed data $X^{n}$ and consider a contamination by an outlier $X_{0}$. We define the contaminated data as $X_{[X_{0}]}^{n} \coloneqq (X_{0}, X_{1},\ldots,X_{n})$.
Then, the SC is defined as follows.
\begin{definition}[Sensitivity curve {\citep[2.1e]{HampelRobust2005}}]
Let \(\gamma, \epsilon > 0\).
Let us define the (population) pseudo-posterior as $\hat{\pi}(\theta|X^{n}) \coloneqq \pi(\theta|X^{n},\widehat{D}_\gamma, \epsilon)$.
The sensitivity curve of \(\PseudoPosterior\) is defined as
\begin{equation*}\begin{split}
\SCn(X_{0}) \coloneqq (n+1)\left(\PseudoPosterior(\theta|X_{[X_{0}]}^{n}) - \PseudoPosterior(\theta|X^{n})\right).
\end{split}\end{equation*}
\end{definition}
We consider SC instead of IF for two reasons: (i) we are interested in the pseudo-posterior distribution \(\hat \pi(\theta|\xn)\) with respect to a finite sample \(\xn\),
and (ii) the IF of the quantities based on the considered divergence estimator may not be even defined (a detailed explanation is in Remark \ref{remark2} in Appendix \ref{sec:remarks}).

Under this definition and some additional assumptions, we obtain the following theorem.
Our analysis is a finite-sample analogue of what is called the \emph{redescending property} \citep{MaronnaRobust2019} in the context of IF analysis.
\begin{thm}[Sensitivity curve analysis]
Assume \(k < \min\{n, m\}\).
Also assume that \(\Fth(\epsilon) \coloneqq \int \Ind{\widehat{D}_\gamma(X^{n}\|\Ym) < \epsilon} p_{\theta}(Y^{m}) \mathrm{d}Y^{m}\) is \(\beta\)-Lipschitz continuous for all \(\theta \in \Theta\).
Then, we have
\begin{equation*}\begin{split}
\lim_{\|X_{0}\| \rightarrow \infty} \SCn(X_{0}) \leq -\frac{\beta \pi(\theta)}{\Hn(1+\gamma)}\log\left(1 - \frac{1}{n^2}\right)^{n+1},
\end{split}\end{equation*}
where \(\Hn \coloneqq \int \pi(\thetap) \Fthp(\epsilon) \dthp\).
Furthermore, if \(\lim_{n \to \infty} \Hn\) exists and is non-zero, then the right-hand side of the above inequality converges to \(0\).
\label{thm:redescending-sensitivity-curve}
\end{thm}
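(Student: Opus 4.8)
The plan is to reduce the whole statement to one algebraic fact: sending a single observation to infinity shifts the estimator $\widehat{D}_\gamma$ by a constant that does not depend on the synthetic data, and then to push this shift through the posterior normalization using the Lipschitz hypothesis.

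First I would study the $k$-nearest-neighbor radii entering $\widehat{D}_\gamma(\xzn\|\Ym)$ as $\|\xz\|\to\infty$. Since $k<\min\{n,m\}$, all radii are well defined, and for every fixed sample $X_i$ with $i\geq 1$ the outlier $\xz$ eventually leaves its $k$-NN set; hence $\rho_k(i)$ and $\nu_k(i)$ converge to their uncontaminated values, whereas the outlier's own radii $\rho_k(0),\nu_k(0)\to\infty$, so the $i=0$ summands in \eqref{eq:default_gamma_est} vanish. Substituting the density estimators \eqref{def:knn_est_p}--\eqref{def:knn_est_q_y}, the only nontrivial bookkeeping is that contamination changes the self-density normalizer in $\hat p_k$ from $n-1$ to $n$ and the empirical-average prefactor from $1/n$ to $1/(n+1)$. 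Collecting these factors, the synthetic-data term cancels and the limiting value $\widehat{D}_\gamma^\infty$ satisfies
\begin{equation*}
\widehat{D}_\gamma^\infty = \widehat{D}_\gamma(\xn\|\Ym) + \frac{1}{1+\gamma}\log\left(1-\frac{1}{n^2}\right),
\end{equation*}
which is independent of $\Ym$ and $\theta$.

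Write $\delta := \tfrac{1}{1+\gamma}\log(1-n^{-2})<0$. Because the shift is a constant, the limiting acceptance indicator equals $\Ind{\widehat{D}_\gamma(\xn\|\Ym)<\epsilon-\delta}$. The $\beta$-Lipschitz (hence continuous) hypothesis on $\Fth$ forces the law of $\widehat{D}_\gamma(\xn\|\Ym)$ under $\Ym\sim p_\theta$ to have no atoms, so the indicator converges $p_\theta$-almost everywhere; dominated convergence with dominating function $p_\theta(\Ym)$ then gives $\limz \Fth^{[\xz]}(\epsilon)=\Fth(\epsilon-\delta)$, and a second application over $\thetap$ shows the contaminated normalizer $\Lambda_{n+1}^{[\xz]}$ tends to $\Lambda_n(\epsilon-\delta):=\int\pi(\thetap)\Fthp(\epsilon-\delta)\dthp$. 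Consequently
\begin{equation*}
\limz \SCn(\xz)=(n+1)\pi(\theta)\left(\frac{\Fth(\epsilon-\delta)}{\Lambda_n(\epsilon-\delta)}-\frac{\Fth(\epsilon)}{\Hn}\right).
\end{equation*}

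Finally I would bound this difference of ratios in two moves. Since $\delta<0$ and each $\Fthp$ is nondecreasing in its threshold, the normalizer is monotone, $\Lambda_n(\epsilon-\delta)\geq\Hn$, which lets me replace the first denominator by $\Hn$; the $\beta$-Lipschitz bound then gives $\Fth(\epsilon-\delta)-\Fth(\epsilon)\leq\beta|\delta|$. Combining the two yields exactly $-\frac{\beta\pi(\theta)}{\Hn(1+\gamma)}\log(1-n^{-2})^{n+1}$, because $(n+1)|\delta|=-\frac{n+1}{1+\gamma}\log(1-n^{-2})$. The convergence claim is then immediate: as $n\to\infty$ one has $(n+1)\log(1-n^{-2})\to 0$ while $\Hn$ approaches a non-zero limit, so the bound tends to $0$. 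I expect the second step---pinning down the contamination-induced normalization factors so that they collapse to the single constant $\tfrac{1}{1+\gamma}\log(1-n^{-2})$---to be the main obstacle; the secondary technical point is the limit--integral interchange, which is precisely where continuity (a consequence of the Lipschitz assumption) is needed.
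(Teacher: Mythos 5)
Your proposal is correct and follows the same overall strategy as the paper's proof: isolate the exact constant shift $\widehat{D}_\gamma(X^n_{[X_0]}\|\Ym)-\widehat{D}_\gamma(X^n\|\Ym)\to\frac{1}{1+\gamma}\log(1-n^{-2})$ coming from the $n\mapsto n+1$ and $(n-1)\mapsto n$ normalizers (with the outlier's own $k$-NN terms vanishing), recast acceptance as $\widehat{D}_\gamma(X^n\|\Ym)<\tepsilon$ with $\tepsilon=\epsilon-\frac{1}{1+\gamma}\log(1-n^{-2})$, pass the limit inside the integrals, and then bound the difference of normalized ratios using monotonicity of the normalizer together with the $\beta$-Lipschitz hypothesis. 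The one place you genuinely diverge is the justification of the almost-everywhere convergence of the indicator: the paper devotes a separate proposition to showing that the level set $\{\Ym:\widehat{D}_\gamma(X^n\|\Ym)=c\}$ has Lebesgue measure zero, via a finite partition of the $Y_1$-space built from higher-order Voronoi diagrams on which $\widehat{D}_\gamma$ is real-analytic and non-constant. You instead observe that the assumed Lipschitz continuity of $\epsilon\mapsto\Fth(\epsilon)$ already forces the pushforward law of $\widehat{D}_\gamma(X^n\|\Ym)$ under $p_\theta$ to be atomless (since $\mathrm{Pr}(\widehat{D}_\gamma=c)\le\Fth(c+h)-\Fth(c)\le\beta h$ for every $h>0$), which suffices for the dominated-convergence step without any Fubini slicing. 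This is a legitimate and noticeably more economical route given that the Lipschitz assumption is part of the theorem's hypotheses; what the paper's longer argument buys is a structural fact about the estimator (measure-zero level sets) that holds independently of, and lends plausibility to, that assumption. The remaining algebra — dropping the larger denominator $\Hn(\tepsilon)\ge\Hn$, applying $\Fth(\tepsilon)-\Fth(\epsilon)\le\beta(\tepsilon-\epsilon)$, and noting $(n+1)\log(1-n^{-2})\to 0$ — matches the paper's computation exactly.
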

The proof is in Appendix \ref{sec:thm_and_proof_for_sc}.
Through Theorem~\ref{thm:redescending-sensitivity-curve}, we can see that the influence of contamination is reduced when we have enough data, even if the magnitude of the outlier $X_0$ is very large.
\diff{Intuitively, an estimator has the redescending property if its IF first ascends away from zero as outliers become more pronounced, while the IF ``redescends'' towards zero as outliers become increasingly extreme.}
\diff{Since our analysis is a finite-sample analogue of the redescending property in the context of IF,} this result implies the robustness of our method that an extreme outlier is automatically ignored.

\begin{figure*}[t]
    \centering
    \includegraphics[scale=0.32]{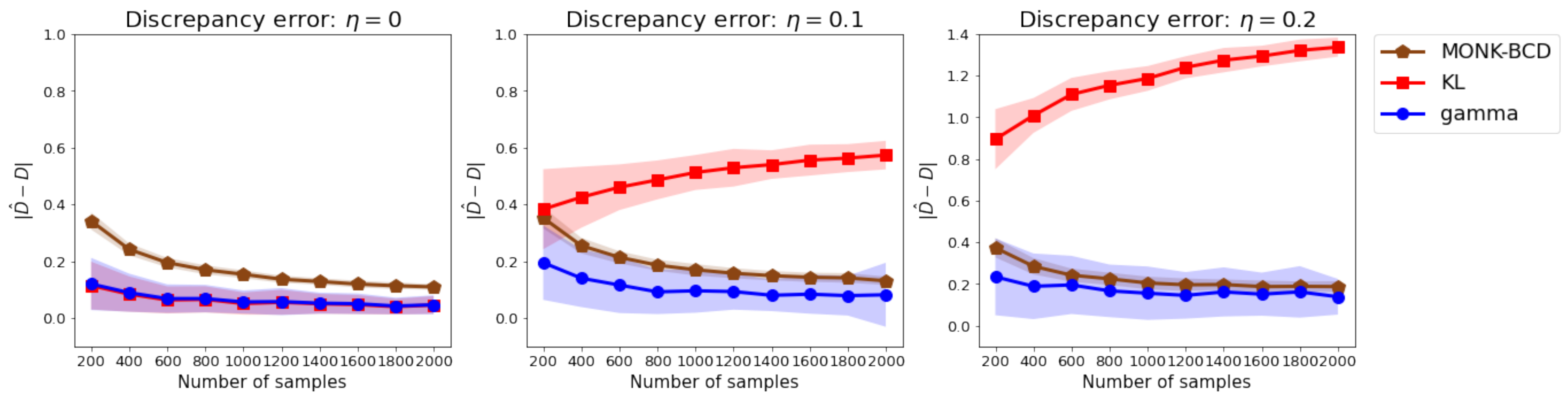}
    \caption{\diff{Experimental results for the absolute error of discrepancy.}
    We compared the robustness of the discrepancy measure based on robust MMD (MONK-BCD) and divergence-based discrepancies (KL and ours).
    The mean $\pm$ std. values of each error are plotted (solid lines are mean values and shaded areas represent the range of mean $\pm$ std values).
    Lower values are better. The true discrepancy $D$ was estimated by $10^{4}$ Monte Carlo samples. We conducted experiments on each discrepancy for $100$ times for various contamination rates ($\eta = 0, 0.1, 0.2$) and sample sizes ($200, 400, \ldots, 2000$).
    We set the hyper-parameter candidates of our $\gamma$-divergence estimator as $\gamma = (0.1,0.2,0.25,0.4,0.5,0.6,0.75,0.9)$ and the one with the smallest mean score among them is displayed.}
    \label{fig:disc_error}
\end{figure*}

\subsection{Robustness on Estimation Error of Discrepancy}
\diff{In Section~\ref{sec:robust_analysis}, we showed the theoretical robustness of our method.
Here, we experimentally investigate the robustness based on the estimation error of discrepancy.}
Figure \ref{fig:disc_error} shows the discrepancy estimation error. 
In this figure, we show the errors between robust MMD and true MMD, and the KL- or $\gamma$-divergence estimator and true KL-divergence.
We considered the $1$-dimensional standard normal distribution $\mathcal{N}(0,1)$ and the contaminated $1$-dimensional standard normal distribution $(1-\eta)\mathcal{N}(0,1) + \eta \mathcal{N}(10,1)$.
\diff{Whereas the outliers negatively affect the KL divergence estimator~\cite{PrezCruz08}, MONK-BCD~\cite{Lerasle19} and our $\gamma$-divergence estimator are robust to an increase in the contamination rate $\eta$.}
Furthermore, if we choose the hyper-parameter $\gamma$ properly, our discrepancy estimator achieves comparable accuracy to the KL-divergence estimator in the non-contaminated case.

\section{Asymptotic Analysis on ABC}
\label{sec:asm_anals}
In this section, we elucidate essential asymptotic properties, such as asymptotic unbiasedness and almost sure convergence, for our divergence estimator defined by Eq.~\eqref{eq:default_gamma_est}.
Furthermore, we analyze an asymptotic behavior of the ABC posterior distributions built on our divergence estimator.

\subsection{Theoretical Analysis for \texorpdfstring{$\gamma$}{Lg}-divergence Estimator}
\label{subsec:asymp_analysis}
To confirm the validity of the proposed estimator in Eq.~\eqref{eq:default_gamma_est}, we show two properties: the asymptotic unbiasedness and the almost sure convergence.

We show the asymptotic unbiasedness by assuming that $\mathcal{M}$, i.e., the support of $p$, has the following mild regularity condition.
These conditions are commonly used for investigating the asymptotic properties of divergence estimators, e.g., the $\alpha$-divergence estimator in \citet{poczos11a}.
\begin{assump}[Restrictions on the domain $\mathcal{M}$~\cite{poczos11a}]
\label{assmp:restrict_M}
We assume
\begin{align*}
    \inf_{0<\delta<1} \inf_{x \in \mathcal{M}} \frac{\mathcal{V}(\mathcal{B}(x,\delta)\cap\mathcal{M})}{\mathcal{V}(\mathcal{B}(x,\delta))} \coloneqq r_{\mathcal{M}} > 0.
\end{align*}
\end{assump}
\diff{Assumption~\ref{assmp:restrict_M} states that the intersection of $\mathcal{M}$ with an arbitrary small ball having the center in $\mathcal{M}$ has a volume that cannot be arbitrarily small relative to the volume of the ball.}
It intuitively means that almost all points of $\mathcal{M}$ are in its interior.

Furthermore, we define the following function:
\normalsize
\begin{align}
\label{func:powered}
    &H(x,p,\delta,\omega) \coloneqq \sum_{j=0}^{k-1} \bigg( \frac{1}{j!} \bigg)^{\omega} \Gamma(\kappa + j\omega) \bigg( \frac{p(x) + \delta}{p(x) - \delta} \bigg)^{j\omega} \notag \\
    &\ \ \ \ \ \ \ \ \ \ \ \ \ \ \ \ \ \ \ \ \ \ \ \ \ \ \ \ \ \ \ \ \ \ \ \ \times 
    (p(x) - \delta)^{-\gamma}((1-\delta)\omega)^{-\kappa-j\omega},
\end{align}
\normalsize
where $\Gamma(\cdot)$ is the gamma function defined as $\Gamma(z) = \int_{0}^{\infty} t^{z-1}\exp(-t) \mathrm{d}t$.
\citet{poczos11a} used Assumption \ref{assmp:restrict_M} to show a uniform variant of Lebesgue's lemma and to show that Definition~\ref{def:uniform_lebesgue_approx_func} below is well-defined.
The function in Eq.~\eqref{func:powered} appears in the upper bound of the moment for the $\omega$-powered conditional distribution function (CDF) when Assumption~\ref{assmp:restrict_M} holds (see Theorem 37 in \citet{poczos11a}).
In addition, we impose some reasonable assumptions, which are also assumed in \citet{poczos11a}.
\begin{definition}[Uniformly Lebesgue-approximable function~\cite{poczos11a}]
\label{def:uniform_lebesgue_approx_func}
\diff{Denote by $L_{1}(\mathcal{M})$ the set of Lebesgue integrable functions on $\mathcal{M}$ and} let $g \in L_{1}(\mathcal{M})$.
The function $g$ is uniformly Lebesgue approximable on $\mathcal{M}$ if for any series $R_{n} \rightarrow 0$ and any $\delta >0$, there exists $n_0=n_{0}(\delta) \in \mathbb{Z}^{+}$ 
such that if $n > n_{0}$, then for almost all $x \in \mathcal{M}$,
\normalsize
\begin{align}
    g(x) - \delta < \frac{\int_{\mathcal{B}(x,R_{n})\cap\mathcal{M}}g(t)\mathrm{d}t}{\mathcal{V}(\mathcal{B}(x,R_{n})\cap\mathcal{M})} < g(x) + \delta.
\end{align}
\normalsize
\end{definition}
\begin{assump}[Condition for $p$ and $q$ from \citet{poczos11a}]
\label{assmp:p_bounded}
The positive functions $p$ and $q$ are bounded away from zero and uniformly Lebesgue approximable.
Furthermore, the expectations of the $l_{2}$-norm powered by $\kappa$ over $p$ and $q$ are bounded, i.e.,
\normalsize
\begin{align*}
    \int_{\mathcal{M}} \|x - y \|^{\kappa}p(y) \mathrm{d}y < \infty, \ \
    \int_{\mathcal{M}} \|x - y \|^{\kappa}q(y) \mathrm{d}y < \infty,
\end{align*}
\normalsize
for almost all $x \in \mathcal{M}$.
Furthermore, the following conditions hold:
\footnotesize
\begin{align*}
    &\int \int_{\mathcal{M}^{2}} \|x - y \|^{\kappa}p(y)p(x) \mathrm{d}y \mathrm{d}x < \infty, \\
    &\int \int_{\mathcal{M}^{2}} \|x - y \|^{\kappa}q(y)p(x) \mathrm{d}y \mathrm{d}x < \infty.
\end{align*}
\normalsize
\end{assump}

\begin{assump}[Condition for powered CDF in $\mathcal{M}$~\cite{poczos11a}]
\label{assmp:p_bounded_cdf}
\diff{The expectations of $H(x,p,\delta,1)$ and $H(x,q,\delta,1)$ are bounded as follows: $\exists \delta_{0} \ \ \mathrm{s.t.} \ \ \forall \delta \in (0,\delta_{0})$,}

\footnotesize
\begin{align*}
\int_{\mathcal{M}} H(x,q,\delta,1)p(x)\mathrm{d}x < \infty,
\int_{\mathcal{M}} H(x,p,\delta,1)p(x)\mathrm{d}x < \infty.
\end{align*}
\normalsize
\end{assump}
\diff{This assumption indicates that the expectations of the $\omega$-powered CDFs of $p$ and $q$ are bounded, respectively.}
The expectations appear in the upper bound of the moment of the $\omega$-powered CDFs.

Since our method has a term involving an expectation with respect to $q$, we set the following assumption that is almost the same as the condition for the support of $p$.
\begin{assump}[Extra condition for powered CDF in $\mathcal{M}'$]
\label{assmp:p_bounded_extra}
\diff{The expectation of $H(y,q,\delta,1)$ is bounded as}
\begin{align*}
    &\exists \delta_{0} \ \ \mathrm{s.t.} \ \ \forall \delta \in (0,\delta_{0}), \ \ \int_{\mathcal{M}'} H(y,q,\delta,1)q(y)\mathrm{d}y < \infty.
\end{align*}
\end{assump}
\diff{This assumption means that the expectation of the $\omega$-powered CDF with respect to $q$ is bounded.}
The above expectation appears in the upper bound of the moment for the $\omega$-powered CDFs.
Under these assumptions, the following theorem holds.
\begin{thm}[Asymptotic unbiasedness]
\label{thm:asymp_unbiased_others_p}
Let $0 < \gamma < k$ or $-k < \gamma < 0$. Suppose that Assumption~\ref{assmp:p_bounded} holds with $\kappa = \gamma$ and that Assumptions~\ref{assmp:p_bounded_cdf} and \ref{assmp:p_bounded_extra} hold.
Also assume that $q$ is bounded from above. Then, $\widehat{D}_{\gamma}(X^{n}\|Y^{m})$ defined in Eq.~\eqref{eq:default_gamma_est} is asymptotically unbiased, i.e.,
\begin{align*}
    \lim_{n,m \rightarrow \infty} \mathbb{E}\bigg[\widehat{D}_{\gamma}(X^{n}\|Y^{m}) \bigg] = D_{\gamma}(p\|q).
\end{align*}
\end{thm}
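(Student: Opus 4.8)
The plan is to exploit the additive structure of the logarithm. Writing $P := \int p^{1+\gamma}$, $Q := \int q^{1+\gamma}$, and $R := \int p\,q^{\gamma}$, and letting $A_n$, $B_m$, $C_n$ denote the three sample averages appearing in Eq.~\eqref{eq:default_gamma_est} (the first numerator factor, the base of the second numerator factor, and the base of the denominator, respectively), the estimator is $\widehat{D}_\gamma = \frac{1}{\gamma(1+\gamma)}(\log A_n + \gamma \log B_m - (1+\gamma)\log C_n)$, so
\begin{equation*}
\mathbb{E}\bigl[\widehat{D}_{\gamma}(X^{n}\|Y^{m})\bigr] = \frac{1}{\gamma(1+\gamma)}\Bigl(\mathbb{E}[\log A_n] + \gamma\,\mathbb{E}[\log B_m] - (1+\gamma)\,\mathbb{E}[\log C_n]\Bigr).
\end{equation*}
It therefore suffices to identify the limit of each of the three log-terms separately and to verify that the accompanying normalizing constants cancel.

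First I would pin down the per-sample limit. Using the identity $(\tfrac{\bar c}{k}\hat p_k(x_i))^{\gamma} = \bar c^{\gamma}p(x_i)^{\gamma}U_i^{-\gamma}$ with $U_i := (n-1)\bar c\,\rho_k^{d}(i)\,p(x_i)$, and conditioning on $x_i$, a single summand of $A_n$ has expectation $\bar c^{\gamma}\int p^{1+\gamma}(x)\,\mathbb{E}[U^{-\gamma}\mid x]\,\mathrm{d}x$. The basic $k$-NN fact is that the conditional law of $U_i$ converges to a $\mathrm{Gamma}(k,1)$ distribution, whose $(-\gamma)$-th moment is $\Gamma(k-\gamma)/\Gamma(k)$ — finite precisely under the hypothesis $\gamma<k$ (and under $-k<\gamma$ in the sign-reversed regime). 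Assumption~\ref{assmp:restrict_M}, via the uniform Lebesgue lemma, drives the convergence of the conditional c.d.f., while Assumptions~\ref{assmp:p_bounded}--\ref{assmp:p_bounded_cdf} (with $\kappa=\gamma$, $\omega=1$) furnish the dominating function $H(x,p,\delta,1)$ for the powered-c.d.f.\ moments, licensing a dominated-convergence passage to the limit. This gives $\mathbb{E}[A_n]\to C_{k,\gamma}\,P$ with $C_{k,\gamma} := \bar c^{\gamma}\Gamma(k-\gamma)/\Gamma(k)$. The same argument with $p\mapsto q$ yields $\mathbb{E}[B_m]\to C_{k,\gamma}\,Q$ (here Assumption~\ref{assmp:p_bounded_extra} supplies the bound for the integral over $\mathcal{M}'$), and the cross term, built from $V_i := m\bar c\,\nu_k^{d}(i)\,q(x_i)\Rightarrow\mathrm{Gamma}(k,1)$, gives $\mathbb{E}[C_n]\to C_{k,\gamma}\,R$ (using that $q$ is bounded above). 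The shared constant $C_{k,\gamma}$ is exactly what makes the un-bias-corrected plug-in valid: since $\log(C_{k,\gamma}P)+\gamma\log(C_{k,\gamma}Q)-(1+\gamma)\log(C_{k,\gamma}R)=\log\frac{PQ^{\gamma}}{R^{1+\gamma}}$, the $\log C_{k,\gamma}$ contributions cancel and $D_\gamma(p\|q)$ emerges after dividing by $\gamma(1+\gamma)$.

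The hard part will be upgrading the convergence of the \emph{means} $\mathbb{E}[A_n],\mathbb{E}[B_m],\mathbb{E}[C_n]$ to convergence of the means of the \emph{logs}, since $\log$ is nonlinear and Jensen only gives one-sided control. My plan is to combine (i) convergence in probability $A_n\to C_{k,\gamma}P$, which I would establish by a second-moment bound paralleling the mean computation (the dependence of $\rho_k(i)$ across $i$ is what makes this variance estimate delicate), with (ii) uniform integrability of $\{\log A_n\}$: the lower tail is controlled because $p$ and $q$ are bounded away from zero, preventing the averages from collapsing toward $0$, and the upper tail is controlled by the same $H$-based moment bounds that dominate $\mathbb{E}[A_n^{1+\epsilon}]$. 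Convergence in probability together with uniform integrability then gives $\mathbb{E}[\log A_n]\to\log(C_{k,\gamma}P)$, and likewise for $B_m,C_n$. This logarithm step — carried out \emph{simultaneously} for three separately estimated integrals rather than for a single density-ratio functional, and in particular for the mixed-sample term $R=\int p\,q^{\gamma}$ that forces both Assumption~\ref{assmp:p_bounded_extra} and the upper bound on $q$ — is precisely where the argument is genuinely harder than in the $f$-divergence (e.g.\ $\alpha$-divergence) case, in which the estimator reduces to the logarithm of a single average.
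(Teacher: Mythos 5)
Your plan is sound, but it follows a genuinely different route from the paper's. The paper never forms the limits of the three averages $A_n$, $B_m$, $C_n$ themselves: it passes the logarithm through the empirical average, reducing $\mathbb{E}\bigl[\log \frac{1}{n}\sum_i Z_i\bigr]$ to $\mathbb{E}_{X_1\sim p}\bigl[\mathbb{E}[\log Z_1 \mid X_1=x]\bigr]$, and then computes the limiting conditional expectation of $\log\bigl((n-1)^{\gamma}\rho_k^{d\gamma}(1)\bigr)$ via the \emph{log}-Erlang distribution (Lemmas~\ref{lem:weakly_convergence} and \ref{lem:exp_log_erlang}, Theorems~\ref{thm:poczos_21}--\ref{thm:asymp_expect_m}), obtaining $\gamma(\psi(k)-\log(\bar{c}\,p(x)))$ and cancelling the digamma and $\log\bar{c}$ constants across the three terms. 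You instead keep the logarithm outside, compute the $(-\gamma)$-th moment of the Erlang limit to extract the common factor $C_{k,\gamma}=\bar{c}^{\gamma}\Gamma(k-\gamma)/\Gamma(k)$, and then convert convergence of $\mathbb{E}[A_n]$ into convergence of $\mathbb{E}[\log A_n]$ via convergence in probability plus uniform integrability. The trade-off is real: the paper's route needs only first (log-)moments of a single $k$-NN distance and so avoids any variance computation, but it rests on equating the expectation of the log of an average with the average of conditional log-expectations, and accordingly lands on $\mathbb{E}_p[\log p^{\gamma}]$-type quantities, whereas the target $D_{\gamma}$ in Definition~\ref{def:gamma_div} contains $\log \mathbb{E}_p[p^{\gamma}]$; your route pays for the extra concentration and integrability work but lands exactly on $\frac{1}{\gamma(1+\gamma)}\log\frac{PQ^{\gamma}}{R^{1+\gamma}}$, which is the quantity the theorem actually asserts. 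Be aware that the two steps you flag as remaining --- the covariance control for the dependent summands built from $\rho_k(i)$ (a bounded $k$-NN in-degree argument gives $\mathrm{Var}(A_n)=O(1/n)$ under the stated moment conditions) and the lower-tail uniform integrability of $\log A_n$ (e.g.\ via Jensen, $\mathbb{E}[A_n^{-\delta}]\le \mathbb{E}\bigl[((n-1)\rho_k^{d}(1))^{\gamma\delta}\bigr]$, which the $H$-based moment bounds of Assumptions~\ref{assmp:p_bounded_cdf} and \ref{assmp:p_bounded_extra} dominate) --- are not supplied by the paper's argument, so you would have to carry them out in full rather than cite the existing lemmas.
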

\diff{From this result, we can see that the asymptotic unbiasedness holds even if we set $\gamma$ as $-k < \gamma < 0$ (see Theorems \ref{thm:asymp_unbiased_positive} and \ref{thm:asymp_unbiased_q} in Appendix \ref{proof:unbiasedness}).}

Next, we establish the almost sure convergence of our estimator.
\begin{thm}[Almost sure convergence]
\label{thm:almost_surely_conv}
Let $\gamma < k$.
Suppose that Assumption~\ref{assmp:p_bounded} holds with $\kappa = \gamma$ and that Assumptions~\ref{assmp:p_bounded_cdf} and \ref{assmp:p_bounded_extra} hold.
Also assume that $p$ and $q$ are bounded from above.
Let $k(n)$ denote the number of neighbors applied at sample size $n$ such that $\lim_{n \rightarrow \infty} k(n) = \infty$, $\lim_{n \rightarrow \infty} n/k(n) = \infty$, $\lim_{m \rightarrow \infty} k(m) = \infty$ and $\lim_{m \rightarrow \infty} m/k(m) = \infty$. Then, our estimator converges almost surely to $D_{\gamma}(p\|q)$, that is,
\begin{align*}
\widehat{D}_{\gamma}(X^{n}\|Y^{m}) \overset{\textrm{a.s.}}{\rightarrow} D_{\gamma}(p\|q).
\end{align*}
\end{thm}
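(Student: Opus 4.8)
The plan is to exploit the \emph{homogeneity} of the estimator in its three constituent empirical functionals, reduce the claim to the almost sure convergence of each functional separately, and close with the continuous mapping theorem.

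First I would rewrite $\widehat{D}_{\gamma}(X^{n}\|Y^{m}) = \frac{1}{\gamma(1+\gamma)}\Phi(M_1, M_2, M_3)$, where $\Phi(a,b,c) \coloneqq \log\frac{ab^{\gamma}}{c^{1+\gamma}}$ and $M_1 \coloneqq \frac{1}{n}\sum_{i=1}^{n} \hat p_k(x_i)^{\gamma}$, $M_2 \coloneqq \frac{1}{m}\sum_{j=1}^{m}\hat q_k(y_j)^{\gamma}$, $M_3 \coloneqq \frac{1}{n}\sum_{i=1}^{n}\hat q_k(x_i)^{\gamma}$. The key observation is that the prefactors $(\bar c/k)^{\gamma}$ in front of each density estimate in Eq.~\eqref{eq:default_gamma_est} cancel identically: numerator and denominator of the log-ratio are both homogeneous of degree $1+\gamma$, so $\Phi$ is invariant under the simultaneous scaling $(a,b,c)\mapsto(\lambda a,\lambda b,\lambda c)$, and the common factor $\lambda = (\bar c/k)^{\gamma}$ drops out. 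This invariance is precisely the structural reason the $\gamma$-divergence — unlike members of the $f$-divergence family — can be estimated without recovering the (for finite $k$, biased) $k$-NN density values themselves.

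Second, I would establish that each functional converges to its integral limit almost surely, namely $M_1 \to \int p^{1+\gamma}$, $M_2 \to \int q^{1+\gamma}$, and $M_3 \to \int p q^{\gamma}$. Under the growing-neighborhood regime $k(n)\to\infty$, $n/k(n)\to\infty$ (and analogously in $m$), the $k$-NN density estimators are consistent, so the per-term bias vanishes and the common correction factor appearing for fixed $k$ tends to one; the asymptotic unbiasedness (Theorem~\ref{thm:asymp_unbiased_others_p}) already controls the expectations $\mathbb{E}[M_\ell]$, so the residual task is to show $M_\ell - \mathbb{E}[M_\ell]\to 0$ almost surely. I would do this by adapting the a.s.\ convergence arguments of \citet{poczos11a} for $k$-NN functional estimators: bound the variance of each $M_\ell$ using the moment control on the powered conditional distribution functions supplied by Assumptions~\ref{assmp:p_bounded_cdf} and \ref{assmp:p_bounded_extra} (the function $H$ of Eq.~\eqref{func:powered}), together with the $\kappa = \gamma$ moment bounds of Assumption~\ref{assmp:p_bounded} and the requirement $\gamma < k$ (which keeps the relevant $\gamma$-moments of the $k$-NN functionals finite), and then invoke Chebyshev's inequality with the Borel--Cantelli lemma. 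The independence of $X^{n}$ and $Y^{m}$ lets me treat the cross-sample functional $M_3$ by conditioning on $X^{n}$. Positivity and finiteness of the three limits — strict positivity because $p$ and $q$ are bounded away from zero, and finiteness because they are bounded from above with finite $\gamma$-moments — then make $\Phi$ continuous at the limit point, so the continuous mapping theorem for almost sure convergence yields $\widehat{D}_{\gamma}(X^{n}\|Y^{m}) \to \frac{1}{\gamma(1+\gamma)}\Phi\!\left(\int p^{1+\gamma},\int q^{1+\gamma},\int p q^{\gamma}\right) = D_{\gamma}(p\|q)$ almost surely.

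The main obstacle is the second step: upgrading convergence in mean (which asymptotic unbiasedness essentially delivers) to almost sure convergence. The summands inside each $M_\ell$ are strongly dependent — every nearest-neighbor distance depends on the entire sample — so the classical strong law does not apply directly, and the variance bound that drives the Borel--Cantelli argument is exactly where the powered-CDF moment assumptions are indispensable. Controlling these moments simultaneously with the growing-$k$ regime, so that both the estimator's bias and its fluctuations vanish together, is the delicate part of the argument.
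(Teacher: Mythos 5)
Your proposal is correct in outline and shares the skeleton of the paper's argument in Appendix~\ref{proof:almost_surely_convergence}: reduce the claim to almost sure convergence of the three empirical functionals $M_1,M_2,M_3$ and finish by continuity of the logarithm (your $\Phi$). The genuine difference is in how the limits are identified and how the concentration is organized. The paper adds and subtracts the \emph{oracle} averages $\frac{1}{n}\sum_i p^{\gamma}(X_i)$, $\frac{1}{n}\sum_i q^{\gamma}(X_i)$, $\frac{1}{m}\sum_j q^{\gamma}(Y_j)$; the oracle part is an i.i.d.\ average, so the strong law identifies the limit as $D_{\gamma}(p\|q)$ for free, and the $k$-NN machinery is only needed to show that the three log-ratios $\log(M_\ell/M_\ell^{\mathrm{oracle}})$ vanish, which is done via Chebyshev applied to each $\hat p_{k}^{\gamma}(X_i)$ using the inverse-Erlang moment bound (Lemma~\ref{lem:moments_inverse_Erlang} together with Lemmas~\ref{lem:p_converge_prob}--\ref{lem:q_converge_prob_m}) plus an aggregation step. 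Your route concentrates $M_\ell$ around $\mathbb{E}[M_\ell]$ directly, which is workable but needs two repairs: (i) Theorem~\ref{thm:asymp_unbiased_others_p} controls $\lim\mathbb{E}[\widehat D_{\gamma}]$, i.e.\ the expectation of the log-combination, not $\mathbb{E}[M_\ell]$ itself, so the correct tool for identifying $\lim\mathbb{E}[M_\ell]$ is again the inverse-Erlang moment computation, whose $\Gamma(k-\gamma)/\Gamma(k)$ bias factor disappears only because $k(n)\to\infty$; and (ii) the Chebyshev-plus-Borel--Cantelli step you flag as delicate is indeed the weak point of both arguments, since an $O(1/n)$-type variance is not summable --- the paper handles it only by an informal appeal to an aggregation fact, so in your version you would need fourth moments or a subsequence argument for Borel--Cantelli to bite. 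In short, the oracle decomposition buys a clean limit identification from the SLLN, while your direct version avoids the add-and-subtract at the cost of pushing all the work into the moment estimates.
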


The proofs for these theorems are in Appendices \ref{proof:unbiasedness} and \ref{proof:almost_surely_convergence}.
Note that in the proofs of Theorems \ref{thm:asymp_unbiased_others_p} and \ref{thm:almost_surely_conv}, we cannot reuse the known theoretical results for the divergence estimators in the $f$-divergence class~\cite{PrezCruz08,poczos11a} and it required us to newly show several asymptotic properties specifically for $\gamma$-divergence estimation, which are given in Appendices \ref{app:asymptotic_analysis} and \ref{app:proof_asymp_anal}.

\subsection{Asymptotic Property of ABC Posterior Distributions with \texorpdfstring{$\gamma$}{Lg}-divergence Estimator}
\label{sec:asymp_quasi_posterior}
Now we analyze whether the ABC posterior based on our robust discrepancy measure can accurately estimate the parameter $\theta$ with small exact $\gamma$-divergence $D_{\gamma}(p_{\theta^{*}}\|p_{\theta})$ asymptotically.

According to Theorem 1 in \cite{Jiang18}, the asymptotic ABC posterior is a restriction of the prior $\pi$ to the region $\{ \theta \in \Theta : D(p_{\theta^{*}}\| p_{\theta}) < \epsilon \}$ with appropriate scaling.
Combining this with the almost sure convergence of $\widehat{D}_{\gamma}(X^{n}\|Y^{m})$ established in Theorem \ref{thm:almost_surely_conv}, we can obtain the following corollary.
\begin{cor}[Asymptotic ABC posterior with \texorpdfstring{$\gamma$}{Lg}-divergence estimator]
\label{cor:asymp_quasi_posterior_gamma}
Suppose that Assumptions \ref{assmp:p_bounded}-\ref{assmp:p_bounded_extra} are satisfied with $\kappa = \gamma$.
Let $n \rightarrow \infty$ and $m/n \rightarrow \alpha > 0$.
\diff{Let $\pi(\theta | D_{\gamma}(p_{\theta^{*}}\|p_{\theta})<\epsilon)$ be the posterior under $D_{\gamma}(p_{\theta^{*}}\|p_{\theta})<\epsilon$.}
If $\widehat{D}_{\gamma}(X^{n}\|Y^{m})$ is used as the data discrepancy measure in Algorithm \ref{alg:reject_abc}, the ABC posterior distribution satisfies
\begin{align*}
   \pi(\theta| X^{n};\widehat{D}_{\gamma},\epsilon) \rightarrow \pi(\theta | D_{\gamma}(p_{\theta^{*}}\|p_{\theta}) < \epsilon),
\end{align*}
almost surely, and therefore
\begin{align*}
    \lim_{n,m \rightarrow \infty} \pi(\theta| X^{n};\widehat{D}_{\gamma},\epsilon) 
    \propto \pi(\theta) \Ind{D_{\gamma}(p_{\theta^{*}}\|p_{\theta}) < \epsilon},
\end{align*}
almost surely.
\begin{proof}[Proof sketch]
In the same way as \citet{Jiang18},
we use L\'{e}vy's upward theorem (enabled by Theorem~\ref{thm:almost_surely_conv}; see Theorem \ref{thm:levy_upward} in Appendix \ref{app:asymptotic_analysis}) to $Z_{n} = \Ind{\widehat{D}_{\gamma}(X^{n}\|Y^{\infty}) < \epsilon}$ and apply the dominated convergence theorem~\cite{vaart_1998} to complete the proof.
\end{proof}
\end{cor}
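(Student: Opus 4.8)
The plan is to work directly with the unnormalized ABC posterior in Eq.~\eqref{eq:quasi_posterior}, whose numerator is $\pi(\theta)\int \Ind{\widehat{D}_{\gamma}(X^{n}\|Y^{m}) < \epsilon}\,p_{\theta}(Y^{m})\,\mathrm{d}Y^{m}$, and to observe that the inner integral is exactly the acceptance probability $A_{m}(\theta) \coloneqq \mathrm{Pr}_{Y^{m}\sim p_{\theta}}(\widehat{D}_{\gamma}(X^{n}\|Y^{m}) < \epsilon)$. Since $X_{i}\overset{\textrm{i.i.d.}}{\sim} p_{\theta^{*}}$ and $Y_{j}\overset{\textrm{i.i.d.}}{\sim} p_{\theta}$, Theorem~\ref{thm:almost_surely_conv} (applied with $p = p_{\theta^{*}}$ and $q = p_{\theta}$) already gives $\widehat{D}_{\gamma}(X^{n}\|Y^{m}) \overset{\textrm{a.s.}}{\rightarrow} D_{\gamma}(p_{\theta^{*}}\|p_{\theta})$, so at every $\theta$ with $D_{\gamma}(p_{\theta^{*}}\|p_{\theta}) \neq \epsilon$ the integrand converges pointwise to $\Ind{D_{\gamma}(p_{\theta^{*}}\|p_{\theta}) < \epsilon}$. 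The core of the argument is to turn this into convergence of $A_{m}(\theta)$ itself, and then of the normalized posterior. First I would fix $\theta$ and $X^{n}$ and handle the synthetic-data limit $m \to \infty$: letting $\mathcal{F}_{m} \coloneqq \sigma(Y_{1},\ldots,Y_{m})$ and $Z_{n} \coloneqq \Ind{\widehat{D}_{\gamma}(X^{n}\|Y^{\infty}) < \epsilon}$ (the $m\to\infty$ limit supplied by the almost-sure machinery behind Theorem~\ref{thm:almost_surely_conv}), L\'{e}vy's upward theorem (Theorem~\ref{thm:levy_upward}) gives $\mathbb{E}[Z_{n}\mid\mathcal{F}_{m}] \rightarrow Z_{n}$ almost surely and in $L^{1}$; together with boundedness of the indicators this is what licenses replacing $A_{m}(\theta)$ by $\mathbb{E}[Z_{n}]$ in the limit.

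Next I would let $n \to \infty$ (with $m/n \to \alpha$). Invoking the full statement of Theorem~\ref{thm:almost_surely_conv} gives $Z_{n} \rightarrow \Ind{D_{\gamma}(p_{\theta^{*}}\|p_{\theta}) < \epsilon}$ almost surely, and again boundedness makes the passage automatic, so $A_{m}(\theta) \rightarrow \Ind{D_{\gamma}(p_{\theta^{*}}\|p_{\theta}) < \epsilon}$ almost surely. The remaining step is normalization: the normalizing constant is $\int \pi(\theta')\,A_{m}(\theta')\,\mathrm{d}\theta'$, whose integrand is dominated by the integrable $\pi(\theta')$, so the dominated convergence theorem~\cite{vaart_1998} yields convergence to $\int \pi(\theta')\,\Ind{D_{\gamma}(p_{\theta^{*}}\|p_{\theta'}) < \epsilon}\,\mathrm{d}\theta'$, which is strictly positive whenever the prior places mass on $\{\theta' : D_{\gamma}(p_{\theta^{*}}\|p_{\theta'}) < \epsilon\}$. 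Dividing the convergent numerator by this convergent, nonzero denominator gives $\pi(\theta\mid X^{n};\widehat{D}_{\gamma},\epsilon) \rightarrow \pi(\theta \mid D_{\gamma}(p_{\theta^{*}}\|p_{\theta}) < \epsilon) \propto \pi(\theta)\Ind{D_{\gamma}(p_{\theta^{*}}\|p_{\theta}) < \epsilon}$ almost surely, which are precisely the two claimed conclusions.

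I expect the main obstacle to be the interchange of the $m\to\infty$ limit with the integration over the simulated data $Y^{m}$. A direct appeal to dominated convergence on the inner integral is delicate because $\Ind{\widehat{D}_{\gamma}(X^{n}\|Y^{m}) < \epsilon}$ is itself $\mathcal{F}_{m}$-measurable rather than a fixed integrand whose argument merely varies with $m$; this is exactly why the martingale structure and L\'{e}vy's upward theorem are required, mirroring the argument of \citet{Jiang18}. A secondary technical point is the boundary set $\{\theta : D_{\gamma}(p_{\theta^{*}}\|p_{\theta}) = \epsilon\}$, on which the limiting indicator may be ambiguous; assuming continuity of $\theta \mapsto D_{\gamma}(p_{\theta^{*}}\|p_{\theta})$ this set carries no prior mass for generic $\epsilon$, so it does not affect the integrated limits and the almost-sure statements hold as written.
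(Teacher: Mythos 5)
Your proposal is correct and follows essentially the same route as the paper: it reduces the claim to the almost-sure convergence of $\widehat{D}_{\gamma}$ from Theorem~\ref{thm:almost_surely_conv}, applies L\'{e}vy's upward theorem to $Z_{n} = \Ind{\widehat{D}_{\gamma}(X^{n}\|Y^{\infty}) < \epsilon}$ to handle the inner integral over the simulated data, and finishes with dominated convergence on the normalizing constant, exactly as in the paper's proof sketch adapted from \citet{Jiang18}. Your additional remarks on the interchange of limits and the boundary set $\{\theta : D_{\gamma}(p_{\theta^{*}}\|p_{\theta}) = \epsilon\}$ are sensible refinements of the same argument rather than a different approach.
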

Corollary~\ref{cor:asymp_quasi_posterior_gamma} shows that the ABC posterior based on our estimator converges to \diff{the maximum likelihood estimator minimizes the \emph{exact} $\gamma$-divergence between the empirical distribution of $p_{\theta^{*}}$ and $p_{\theta}$.}
Thus, ABC with our $\gamma$-divergence estimator asymptotically collects the $\theta$ with small $\gamma$-divergence.

\begin{figure*}[th]
    \centering
    \includegraphics[scale=0.28]{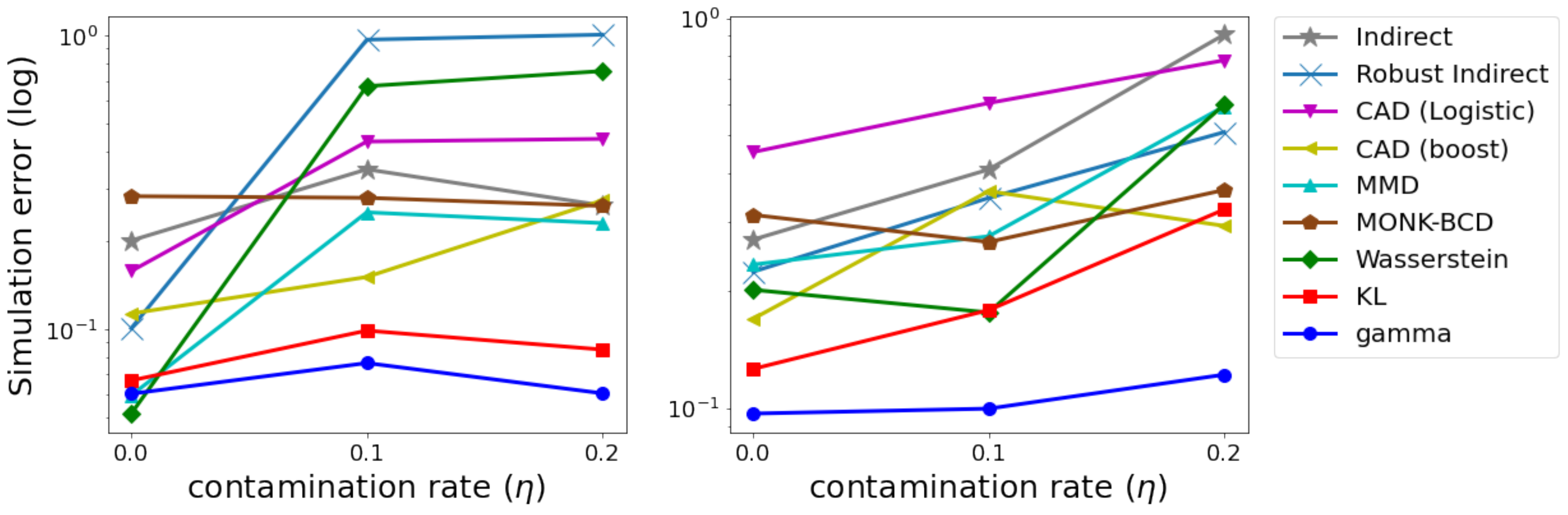}
    \caption{Simulation error for the GM (left) and the MG1 model (right) experiments.}
    \label{fig:energy_summary_mgm}
\end{figure*}
\section{Experiments}
\label{sec:experiments}
In this section, we report the performance of our estimator through five benchmark experiments of ABC.
Here, we confirm that the ABC with our discrepancy measure has immunity against heavily contaminated data.

\subsection{Settings}
\diff{We set $n=m$ following \citet{Jiang18} to prevent the resulting ABC posterior of the indirect method to be over-precise~\cite{Drovandi15} and to avoid arbitrariness in the experiments.}
The tolerance threshold $\epsilon$ was adaptively initialized so that $0.5\%$ of proposed 
parameters $\theta$ were accepted in each experiment by calculating each discrepancy measure $10^{3}$ times.
\diff{Furthermore, we artificially generated the $n$ \IID observed data from $G(X_{i})$ and replaced them by some outliers from $\mathcal{N}(10,1)$, where $G(X_{i})$ is an observed data distribution. 
In short, the contaminated data can be expressed as $(1-\eta)G(X_{i}) + \eta \mathcal{N}(10,1)$ in each dimension.}
In addition, we varied the contamination level $\eta$ in $\{ 0, 0.1, 0.2\}$ to confirm the robustness.
The hyper-parameter $\gamma$ was selected from $\{0.1, 0.2, 0.25, 0.4, 0.5, 0.6, 0.75, 0.9 \}$ for our method.

\diff{We measured the accuracy by the simulation error based on the \emph{energy distance}, which is a standard metric for distributions in statistics and has been used in the ABC literature, e.g., \citet{kajihara18a}.}
\diff{This allows us to directly compare the distributions between the non-contaminated observed data and the synthesized data simulated with the estimated parameter.}
We approximated the MAP estimator $\hat{\theta}_{\textrm{MAP}}$ of the ABC posterior by kernel density estimation with the Gaussian kernel \diff{with the bandwidth parameter $n^{-1/(d+4)}$, that is known as Scott's Rule~\cite{Scott15}.}

From each of the $10$ different models, we sampled the data and performed the ABC (Algorithm~\ref{alg:reject_abc}) with $T = 10^{5}$.
We repeated the procedure independently for $10$ times, and reported the average results.
The results with the standard errors are reported in Appendix \ref{app:simulation_error}, and the results of the mean-squared error (MSE) between $\hat{\theta}_{\textrm{MAP}}$ and the true parameter are also reported in Appendix \ref{app:full_MSE_scores}.

\diff{For our method, we conducted experiments independently for several $\gamma$ and displayed the one with the smallest mean score of the energy distance and the MSE among them.
The full results are reported in Figures \ref{fig:gamma_gaussian}--\ref{fig:gamma_gk} and \ref{fig:gamma_gm_energy}--\ref{fig:gamma_gk_energy} in Appendix~\ref{app:full_MSE_scores}.}
\diff{Furthermore, we compared the ABC posteriors of our method and that of the second-best method. These results are reported in Figures~\ref{fig:pos_gm}--\ref{fig:pos_gk} in Appendix~\ref{app:abc_post}.
}
\subsection{Models and Results}
Here, we summarize the model settings and the results of each experiment.
The details of the baseline discrepancies and the model architectures are shown in Appendices \ref{app:detail_discrepancy} and \ref{app:detail_models_exp}.

\paragraph{Gaussian Mixture Model (GM):}
The univariate Gaussian mixture model is the most basic benchmark setup in the ABC literature~\cite{Wilkinson13,Jiang18}.
We adopted a bivariate Gaussian mixture model with the true parameters $p^{*} = 0.3$, $\mu_{0}^{*} = (0.7, 0.7)$ and $\mu_{1}^{*} = (-0.7, -0.7)$, where $p^{*}$ is the mixture weight and $\mu_{0}^{*}, \mu_{1}^{*}$ are the means of the component distributions.
The variances are fixed as $0.5I-0.3I^\top$ and $0.25 I$, where $I$ is the identity matrix of size $(2, 2)$.

From the experimental results in Figure~\ref{fig:energy_summary_mgm}, we can see that our method achieves a better performance when the observed data are contaminated, whereas the other methods fail to give good scores.
In addition, in terms of the MSE, our method outperforms the baseline methods (see Appendix~\ref{app:full_MSE_scores}) under contamination.
\diff{From the results in Figure~\ref{fig:posterior_sum1}, we can confirm that the ABC posterior with our method places high density around the ground-truth parameter, whereas the baseline method fails to do so.}

\begin{figure*}[th]
    \centering
    \includegraphics[scale=0.31]{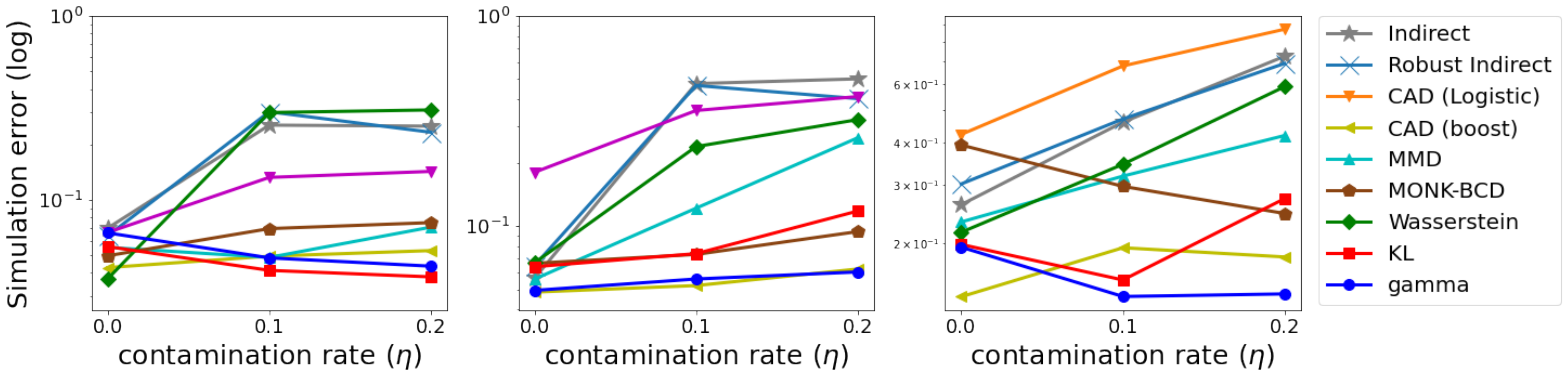}
    \caption{Simulation error for the BB (left), the MA2 (center), and the GK model (right) experiments.}
    \label{fig:energy_summary}
\end{figure*}
\paragraph{\textit{M}/\textit{G}/1-queuing Model (MG1):}
Queuing models are an example of stochastic models which are easy to sample
from but have intractable likelihoods~\cite{fearnhead12}.
The $M$/$G$/$1$-queuing model has been often used in the ABC literature~\cite{fearnhead12,Jiang18}.
This model has three parameters: $\theta = (\theta_1, \theta_2, \theta_3)$.
We adopted this model with the true parameter $\theta^{*} = (1,5,0.2)$.

From the experimental results in Figure~\ref{fig:energy_summary_mgm}, we can see that our method outperforms the other methods even if the data has no contamination.
In addition, our method also achieves better performance in terms of the MSE scores than the baseline methods (see Appendix \ref{app:full_MSE_scores}).
\diff{Figure~\ref{fig:posterior_sum2} indicates that the ABC posterior with our method places high density around the ground-truth parameter, e.g., for $\theta_{2}$.
On the other hand, the CAD via boosting places higher density around the wrong parameter than our method, e.g., for $\theta_{3}$.}


\begin{figure*}[th]
    \centering
    \includegraphics[scale=0.29]{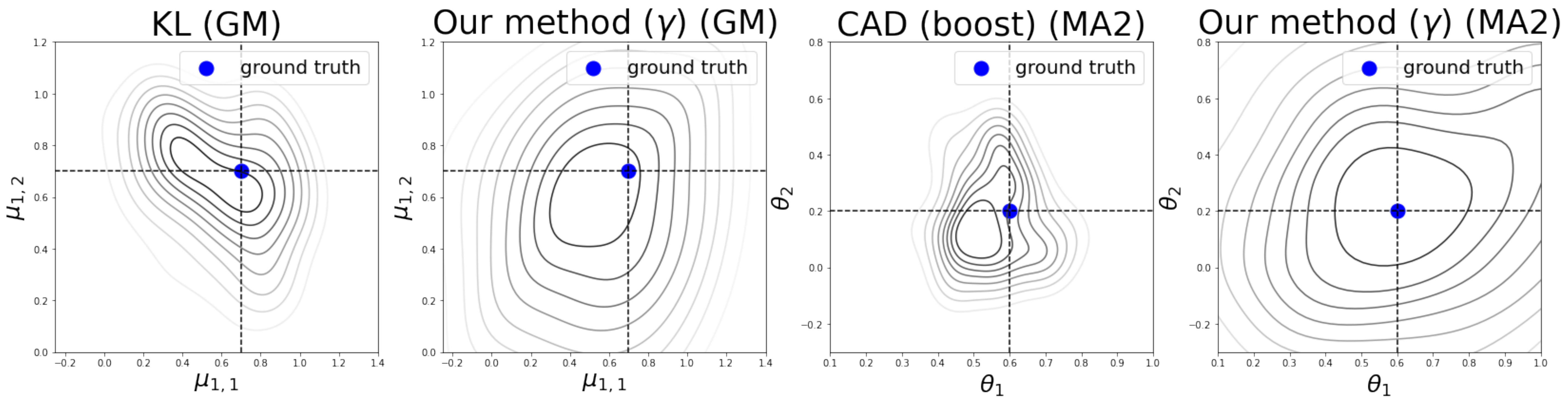}
    \caption{ABC posterior distributions of the GM and MA2 model experiments for $\eta = 0.2$ (excerpted).}
    \label{fig:posterior_sum1}
\end{figure*}
\begin{figure*}[th]
    \centering
    \includegraphics[scale=0.22]{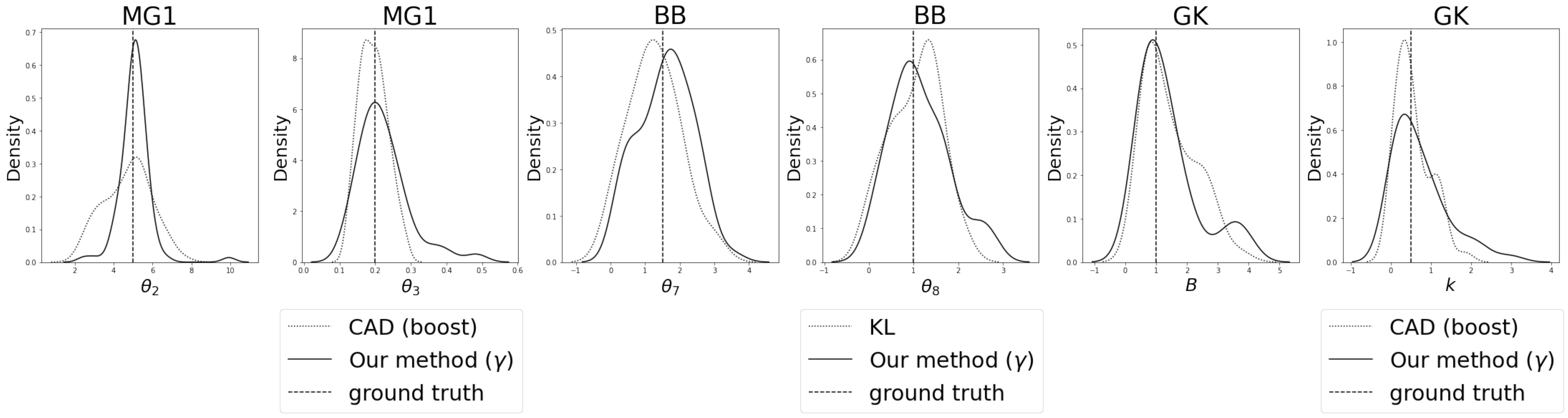}
    \caption{ABC posterior distributions of the MG1, BB, and GK model experiments for $\eta = 0.2$ (excerpted).}
    \label{fig:posterior_sum2}
\end{figure*}
\paragraph{Bivariate Beta Model (BB):}
The bivariate beta model can be used to model data sets exhibiting positive or negative correlation~\cite{Arnold11}.
This model was originally proposed as a model with $8$ parameters $\theta = (\theta_{1},\ldots,\theta_{8})$ by \citet{Arnold11}, and \citet{Crackel17} later reconsidered its $5$-parameter sub-model by restricting to $\theta_{3},\theta_{4},\theta_{5} = 0$. \citet{Jiang18} used the $5$-parameter models for ABC experiments and therefore we also adopted this with the true parameter $\theta^{*} = (3,2.5,2,1.5,1)$ as a benchmark model.

From the experimental results in Figure \ref{fig:energy_summary}, the KL- and the $\gamma$-divergence based methods achieve better performances than those of the baseline methods when the observed data are heavily contaminated.
The Wasserstein method achieves a better performance than the others when the data has no contamination; however, the performance becomes significantly worse when the contamination occurs.
In addition, the CAD with boosting method achieves a better performance in terms of the MSE; however, the simulation error is worse than the KL- and the $\gamma$-divergence based methods (see Appendix~\ref{app:full_MSE_scores}).
\diff{Figure~\ref{fig:posterior_sum2} shows that the ABC posterior with our method places higher density around the ground-truth parameter than the KL method.
In addition, the KL method sometimes places high density around the wrong parameter, e.g., for $\theta_{8}$.
However, in this experiment, the simulation error of the KL method is slightly better than that of our method.
This indicates that carefully tuning the hyperparameter $\gamma$ is important in our method.
}

\paragraph{Moving-average Model of Order 2 (MA2):}
The moving-average model is often used for modeling univariate time series.
\citet{Marin12} used the moving-average model of order $2$ as a benchmark model for ABC.
We adopted this model with $10$-length time series. For the unobserved noise distribution, we used Student's t-distribution with $5$ degrees of freedom.
We set the true parameter $\theta^{*} = (0.6,0.2)$.

From the experimental results in Figure \ref{fig:energy_summary}, our method and the CAD with boosting outperform the other methods.
The CAD with boosting achieves comparable performance to our method even if the observed data are contaminated; however, in terms of the time cost, our method is better than that of this method \diff{because the gradient boosting has $\mathcal{O}(Kd(n+m) + (n+m)\log B)$ time cost, where $K$ is the total number of trees and $B$ is s the maximum number of rows in each block (see \citet{Chen16})}.
In addition, in terms of the MSEs, our method achieves better performance when the observed data are heavily contaminated (see Appendix \ref{app:full_MSE_scores}).
\diff{In Figure~\ref{fig:posterior_sum1}, we found that the ABC posterior with our method places high density around the ground-truth parameter.
On the other hand, the CAD via boosting places higher density around the wrong parameter than our method.}

\paragraph{Multivariate $g$-and-$k$ Distribution (GK):}
The univariate $g$-and-$k$ distribution is a generalization of the standard normal distribution with extra parameters: the \emph{skewness} and the \emph{kurtosis}.
This distribution is known to have no analytical form of the density function, and the numerical evaluation of the likelihood function is costly \cite{rayner02}.
Thus, it is a model for which ABC is specifically suited \cite{fearnhead12,Allingham09}.
Some studies \cite{Drovandi11,Li15} also considered the multivariate $g$-and-$k$ distribution.
We adopted the multivariate model proposed by \citet{Drovandi11} with the true parameters $A^{*} = 3$, $B^{*} = 1$, $g^{*} = 2$, $k^{*} = 0.5$ and $\rho^{*} = -0.3$, where $A^{*},B^{*},g^{*},k^{*}$ control the location, the scale, the skewness and the kurtosis, respectively.

From the experimental results in Figure \ref{fig:energy_summary}, we can see that our method achieves better performance even if the observed data are contaminated, although the other baseline methods fail to give good scores.
In addition, in terms of the MSE, our method outperforms the other baseline methods when the observed data have heavy contamination (see Appendix \ref{app:full_MSE_scores}).
\diff{Figure~\ref{fig:posterior_sum2} shows that the ABC posterior with our method places slightly higher density on the ground-truth parameter than that of the CAD via boosting.
}




\section{Conclusion and Discussion}
\label{sec:conclusion}
We have proposed a $\gamma$-divergence estimator and used it as a robust data discrepancy for ABC.
We theoretically have guaranteed its robustness against outliers and its desirable asymptotic properties, i.e., the asymptotic unbiasedness and the almost sure convergence to the approximate posterior.
In addition, we have shown the redescending property of the ABC posterior of our method, indicating the high robustness of our method against extreme outliers.
Through the experiments on benchmark models, we empirically confirmed that our method is robust against heavy contamination by outliers.

\diff{Our work has two limitations: (i) our method can become statistically inefficient in high-dimensional cases (the curse of dimensionality) due to the $k$-NN based density estimation, and therefore (ii) its performance has only been confirmed in some low-dimensional cases.
To overcome these limitations, in the future, we will consider extending our method to a non-parametric estimation that can handle high dimensions and conduct experiments for more realistic high-dimensional cases.
Furthermore, on the basis of our idea, we plan to develop outlier-robust methods for other ABC approaches, e.g., ABC without discrepancies~\cite{george16,greenberg19a,Thomas20}.}

\diff{It is worth mentioning that there have been several studies on general losses within Bayesian procedures~\cite{Bissiri16,Knoblauch19}, and recent studies have connected the ideas of these studies with ABC~\cite{schmon21}.
Following these studies, it would be interesting to see whether our method stands in the framework of generalized approximate Bayesian inference under the condition of ABC with a general loss function~\cite{schmon21}.
}

\diff{In addition, research on the consistency and robustness of Bayesian estimation against model misspecification has attracted attention recently.
For example, \citet{Cherief20} has proposed Bayesian estimation based on a pseudo-likelihood by using MMD and has theoretically shown that it is effective in this problem setting.
We also explore the potential of our method in the context of model misspecification.
}

\subsubsection*{Acknowledgements}
MF and TT were supported by RIKEN Junior Research Associate Program.
MF and TT were supported by Toyota/Dwango AI scholarship.
MF was supported by JST CREST including AIP challenge program.
TT was supported by Masason Foundation.
MS was supported by KAKENHI 17H00757.
We appreciate Dr. Ikko Yamane, Kento Nozawa, Dr. Yoshihiro Nagano, and Han Bao for their kind effort to maintain the experimental environment.

\bibliography{gamma_abc,gamma_abc_teshima}

\begin{thebibliography}{83}
\providecommand{\natexlab}[1]{#1}
\providecommand{\url}[1]{\texttt{#1}}
\expandafter\ifx\csname urlstyle\endcsname\relax
  \providecommand{\doi}[1]{doi: #1}\else
  \providecommand{\doi}{doi: \begingroup \urlstyle{rm}\Url}\fi

\bibitem[Allingham et~al.(2009)Allingham, King, and Mengersen]{Allingham09}
D.~Allingham, Robert King, and K.~Mengersen.
\newblock Bayesian estimation of quantile distributions.
\newblock \emph{Statistics and Computing}, 19:\penalty0 189--201, 2009.

\bibitem[Arnold and Ng(2011)]{Arnold11}
B.~C. Arnold and H.~K.~T. Ng.
\newblock {Flexible Bivariate Beta Distributions}.
\newblock \emph{Journal of Multivariate Analysis}, 102\penalty0 (8):\penalty0
  1194--1202, 2011.

\bibitem[Aurenhammer and Klein(2000)]{AurenhammerVoronoi2000}
Franz Aurenhammer and Rolf Klein.
\newblock Voronoi {{Diagrams}}.
\newblock In \emph{Handbook of {{Computational Geometry}}}, pages 201--290.
  {Elsevier}, 2000.
\newblock ISBN 978-0-444-82537-7.
\newblock \doi{10.1016/B978-044482537-7/50006-1}.
\newblock URL
  \url{https://linkinghub.elsevier.com/retrieve/pii/B9780444825377500061}.

\bibitem[Basu et~al.(1998)Basu, Harris, Hjort, and Jones]{Basu98}
Ayanendranath Basu, Ian~R. Harris, Nils~L. Hjort, and M.~C. Jones.
\newblock {Robust and efficient estimation by minimising a density power
  divergence}.
\newblock \emph{Biometrika}, 85\penalty0 (3):\penalty0 549--559, 09 1998.

\bibitem[Bentley(1975)]{Bentley75}
Jon~Louis Bentley.
\newblock Multidimensional binary search trees used for associative searching.
\newblock \emph{Commun. ACM}, 18:\penalty0 509--517, September 1975.
\newblock ISSN 0001-0782.

\bibitem[Berlinet and Thomas-Agnan(2004)]{berlinet04}
Alain Berlinet and Christine Thomas-Agnan.
\newblock \emph{{Reproducing Kernel Hilbert Space in Probability and
  Statistics}}.
\newblock Springer Science \& Business Media, 2004.

\bibitem[Bernton et~al.(2017)Bernton, Jacob, Gerber, and Robert]{Bernton17}
Espen Bernton, Pierre Jacob, Mathieu Gerber, and Christian Robert.
\newblock {Inference in generative models using the Wasserstein distance}.
\newblock \emph{arXiv preprint arXiv:1701.05146}, abs/1701.05146, 2017.

\bibitem[Biau and Devroye(2015)]{Biauknearest2015}
G^^c3^^a9rard Biau and Luc Devroye.
\newblock The k-nearest neighbor density estimate.
\newblock In G^^c3^^a9rard Biau and Luc Devroye, editors, \emph{Lectures on the
  {{Nearest Neighbor Method}}}, Springer {{Series}} in the {{Data Sciences}},
  pages 25--32. {Springer International Publishing}, {Cham}, 2015.
\newblock ISBN 978-3-319-25388-6.
\newblock \doi{10.1007/978-3-319-25388-6_3}.
\newblock URL \url{https://doi.org/10.1007/978-3-319-25388-6_3}.

\bibitem[Billingsley(1995)]{BillingsleyProbability1995}
Patrick Billingsley.
\newblock \emph{Probability and {{Measure}}}.
\newblock {Wiley}, third edition, 1995.
\newblock ISBN 0-471-00710-2.
\newblock URL
  \url{http://www.amazon.com/exec/obidos/redirect?tag=citeulike07-20\&path=ASIN/0471007102}.
\newblock Published: Hardcover.

\bibitem[Bissiri et~al.(2016)Bissiri, Holmes, and Walker]{Bissiri16}
P.~G. Bissiri, C.~C. Holmes, and S.~G. Walker.
\newblock A general framework for updating belief distributions.
\newblock \emph{Journal of the Royal Statistical Society: Series B (Statistical
  Methodology)}, 78\penalty0 (5):\penalty0 1103--1130, 2016.

\bibitem[Blum and Tran(2010)]{Blum20}
M.~G.~B. Blum and V.~C. Tran.
\newblock {HIV with contact-tracing: a case study in Approximate Bayesian
  Computation}.
\newblock \emph{{Biostatistics}}, 11\penalty0 (4):\penalty0 644--660, 2010.

\bibitem[Blum et~al.(2013)Blum, Nunes, Prangle, and Sisson]{blum13}
M.~G.~B. Blum, M.~A. Nunes, D.~Prangle, and S.~A. Sisson.
\newblock {A Comparative Review of Dimension Reduction Methods in Approximate
  Bayesian Computation}.
\newblock \emph{Statistical Science}, 28\penalty0 (2):\penalty0 189--208, 2013.

\bibitem[Blum and Fran^^c3^^a7ois(2010)]{Blum10}
Michael Blum and Olivier Fran^^c3^^a7ois.
\newblock {Non-linear regression models for Approximate Bayesian Computation}.
\newblock \emph{Statistics and Computing}, 20:\penalty0 63--73, 2010.

\bibitem[Burkard et~al.(2009)Burkard, Dell’Amico, and Martello]{Burkard09}
Rainer Burkard, Mauro Dell’Amico, and Silvano Martello.
\newblock \emph{{Assignment Problems}}.
\newblock Society for Industrial and Applied Mathematics, 2009.

\bibitem[Cameron and Pettitt(2012)]{Cameron12}
E.~Cameron and A.~N. Pettitt.
\newblock {Approximate Bayesian Computation for astronomical model analysis: a
  case study in galaxy demographics and morphological transformation at high
  redshift}.
\newblock \emph{Monthly Notices of the Royal Astronomical Society},
  425\penalty0 (1):\penalty0 44--65, 2012.

\bibitem[Chen and Guestrin(2016)]{Chen16}
Tianqi Chen and Carlos Guestrin.
\newblock {XGBoost: A Scalable Tree Boosting System}.
\newblock In \emph{Proceedings of the 22nd ACM SIGKDD International Conference
  on Knowledge Discovery and Data Mining (KDD)}, page 785^^e2^^80^^93794, 2016.

\bibitem[Cherief-Abdellatif and Alquier(2020)]{Cherief20}
Badr-Eddine Cherief-Abdellatif and Pierre Alquier.
\newblock {MMD-Bayes: Robust Bayesian Estimation via Maximum Mean Discrepancy}.
\newblock In \emph{Proceedings of The 2nd Symposium on Advances in Approximate
  Bayesian Inference}, volume 118, pages 1--21, 2020.

\bibitem[Christopher et~al.(2017)Christopher, Lapointe, Wimer, Hayden, Grooms,
  Rieker, and Hamlington]{Jason17}
Jason Christopher, Caelan Lapointe, Nicholas Wimer, Torrey Hayden, Ian Grooms,
  Gregory Rieker, and Peter Hamlington.
\newblock {Parameter Estimation for a Turbulent Buoyant Jet using Approximate
  Bayesian Computation}.
\newblock In \emph{55th AIAA Aerospace Sciences Meeting}, 2017.

\bibitem[Crackel and Flegal(2017)]{Crackel17}
Roberto Crackel and James Flegal.
\newblock Bayesian inference for a flexible class of bivariate beta
  distributions.
\newblock \emph{Journal of Statistical Computation and Simulation}, 87\penalty0
  (2):\penalty0 295--312, 2017.

\bibitem[Croux and Dehon(2001)]{croux01}
Christophe Croux and Catherine Dehon.
\newblock {Robust linear discriminant analysis using S-estimators}.
\newblock \emph{Canadian Journal of Statistics}, 29\penalty0 (3):\penalty0
  473--493, 2001.

\bibitem[Cuturi(2013)]{Cuturi13}
Marco Cuturi.
\newblock {Sinkhorn Distances: Lightspeed Computation of Optimal Transport}.
\newblock In \emph{Advances in Neural Information Processing Systems 26
  (NeurIPS)}, pages 2292--2300, 2013.

\bibitem[Cuturi and Doucet(2014)]{cuturi14}
Marco Cuturi and Arnaud Doucet.
\newblock {Fast Computation of Wasserstein Barycenters}.
\newblock In \emph{Proceedings of the 31st International Conference on Machine
  Learning (ICML)}, pages 685--693, 2014.

\bibitem[Dang(2015)]{DangComplex2015}
Nguyen~Viet Dang.
\newblock Complex powers of analytic functions and meromorphic renormalization
  in {{QFT}}.
\newblock \emph{arXiv:1503.00995 [math-ph]}, March 2015.
\newblock URL \url{http://arxiv.org/abs/1503.00995}.

\bibitem[Drovandi and Pettitt(2011)]{Drovandi11}
Christopher~C. Drovandi and Tony Pettitt.
\newblock {Estimation of parameters for macroparasite population evolution
  using approximate Bayesian computation}.
\newblock \emph{Biometrics}, 67\penalty0 (1):\penalty0 225--233, 2011.

\bibitem[Drovandi et~al.(2015)Drovandi, Pettitt, and Lee]{Drovandi15}
Christopher~C. Drovandi, Anthony Pettitt, and Anthony Lee.
\newblock {Bayesian Indirect Inference Using a Parametric Auxiliary Model}.
\newblock \emph{Statistical Science}, 30\penalty0 (1):\penalty0 72--95, 2015.

\bibitem[Edelsbrunner et~al.(1986)Edelsbrunner, O'Rourke, and
  Seidel]{EdelsbrunnerConstructing1986}
Herbert Edelsbrunner, Joseph O'Rourke, and Raimund Seidel.
\newblock Constructing arrangements of lines and hyperplanes with applications.
\newblock \emph{SIAM Journal on Computing}, 15\penalty0 (2), 1986.
\newblock URL \url{https://research-explorer.app.ist.ac.at/record/4105}.

\bibitem[Fearnhead and Prangle(2012)]{fearnhead12}
Paul Fearnhead and Dennis Prangle.
\newblock Constructing summary statistics for approximate {Bayesian}
  computation: semi-automatic approximate {Bayesian} computation:
  {Semi}-automatic {Approximate} {Bayesian} {Computation}.
\newblock \emph{Journal of the Royal Statistical Society: Series B (Statistical
  Methodology)}, 74\penalty0 (3):\penalty0 419--474, 2012.

\bibitem[Feng et~al.(2014)Feng, Xu, Mannor, and Yan]{feng14}
Jiashi Feng, Huan Xu, Shie Mannor, and Shuicheng Yan.
\newblock {Robust Logistic Regression and Classification}.
\newblock In \emph{Advances in Neural Information Processing Systems 27
  (NeurIPS)}, pages 253--261, 2014.

\bibitem[Fujisawa and Eguchi(2008)]{Fujisawa08}
Hironori Fujisawa and Shinto Eguchi.
\newblock {Robust Parameter Estimation with a Small Bias Against Heavy
  Contamination}.
\newblock \emph{Journal of Multivariate Analysis}, 99\penalty0 (9):\penalty0
  2053--2081, October 2008.

\bibitem[Futami et~al.(2018)Futami, Sato, and Sugiyama]{futami18}
Futoshi Futami, Issei Sato, and Masashi Sugiyama.
\newblock {Variational Inference based on Robust Divergences}.
\newblock In \emph{International Conference on Artificial Intelligence and
  Statistics (AISTATS)}, 2018.

\bibitem[Gleim and Pigorsch(2013)]{Gleim13}
A.~Gleim and C.~Pigorsch.
\newblock {Approximate Bayesian computation with indirect summary statistics}.
\newblock \emph{Draft paper:
  http://ect-pigorsch.mee.uni-bonn.de/data/research/papers}, 2013.

\bibitem[Greenberg et~al.(2019)Greenberg, Nonnenmacher, and
  Macke]{greenberg19a}
David Greenberg, Marcel Nonnenmacher, and Jakob Macke.
\newblock Automatic posterior transformation for likelihood-free inference.
\newblock In \emph{Proceedings of the 36th International Conference on Machine
  Learning (ICML)}, 2019.

\bibitem[Gretton et~al.(2012)Gretton, Borgwardt, Rasch, Sch{\"o}lkopf, and
  Smola]{Gretton12}
A.~Gretton, K.~Borgwardt, M.~Rasch, B.~Sch{\"o}lkopf, and A.~Smola.
\newblock {A Kernel Two-Sample Test }.
\newblock \emph{Journal of Machine Learning Research}, 13:\penalty0 723--773,
  2012.

\bibitem[Grimmett and Stirzaker(2001)]{grimmett01}
G.R. Grimmett and D.R. Stirzaker.
\newblock \emph{Probability and random processes}.
\newblock Oxford university press, 2001.

\bibitem[Gutmann et~al.(2018)Gutmann, Dutta, Kaski, and Corander]{Gutmann18}
{Michael U.} Gutmann, Ritabrata Dutta, Samuel Kaski, and Jukka Corander.
\newblock {Likelihood-free inference via classification}.
\newblock \emph{Statistics and Computing}, 28\penalty0 (2):\penalty0 411--425,
  2018.

\bibitem[Hampel(2005)]{HampelRobust2005}
Frank~R. Hampel, editor.
\newblock \emph{Robust {{Statistics}}: {{The Approach Based}} on {{Influence
  Functions}}}.
\newblock Wiley Series in Probability and Mathematical Statistics. {Wiley},
  {New York}, digital print edition, 2005.
\newblock ISBN 978-0-471-73577-9.
\newblock OCLC: 255133771.

\bibitem[H\"{a}rdle et~al.(2006)H\"{a}rdle, M^^c3^^bcller, Sperlich, and
  Werwatz]{Hardle06}
Wolfgang~Karl H\"{a}rdle, Marlene M^^c3^^bcller, Stefan Sperlich, and Axel
  Werwatz.
\newblock \emph{{Nonparametric and Semiparametric Models}}.
\newblock Springer, 01 2006.
\newblock ISBN 978-3-642-62076-8.

\bibitem[Huber(1964)]{Huber64}
Peter~J. Huber.
\newblock Robust estimation of a location parameter.
\newblock \emph{Annals of Mathematical Statistics}, 35\penalty0 (1):\penalty0
  73--101, 1964.

\bibitem[Huber et~al.(1981)Huber, Wiley, and InterScience]{Huber09}
P.J. Huber, J.~Wiley, and W.~InterScience.
\newblock \emph{{Robust statistics}}.
\newblock Wiley New York, 1981.

\bibitem[jean Kim et~al.(2006)jean Kim, Magnani, and Boyd]{kim06}
Seung jean Kim, Alessandro Magnani, and Stephen Boyd.
\newblock {Robust Fisher Discriminant Analysis}.
\newblock In \emph{Advances in Neural Information Processing Systems 18
  (NeurIPS)}, pages 659--666, 2006.

\bibitem[Jewson et~al.(2018)Jewson, Smith, and Holmes]{Jewson18}
Jack Jewson, Jim~Q. Smith, and Chris Holmes.
\newblock {Principles of Bayesian Inference Using General Divergence Criteria}.
\newblock \emph{Entropy}, 20\penalty0 (6), 2018.
\newblock ISSN 1099-4300.

\bibitem[Jiang et~al.(2018)Jiang, Wu, and Wong]{Jiang18}
Bai Jiang, Tung-Yu Wu, and Wing~Hung Wong.
\newblock {Approximate Bayesian Computation with Kullback-Leibler Divergence as
  Data Discrepancy}.
\newblock In \emph{Proceedings of the 21th International Conference on
  Artificial Intelligence and Statistics (AISTATS)}, volume~84, 2018.

\bibitem[Kajihara et~al.(2018)Kajihara, Kanagawa, Yamazaki, and
  Fukumizu]{kajihara18a}
Takafumi Kajihara, Motonobu Kanagawa, Keisuke Yamazaki, and Kenji Fukumizu.
\newblock {Kernel Recursive {ABC}: Point Estimation with Intractable
  Likelihood}.
\newblock In \emph{Proceedings of the 35th International Conference on Machine
  Learning (ICML)}, pages 2400--2409, 2018.

\bibitem[Kanamori et~al.(2009)Kanamori, Hido, and Sugiyama]{kanamori09}
Takafumi Kanamori, Shohei Hido, and Masashi Sugiyama.
\newblock {A Least-Squares Approach to Direct Importance Estimation}.
\newblock \emph{Journal of Machine Learning Research}, 10:\penalty0
  1391^^e2^^80^^931445, 2009.

\bibitem[Knoblauch et~al.(2018)Knoblauch, Jewson, and Damoulas]{Knoblauch18}
Jeremias Knoblauch, Jack~E Jewson, and Theodoros Damoulas.
\newblock {Doubly Robust Bayesian Inference for Non-Stationary Streaming Data
  with $\beta$-Divergences}.
\newblock In \emph{Advances in Neural Information Processing Systems
  (NeurIPS)}, 2018.

\bibitem[Knoblauch et~al.(2019)Knoblauch, Jewson, and Damoulas]{Knoblauch19}
Jeremias Knoblauch, J.~Jewson, and T.~Damoulas.
\newblock {Generalized Variational Inference: Three arguments for deriving new
  Posteriors}.
\newblock \emph{arXiv preprint arXiv:1904.02063}, abs/1904.02063, 2019.

\bibitem[Kremer et~al.(2017)Kremer, Stensbo-Smidt, Gieseke, Pedersen, and
  Igel]{Kremer17}
Jan Kremer, Kristoffer Stensbo-Smidt, Fabian Gieseke, Kim Pedersen, and
  Christian Igel.
\newblock {Big Universe, Big Data: Machine Learning and Image Analysis for
  Astronomy}.
\newblock \emph{IEEE Intelligent Systems}, 32:\penalty0 16--22, 03 2017.

\bibitem[Leonenko et~al.(2008)Leonenko, Pronzato, and Savani]{Leonenko08a}
Nikolai Leonenko, Luc Pronzato, and Vippal Savani.
\newblock {A class of {R}\'{e}nyi information estimators for multidimensional
  densities}.
\newblock \emph{Annals of Statistics}, 36\penalty0 (5):\penalty0 2153--2182,
  2008.

\bibitem[Lerasle et~al.(2019)Lerasle, Szabo, Mathieu, and Lecue]{Lerasle19}
Matthieu Lerasle, Zoltan Szabo, Timoth{\'e}e Mathieu, and Guillaume Lecue.
\newblock {{MONK} Outlier-Robust Mean Embedding Estimation by Median-of-Means}.
\newblock In \emph{Proceedings of the 36th International Conference on Machine
  Learning (ICML)}, volume~97 of \emph{Proceedings of Machine Learning
  Research}, pages 3782--3793, 2019.

\bibitem[Li et~al.(2015)Li, Nott, Fan, and Sisson]{Li15}
Jingjing Li, David Nott, Yanan Fan, and Scott Sisson.
\newblock {Extending approximate Bayesian computation methods to high
  dimensions via Gaussian copula}.
\newblock \emph{Computational Statistics \& Data Analysis}, 106:\penalty0
  77--89, 2015.

\bibitem[Loftsgaarden and Quesenberry(1965)]{loftsgaarden65}
D.~O. Loftsgaarden and C.~P. Quesenberry.
\newblock {A Nonparametric Estimate of a Multivariate Density Function}.
\newblock \emph{The Annals of Mathematical Statistics}, 36\penalty0
  (3):\penalty0 1049--1051, 06 1965.

\bibitem[Maneewongvatana and Mount(2001)]{Maneewongvatana01}
Songrit Maneewongvatana and David~M. Mount.
\newblock {On the Efficiency of Nearest Neighbor Searching with Data Clustered
  in Lower Dimensions}.
\newblock In \emph{Computational Science -- ICCS 2001}, pages 842--851, Berlin,
  Heidelberg, 2001. Springer Berlin Heidelberg.

\bibitem[Marin et~al.(2012)Marin, Pudlo, Robert, and Ryder]{Marin12}
Jean~Michel Marin, Pierre Pudlo, Christian~P. Robert, and Robin~J. Ryder.
\newblock {Approximate Bayesian Computational methods}.
\newblock \emph{Statistics and Computing}, 22:\penalty0 1167--1180, 2012.

\bibitem[Marjoram et~al.(2003)Marjoram, Molitor, Plagnol, and
  Tavar{\'e}]{Marjoram03}
Paul Marjoram, John Molitor, Vincent Plagnol, and Simon Tavar{\'e}.
\newblock {Markov chain Monte Carlo without likelihoods}.
\newblock \emph{Proceedings of the National Academy of Sciences}, 100\penalty0
  (26):\penalty0 15324--15328, 2003.

\bibitem[Maronna(2019)]{MaronnaRobust2019}
Ricardo~A. Maronna.
\newblock \emph{Robust Statistics: Theory and Methods (with {{R}})}.
\newblock Wiley Series in Probability and Statistics. {WIley}, {Hoboken, NJ},
  second edition edition, 2019.
\newblock ISBN 978-1-119-21467-0 978-1-119-21466-3.

\bibitem[Mityagin(2015)]{MityaginZero2015}
Boris Mityagin.
\newblock The {{Zero Set}} of a {{Real Analytic Function}}.
\newblock \emph{arXiv:1512.07276 [math]}, December 2015.
\newblock URL \url{http://arxiv.org/abs/1512.07276}.

\bibitem[Moral et~al.(2012)Moral, Doucet, and Jasra]{Pierre12}
Pierre Moral, Arnaud Doucet, and Ajay Jasra.
\newblock {An Adaptive Sequential Monte Carlo Method for Approximate Bayesian
  Computation}.
\newblock \emph{Statistics and Computing}, 22\penalty0 (5):\penalty0
  1009^^e2^^80^^931020, 2012.

\bibitem[Nakagawa and Hashimoto(2020)]{Nakagawa20}
Tomoyuki Nakagawa and Shintaro Hashimoto.
\newblock {Robust Bayesian inference via $\gamma$-divergence}.
\newblock \emph{Communications in Statistics - Theory and Methods}, 49\penalty0
  (2):\penalty0 343--360, 2020.

\bibitem[Papamakarios and Murray(2016)]{george16}
George Papamakarios and Iain Murray.
\newblock Fast $\epsilon$-free inference of simulation models with bayesian
  conditional density estimation.
\newblock In \emph{Advances in Neural Information Processing Systems 29
  (NeurIPS)}, pages 1028--1036, 2016.

\bibitem[Park et~al.(2016)Park, Jitkrittum, and Sejdinovic]{Park16}
Mijung Park, Wittawat Jitkrittum, and Dino Sejdinovic.
\newblock {K2-ABC: Approximate Bayesian Computation with Kernel Embeddings}.
\newblock In \emph{Proceedings of the 19th International Conference on
  Artificial Intelligence and Statistics (AISTATS)}, volume~51 of
  \emph{Proceedings of Machine Learning Research}, pages 398--407, 2016.

\bibitem[P{\'e}rez-Cruz(2008)]{PrezCruz08}
Fernando P{\'e}rez-Cruz.
\newblock {Kullback-Leibler divergence estimation of continuous distributions}.
\newblock \emph{2008 IEEE International Symposium on Information Theory}, pages
  1666--1670, 2008.

\bibitem[Peters and Sisson(2006)]{Peters06}
G.W. Peters and S.A. Sisson.
\newblock {Bayesian inference, Monte Carlo sampling and operational risk}.
\newblock \emph{Journal of Operational Risk}, 1\penalty0 (3):\penalty0 27--50,
  2006.

\bibitem[Peters et~al.(2012)Peters, Sisson, and Fan]{Peters12}
G.W. Peters, S.A. Sisson, and Y.~Fan.
\newblock {Likelihood-free Bayesian inference for $\alpha$-stable models}.
\newblock \emph{Computational Statistics \& Data Analysis}, 56\penalty0
  (11):\penalty0 3743--3756, 2012.

\bibitem[Poczos and Schneider(2011)]{poczos11a}
Barnabas Poczos and Jeff Schneider.
\newblock {On the Estimation of $\alpha$-Divergences}.
\newblock In \emph{Proceedings of the 14th International Conference on
  Artificial Intelligence and Statistics (AISTATS)}, volume~15, 2011.

\bibitem[Pritchard et~al.(1999)Pritchard, Seielstad, Perez-Lezaun, and
  Feldman]{Pritchard99}
J.~K. Pritchard, M.~T. Seielstad, A.~Perez-Lezaun, and M.~W. Feldman.
\newblock {Population growth of human Y chromosomes: a study of Y chromosome
  microsatellites.}
\newblock \emph{Molecular Biology and Evolution}, 16\penalty0 (12):\penalty0
  1791--1798, 1999.

\bibitem[Rayner and Macgillivray(2002)]{rayner02}
G.~Rayner and H.~Macgillivray.
\newblock {Numerical maximum likelihood estimation for the g-and-k and
  generalized g-and-h distributions}.
\newblock \emph{Statistics and Computing}, 12\penalty0 (1):\penalty0 57--75,
  2002.

\bibitem[Rudin(1976)]{RudinPrinciples1976}
Walter Rudin.
\newblock \emph{Principles of {{Mathematical Analysis}}}.
\newblock McGraw-Hill, 1976.

\bibitem[Ruli et~al.(2020)Ruli, Sartori, and Ventura]{Ruli20}
Erlis Ruli, Nicola Sartori, and Laura Ventura.
\newblock {Robust approximate Bayesian inference}.
\newblock \emph{Journal of Statistical Planning and Inference}, 205:\penalty0
  10--22, 2020.

\bibitem[Schmon et~al.(2021)Schmon, Cannon, and Knoblauch]{schmon21}
Sebastian~M Schmon, Patrick~W Cannon, and Jeremias Knoblauch.
\newblock {Generalized Posteriors in Approximate Bayesian Computation}.
\newblock In \emph{Third Symposium on Advances in Approximate Bayesian
  Inference}, 2021.

\bibitem[Scott(2015)]{Scott15}
David~W Scott.
\newblock \emph{{Multivariate density estimation: theory, practice, and
  visualization; 2nd ed.}}
\newblock Wiley series in probability and statistics. Wiley, Hoboken, NJ, 2015.
\newblock \doi{10.1002/9781118575574}.

\bibitem[Sisson et~al.(2007)Sisson, Fan, and Tanaka]{Sisson07}
S.~A. Sisson, Y.~Fan, and Mark~M. Tanaka.
\newblock {Sequential Monte Carlo without likelihoods}.
\newblock \emph{Proceedings of the National Academy of Sciences}, 104\penalty0
  (6):\penalty0 1760--1765, 2007.

\bibitem[Sisson et~al.(2009)Sisson, Fan, and Tanaka]{Sisson09}
S.~A. Sisson, Y.~Fan, and Mark~M. Tanaka.
\newblock {Correction for Sisson et al., "Sequential Monte Carlo without
  likelihoods"}.
\newblock \emph{Proceedings of the National Academy of Sciences}, 106\penalty0
  (39):\penalty0 16889--16889, 2009.

\bibitem[Smola et~al.(2007)Smola, Gretton, Song, and Sch{\"o}lkopf]{smola07}
A.~Smola, A.~Gretton, L.~Song, and B.~Sch{\"o}lkopf.
\newblock {A Hilbert Space Embedding for Distributions}.
\newblock \emph{Algorithmic Learning Theory: 18th International Conference (ALT
  2007)}, pages 13--31, 2007.

\bibitem[Staerman et~al.(2020)Staerman, Laforgue, Mozharovskyi, and
  d'Alch^^c3^^a9 Buc]{staerman20}
Guillaume Staerman, Pierre Laforgue, Pavlo Mozharovskyi, and Florence
  d'Alch^^c3^^a9 Buc.
\newblock {When OT meets MoM: Robust estimation of Wasserstein Distance}.
\newblock \emph{arXiv preprint arXiv:2006.10325}, abs/2006.10325, 2020.

\bibitem[Sugiyama et~al.(2008)Sugiyama, Suzuki, Nakajima, Kashima, von
  B^^c3^^bcnau, and Kawanabe]{sugi08}
Masashi Sugiyama, Taiji Suzuki, Shinichi Nakajima, Hisashi Kashima, Paul von
  B^^c3^^bcnau, and Motoaki Kawanabe.
\newblock Direct importance estimation for covariate shift adaptation.
\newblock \emph{Annals of the Institute of Statistical Mathematics},
  60:\penalty0 699--746, 02 2008.

\bibitem[Tavare et~al.(1997)Tavare, Balding, Griffiths, and Donnelly]{Tavare97}
Simon Tavare, {David J} Balding, {Robert C} Griffiths, and Peter Donnelly.
\newblock {Inferring Coalescence Times from DNA Sequence Data}.
\newblock \emph{Genetics}, 162\penalty0 (2):\penalty0 505--518, 1997.

\bibitem[Thomas et~al.(2020)Thomas, Dutta, Corander, Kaski, and
  Gutmann]{Thomas20}
O.~Thomas, R.~Dutta, J.~Corander, S.~Kaski, and M.U. Gutmann.
\newblock {Likelihood-Free Inference by Ratio Estimation}.
\newblock \emph{Bayesian Analysis}, 2020.

\bibitem[van~der Vaart(1998)]{vaart_1998}
A.~W. van~der Vaart.
\newblock \emph{{Asymptotic Statistics}}.
\newblock Cambridge Series in Statistical and Probabilistic Mathematics.
  Cambridge University Press, 1998.

\bibitem[{Wang} et~al.(2009){Wang}, {Kulkarni}, and {Verdu}]{Wang09}
Q.~{Wang}, S.~R. {Kulkarni}, and S.~{Verdu}.
\newblock {Divergence Estimation for Multidimensional Densities Via
  $k$-Nearest-Neighbor Distances}.
\newblock \emph{IEEE Transactions on Information Theory}, 55\penalty0
  (5):\penalty0 2392--2405, 2009.

\bibitem[Wegmann et~al.(2009)Wegmann, Leuenberger, and Excoffier]{Wegmann09}
Daniel Wegmann, Christoph Leuenberger, and Laurent Excoffier.
\newblock {Efficient Approximate Bayesian Computation Coupled With Markov Chain
  Monte Carlo Without Likelihood}.
\newblock \emph{Genetics}, 182\penalty0 (4):\penalty0 1207--1218, 2009.

\bibitem[Wilkinson(2013)]{Wilkinson13}
Richard Wilkinson.
\newblock {Approximate Bayesian Computation (ABC) gives exact results under the
  assumption of model error}.
\newblock \emph{Statistical applications in genetics and molecular biology},
  12:\penalty0 1--13, 2013.

\bibitem[Windham(1995)]{windham95}
Michael~P Windham.
\newblock Robustifying model fitting.
\newblock \emph{Journal of the Royal Statistical Society. Series B
  (Methodological)}, pages 599--609, 1995.

\bibitem[Wood(2010)]{Simon10}
Simon~N. Wood.
\newblock Statistical inference for noisy nonlinear ecological dynamic systems.
\newblock \emph{Nature}, 466\penalty0 (7310):\penalty0 1102--1104, 2010.

\end{thebibliography}
\bibliographystyle{plainnat}

\clearpage
\appendix
\onecolumn
\aistatstitle{Appendix of
``$\gamma$-ABC: Outlier-Robust Approximate Bayesian Computation Based on a Robust Divergence Estimator''}
\tableofcontents
\clearpage
\addtocontents{toc}{\protect\setcounter{tocdepth}{2}}

\clearpage
\section{Derivation of \texorpdfstring{$\gamma$}{Lg}-divergence Estimator}
\label{proof:gamma_estimator}
\setcounter{definition}{0}
We show how to derive the $k$-NN based $\gamma$-divergence estimator in  \eqref{eq:default_gamma_est}.
The $k$-NN based $\gamma$-divergence estimator and its derivation is as follows.
\begin{align*}
    &\widehat{D}_{\gamma}(X^{n}\|Y^{m}) =  \frac{1}{\gamma(1+\gamma)}
    \times\left(
    \log \frac{\bigg(\displaystyle{\frac{1}{n} \sum_{i=1}^{n} (\frac{\bar{c}}{k}\hat{p}_{k}(x_{i}))^{\gamma}\bigg) } \bigg(\frac{1}{m} \displaystyle{\sum_{j=1}^{m} (\frac{\bar{c}}{k}\hat{q}_{k}(y_{j}))^{\gamma} \bigg)^{\gamma}}}
    {\bigg(\displaystyle \frac{1}{n}\sum_{i=1}^{n} (\frac{\bar{c}}{k}\hat{q}_{k}(x_{i}))^{\gamma} \bigg)^{1+\gamma}}
    \right),
\end{align*}
where $\gamma (\in \mathbb{R}) > 0$.

We rewrite Eq.~\eqref{gamma_div} as 
\normalsize
\begin{align}
\label{eq:gamma_div_est_formulate1}
    &\frac{1}{\gamma(1+\gamma)} \log \int_{\mathcal{M}} p(x) p^{\gamma}(x) \mathrm{d}x - \frac{1}{\gamma} \log \int_{\mathcal{M}} p(x)q^{\gamma}(x) \mathrm{d}x \frac{1}{1+\gamma} \log \int_{\mathcal{M}'}
    q(y) q^{\gamma}(y) \mathrm{d}y \notag \\
    &= \frac{1}{\gamma(1+\gamma)} \bigg( \log \mathbb{E}_{p(x)} \bigg[ p^{\gamma}(x) \bigg] - (1+\gamma) \log \mathbb{E}_{p(x)} \bigg[ q^{\gamma}(x) \bigg] + \gamma \log \mathbb{E}_{q(y)} \bigg[ q^{\gamma}(y) \bigg] \bigg),
\end{align}
\normalsize
where $\mathcal{M}$ and $\mathcal{M}'$ are the supports of $p$ and $q$.
By simply plugging Eqs.~\eqref{def:knn_est_p} and \eqref{def:knn_est_q} into Eq.~\eqref{eq:gamma_div_est_formulate1}, 
we estimate $D_{\gamma}(p\|q)$ with
\footnotesize
\begin{align*}
    &\widehat{D}_{\gamma}(X^{n}\|Y^{m}) \notag \\
    &= \frac{1}{\gamma(1+\gamma)} \bigg[ \log \bigg( \frac{1}{n} \sum_{i=1}^{n} \bigg(\frac{k}{(n-1)\bar{c}\rho_{k}^{d}(i)} \bigg)^{\gamma} \bigg)
    - (1+\gamma) \log \bigg(\frac{1}{n} \sum_{i=1}^{n} \bigg(\frac{k}{m\bar{c}\nu_{k}^{d}(i)} \bigg)^{\gamma} \bigg) + \gamma \log \bigg(\frac{1}{m} \sum_{j=1}^{m} \bigg(\frac{k}{(m-1)\bar{c}\bar{\rho}_{k}^{d}(j)} \bigg)^{\gamma} \bigg) \bigg] \notag \\
    &= \frac{1}{\gamma(1+\gamma)} \left(\log \frac{\bigg(\displaystyle\frac{1}{n}\sum_{i=1}^{n} \frac{1}{(n-1)^{\gamma} \rho_{k}^{d\gamma}(i)} \bigg) \bigg( \frac{1}{m} \displaystyle\sum_{j=1}^{m} \frac{1}{(m-1)^{\gamma}\bar{\rho}_{k}^{d\gamma}(j)} \bigg)^{\gamma}}{\bigg( \displaystyle\frac{1}{n}\displaystyle\sum_{i=1}^{n} \frac{1}{m^{\gamma} \nu_{k}^{d\gamma}(i)} \bigg)^{1+\gamma}} \right) \\
    &= \frac{1}{\gamma(1+\gamma)} \left(\log \frac{\bigg(\displaystyle\frac{1}{n}\displaystyle\sum_{i=1}^{n} \frac{\bar{c}^{\gamma}}{k^{\gamma}} \cdot \frac{k^{\gamma}}{(n-1)^{\gamma} \bar{c}^{\gamma} \rho_{k}^{d\gamma}(i)} \bigg) \bigg( \displaystyle\frac{1}{m} \displaystyle\sum_{j=1}^{m} \frac{\bar{c}^{\gamma}}{k^{\gamma}} \cdot \frac{k^{\gamma}}{(m-1)^{\gamma} \bar{c}^{\gamma}\bar{\rho}_{k}^{d\gamma}(j)} \bigg)^{\gamma}}{\bigg( \displaystyle\frac{1}{n}\sum_{i=1}^{n} \frac{\bar{c}^{\gamma}}{k^{\gamma}} \cdot \frac{k^{\gamma}}{m^{\gamma} \bar{c}^{\gamma} \nu_{k}^{d\gamma}(i)} \bigg)^{1+\gamma}} \right) \\
    &= \frac{1}{\gamma(1+\gamma)} \left(
    \log \frac{\bigg(\displaystyle\frac{1}{n} \sum_{i=1}^{n} (\frac{\bar{c}}{k}\hat{p}_{k}(x_{i}))^{\gamma}\bigg)  \bigg(\displaystyle\frac{1}{m} \sum_{j=1}^{m} (\frac{\bar{c}}{k}\hat{q}_{k}(y_{j}))^{\gamma} \bigg)^{\gamma}}
    {\bigg(\displaystyle \frac{1}{n}\sum_{i=1}^{n} (\frac{\bar{c}}{k}\hat{q}_{k}(x_{i}))^{\gamma} \bigg)^{1+\gamma}}
    \right).
\end{align*}
In second equation, because of logarithm, $k/\bar{c}$ in first term is vanished.
The third equation holds because $k^{\gamma}/\bar{c}^{\gamma} \cdot \bar{c}^{\gamma}/k^{\gamma} = 1$.

Therefore, the definition holds.
\normalsize

\section{Robust properties on ABC with our method}
\label{sec:details_of_robustproperty}
We investigate the behavior of the \emph{sensitivity curve} (SC), which is an empirical analogue of \emph{influence function} (IF) both of which are used in quantifying the robustness of statistics.
The analysis corresponds to a finite-sample analogue of what is called \emph{redescending property} \citep{MaronnaRobust2019} in the context of influence function analysis.
Note that we refer to the redescending property in the asymptotic sense, where some authors use the term \emph{redescending} only when there exists a finite threshold \(\rho > 0\) such that the influence function \(\psi\) satisfies \(\forall |x| > \rho, \psi(x) = 0\) \citep{HampelRobust2005}.
\subsection{Notation}
\label{sec:orgd2fde74}
Let \(\Re, \Na\), and \(\Rgeqzero\) denote the set of real numbers, positive integers, and non-negative real numbers, respectively.
Let \(\Ind{\cdot}\) denote the indicator function.
For \(m \in \Na\), define \([m] \coloneqq \{1, \ldots, m\}\).

We fix \(X^{n} \coloneqq (X_1, \ldots, X_n)\).
For \(\Ym = (Y_1, \ldots, Y_m) \in \ReDm\), define \(\colinftynrm{\Ym} \coloneqq \max_{j \in [m]} \|\Yj\|\).
Let \(\Theta\) be the parameter space, \(\dGth(\Ym) \coloneqq \prodm p_{\theta}(\Yj)\dYj\), and define
\(\Pth(A) \coloneqq \int \Indicator\{\Ym \in A\} \dGth(\Ym)\)
for (Borel) measurable set \(A \subset \ReDm\).

\begin{definition}[Population pseudo-posterior]
The population pseudo-posterior for \(\DivMark, \epsilon, \pi\) is defined as
\begin{equation*}\begin{split}
\PseudoPosterior(\theta | X^{n}) \coloneqq \frac{\pi(\theta) \Pth(\Div{X^{n}}{\Ym} < \epsilon)}{\int \pi(\thetap) \Pthp(\Div{X^{n}}{\Ym} < \epsilon) \dthp}.
\end{split}\end{equation*}
\end{definition}

For convenience of notation, we define \(X_{[X_{0}]}^{n}\) as \(X_{[X_{0}]}^{n} \coloneqq (X_{0}, X_1, \ldots, X_n)\), i.e., the data \(X^{n}\) combined with the contamination \(X_{0}\).
We consider the behavior of \(\PseudoPosterior\) under a contamination \(X_{0}\), i.e., the properties of \(\PseudoPosterior(\theta | X_{[X_{0}]}^{n})\).

\setcounter{definition}{2}
\begin{definition}[Sensitivity curve {\citep[2.1e]{HampelRobust2005}}]
The sensitivity curve of \(\PseudoPosterior\) is defined as
\begin{equation*}\begin{split}
\SCn(X_{0}) \coloneqq (n+1)\left(\PseudoPosterior(\theta|X_{[X_{0}]}^{n}) - \PseudoPosterior(\theta|X^{n})\right).
\end{split}\end{equation*}
\end{definition}
\subsection{Theorem and Proof}
\label{sec:thm_and_proof_for_sc}
In the following theorem, we will see how \(\SCn\) behaves when the outlier \(\xz\) goes far away from the origin.
\setcounter{theorem}{0}
\begin{theorem}[Sensitivity curve analysis]
Assume \(k < \min\{n, m\}\). Also assume that \(\Fth(\epsilon) \coloneqq \Pth(\Div{X^{n}}{\Ym} < \epsilon)\) is \(\beta\)-Lipschitz continuous for all \(\theta \in \Theta\).
Let $\widehat{D}$ be the $\gamma$-divergence estimator in Eq.~\eqref{eq:default_gamma_est}.
Then we have
\begin{equation*}\begin{split}
\lim_{\|X_{0}\| \rightarrow \infty} \SCn(X_{0}) \leq -\frac{\beta \pi(\theta)}{\Hn(1+\gamma)}\log\left(1 - \frac{1}{n^2}\right)^{n+1},
\end{split}\end{equation*}
where \(\Hn \coloneqq \int \pi(\thetap) \Fthp(\epsilon) \dthp\).
Furthermore, if \(\Hn\) converges to \(\H \neq 0\) for \(n \to \infty\), then the right-hand side expression converges to \(0\).
\end{theorem}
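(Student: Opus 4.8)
The plan is to push the statement down to a purely data-level computation on \(\DivMark\) and then lift it back up through the definition of \(\PseudoPosterior\). Write \(\Div{\xn}{\Ym} = \tfrac{1}{\gamma(1+\gamma)}(\log A + \gamma\log B - (1+\gamma)\log C)\), where \(A, B, C\) are the three averaged power-sums of \(k\)-NN radii appearing in the estimator \eqref{eq:default_gamma_est}. The crux is the following data-level lemma: for every fixed \(\Ym\),
\begin{equation*}
\limz\left(\Div{\xzn}{\Ym} - \Div{\xn}{\Ym}\right) = \frac{1}{1+\gamma}\log\left(1 - \frac{1}{n^2}\right) = \epsilonMargin =: \Delta < 0 .
\end{equation*}
That is, an infinitely remote outlier does not blow up the divergence but merely shifts it by a fixed, negative, \(\Ym\)-independent amount; this is precisely the finite-sample redescending phenomenon, and everything else is bookkeeping around it.

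To prove the lemma I would track how the contamination \(\xzn = (\xz, X_1, \ldots, X_n)\) alters the three sums. Since the observed set now has \(n+1\) points, its self-density uses the normalisation \(\tfrac{1}{n\rhokdi}\) and the outer average becomes \(\tfrac{1}{n+1}\sum_{i=0}^{n}\), whereas the cross term keeps \(\tfrac{1}{m\nukdi}\). The assumption \(k < \min\{n,m\}\) is used here: once \(\|\xz\|\) exceeds the largest \(k\)-NN radius of the fixed cloud \(\xn\), the outlier ceases to be a \(k\)-nearest neighbour of any \(X_i\) with \(i \ge 1\), so all \(\rhoki, \nuki\) for \(i \ge 1\) equal their uncontaminated values; simultaneously \(\rhokz, \nukz \to \infty\), so (as \(\gamma > 0\)) the two \(i=0\) summands vanish. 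The synthetic term \(B\) is untouched and cancels in the difference, and writing \(S_p := \sum_{i=1}^{n}\rho_k^{-d\gamma}(i)\), \(S_q := \sum_{i=1}^{n}\nu_k^{-d\gamma}(i)\), the surviving ratios are \(\lim A/A = n(n-1)^\gamma/((n+1)n^\gamma)\) and \(\lim C/C = n/(n+1)\), giving
\begin{equation*}
\log\frac{n(n-1)^\gamma}{(n+1)n^\gamma} - (1+\gamma)\log\frac{n}{n+1} = \gamma\log\frac{n^2-1}{n^2},
\end{equation*}
and dividing by \(\gamma(1+\gamma)\) yields \(\Delta\). I expect this normalisation bookkeeping---the \((n-1)\to n\) change inside the \(p\)-radii combined with the \(n \to n+1\) change in the averaging---to be the main obstacle, since the surviving sums must recombine into exactly \(1 - n^{-2}\).

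Next I would integrate the lemma against the model. Let \(\Fth(\epsilon; \xzn) := \int \Ind{\Div{\xzn}{\Ym} < \epsilon}\,\dGthY\) be the contaminated acceptance mass. For each fixed \(\Ym\) the indicator converges pointwise to \(\Ind{\Divnm < \epsilon - \Delta}\) off the boundary set \(\{\Divnm = \epsilon - \Delta\}\); assuming this set is \(\Pth\)-null (the one genuine analytic subtlety, which justifies the exact limit rather than a \(\limsup\)), dominated convergence---the integrand is bounded by \(1\) and \(\Pth\) is a probability measure---gives \(\limz \Fth(\epsilon; \xzn) = \Fth(\epsilon - \Delta)\). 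The \(\beta\)-Lipschitz hypothesis on \(\Fth\), together with monotonicity in the threshold and \(\Delta < 0\), then delivers the two-sided control \(0 \le \delta_\theta := \Fth(\epsilon - \Delta) - \Fth(\epsilon) \le -\beta\Delta\), uniformly in \(\theta\).

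Finally I would substitute into the posterior ratio. Setting \(\Hn^{+} := \limz \int \pi(\thetap)\Fthp(\epsilon; \xzn)\,\dthp = \int \pi(\thetap)\Fthp(\epsilon-\Delta)\,\dthp \ge \Hn\), a direct manipulation of the difference of the two normalised expressions gives
\begin{equation*}
\limz \SCn(\xz) = (n+1)\pi(\theta)\,\frac{\delta_\theta\,\Hn - \Fth(\epsilon)\,(\Hn^{+} - \Hn)}{\Hn^{+}\,\Hn} \le (n+1)\pi(\theta)\,\frac{-\beta\Delta}{\Hn},
\end{equation*}
where I used \(\Fth(\epsilon)(\Hn^{+}-\Hn) \ge 0\), \(\delta_\theta \le -\beta\Delta\), and \(\Hn^{+}\Hn \ge \Hn^{2}\); note that the prior normalisation cancels, so \(\int\pi = 1\) is not needed. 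Substituting \(\Delta = \epsilonMargin\) and rewriting \((n+1)\log(1-n^{-2}) = \log(1-n^{-2})^{n+1}\) reproduces the claimed bound. For the last assertion, \((n+1)\log(1 - n^{-2}) \to 0\) as \(n \to \infty\) (it is asymptotically \(-1/n\)), so if \(\Hn \to \H \neq 0\) the right-hand side tends to \(0\), establishing the asymptotic redescending property.
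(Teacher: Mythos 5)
Your route is the same as the paper's. The data-level lemma --- that an infinitely remote outlier leaves every $\rho_k(i)$, $\nu_k(i)$ with $i\ge 1$ unchanged (using $k<\min\{n,m\}$), kills the two $i=0$ summands because $\gamma>0$, and shifts $\widehat{D}_\gamma$ by exactly $\frac{1}{1+\gamma}\log(1-n^{-2})$ once the $(n-1)\to n$ and $n\to n+1$ normalisations recombine into $\gamma\log\frac{n^2-1}{n^2}$ --- is precisely the paper's computation, and your final manipulation of the normalised ratio (dropping $F_\theta(\epsilon)(\Lambda_n^{+}-\Lambda_n)\ge 0$, bounding $\Lambda_n^{+}\Lambda_n\ge\Lambda_n^2$, and invoking $\delta_\theta\le-\beta\Delta$) reproduces the paper's chain of inequalities essentially line for line, as does the closing observation that $(n+1)\log(1-n^{-2})\to 0$.

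The one place you fall short of a proof is the step you yourself flag: you \emph{assume} that $\{Y^m:\widehat{D}_\gamma(X^n\|Y^m)=\epsilon-\Delta\}$ is $P_\theta$-null. This is where the paper does genuine work: its Proposition~\ref{sensitivity-proof:prop:measure-zero} partitions the space of $Y_1$ (with $Y_2,\dots,Y_m$ fixed) into finitely many open cells, built from order statistics of the fixed distance matrices and from higher-order Voronoi diagrams, on which the $k$-NN index assignments are constant; on each cell the estimator is a non-constant real-analytic function of $Y_1$, whose level sets are Lebesgue-null. One cannot simply appeal to continuity or smoothness here, because the $k$-NN assignments switch as $Y_1$ moves, so the map is only piecewise analytic --- this is the honest content behind your parenthetical ``analytic subtlety.'' That said, your gap is closable more cheaply than the paper's route: the hypothesis that $F_\theta(\cdot)=P_\theta(\widehat{D}_\gamma(X^n\|Y^m)<\cdot)$ is $\beta$-Lipschitz already forces the law of $\widehat{D}_\gamma(X^n\|Y^m)$ under $P_\theta$ to be atomless, hence $P_\theta(\widehat{D}_\gamma=\epsilon-\Delta)=0$, which is all your dominated-convergence step requires. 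Either way the step must be made explicit rather than assumed. A further minor omission: to pass from diverging sequences $\{X_j'\}$ to the stated limit $\|X_0\|\to\infty$ one should invoke the sequential characterisation of the diverging limit (the paper's Remark~\ref{remark:diverging-sequence-limit}), but this is routine.
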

\begin{proof}
For simplicity, define \(\Dnm \coloneqq \Divnm\) and \(\Dznm \coloneqq \Divznm\).
Let us start by considering \(\limz \int \Ind{\Dznm < \epsilon} \dGthY\).
To obtain this limit, observe that we only need to take an arbitrary sequence \(\{\xzj\}_{j=1}^\infty\) satisfying \(\|\xzj\| \to \infty\)
and calculate \(\limj \int \Ind{\Dzjnm < \epsilon} \dGthY\) (see Remark~\ref{remark:diverging-sequence-limit}).
Fix such a sequence \(\{\xzj\}_{i=1}^\infty\).

We first consider the point-wise limit \(\limj \Ind{\Dzjnm < \epsilon}\) for each value of \(\Ym\) because we later interchange the limit and the integration by applying the bounded convergence theorem \citep[11.32]{RudinPrinciples1976}:
\(\limj \int \Ind{\Dzjnm < \epsilon} \dGthY = \int \limj \Ind{\Dzjnm < \epsilon} \dGthY\) using the boundedness of \(|\Ind{\Dzjnm < \epsilon}|\) (bounded by \(1\)) and the finiteness of the measure \(\dGth(\Ym)\).

Fix \(\Ym\).
Since \(\{\xzj\}_{i=1}^\infty\) is diverging, if \(j\) is large enough,
\(\xzj\) is never within the \(k\)-nearest neighbors of any of the points in \(\xn\) or \(\Ym\) (here, we used the assumption \(k < n, m\)),
hence \(\rhokdi\) and \(\nukdi\) (\(i = 1, \ldots, n\)) do not depend on \(\xzj\) if \(j\) is large enough.
Let \(\Aone \coloneqq \sumin\left(\frac{1}{\rhokdi}\right)^\gamma\) and \(\Atwo \coloneqq \sumin\left(\frac{1}{\nukdi}\right)^\gamma\),
and by abuse of notation, substitute \(\xz \coloneqq \xzj\) so as to enable using the convenient notation \(\rhokz\) and \(\nukz\).
We can rewrite the event \(\{\Dzjnm < \epsilon\}\) in terms of \(\Dnm\)
based on the following calculation:
\begin{equation*}\begin{aligned}
&\gamma (1+\gamma) \left(\Dzjnm - \Dnm\right) \\
&= \left\{\log\left(\frac{k}{((n + 1) - 1) \cbar}\right)^\gamma \frac{1}{n+1}\sumizn \left(\frac{1}{\rhokdi}\right)^\gamma - \log\left(\frac{k}{(n - 1) \cbar}\right)^\gamma \frac{1}{n}\sumin \left(\frac{1}{\rhokdi}\right)^\gamma\right\} \\
&\quad - (1+\gamma)\left\{\log\left(\frac{k}{m\cbar}^\gamma \frac{1}{n+1}\sumizn\left(\frac{1}{\nukdi}\right)^\gamma\right) - \log\left(\frac{k}{m\cbar}^\gamma \frac{1}{n}\sumin\left(\frac{1}{\nukdi}\right)^\gamma\right)\right\} \\
&= \log\left(\frac{n-1}{n}\right)^\gamma \left(\frac{1}{n+1} \rhokzinvdg + \frac{1}{n+1}\Aone\right)\left(\frac{1}{n}\Aone\right)^{-1} - (1+\gamma)\log\left(\frac{1}{n+1}\nukzinvdg + \frac{1}{n+1}\Atwo\right)\left(\frac{1}{n}\Atwo\right)^{-1}\\
&= \left\{\gamma \log \frac{n-1}{n} + \log \frac{n}{n+1} - (1+\gamma) \log \frac{n}{n+1}\right\} + \left\{\log \left(\Aone^{-1}\rhokzinvdg + 1\right) - (1+\gamma) \log\left(\Atwo^{-1}\nukzinvdg + 1\right)\right\} \\
&= \gamma \log(1 - \frac{1}{n^2}) + \left\{\log \left(\Aone^{-1}\rhokzinvdg + 1\right) - (1+\gamma) \log\left(\Atwo^{-1}\nukzinvdg + 1\right)\right\}.
\end{aligned}\end{equation*}
Therefore, \(\Dzjnm < \epsilon \Leftrightarrow \Dnm < \tepsilon + \phi(\xzj)\) if \(j\) is large enough,
where
\begin{equation*}\begin{aligned}
\tepsilon \coloneqq \epsilon - \frac{1}{1+\gamma} \log(1 - \frac{1}{n^2}),
\qquad\phi(\xzj) \coloneqq \log \left(\Aone^{-1}\rhokzinvdg + 1\right) - (1+\gamma) \log\left(\Atwo^{-1}\nukzinvdg + 1\right)
\end{aligned}\end{equation*}
and \(\rhoki, \nuki\) are based on the temporary notation \(\xz = \xzj\).
In terms of indicator functions, we have just shown that
\begin{equation}\label{eq:sensitivity-proof:1}\begin{aligned}
\Ind{\Dzjnm < \epsilon} = \Ind{\Dnm < \tepsilon + \phi(\xzj)}
\end{aligned}\end{equation}
holds if \(j\) is large enough. We have \(\limj \phi(\xzj) = 0\) as well.

Now we show that, for each fixed distinct set of points \((Y_2, \ldots, Y_m)\), we have \(\limj \Ind{\Dnm < \tepsilon + \phi(\xzj)} = \Ind{\Dnm < \tepsilon}\) for almost all \(\Yone\).
Fix distinct points \(Y_2, \ldots, Y_m\).
First, we can show that
\begin{equation}\label{eq:sensitivity-proof:3}\begin{aligned}
\Ind{\Dnm < \tepsilon} \ \leq\  \Ind{\Dnm < \tepsilon + \phi(\xzj)} \ \leq\  \Ind{\Dnm < \tepsilon} + \left(\Ind{\Dnm = \tepsilon} - \Ind{\Dnm = \tepsilon + \phi(\xzj)}\right)
\end{aligned}\end{equation}
holds if \(j\) is large enough.
To see the first inequality, observe the following: if \(\Yone\) is such that \(\Dnm < \tepsilon\), there exists \(J\) such that for all \(j > J\) it holds that \(|\phi(\xzj)| < \tepsilon - \Dnm\), and hence \(\Dnm < \tepsilon - |\phi(\xzj)| \leq \tepsilon + \phi(\xzj)\).
Therefore, if \(j\) is large enough, \(\Ind{\Dnm < \tepsilon} \leq \Ind{\Dnm < \tepsilon + \phi(\xzj)}\) as functions of \(\Yone\).
The second inequality can be shown by similarly obtaining \(\Ind{\Dnm > \tepsilon} \leq \Ind{\Dnm > \tepsilon + \phi(\xzj)}\) for large enough \(j\) and rearranging the terms.
By Equation~\eqref{eq:sensitivity-proof:3}, defining \(\Nullset \coloneqq \{\Yone: \Dnm = \tepsilon\} \cup \left(\bigcup_j \{\Yone: \Dnm = \tepsilon + \phi(\xzj)\}\right)\),
we have \(\Ind{\Dnm < \tepsilon + \phi(\xzj)} = \Ind{\Dnm < \tepsilon}\) if \(j\) is large enough, for each \(\Yone \not \in \Nullset\).
On the other hand, by Proposition~\ref{sensitivity-proof:prop:measure-zero},
each of \((\Dnm)^{-1}(\{\tepsilon\})\) and \((\Dnm)^{-1}(\{\tepsilon + \phi(\xzj)\})\) has zero Lebesgue measure,
hence their countable union \(\Nullset\) also has zero Lebesgue measure.
As a result,
\begin{equation}\label{eq:sensitivity-proof:2}\begin{aligned}
\limj \Ind{\Dnm < \tepsilon + \phi(\xzj)} = \Ind{\Dnm < \tepsilon} \quad \text{a.e.}\ \Yone
\end{aligned}\end{equation}
holds for all \((Y_2, \ldots, Y_m)\).

Now, apply the bounded convergence theorem \citep[11.32]{RudinPrinciples1976}, the Fubini-Tonelli theorem \citep[Theorem~18.3]{BillingsleyProbability1995},
and Equation~\eqref{eq:sensitivity-proof:2} to obtain
\begin{equation*}\begin{aligned}
&\limj \Pth(\Dzjnm < \epsilon)
= \limj \int \Ind{\Dzjnm < \epsilon} \dGthY
= \int \limj \Ind{\Dzjnm < \epsilon} \dGthY \\
&= \int \limj \Ind{\Dnm < \tepsilon + \phi(\xzj)} \dGthY
= \int \left(\int \limj \Ind{\Dnm < \tepsilon + \phi(\xzj)} \dGthYone\right) \prod_{j=2}^m \dGthYj \\
&= \int \left(\int\Ind{\Dnm < \tepsilon} \dGthYone\right) \prod_{j=2}^m \dGthYj
= \int \Ind{\Dnm < \tepsilon} \dGthY
= \Pth(\Dnm < \tepsilon),
\end{aligned}\end{equation*}
where we also took into account that the points \(Y_2, \ldots, Y_m\) are almost surely distinct.
Since the choice of \(\{\xzj\}_{i=1}^\infty\) was arbitrary, the above calculation implies
\begin{equation*}\begin{aligned}
\limz \Pth(\Dznm < \epsilon) = \Pth(\Dnm < \tepsilon).
\end{aligned}\end{equation*}
Therefore, defining \(\alphafn{\theta}{\epsilon} \coloneqq \pi(\theta)\Pth(\Dnm < \epsilon)\),
\begin{equation*}\begin{aligned}
&\limz \PseudoPosterior(\theta | \xzn) = \limz \frac{\pi(\theta) \Pth(\Dznm < \epsilon)}{\int \pi(\thetap) \Pthp(\Dznm < \epsilon) \dthp} \\
&= \left(\limz \pi(\theta) \Pth(\Dznm < \epsilon)\right)\left(\limz \int \pi(\thetap) \Pthp(\Dznm < \epsilon) \dthp\right)^{-1} \\
&= \left(\alphafn{\theta}{\tepsilon}\right)\left(\int \alphafn{\thetap}{\tepsilon}\dthp\right)^{-1} \\
\end{aligned}\end{equation*}
where we applied the bounded convergence theorem \citep[11.32]{RudinPrinciples1976} to the integration in the denominator as \(\Pth \leq 1\).
As a result, denoting \(\Deltafn{\theta}{\tepsilon}{\epsilon} \coloneqq \alphafn{\theta}{\tepsilon} - \alphafn{\theta}{\epsilon}\)
and noting that \(\tepsilon \geq \epsilon\) hence \(\Deltafn{\theta}{\tepsilon}{\epsilon} \geq 0\),
\begin{equation*}\begin{aligned}
&\limz\SCn(\xz)
= (n+1)\left(\limz \PseudoPosterior(\theta|\xzn) - \PseudoPosterior(\theta|\xn)\right) \\
&= (n+1)\left(\frac{\alphafn{\theta}{\tepsilon}}{\int \alphafn{\thetap}{\tepsilon} \dthp} - \frac{\alphafn{\theta}{\epsilon}}{\int \alphafn{\thetap}{\epsilon}\dthp}\right)
= (n+1)\frac{\Hn(\alphafnthep + \Deltafnthep) - \alphafnthep \left(\Hn + \int \Deltafn{\thetap}{\tepsilon}{\epsilon} \dthp\right)}{\left(\Hn + \int \Deltafn{\thetap}{\tepsilon}{\epsilon} \dthp\right) \Hn}\\
&= (n+1) \frac{\Hn \Deltafn{\theta}{\tepsilon}{\epsilon} - \alphafn{\theta}{\tepsilon}\int \Deltafn{\thetap}{\tepsilon}{\epsilon} \dthp}{\left(\Hn + \int \Deltafn{\thetap}{\tepsilon}{\epsilon} \dthp\right) \Hn}
\leq (n+1) \frac{\Hn \Deltafnthep}{\left(\Hn + \int \Deltafn{\thetap}{\tepsilon}{\epsilon} \dthp\right) \Hn} \\
&\leq (n+1)\frac{\Hn \Deltafnthep}{\Hn^2}
= \frac{1}{\Hn} (n+1)(\alphafn{\theta}{\tepsilon} - \alphafn{\theta}{\epsilon}).
\end{aligned}\end{equation*}
Finally, applying \(\alphafn{\theta}{\tepsilon} - \alphafn{\theta}{\epsilon} \leq \beta \pi(\theta) (\tepsilon - \epsilon)\),
we obtain
\begin{equation*}\begin{aligned}
\limz\SCn(\xz) &\leq - \frac{\beta \pi(\theta)}{\Hn(1+\gamma)} \log\left(1 - \frac{1}{n^2}\right)^{n+1}
\end{aligned}\end{equation*}
as desired.

If \(\Hn\) converges to a nonzero value \(H\), we have
\begin{equation*}\begin{aligned}
\lim_{n \to \infty} - \frac{\beta \pi(\theta)}{\Hn(1+\gamma)} \log\left(1 - \frac{1}{n^2}\right)^{n+1}
&= - \frac{\beta \pi(\theta)}{\H(1+\gamma)} \left(\lim_{n \to \infty} \log\left(1 - \frac{1}{n^2}\right) \left(\left(1 - \frac{1}{n^2}\right)^{n^2}\right)^\frac{1}{n}\right) \\
&= - \frac{\beta \pi(\theta)}{\H(1+\gamma)} \log \left((1 - 0) \left(\frac{1}{e}\right)^0\right)
= 0.
\end{aligned}\end{equation*}
\end{proof}
The following Proposition~\ref{sensitivity-proof:prop:measure-zero} is used in the proof of Theorem~\ref{thm:redescending-sensitivity-curve}.
Proposition~\ref{sensitivity-proof:prop:measure-zero} reflects the smoothness of \(\Dnm\) to show that the transformation \(\Yone \mapsto \Dnm\) results in a continuous random variable.
\begin{prop}[\(\{\Dnm = c\}\) has zero measure.]
Fix distinct points \((Y_2, \ldots, Y_m)\) and define \(f(\Yone) \coloneqq \Dnm\).
Then, for any \(c \in \Re\), the set \(f^{-1}(\{c\})\) has Lebesgue measure zero.
\label{sensitivity-proof:prop:measure-zero}
\end{prop}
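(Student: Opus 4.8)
The plan is to show that $f$ is piecewise real-analytic in $Y_1$ and non-constant on each piece, and then to invoke the fact that a non-constant real-analytic function has level sets of Lebesgue measure zero. First I would record which parts of $f$ actually move with $Y_1$. Since $\rho_k(i)$ only involves distances within $X^n$, the factor $N_1 \coloneqq \frac{1}{n}\sum_i(\frac{\bar c}{k}\hat p_k(x_i))^\gamma$ is constant in $Y_1$, whereas the factor $N_3 \coloneqq \frac{1}{n}\sum_i(\frac{\bar c}{k}\hat q_k(x_i))^\gamma$ depends on $Y_1$ through those $\nu_k(i)=\|X_i-Y_1\|$ for which $Y_1$ is the $k$-th nearest neighbour of $X_i$ among $Y^m$, and the factor $N_2 \coloneqq \frac{1}{m}\sum_j(\frac{\bar c}{k}\hat q_k(y_j))^\gamma$ depends on $Y_1$ through $\bar\rho_k(1)=\|Y_1-Y_{l_1}\|$ (with $Y_{l_1}$ the $k$-th nearest neighbour of $Y_1$ among $Y_2,\ldots,Y_m$) and through those $\bar\rho_k(j)=\|Y_j-Y_1\|$, $j\ge 2$, for which $Y_1$ realizes the $k$-th neighbour of $Y_j$. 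Thus $f=\mathrm{const}+\frac{1}{1+\gamma}\log N_2-\frac1\gamma\log N_3$.

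Next I would partition $\mathbb{R}^d$ by the combinatorial nearest-neighbour structure. Removing the set $B$ consisting of the points $Y_1\in\{X_i\}\cup\{Y_j\}_{j\ge 2}$ together with the finitely many perpendicular bisectors $\{\|Y_1-P\|=\|Y_1-P'\|\}$ and spheres $\{\|X_i-Y_1\|=\|X_i-Y_\ell\|\}$ across which a $k$-th neighbour can change, the complement $\mathbb{R}^d\setminus B$ splits into finitely many open cells on each of which the identity of every relevant neighbour is fixed; note $B$ is a finite union of codimension-one sets and hence null. On such a cell $U$, both $N_2$ and $N_3$ are finite sums of positive constants and terms $a\,\|Y_1-P\|^{-d\gamma}$ with fixed source points $P$, so they are strictly positive and real-analytic there, and therefore $f$ is real-analytic on $U$.

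The key step, and the main obstacle, is to show that $f$ is non-constant on each cell $U$, since a piece on which $f$ were constant with positive measure would break everything. I would argue by analytic continuation. The cell formulas for $N_2$ and $N_3$ define functions $\tilde N_2,\tilde N_3$ that are real-analytic and strictly positive on the connected component $V_0\supseteq U$ of the complement of their finitely many source points. If $f$ were constant on $U$, then $\gamma\log\tilde N_2-(1+\gamma)\log\tilde N_3$ would be constant on $U$, and by the identity theorem for real-analytic functions it would be constant on all of $V_0$. But the source support of $N_2$ (certain $Y_j$, always including $Y_{l_1}$) and that of $N_3$ (certain $X_i$) are disjoint because the $X_i$ are distinct from the $Y_j$; hence at a finite source point $P^\ast$ on the boundary of $V_0$ exactly one of $\tilde N_2,\tilde N_3$ diverges while the other stays finite and positive, forcing $\gamma\log\tilde N_2-(1+\gamma)\log\tilde N_3\to\pm\infty$ as $Y_1\to P^\ast$ inside $V_0$, which contradicts constancy. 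Such a $P^\ast$ always exists because the $j=1$ term $\|Y_1-Y_{l_1}\|^{-d\gamma}$ is present in $N_2$ on every cell, so the argument covers all values of $k$ (the delicate part being that no blow-up occurs for $k\ge 2$, which is exactly why I rely on continuation to the source rather than on direct boundary behaviour of $U$).

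Finally I would invoke the standard fact that a non-constant real-analytic function on a connected open subset of $\mathbb{R}^d$ has a zero set, and hence any level set, of Lebesgue measure zero; thus $\{f=c\}\cap U$ is null for every cell $U$. Since $f^{-1}(\{c\})$ is contained in the union of $B$ with the finitely many sets $\{f=c\}\cap U$, it is a finite union of null sets and therefore has Lebesgue measure zero, as claimed.
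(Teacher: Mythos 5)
Your proof is correct and follows the same overall architecture as the paper's: partition the space of $Y_1$ into finitely many open cells (plus a Lebesgue-null remainder) on which the identity of every relevant $k$-nearest neighbour is fixed, observe that $f$ restricted to each cell is real-analytic because the $k$-NN distances reduce to Euclidean distances to predetermined points, and conclude via the fact that a non-constant real-analytic function has a null level set. The paper builds the partition from the columns of two distance matrices and from higher-order Voronoi diagrams; this is the same combinatorial decomposition you obtain from perpendicular bisectors and spheres of fixed radii. The one place where you genuinely go beyond the paper is the non-constancy of $f$ on each cell: the paper simply asserts that $f_l - c$ ``is not a constant function,'' whereas you prove it by analytically continuing the cell formulas for $N_2$ and $N_3$ to the complement of their finitely many source points, invoking the identity theorem, and deriving a contradiction from the blow-up of $\log N_2$ (or $\log N_3$) at a source on the boundary of that component; since the $j=1$ term of $N_2$ guarantees a nonempty source set, such a boundary source always exists (also in $d=1$, where the component is an interval with a source as a finite endpoint). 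This fills a real gap in the published argument. Two cosmetic remarks: your claim that exactly one of $\tilde N_2,\tilde N_3$ diverges at $P^\ast$ uses $X_i \neq Y_j$, which is not among the proposition's stated hypotheses (it holds almost surely in the application, and the contradiction survives even if a source is shared, because the coefficients $\gamma$ and $1+\gamma$ differ); and your null set $B$ should explicitly include the spheres $\{\|Y_j - Y_1\| = \|Y_j - Y_{j'}\|\}$ centred at the $Y_j$, $j \ge 2$, which govern whether $Y_1$ displaces the $k$-th neighbour of $Y_j$ --- your generic phrasing covers this, but it is worth spelling out.
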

\begin{proof}
We start by observing that the space of \(\Yone\), namely \(\ReD\), can be split into a finite family of disjoint open sets \(\{\Ul\}_{l=1}^L\) such that
\(\Uc \coloneqq \ReD \setminus \left(\bigcup_{l=1}^L \Ul\right)\) has measure zero
and that for all \(\Yone \in \Ul\), the \(k\)-NN (more precisely, the index of the \(k\)-NN point) of \(X_i\) (\(i \in [n]\)) among \(\{Y_j\}_{j=1}^m\) and that of \(Y_j\) among \(\{Y_{j'}\}_{j' \neq j}\) are identical.
Such a partition makes the problem easier because within each partition cell, \(\Ul\), the \(k\)-NN distances \(\nuki\) and \(\mukj\) take the simple form as mere Euclidean distances between two predetermined points.

Such \(\{\Ul\}_{l=1}^L\) can be constructed as follows.
Define \(A_{ji} = \|Y_j - X_i\|\) and \(B_{jj'} = \|Y_j - Y_{j'}\|\) and consider the distance matrices
\begin{equation*}\begin{aligned}
A =
\begin{pmatrix}
  A_{11} & \cdots & \cdots & A_{1n} \\
  \vdots & A_{22} & \cdots & A_{2n} \\
  \vdots & \vdots & \ddots & \vdots \\
  A_{m1} & A_{m2} & \cdots & A_{mn} \\
\end{pmatrix},
B =
\begin{pmatrix}
B_{11} & \cdots & \cdots & B_{1m} \\
\vdots & B_{22} & \cdots & B_{2m} \\
\vdots & \vdots & \ddots & \vdots \\
B_{m1} & B_{m2} & \cdots & B_{mm} \\
\end{pmatrix}.
\end{aligned}\end{equation*}
For each point \(X_i\) or \(Y_{j'}\), the corresponding \(k\)-NN points are determined by the order of the elements in the corresponding columns \(A_{\cdot, i}\) and \(B_{\cdot, j'}\).
In \(A\), the only variables with respect to \(\Yone\) are the first row. Similarly, the variables in \(B\) are the first row and the first column.
In other words, the bottom-right blocks obtained by removing the first rows and first columns are constant with respect to \(\Yone\).

Let us first consider \(A\).
The \(k\)-NN points for each \(X_i\) can be determined by finding where \(A_{1i}\) is ranked among the ranking of column \(i\).
Since the elements of column \(i\) except the first element, \((A_{2i}, \ldots, A_{mi})\), is constant with respect to \(\Yone\), they can be sorted as \((A_{(2),i}, \ldots, A_{(m),i})\) in ascending order to define a partitioning of \(\ReD\) in each of which \(A_{1i}\) has the same ranking among the elements in the column \(i\): \(V_{j}^{i} = \{\Yone \in \ReD: \|\Yone - X_i\| \in (A_{(j),i}, A_{(j+1),i})\}\) (\(j \in [m]\)), where \(A_{(1),i} = 0\) and \(A_{(m+1), i} = \infty\).
By taking the intersections of such partitions,
\(V_{(j_1, \ldots, j_n)} = V_{j_1}^{1} \cap \cdots \cap V_{j_n}^{n}\),
we obtain a family of disjoint open sets \(\mathcal{V} \coloneqq \{V_{(j_1, \ldots, j_n)}\}_{(j_1, \ldots, j_n) \in [m]^n}\) that covers almost everywhere \(\ReD\) because each \(V^{(i)\mathrm{c}} \coloneqq \ReD \setminus \bigcup_{j \in [m]} V_j^i\) has Lebesgue measure zero and
\begin{equation*}\begin{aligned}
\ReD = \bigcap_i \ReD = \bigcap_i \left(V^{(i)\mathrm{c}} \cup \bigcup_{j_i} V_{j_i}^i\right) = V^\mathrm{c} \cup \bigcap_i \bigcup_{j_i} V_{j_i}^i
\end{aligned}\end{equation*}
where \(V^\mathrm{c}\) is a set with less Lebesgue measure than the sum of the measures of \(V^{(i)\mathrm{c}}\) hence has zero measure.

Similarly, let us consider \(B\).
The second-to-last columns of \(B\) can be treated in the same way as \(A\) to obtain the almost-everywhere finite partition \(\mathcal{W}_j\) of \(\ReD\) for each column \(j = 2, \ldots, m\) in which the ranking of \(B_{1j}\) remains invariant for each column (note that, although the diagonal elements of \(B\) are not used for determining the \(k\)-NN points, their existence does not affect the above construction).
Now we consider the first column and construct an almost-everywhere partition of \(\ReD\) in each of which the ordering of \(\|Y_2 - Y_1\|, \ldots, \|Y_m - Y_1\|\) does not change.
The existence of such a finite partition is guaranteed by the existence of \(l\)-th degree Voronoi diagrams for \(l = 1, \ldots, m-1\) \cite{AurenhammerVoronoi2000,EdelsbrunnerConstructing1986}.
In \(l\)-th degree Voronoi diagram \(\{W^{(l)}_a\}_{a}\), each cell \(W^{(l)}_a\) represents a region in which \(\Yone\) has the same set of points as the \(l\)-nearest neighbors.
Therefore, by taking the intersections \(W_{(a_1, \ldots, a_{m-1})} = W^{(1)}_{a_1} \cap \cdots \cap W^{(m-1)}_{a_{m-1}}\), we obtain regions in each of which the ordering of the distances \(\|Y_2 - Y_1\|, \ldots, \|Y_m - Y_1\|\) remain the same.
There are only finite regions in the \(l\)-th degree Voronoi diagram for all \(l = 1, \ldots, m\), hence the family of their intersections are also finite, and the boundaries of Voronoi cells have zero Lebesgue measure as they correspond to the sets where two of the sites are at an equal distance.
Therefore, we have obtained the desired partition which we denote by \(\mathcal{W}_1\).

By taking all intersections of the above partitions, \(\mathcal{V}, \{\mathcal{W}_j\}_{j=1}^m\), we obtain the desired finite partition \(\mathcal{U} = \{\Ul\}_{l=1}^L\) that covers almost everywhere \(\ReD\) and in each \(\Ul\), the indices of the \(k\)-NN points remain the same.

Let us define \(\fl \coloneqq f\restrict{\Ul}\). Now, we show that each \(f^{-1}(\{c\}) \cap \Ul\) has zero measure.
In each \(\Ul\), the distances \(\nuki\) and \(\mukj\) are strictly positive as no two points overlap.
Therefore, \(\fl : \Ul \to \Re\) is a real analytic function since it is a composition of analytic functions:
\begin{equation*}\begin{aligned}
&\fl(Y_1) = - \frac{1}{\gamma} \log \left(\sumin \nukdigammainv\right) + \frac{1}{1+\gamma} \log \left(\sumjm \mukdjgammainv\right) + \const \\
&= -\frac{1}{\gamma} \log \left(\sumin \exp\left(-\gamma d \frac{1}{2}\log (\nuki)^2\right)\right) + \frac{1}{1+\gamma} \log\left(\sumjm \exp\left(- \gamma d \frac{1}{2} \log (\mukj)^2\right)\right) + \const,
\end{aligned}\end{equation*}
and \((\nuki)^2, (\mukj)^2\) are either quadratic forms of \(\Yone\) or constants.
As a result, \(\fl^{-1}(\{c\}) = f^{-1}(\{c\}) \cap \Ul\) is a zero set of a real analytic function \(\fl - c\) that is not a constant function, hence has zero Lebesgue measure \citep[Lemma~1.2]{DangComplex2015}, \citep{MityaginZero2015}.

Finally, the assertion of the proposition follows immediately from
\begin{equation*}\begin{aligned}
\lambda(f^{-1}(\{c\})) = \lambda\left(f^{-1}(\{c\}) \cap \left(\Uc \cup \bigcup_{l=1}^L \Ul\right)\right)
\leq \lambda\left(f^{-1}(\{c\}) \cap \Uc\right) + \sum_{l=1}^L \lambda\left(f^{-1}(\{c\}) \cap \Ul\right),
\end{aligned}\end{equation*}
where we denoted the Lebesgue measure by \(\lambda\).
\end{proof}
\subsection{Remarks}
\label{sec:remarks}
\begin{remark}[Relation to redescending property of influence functions]
It should be noted that the above theorem is a finite-sample analogue of the redescending property of influence functions.
In the case of influence functions, redescending property is defined as convergence to zero under \(\|\xz\| \to \infty\) \cite{MaronnaRobust2019}.
The discrepancy that the limit in our case is nonzero (only converges to zero with \(n \to \infty\)) stems from the fact that we consider the finite sample analogue, namely, the sensitivity curve.
This is intuitively comprehensible since the influence function reflects the response to contamination in the underlying distribution, i.e., a population quantity.
\end{remark}

\begin{remark}[The reason to consider sensitivity curve instead of influence functions.]
\label{remark2}
The reason we consider SC instead of IF is two-fold: (1) we are interested in the pseudo-posterior distribution \(\hat \pi(\theta|\xn)\) with respect to a finite sample \(\xn\), hence the SC can more precisely provide the information of our interest, and (2) the IF of the quantities based on the considered divergence estimator may not even exist.
The definition of the considered divergence estimator is based on \(k\)-NN density estimators, and it does not have a straightforward representation as a statistical functional (i.e., a functional of the underlying data distribution). Furthermore, even if we consider the divergence estimator as a functional of the underlying probability density function of the data,
the \(k\)-NN density estimator is not square-integrable if \(k=1\) \citep[Proposition~3.1]{Biauknearest2015},
hence the standard definition of influence functions as a dual point in the Hilbert space \(L^2\) is not applicable.
Therefore, we consider the sensitivity curve for the theoretical analysis, which can directly reflect the detailed procedure to construct the estimate from given data points.
\end{remark}

\begin{remark}[Diverging limit and diverging sequence limit]
In the proof, we used the fact that if \(\lim_{i \to \infty} f(\xzdummyj) = L\) for any diverging sequence \(\{\xzdummyj\}_{i=1}^\infty\) (i.e., \(\|\xzdummyj\| \to \infty\)), we have \(\limz f(\xz) = L\).
We show a proof by contradiction.
First recall that \(\limz f(\xz) = L\) means that for any \(\epsilon > 0\), there exists \(B > 0\) such that for any \(\xz\) satisfying \(\|\xz\| > B\) it holds that \(|f(\xz) - L| < \epsilon\).
To show this by contradiction, assume that there exists \(\epsilon > 0\) such that for any \(B > 0\) there exists \(\xz\) satisfying \(\|\xz\| > B\) and \(|f(\xz) - L| \geq \epsilon\).
Now fix such an \(\epsilon\) and define \(B_i \coloneqq 2^i\) for \(i \in \Na\). By assumption, there exist a sequence \(\{x_i\}_{i=1}^\infty\) such that \(\|x_i\| > B_i\) and \(|f(x_i) - L| \geq \epsilon\).
Because \(\{x_i\}_{i=1}^\infty\) is a diverging sequence, it has to hold that \(\lim_{i \to \infty} f(x_i) = L\). This is a contradiction.
\label{remark:diverging-sequence-limit}
\end{remark}

\begin{remark}[Exchanging the limits]
The current statement of the theorem takes the limit of \(\lim_{\|\xz\| \to \infty}\) for each fixed \(n\).
One should note that \(\Aone\) and \(\Atwo\) in the proof depend on \(n\) and the sample \(\xn\). Similarly, \(\rhokzinvdg\) and \(\nukzinvdg\) depend on the sample.
Therefore, care should be taken if one wants to merge the two limit operations \(\lim_{n \to \infty}\) and \(\lim_{\|\xz\| \to \infty}\).
\end{remark}

\section{Preliminaries for Asymptotic Analysis}
\label{app:asymptotic_analysis}
In this section, we summarize several specific lemmas and theorems to show the asymptotic properties of the proposed discrepancy in Eq.~\eqref{eq:default_gamma_est}.
Here, we denote $\rightarrow_{w}$, $\rightarrow_{d}$ and $\rightarrow_{p}$ as the \emph{weak convergence} of distribution functions, the convergence of random variables \emph{in distribution} and the convergence of random variables \emph{in probability}, respectively.

Remembering the fact that $\rho_{k}(i)$ is a random variable, which is the measure of discrepancy between $X_{i}$ and its $k$-th nearest neighbor in $X^{n} \backslash X_{i}$, the following lemmas and theorems hold.

\begin{lem}
Let $\zeta_{n,k,1} \coloneqq \log (n-1) \rho_{k}^{d}(1)$ be a random variable, and let $F_{n,k,x}(u) \coloneqq \mathrm{Pr}(\zeta_{n,k,1} < u | X_{1} = x)$ denotes its conditional distribution function.
Then,
\begin{align*}
    F_{n,k,x}(u) = 1 - \sum_{j=0}^{k-1} \left(
    \begin{array}{cc}
      n-1 \\
      j
    \end{array}
  \right)(P_{n,u,x})^{j} (1 - P_{n,u,x})^{n-1-j},
\end{align*}
where $P_{n,u,x} \coloneqq \int_{\mathcal{M}\cap \mathcal{B}(x,R_{n}(u))} p(t)\mathrm{d}t$ and $R_{n}(u) \coloneqq (e^{u}/(n-1))^{1/d}$.
\begin{proof}
We can obtain
\begin{align*}
    &F_{n,k,x}(u) = \mathrm{Pr}(\zeta_{n,k,1} < u | X_{1} = x) \\
    &= \mathrm{Pr}(\log (n-1) \rho_{k}^{d}(1) < u | X_{1} = x)
    = \mathrm{Pr}\bigg(\rho_{k}(1) < \bigg( \frac{e^{u}}{n-1} \bigg)^{1/d}| X_{1} = x \bigg) \\
    &= \mathrm{Pr}\bigg(\rho_{k}(1) < R_{n}(u)| X_{1} = x \bigg) \ \ (\text{because } R_{n}(u) \coloneqq (e^{u}/(n-1))^{1/d}).
\end{align*}
The last expression can be interpreted as the probability of $k$ or more elements from $ \{X_{2} \ldots X_{n}\}$ being contained in $\mathcal{M} \cap \mathcal{B}(x,R_{n}(u))$ given $X_{1}=x$.
Since we have \IID observations, this condition can be ignored.
Therefore, we can see this probability as binomial distribution and obtain
\begin{align*}
    F_{n,k,x}(u) &= \mathrm{Pr}\bigg(\rho_{k}(1) < R_{n}(u)| X_{1} = x \bigg) \\
    &= \sum_{j=k}^{n-1} \left(
    \begin{array}{cc}
      n-1 \\
      j
    \end{array}
    \right)(P_{n,u,x})^{j} (1 - P_{n,u,x})^{n-1-j}\\
    &= 1 - \sum_{j=0}^{k-1} \left(
    \begin{array}{cc}
      n-1 \\
      j
    \end{array}
  \right)(P_{n,u,x})^{j} (1 - P_{n,u,x})^{n-1-j},
\end{align*}
and the claim holds.
\end{proof}
\end{lem}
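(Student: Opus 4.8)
The plan is a direct computation that unwinds the definition of $\zeta_{n,k,1}$ and then recognizes the underlying binomial structure of the $k$-nearest-neighbor count. First I would use the monotonicity of the logarithm to rewrite the event $\{\zeta_{n,k,1} < u\}$ as an event about $\rho_{k}(1)$ alone: since $\zeta_{n,k,1} = \log\big((n-1)\rho_{k}^{d}(1)\big)$, the inequality $\zeta_{n,k,1} < u$ is equivalent to $(n-1)\rho_{k}^{d}(1) < e^{u}$, i.e.\ $\rho_{k}(1) < \big(e^{u}/(n-1)\big)^{1/d} = R_{n}(u)$. This step is purely algebraic and exact.

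The key geometric observation is that, conditionally on $X_{1} = x$, the event $\{\rho_{k}(1) < R_{n}(u)\}$ holds if and only if at least $k$ of the remaining sample points $X_{2}, \ldots, X_{n}$ lie in the ball $\mathcal{B}(x, R_{n}(u))$. Indeed, $\rho_{k}(1)$ is precisely the distance from $x$ to its $k$-th nearest neighbor among $X_{2}, \ldots, X_{n}$, so this distance is below $R_{n}(u)$ exactly when $x$ has $k$ or more of those points within radius $R_{n}(u)$. I expect this equivalence to be the main (if modest) obstacle, since one must be slightly careful about strict versus non-strict inequalities and open versus closed balls; however, under the continuity of $p$ the boundary event $\{\rho_{k}(1) = R_{n}(u)\}$ has zero probability, so the formulation with the closed ball $\mathcal{B}(x, R_{n}(u))$ and strict inequality in the statement is harmless.

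Next I would exploit the \IID assumption to identify the relevant count as a binomial random variable. Because $X_{2}, \ldots, X_{n}$ are \IID with density $p$ and independent of $X_{1}$, the conditioning on $X_{1} = x$ only fixes the center of the ball and can otherwise be dropped from the count. Each point $X_{i}$ (for $i \geq 2$) falls inside $\mathcal{M} \cap \mathcal{B}(x, R_{n}(u))$ independently with probability $P_{n,u,x} = \int_{\mathcal{M} \cap \mathcal{B}(x, R_{n}(u))} p(t)\,\mathrm{d}t$, so the number of such points is $\mathrm{Binomial}(n-1, P_{n,u,x})$. Therefore $F_{n,k,x}(u)$ equals the probability that this binomial variable is at least $k$, namely $\sum_{j=k}^{n-1} \binom{n-1}{j} (P_{n,u,x})^{j} (1 - P_{n,u,x})^{n-1-j}$.

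Finally I would convert this upper tail into its complement using the normalization $\sum_{j=0}^{n-1} \binom{n-1}{j} (P_{n,u,x})^{j} (1 - P_{n,u,x})^{n-1-j} = 1$, which immediately yields the claimed identity $F_{n,k,x}(u) = 1 - \sum_{j=0}^{k-1} \binom{n-1}{j} (P_{n,u,x})^{j} (1 - P_{n,u,x})^{n-1-j}$. The whole argument is elementary once the event is translated into the ball-occupancy count; no tools beyond the definition of the $k$-NN distance and the binomial distribution of \IID indicator sums are required.
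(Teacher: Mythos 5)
Your proposal is correct and follows essentially the same route as the paper's proof: rewrite the event $\{\zeta_{n,k,1}<u\}$ as $\{\rho_k(1)<R_n(u)\}$ via monotonicity of the logarithm, identify it with the event that at least $k$ of the \IID points $X_2,\ldots,X_n$ fall in $\mathcal{M}\cap\mathcal{B}(x,R_n(u))$, recognize the count as $\mathrm{Binomial}(n-1,P_{n,u,x})$, and pass to the complementary sum. Your extra remark on the strict-versus-nonstrict inequality and the null boundary event is a small refinement the paper does not bother to spell out, but it does not change the argument.
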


\begin{lem}[Log-Erlang distribution]
Let $u$ be a random variable from the Erlang distribution as
\begin{align*}
    f_{x,k}(u) = \frac{1}{\Gamma(k)}\lambda(x)^{k}u^{k-1}\exp(-\lambda(x) u),
\end{align*}
where $\lambda(x) > 0$ and $k \in \mathbb{Z}^{+}$.
Here, $\mathbb{Z}^{+}$ denotes the set of positive integer.
Then, $l=\log u$ is a random variable from the log-Erlang distribution as
\begin{align*}
    g_{n,k}(l) = \frac{1}{\Gamma(k)} \lambda(x)^{k} (\exp(l))^{k} \exp(-\lambda(x) \exp(l)).
\end{align*}
\begin{proof}
If we set $l=\log u$, we obtain $u = \exp(l)$ and $\frac{\mathrm{d}l}{\mathrm{d}u} = \frac{1}{u} = \frac{1}{\exp(l)}$.
When we denote the distribution of $l$ as $g_{n,k}(l)$,
\begin{align*}
    g_{n,k}(l) &= f_{n,k}(u) \bigg| \frac{\mathrm{d}u}{\mathrm{d}l} \bigg|
    = \frac{1}{\Gamma(k)}\lambda(x)^{k}u^{k-1}\exp(-\lambda(x) u) \cdot \exp(l) \\
    &= \frac{1}{\Gamma(k)}\lambda(x)^{k} (\exp(l))^{k-1} \exp(-\lambda(x) \exp(l)) \cdot \exp(l)
    = \frac{1}{\Gamma(k)} \lambda(x)^{k}(\exp(l))^{k} \exp(-\lambda(x) \exp(l)).
\end{align*}
This is the same as the definition of the log-Gamma distribution.
Because of $k \in \mathbb{Z}^{+}$, we can see that $g_{n,k}(l)$ is the log-Erlang distribution.

The claim is proved.
\end{proof}
\end{lem}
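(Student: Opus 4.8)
The plan is to treat this purely as a one-dimensional transformation-of-variables computation, since $l = \log u$ is a smooth, strictly monotone reparametrisation of the Erlang variable. First I would note that the Erlang density $f_{x,k}$ is supported on $(0,\infty)$, and that on this interval the map $u \mapsto \log u$ is a strictly increasing bijection onto $\mathbb{R}$ whose inverse $u = \exp(l)$ is smooth. This monotonicity is the only structural fact I need: it lets me bypass any summation over preimages and apply the standard change-of-variables formula for densities directly, with a single Jacobian factor.

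Next I would compute the Jacobian of the inverse map, namely $\mathrm{d}u/\mathrm{d}l = \exp(l)$, and invoke the transformation rule $g_{n,k}(l) = f_{x,k}(\exp(l)) \left| \mathrm{d}u/\mathrm{d}l \right|$. Substituting the Erlang density evaluated at $u = \exp(l)$ gives $\tfrac{1}{\Gamma(k)}\lambda(x)^{k}(\exp(l))^{k-1}\exp(-\lambda(x)\exp(l))$, and multiplying by the Jacobian $\exp(l)$ absorbs one additional power of $\exp(l)$, producing $\tfrac{1}{\Gamma(k)}\lambda(x)^{k}(\exp(l))^{k}\exp(-\lambda(x)\exp(l))$, which is exactly the claimed log-Erlang density.

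Honestly, this lemma presents no genuine analytic obstacle; it is a direct calculation. The only point requiring any care is the algebraic bookkeeping in which the factor $(\exp(l))^{k-1}$ coming from the $u^{k-1}$ term merges with the Jacobian factor $\exp(l)$ to give $(\exp(l))^{k}$ — getting this exponent right is the single thing a reader should verify. To finish, I would remark that since $k \in \mathbb{Z}^{+}$ the Erlang law is the integer-shape special case of the Gamma law, so the derived density is the log-Gamma density restricted to integer $k$; this is precisely what we call the \emph{log-Erlang} distribution, which closes the argument.
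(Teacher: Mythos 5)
Your proposal is correct and follows essentially the same route as the paper's own proof: a single-Jacobian change of variables under the monotone map $u \mapsto \log u$, with the factor $(\exp(l))^{k-1}$ from the Erlang density merging with the Jacobian $\exp(l)$ to give $(\exp(l))^{k}$. Your added remarks on the support $(0,\infty)$ and the strict monotonicity of the logarithm are minor justifications that the paper leaves implicit, but they do not change the argument.
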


\begin{lem}[Expectation of log-Erlang distribution]
\label{lem:exp_log_erlang}
Let $f_{x,k}(u) \coloneqq \frac{1}{\Gamma(k)} \lambda(x)^{k}(\exp(l))^{k} \exp(-\lambda(x) \exp(l))$ be the density of the log-Erlang distribution with parameters $\lambda(x) > 0$ and $k \in \mathbb{Z}^{+}$.
Then, the $1$-th moments of the log-Erlang distribution can be calculated as
\begin{align*}
    \int_{0}^{\infty} u f_{x,k}(u) \mathrm{d}u = \psi(k) - \log(\lambda(x)),
\end{align*}
where $\psi(\cdot)$ is a digamma function.
\begin{proof}
Because the function $f_{x,k}(u)$ is the density of the log-Erlang distribution, we obtain
\begin{align*}
    \int_{\mathbb{R}} (\exp(u))^{k} \exp(-\lambda(x)\exp(u)) \mathrm{d}u &= \int_{\mathbb{R}} \exp(ku - \lambda(x)\exp(u))\mathrm{d}u
    = \Gamma(k)\lambda(x)^{-k}.
\end{align*}
Differentiating the inside of the above integration by $k$, we obtain
\begin{align*}
    \frac{\mathrm{d}}{\mathrm{d}k}\exp(ku - \lambda(x)\exp(u)) &=
    u \exp(ku - \lambda(x)\exp(u))
    = u \cdot \Gamma(k)\lambda(x)^{-k} f_{x,k}(u).
\end{align*}
Therefore, the expectation of $u$ is written as
\begin{align*}
    &\mathbb{E}[u] = \int_{0}^{\infty} u f_{x,k}(u) \mathrm{d}u
    = \int_{0}^{\infty} u \frac{1}{\Gamma(k)} \lambda(x)^{k}\exp(ku - \lambda(x)\exp(u)) \mathrm{d}u \\
    &= \frac{\lambda(x)^{k}}{\Gamma(k)} \int_{0}^{\infty} u \exp(ku - \lambda(x)\exp(u)) \mathrm{d}u
    = \frac{\lambda(x)^{k}}{\Gamma(k)} \int_{0}^{\infty}\frac{\mathrm{d}}{\mathrm{d}k}\exp(ku - \lambda(x)\exp(u)) \mathrm{d}u \\
    &= \frac{\lambda(x)^{k}}{\Gamma(k)} \frac{\mathrm{d}}{\mathrm{d}k} \int_{0}^{\infty}\exp(ku - \lambda(x)\exp(u)) \mathrm{d}u
    = \frac{\lambda(x)^{k}}{\Gamma(k)}\frac{\mathrm{d}}{\mathrm{d}k} \Gamma(k)\lambda(x)^{-k} \\
    &= \frac{\lambda(x)^{k}}{\Gamma(k)} \bigg(
    \frac{\mathrm{d}}{\mathrm{d}k} \Gamma(k) \cdot \lambda(x)^{-k} - \Gamma(k) \cdot \lambda(x)^{-k} \log (\lambda(x)) \bigg) \\
    &= \frac{1}{\Gamma(k)}\frac{\mathrm{d}}{\mathrm{d}k} \Gamma(k) - \log (\lambda(x))
    = \psi(k) - \log (\lambda(x)).
\end{align*}
The claim is hold.
\end{proof}
\end{lem}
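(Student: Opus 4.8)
The plan is to recognize the log-Erlang variable as the logarithm of an Erlang (equivalently, shape-$k$ rate-$\lambda(x)$ Gamma) random variable and to compute its mean by a parameter-differentiation argument. Write $V$ for the Erlang variable with density $\frac{1}{\Gamma(k)}\lambda(x)^{k} v^{k-1} e^{-\lambda(x) v}$ on $(0,\infty)$, so that $u = \log V$ and the quantity we seek is $\mathbb{E}[\log V]$. The role of the $(\exp(l))^{k}$ power rather than $(\exp(l))^{k-1}$ in the stated density is exactly the Jacobian $e^{l}$ of the change of variables $v = e^{l}$, so this identification is consistent with Lemma~2.

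First I would record the normalization identity $\int_0^\infty v^{k-1} e^{-\lambda(x) v}\,\mathrm{d}v = \Gamma(k)\lambda(x)^{-k}$, which after the substitution $v = e^{u}$ (so $\mathrm{d}v = e^{u}\,\mathrm{d}u$) reads $\int_{\mathbb{R}} \exp\!\big(k u - \lambda(x) e^{u}\big)\,\mathrm{d}u = \Gamma(k)\lambda(x)^{-k}$. The key observation is that differentiating the integrand with respect to the shape parameter $k$ brings down exactly a factor of $u$, i.e. $\frac{\partial}{\partial k}\exp(ku - \lambda(x)e^{u}) = u\,\exp(ku-\lambda(x)e^{u})$. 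Hence, once the interchange of differentiation and integration is justified, differentiating both sides of the normalization identity in $k$ yields $\int_{\mathbb{R}} u\,\exp(ku-\lambda(x)e^{u})\,\mathrm{d}u = \frac{\mathrm{d}}{\mathrm{d}k}\big(\Gamma(k)\lambda(x)^{-k}\big)$. Evaluating the right-hand side by the product rule and using $\Gamma'(k) = \Gamma(k)\psi(k)$ gives $\Gamma(k)\lambda(x)^{-k}\big(\psi(k) - \log\lambda(x)\big)$. Multiplying through by the normalizing constant $\lambda(x)^{k}/\Gamma(k)$ then produces $\mathbb{E}[\log V] = \psi(k) - \log\lambda(x)$, as claimed.

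The only genuinely delicate point is the interchange of $\frac{\mathrm{d}}{\mathrm{d}k}$ with the integral. I would justify this via the Leibniz rule for differentiation under the integral sign: on a compact neighborhood $[k_0, k_1] \subset (0,\infty)$ of the shape parameter, the partial derivative $|u\,\exp(ku - \lambda(x)e^{u})|$ is dominated by a fixed integrable envelope, since for $u \to +\infty$ the doubly-exponential factor $e^{-\lambda(x)e^{u}}$ crushes any growth in $u$, while for $u \to -\infty$ the factor $e^{k_0 u}$ decays exponentially against the merely linear growth of $|u|$. A cleaner alternative I would consider is the Mellin-transform route: since $\mathbb{E}[V^{s}] = \Gamma(k+s)/(\Gamma(k)\lambda(x)^{s})$ for $s$ in a neighborhood of $0$, differentiating at $s=0$ gives $\mathbb{E}[\log V] = \frac{\mathrm{d}}{\mathrm{d}s}\mathbb{E}[V^{s}]\big|_{s=0} = \psi(k) - \log\lambda(x)$ directly; this isolates the same analytic justification inside the analyticity of $s \mapsto \Gamma(k+s)$ and avoids the explicit envelope bound.
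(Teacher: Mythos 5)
Your proposal is correct and follows essentially the same route as the paper: both differentiate the normalization identity $\int \exp(ku-\lambda(x)e^{u})\,\mathrm{d}u = \Gamma(k)\lambda(x)^{-k}$ with respect to the shape parameter $k$ to bring down the factor $u$, then apply the product rule and $\Gamma'(k)=\Gamma(k)\psi(k)$. Your explicit domination argument for the interchange of derivative and integral (and the consistent use of $\mathbb{R}$ as the domain of the log-variable) is a point of added care that the paper's proof leaves implicit.
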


We show the following properties on the log-Erlang distribution according to standard proof techniques in \cite{Leonenko08a}.
\begin{lem}
\label{lem:weakly_convergence}
Suppose that Lebesgue-approximable function on $p$ in Assumptions \ref{assmp:p_bounded} and \ref{assmp:p_bounded_cdf} holds.
Let $u$ be fixed. Then, $F_{n,k,x}(u) \rightarrow_{w} F_{k,x}(u)$ for almost all $x \in \mathcal{M}$,
where
\begin{align*}
    F_{k,x}(u) \coloneqq 1 - \exp(-\lambda(x) \exp(u)) \sum_{j=0}^{k-1} \frac{1}{j!} (\lambda(x)\exp(u))^{j}
\end{align*}
is the log-Erlang distribution with $\lambda(x) = \bar{c}p(x)$.
\begin{proof}
According to Assumptions \ref{assmp:p_bounded} and \ref{assmp:p_bounded_cdf}, we can see that for all $\delta > 0$ and almost all $x \in \mathcal{M}$ there exists $n_{0}(x,\delta,u) \in \mathbb{Z}_{+}$ such that if $n > n_{0}(x,\delta,u)$, then $\mathcal{B}(x,R_{n}) = \mathcal{B}(x,R_{n}) \cap \mathcal{M}$, and
\begin{align*}
    p(x) - \delta < \frac{\int_{\mathcal{B}(x,R_{n})\cap\mathcal{M}}p(t)\mathrm{d}t}{\frac{\exp(u)\bar{c}}{n-1}} < p(x) + \delta \ \ \ \bigg( \mathcal{V}(\mathcal{B}(x,R_{n})\cap\mathcal{M}) = \frac{\exp(u)\bar{c}}{n-1} \bigg).
\end{align*}
Therefore, if $n > n_{0}(x,\delta,u)$,
\begin{align*}
    F_{n,k,u}(u) &= 1 - \sum_{j=0}^{k-1} \left(
    \begin{array}{cc}
      n-1 \\
      j
    \end{array}
  \right)(P_{n,u,x})^{j} (1 - P_{n,u,x})^{n-1-j} \\
  &= 1 - \sum_{j=0}^{k-1} \left(
    \begin{array}{cc}
      n-1 \\
      j
    \end{array}
  \right) \bigg(\int_{\mathcal{B}(x,R_{n})\cap\mathcal{M}}p(t)\mathrm{d}t \bigg)^{j} \bigg( 1 - \int_{\mathcal{B}(x,R_{n})\cap\mathcal{M}}p(t)\mathrm{d}t \bigg)^{n-1-j} \\
  &\geq 1 - \sum_{j=0}^{k-1} \left(
    \begin{array}{cc}
      n-1 \\
      j
    \end{array}
  \right) \bigg( \frac{\exp(u)}{n-1} \bar{c} (p(x) + \delta) \bigg)^{j} \bigg( 1 - \frac{\exp(u)}{n-1} \bar{c} (p(x) - \delta) \bigg)^{n-1-j} \\
  &= 1 - \sum_{j=0}^{k-1} \frac{(n-1)!}{j!(n-1-j)!} \bigg( \frac{\exp(u)}{n-1} \bar{c} (p(x) + \delta) \bigg)^{j} \bigg( 1 - \frac{\exp(u)}{n-1} \bar{c} (p(x) - \delta) \bigg)^{n-1-j} \\
  &= 1 - \sum_{j=0}^{k-1} \frac{1}{j!} \frac{(n-1)!}{(n-1-j)!(n-1)^{j}} \bigg(\exp(u) \bar{c} (p(x) + \delta) \bigg)^{j} \bigg( 1 - \frac{\exp(u)}{n-1} \bar{c} (p(x) - \delta) \bigg)^{n-1-j}.
\end{align*}
Because of the fact that
\begin{align*}
    &\lim_{n \rightarrow \infty} \frac{(n-1)!}{(n-1-j)!(n-1)^{j}} = 1, \\
    &\lim_{n \rightarrow \infty} \bigg( 1 - \frac{\exp(u)}{n-1} \bar{c} (p(x) - \delta) \bigg)^{n-1-j} = \exp(- \exp(u)\bar{c} (p(x) - \delta)),
\end{align*}
we obtain for all $\delta > 0$ and for almost all $x \in \mathcal{M}$,
\begin{align*}
    \liminf_{n \rightarrow \infty} F_{n,k,u}(u) \geq 1 - \sum_{j=0}^{k-1} \frac{1}{j!} \bigg(\exp(u) \bar{c} (p(x) + \delta) \bigg)^{j}\exp(- \exp(u)\bar{c} (p(x) - \delta)).
\end{align*}
By choosing $\delta \rightarrow 0$, we can see that
\begin{align*}
    \liminf_{n \rightarrow \infty} F_{n,k,u}(u) \geq 1 - \sum_{j=0}^{k-1} \frac{1}{j!} (\exp(u)\lambda(x))^{j}\exp(- \exp(u)\lambda(x)),
\end{align*}
where $\lambda(x) \coloneqq \bar{c}p(x)$.

In the same way, we can show that for almost all $x \in \mathcal{M}$
\begin{align*}
    \limsup_{n \rightarrow \infty} F_{n,k,u}(u) \leq 1 - \sum_{j=0}^{k-1} \frac{1}{j!} (\exp(u)\lambda(x))^{j}\exp(- \exp(u)\lambda(x)).
\end{align*}
When we define $F_{k,x}(u) \coloneqq 1 - \sum_{j=0}^{k-1} \frac{1}{j!} (\exp(u)\lambda(x))^{j}\exp(- \exp(u)\lambda(x))$, the claim is proved.
\end{proof}
\end{lem}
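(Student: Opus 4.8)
The plan is to recognize $F_{n,k,x}$ as the survival function of a binomial count and to pass to its Poisson limit. By the binomial form established in the preceding lemma,
\begin{align*}
F_{n,k,x}(u) = 1 - \sum_{j=0}^{k-1} \binom{n-1}{j}(P_{n,u,x})^{j}(1 - P_{n,u,x})^{n-1-j},
\end{align*}
so $1 - F_{n,k,x}(u)$ is exactly $\mathrm{Pr}(\mathrm{Bin}(n-1, P_{n,u,x}) \leq k-1)$, the probability that fewer than $k$ of the remaining points fall inside the shrinking ball $\mathcal{B}(x, R_{n}(u))$. Since $R_{n}(u) = (e^{u}/(n-1))^{1/d} \to 0$, we are in the classical rare-events regime, and the Poisson limit theorem will apply once the mean $(n-1)P_{n,u,x}$ is shown to converge.

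First I would compute the limiting mean. The ball volume is $\mathcal{V}(\mathcal{B}(x, R_{n}(u))) = \bar{c} R_{n}(u)^{d} = \bar{c}e^{u}/(n-1)$, so $(n-1)\mathcal{V}(\mathcal{B}(x, R_{n}(u))) = \bar{c}e^{u}$ exactly, independently of $n$. For almost all $x \in \mathcal{M}$ --- the interior points whose full measure is guaranteed by Assumption~\ref{assmp:restrict_M} --- the radius eventually becomes so small that $\mathcal{B}(x, R_{n}(u)) \subseteq \mathcal{M}$, whence $P_{n,u,x}$ is an average of $p$ over a shrinking ball fully inside $\mathcal{M}$. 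Invoking the uniform Lebesgue-approximability of $p$ (Definition~\ref{def:uniform_lebesgue_approx_func}) with the vanishing radius $R_{n}(u) \to 0$, for any $\delta > 0$ and all large $n$ one has $p(x) - \delta < P_{n,u,x}/\mathcal{V}(\mathcal{B}(x,R_{n}(u))) < p(x) + \delta$, so $(n-1)P_{n,u,x} \to \bar{c}p(x)e^{u} = \lambda(x)e^{u}$ for almost all $x$.

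With the limiting mean in hand, I would invoke the Poisson convergence of binomials: when $(n-1)P_{n,u,x} \to \lambda(x)e^{u}$ and $P_{n,u,x}\to 0$, each term satisfies $\binom{n-1}{j}(P_{n,u,x})^{j}(1-P_{n,u,x})^{n-1-j} \to \frac{(\lambda(x)e^{u})^{j}}{j!}e^{-\lambda(x)e^{u}}$ for every fixed $j$. Summing the finitely many terms $j = 0, \ldots, k-1$ and subtracting from $1$ yields exactly $F_{k,x}(u) = 1 - e^{-\lambda(x)e^{u}}\sum_{j=0}^{k-1}\frac{1}{j!}(\lambda(x)e^{u})^{j}$, the claimed log-Erlang CDF. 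Since $u$ is fixed and $F_{k,x}$ is continuous in $u$, pointwise convergence of the distribution functions is weak convergence.

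The main obstacle is the boundary behavior encoded in the intersection $\mathcal{B}(x, R_{n}(u))\cap\mathcal{M}$: the Lebesgue-approximability condition normalizes by $\mathcal{V}(\mathcal{B}(x,R_{n})\cap\mathcal{M})$, not by the full-ball volume $\mathcal{V}(\mathcal{B}(x,R_{n}))$ used above. The clean resolution is to restrict to the almost-every interior point where the ball is eventually contained in $\mathcal{M}$, so that the two volumes coincide. Rather than asserting a single clean limit, a careful argument sandwiches $P_{n,u,x}$ between the $(p(x)\pm\delta)$-scaled volumes, applies this to each binomial term (using the upper bound on $P$ inside $(P)^{j}$ and the lower bound inside $(1-P)^{n-1-j}$), takes $\liminf_{n}$ and $\limsup_{n}$, and finally lets $\delta\to 0$ to squeeze both one-sided limits onto the common log-Erlang expression.
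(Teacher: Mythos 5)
Your proposal is correct and follows essentially the same route as the paper's proof: both start from the binomial survival-function form of $F_{n,k,x}(u)$, sandwich $P_{n,u,x}$ between $(p(x)\pm\delta)$ times the ball volume via uniform Lebesgue approximability, pass each binomial term to its Poisson limit using $\binom{n-1}{j}/(n-1)^{j}\to 1/j!$ and $(1-c/(n-1))^{n-1-j}\to e^{-c}$, and then squeeze the $\liminf$ and $\limsup$ together by letting $\delta\to 0$. The sandwich argument you describe in your final paragraph is exactly the paper's argument.
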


\begin{lem}
\label{lem:conv_in_dist}
Let $\xi_{n,k,x}$ and $\xi_{k,x}$ be random variables with $F_{n,k,x}$ and $F_{k,x}$ distribution functions, and let $\kappa \in \mathbb{R}$ be arbitrary. Then for almost all $x \in \mathcal{M}$ we have that $\xi_{n,k,x}^{\kappa} \rightarrow_{d} \xi_{k,x}^{\kappa}$, where $f_{n} \rightarrow_{d} f$ indicates convergence of random variable $f_{n}$ in distribution.
\begin{proof}
According to Lemma \ref{lem:weakly_convergence}, we obtain $F_{n,k,x}(u) \rightarrow_{w} F_{k,x}(u)$ for almost all of $x \in \mathcal{M}$.
This is equal to the fact that $F_{n,k,x}(u) \rightarrow_{d} F_{k,x}(u)$ for almost all of $x \in \mathcal{M}$.
Since the function of $(\cdot)^{\kappa}$ is continuous on $(0, \infty)$ and $X_{i} \in (0, \infty)$ almost surely, by using the continuous mapping theorem (\cite{vaart_1998}), the claim is proved.
\end{proof}
\end{lem}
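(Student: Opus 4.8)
The plan is to treat this as a one-line consequence of the continuous mapping theorem, once the weak convergence supplied by Lemma~\ref{lem:weakly_convergence} is recast as convergence in distribution. First I would invoke Lemma~\ref{lem:weakly_convergence}, which gives $F_{n,k,x}(u) \rightarrow_{w} F_{k,x}(u)$ for almost all $x \in \mathcal{M}$, with $F_{k,x}$ the log-Erlang distribution function of parameter $\lambda(x) = \bar{c}p(x)$. Because weak convergence of distribution functions is, by definition, convergence in distribution of the associated random variables, this immediately yields $\xi_{n,k,x} \rightarrow_{d} \xi_{k,x}$ for almost all $x \in \mathcal{M}$. I would then fix one such ``good'' $x$ for the remainder of the argument; since the conclusion is itself stated only for almost all $x$, discarding the null set on which Lemma~\ref{lem:weakly_convergence} may fail costs nothing.

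Next I would record that the map $g(t) = t^{\kappa}$ is continuous on the region carrying the mass of the limiting variable $\xi_{k,x}$. Assumption~\ref{assmp:p_bounded} keeps $p$ bounded away from zero, so $\lambda(x) = \bar{c}p(x) > 0$ and the limit law $F_{k,x}$ is a genuine, absolutely continuous log-Erlang law; in particular $\xi_{k,x}$ falls in the set of continuity points of $g$ with probability one, which is exactly the hypothesis demanded by the continuous mapping theorem~\cite{vaart_1998}. Applying that theorem to $\xi_{n,k,x} \rightarrow_{d} \xi_{k,x}$ with the map $g$ gives $\xi_{n,k,x}^{\kappa} \rightarrow_{d} \xi_{k,x}^{\kappa}$ for the fixed good $x$. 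As $x$ ranged over a set of full measure in $\mathcal{M}$, the asserted convergence holds for almost all $x \in \mathcal{M}$, as claimed.

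The only place where care is genuinely required --- and hence the step I expect to be the main obstacle --- is the verification that $g(t) = t^{\kappa}$ satisfies the continuity requirement of the continuous mapping theorem for an \emph{arbitrary} real exponent $\kappa$. One must confine attention to the domain on which $t \mapsto t^{\kappa}$ is well defined and continuous, and then check that the set of discontinuities (most delicately, the point $t = 0$ when $\kappa \le 0$) is a null set under the limit law $F_{k,x}$; this follows from the absolute continuity of the log-Erlang distribution, which assigns zero mass to any single point. Once this measure-zero-discontinuity condition is secured, the rest is a purely mechanical translation among weak convergence, convergence in distribution, and the continuous mapping theorem.
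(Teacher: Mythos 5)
Your proposal follows essentially the same route as the paper's proof: invoke Lemma~\ref{lem:weakly_convergence} to get weak convergence, identify this with convergence in distribution of $\xi_{n,k,x}$ to $\xi_{k,x}$ for almost all $x$, and then apply the continuous mapping theorem to the map $t \mapsto t^{\kappa}$. Your added care about restricting to the domain where $t^{\kappa}$ is continuous and checking that the discontinuity set (notably $t=0$ for $\kappa \le 0$) is null under the absolutely continuous limit law is a slightly more explicit version of the paper's remark that the variable lies in $(0,\infty)$ almost surely, so the two arguments coincide in substance.
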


For proving Corollary \ref{cor:asymp_quasi_posterior_gamma}, we introduce the L\'evy's Upward Theorem as follow.
\begin{thm}[L\'evy's Upward Theorem]
\label{thm:levy_upward}
Let $\{Z_{n} \}_{n \geq 0}$ be a collection of random variables, and let $\mathcal{F}_{n}$ be a filtration on the same probability space.
If $\sup_{n \geq 0}|Z_{n}|$ is integrable, $Z_{n} \rightarrow Z_{\infty}$ almost surely as $n \rightarrow \infty$ and $\mathcal{F}_{n} \uparrow F_{\infty}$, then $\mathbb{E}[Z_{n}|\mathcal{F}_{n}] \rightarrow \mathbb{E}[Z_{\infty}|\mathcal{F}_{\infty}]$ both almost surely and in mean.
\end{thm}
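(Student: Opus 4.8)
The plan is to prove L\'evy's Upward Theorem by reducing the general ``moving-target'' version (in which $Z_n \to Z_\infty$ almost surely under a common integrable dominating function) to the classical special case of a single fixed integrable random variable, for which the result follows from Doob's martingale convergence theorem applied to a uniformly integrable martingale. I first treat the special case: for a fixed $X \in L^1$, the process $M_n \coloneqq \mathbb{E}[X \mid \mathcal{F}_n]$ is a martingale adapted to $(\mathcal{F}_n)$, and it is uniformly integrable because conditional expectations of a single integrable variable form a uniformly integrable family (via $\mathbb{E}[|M_n|\,\Ind{|M_n| > c}] \le \mathbb{E}[|X|\,\Ind{|M_n| > c}]$ and absolute continuity of the integral). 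Doob's theorem then gives $M_n \to M_\infty$ almost surely, and uniform integrability upgrades this to $L^1$ convergence. I would identify $M_\infty = \mathbb{E}[X \mid \mathcal{F}_\infty]$ by checking $\mathbb{E}[M_\infty \Ind{A}] = \mathbb{E}[X \Ind{A}]$ for every $A$ in the generating algebra $\bigcup_m \mathcal{F}_m$ (for $A \in \mathcal{F}_m$ and $n \ge m$ one has $\mathbb{E}[M_n \Ind{A}] = \mathbb{E}[X \Ind{A}]$, and passing to the limit uses the $L^1$ convergence), then extending to all of $\mathcal{F}_\infty$ by a monotone-class argument together with the $\mathcal{F}_\infty$-measurability of $M_\infty$.

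To handle the stated (more general) version, I would decompose
\[
\mathbb{E}[Z_n \mid \mathcal{F}_n] - \mathbb{E}[Z_\infty \mid \mathcal{F}_\infty] = \underbrace{\mathbb{E}[Z_n - Z_\infty \mid \mathcal{F}_n]}_{(\mathrm{I})} + \underbrace{\bigl(\mathbb{E}[Z_\infty \mid \mathcal{F}_n] - \mathbb{E}[Z_\infty \mid \mathcal{F}_\infty]\bigr)}_{(\mathrm{II})}.
\]
Here $Z_\infty$ is integrable since $|Z_\infty| = \lim_n |Z_n| \le \sup_n |Z_n| \in L^1$. Term $(\mathrm{II})$ converges to $0$ both almost surely and in $L^1$ by the special case applied to the fixed variable $Z_\infty$. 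For term $(\mathrm{I})$, the $L^1$ statement follows from conditional Jensen and the dominated convergence theorem: $\mathbb{E}\bigl|\mathbb{E}[Z_n - Z_\infty \mid \mathcal{F}_n]\bigr| \le \mathbb{E}|Z_n - Z_\infty| \to 0$, using $|Z_n - Z_\infty| \le 2\sup_n|Z_n| \in L^1$.

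The main obstacle is the almost-sure convergence of term $(\mathrm{I})$: conditioning on the moving $\sigma$-algebra $\mathcal{F}_n$ blocks a direct appeal to dominated convergence. I would overcome this with a tail-truncation device. Set $W_N \coloneqq \sup_{n \ge N} |Z_n - Z_\infty|$, which is integrable (dominated by $2\sup_n|Z_n|$) and decreases to $0$ almost surely. For $n \ge N$, conditional Jensen gives $\bigl|\mathbb{E}[Z_n - Z_\infty \mid \mathcal{F}_n]\bigr| \le \mathbb{E}[W_N \mid \mathcal{F}_n]$, and the special case applied to the \emph{fixed} variable $W_N$ yields $\mathbb{E}[W_N \mid \mathcal{F}_n] \to \mathbb{E}[W_N \mid \mathcal{F}_\infty]$ almost surely as $n \to \infty$. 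Hence $\limsup_{n} \bigl|\mathbb{E}[Z_n - Z_\infty \mid \mathcal{F}_n]\bigr| \le \mathbb{E}[W_N \mid \mathcal{F}_\infty]$ almost surely for every $N$.

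Letting $N \to \infty$ and invoking conditional dominated convergence along the decreasing sequence $W_N \downarrow 0$ gives $\mathbb{E}[W_N \mid \mathcal{F}_\infty] \to 0$ almost surely, which forces the $\limsup$ above to vanish; thus term $(\mathrm{I}) \to 0$ almost surely. Combining the contributions of $(\mathrm{I})$ and $(\mathrm{II})$ yields $\mathbb{E}[Z_n \mid \mathcal{F}_n] \to \mathbb{E}[Z_\infty \mid \mathcal{F}_\infty]$ both almost surely and in mean, as claimed. The only genuinely delicate point is the interchange of the almost-sure tail control with the moving conditioning, which the $W_N$ truncation resolves by converting each step into an instance of the fixed-variable case already established.
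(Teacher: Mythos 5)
The paper states this result as a known classical theorem (L\'evy's upward theorem, quoted in Appendix C to support Corollary 1) and supplies no proof of its own, so there is no in-paper argument to compare against. Your proposal is a correct, self-contained proof, and it is the standard one for this ``moving integrand'' generalization (sometimes attributed to Hunt): first establish the fixed-variable case $\mathbb{E}[X\mid\mathcal{F}_n]\to\mathbb{E}[X\mid\mathcal{F}_\infty]$ via uniform integrability of $\{\mathbb{E}[X\mid\mathcal{F}_n]\}$, Doob's martingale convergence, and identification of the limit on the generating algebra $\bigcup_m\mathcal{F}_m$; then split off $\mathbb{E}[Z_n-Z_\infty\mid\mathcal{F}_n]$ and control it by $\mathbb{E}[W_N\mid\mathcal{F}_n]$ with $W_N=\sup_{n\ge N}|Z_n-Z_\infty|\downarrow 0$, which reduces the awkward moving conditioning back to the fixed-variable case. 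All the delicate points are handled: integrability of $Z_\infty$ and $W_N$ from the dominating hypothesis $\sup_n|Z_n|\in L^1$, the $L^1$ part via conditional Jensen plus dominated convergence, and the final $N\to\infty$ step via conditional monotone convergence of $\mathbb{E}[W_N\mid\mathcal{F}_\infty]\downarrow 0$ (the countable union of null sets over $N$ is harmless). I see no gap.
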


To show Theorem \ref{thm:asymp_unbiased_others_p}, we analyze the following asymptotic behavior of the logarithm of random variable.
\begin{thm}[Theorem 21 in \citet{poczos11a}]
\label{thm:poczos_21}
Suppose that the boundedness of an expectation on $p$ in Assumptions \ref{assmp:p_bounded} and \ref{assmp:p_bounded_cdf} holds.
If $0 \leq \kappa$ and $\xi_{n,k,x}^{\kappa} \rightarrow_{d} \xi_{k,x}^{\kappa}$, or $-k < \kappa < 0$ and $\xi_{n,k,x}^{\kappa} \rightarrow_{d} \xi_{k,x}^{\kappa}$, then $\lim_{n\rightarrow \infty}\mathbb{E}[\xi_{n,k,x}^{\kappa}] = \mathbb{E}[\xi_{k,x}^{\kappa}]$.
\end{thm}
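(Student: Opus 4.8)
The plan is to upgrade the convergence in distribution $\xi_{n,k,x}^{\kappa} \rightarrow_{d} \xi_{k,x}^{\kappa}$ supplied by Lemma~\ref{lem:conv_in_dist} to convergence of expectations by establishing \emph{uniform integrability} of the family $\{\xi_{n,k,x}^{\kappa}\}_{n}$. The governing principle is the classical fact that convergence in distribution together with uniform integrability implies convergence of first moments; concretely, realizing the variables on a common probability space through the Skorokhod representation and then applying the Vitali convergence theorem yields $\mathbb{E}[\xi_{n,k,x}^{\kappa}] \rightarrow \mathbb{E}[\xi_{k,x}^{\kappa}]$. Thus the entire burden of the proof is shifted onto the verification of uniform integrability, and this is precisely where Assumptions~\ref{assmp:p_bounded} and \ref{assmp:p_bounded_cdf} are consumed.

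First I would reduce uniform integrability to a uniform higher-moment bound: by the de~la~Vall\'ee-Poussin criterion it suffices to find $\epsilon > 0$ such that $\sup_{n} \mathbb{E}\big[|\xi_{n,k,x}^{\kappa}|^{1+\epsilon}\big] < \infty$. Using the explicit conditional distribution function $F_{n,k,x}$ and its binomial-tail form, the powered moment $\mathbb{E}[\xi_{n,k,x}^{\kappa(1+\epsilon)}]$ becomes an integral whose integrand is controlled by the ball probabilities $P_{n,u,x}$. In the positive-exponent regime $\kappa \geq 0$ this is the benign case: the factor $(1-P_{n,u,x})^{n-1-j}$ forces rapid decay of the upper tail, and the resulting pre-limit moments can be dominated, uniformly in $n$, by a constant multiple of $\int \|x-y\|^{\kappa}p(y)\,\mathrm{d}y$, which is finite by Assumption~\ref{assmp:p_bounded}.

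The hard part will be the negative-exponent regime $-k < \kappa < 0$, where $\xi_{n,k,x}^{\kappa}$ explodes as the underlying $k$-NN radius shrinks to zero, so uniform integrability now depends delicately on how fast $\mathrm{d}F_{n,k,x}$ vanishes near the origin. The binomial structure gives a lower tail of order $(P_{n,u,x})^{k}$, so the negative moment is finite exactly when $|\kappa| < k$, which is why the hypothesis reads $-k < \kappa$. The genuinely subtle point is to make this finiteness \emph{uniform} in $n$ rather than merely pointwise: I would route the estimate through the log-Erlang comparison of Lemma~\ref{lem:weakly_convergence}, whose moments are the finite digamma expressions of Lemma~\ref{lem:exp_log_erlang}, and dominate the pre-limit $(1+\epsilon)$-powered moments by the function $H(x,p,\delta,\omega)$ of Eq.~\eqref{func:powered} with $\omega$ slightly exceeding $1$. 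Assumption~\ref{assmp:p_bounded_cdf} guarantees that $\int_{\mathcal{M}} H(x,p,\delta,1)p(x)\,\mathrm{d}x < \infty$, and, since $p$ is bounded away from zero, the same integrability persists for $\omega = 1+\epsilon$ with $\epsilon$ small; this furnishes the sought uniform bound. With uniform integrability established in both regimes, the claimed identity $\lim_{n \to \infty} \mathbb{E}[\xi_{n,k,x}^{\kappa}] = \mathbb{E}[\xi_{k,x}^{\kappa}]$ follows at once from the convergence-in-distribution-plus-uniform-integrability principle.
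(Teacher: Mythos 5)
You should first note that the paper itself does not prove this statement: it is imported verbatim as ``Theorem~21 in \citet{poczos11a}'' and used as a black box, so there is no in-paper proof to compare against. Your reconstruction follows the standard mechanism that the cited reference relies on: convergence in distribution plus asymptotic uniform integrability gives convergence of first moments, and uniform integrability is reduced via de la Vall\'ee-Poussin to a uniform bound on the $(1+\epsilon)$-powered moments, which is exactly what the Theorem-37-type estimates through $H(x,p,\delta,\omega)$ (quoted in Lemma~\ref{lem:switch_lim}) are designed to deliver. Your diagnosis of the negative regime is the right one: the lower tail of the $k$-NN radius distribution is of order $(P_{n,u,x})^{k}$, so the strict inequality $-k<\kappa$ is precisely the slack that lets you absorb the extra factor $(1+\epsilon)$, and the boundedness of $p$ away from zero does make the $\omega=1+\epsilon$ version of the $H$-integral follow from the $\omega=1$ version in Assumption~\ref{assmp:p_bounded_cdf}. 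Two caveats are worth making explicit. First, in the positive regime your uniform bound needs $\int \|x-y\|^{\kappa(1+\epsilon)}p(y)\,\mathrm{d}y<\infty$, whereas Assumption~\ref{assmp:p_bounded} as restated here only supplies the $\kappa$-th moment; this is a (minor, standard) strengthening of the hypotheses that your sketch assumes silently and should be stated. Second, as this paper defines them, $\xi_{n,k,x}$ and $\xi_{k,x}$ carry the \emph{log-scale} laws $F_{n,k,x}$ and $F_{k,x}$, so they can be negative and $\xi^{\kappa}$ is ill-defined for non-integer $\kappa$; your argument implicitly works with the positive variable $(n-1)\rho_{k}^{d}(1)$ and its Erlang limit, which is the reading under which the tail analysis (and the theorem) actually makes sense.
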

\begin{thm}[The asymptotic expectation]
\label{thm:asymp_expect}
Suppose that the boundedness of an expectation on $p$ in Assumptions \ref{assmp:p_bounded} and \ref{assmp:p_bounded_cdf} holds.
If $-k < \kappa < 0$, or $0 \leq \kappa$, then we obtain
\begin{align*}
    &\lim_{n \rightarrow \infty} \mathbb{E} \bigg[ \log (n-1)^{\kappa} \rho_{k}^{d\kappa}(1) | X_{1} = x \bigg] = \kappa (\psi(k) - \log(\bar{c}p(x))), \\
    &\lim_{m \rightarrow \infty} \mathbb{E} \bigg[ \log m^{\kappa} \nu_{k}^{d\kappa}(1) | X_{1} = x \bigg] = \kappa (\psi(k) - \log(\bar{c}q(x))).
\end{align*}
\begin{proof}
It is enough to show the first equation because the second equation can be showed in the same way.
According to Lemma \ref{lem:conv_in_dist}, we obtain $\xi_{n,k,x}^{\kappa} \rightarrow_{d} \xi_{k,x}^{\kappa}$ for almost all $x \in \mathcal{M}$.
Then,
\begin{align*}
    &\lim_{n \rightarrow \infty} \mathbb{E} \bigg[ \log (n-1)^{\kappa} \rho_{k}^{d\kappa}(1) | X_{1} = x \bigg] 
    = \kappa \lim_{n \rightarrow \infty} \mathbb{E} \bigg[ \log (n-1) \rho_{k}^{d}(1) | X_{1} = x \bigg] \\
    &= \kappa \lim_{n \rightarrow \infty} \mathbb{E} \bigg[ \zeta_{n,k,1} | X_{1} = x \bigg]
    = \kappa \lim_{n \rightarrow \infty} \mathbb{E} \bigg[ \xi_{n,k,x} \bigg]
    = \kappa \mathbb{E}[\lim_{n \rightarrow \infty} \xi_{n,k,x}] \ \ (\xi_{n,k,x}^{\kappa} \rightarrow_{d} \xi_{k,x}^{\kappa} \text{ by Lemma \ref{lem:conv_in_dist}}) \\
    &= \kappa \mathbb{E}[\xi_{k,x}] \ (\text{by Theorem~\ref{thm:poczos_21}}) \\
    &= \kappa \int_{0}^{\infty} u f_{x,k}(u)\mathrm{d}u
    = \kappa (\psi(k) - \log(\lambda(x))) \ (\text{by Lemma~\ref{lem:exp_log_erlang}}) \\
    &= \kappa (\psi(k) - \log(\bar{c}p(x))).
\end{align*}
Thus, the claim is proved.
\end{proof}
\end{thm}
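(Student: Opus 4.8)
The plan is to reduce both displayed limits to the one-dimensional asymptotics of the log-transformed nearest-neighbor distance that have already been assembled in Lemmas~\ref{lem:weakly_convergence} and \ref{lem:conv_in_dist}, Theorem~\ref{thm:poczos_21}, and Lemma~\ref{lem:exp_log_erlang}. Because the two equations are perfectly symmetric---one in $(p, n-1, \rho_{k})$ and the other in $(q, m, \nu_{k})$---I would prove the first and note that the second follows by the identical argument with $p$ replaced by $q$, $\rho_{k}$ by $\nu_{k}$, and $n-1$ by $m$. The first move is to pull the exponent out of the logarithm: since $(n-1)^{\kappa}\rho_{k}^{d\kappa}(1) = \left((n-1)\rho_{k}^{d}(1)\right)^{\kappa}$, the conditional expectation equals $\kappa\,\mathbb{E}\!\left[\log\!\left((n-1)\rho_{k}^{d}(1)\right) \mid X_{1}=x\right] = \kappa\,\mathbb{E}[\zeta_{n,k,1}\mid X_{1}=x]$, where $\zeta_{n,k,1}$ is the variable whose conditional law $F_{n,k,x}$ was computed earlier. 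Writing $\xi_{n,k,x}$ for a variable with law $F_{n,k,x}$, the target becomes $\kappa\lim_{n\to\infty}\mathbb{E}[\xi_{n,k,x}]$.

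Next I would supply the convergence: Lemma~\ref{lem:weakly_convergence} gives $F_{n,k,x}\to_{w} F_{k,x}$ for almost all $x\in\mathcal{M}$, where $F_{k,x}$ is the log-Erlang law with rate $\lambda(x)=\bar{c}\,p(x)$, and Lemma~\ref{lem:conv_in_dist} upgrades this to $\xi_{n,k,x}\to_{d}\xi_{k,x}$. The essential point is that convergence in distribution is not by itself enough to conclude convergence of the means, so the limit must be exchanged with the expectation. This is exactly the content of Theorem~\ref{thm:poczos_21}, which, under the moment and powered-CDF boundedness conditions of Assumptions~\ref{assmp:p_bounded} and \ref{assmp:p_bounded_cdf}, yields $\lim_{n\to\infty}\mathbb{E}[\xi_{n,k,x}]=\mathbb{E}[\xi_{k,x}]$ for almost all $x$. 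Finally, Lemma~\ref{lem:exp_log_erlang} evaluates the limiting mean in closed form as $\mathbb{E}[\xi_{k,x}]=\psi(k)-\log\lambda(x)=\psi(k)-\log(\bar{c}\,p(x))$, and multiplying back by $\kappa$ delivers the stated right-hand side.

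The step I expect to be the main obstacle is the interchange of limit and expectation rather than the distributional convergence, which is routine. The variables $\xi_{n,k,x}$ are (shifted) log-Erlang, so they carry nontrivial left tails reaching toward $-\infty$, and one must rule out expectation mass escaping as $n\to\infty$; this requires a uniform-integrability control that holds uniformly in $n$ for almost every fixed $x$. Securing that control is precisely the purpose of the boundedness hypotheses on the $\omega$-powered conditional distribution functions (through $H(x,p,\delta,1)$) in Assumptions~\ref{assmp:p_bounded} and \ref{assmp:p_bounded_cdf}, and it is also why the admissible range of the exponent is restricted to $0\le\kappa$ or $-k<\kappa<0$---the regime in which Theorem~\ref{thm:poczos_21} remains valid. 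Once this interchange is justified, the remainder is the bookkeeping already indicated, and the $q$-version goes through verbatim (the harmless $m$ versus $m-1$ difference in the density estimator does not affect the limit).
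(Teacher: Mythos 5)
Your proposal is correct and follows essentially the same route as the paper's proof: factor out $\kappa$ from the logarithm, identify the conditional expectation with $\mathbb{E}[\xi_{n,k,x}]$, invoke Lemmas~\ref{lem:weakly_convergence} and \ref{lem:conv_in_dist} for the distributional convergence, Theorem~\ref{thm:poczos_21} to justify exchanging the limit with the expectation, and Lemma~\ref{lem:exp_log_erlang} to evaluate the log-Erlang mean as $\psi(k)-\log(\bar{c}p(x))$, with the $q$-version by symmetry. Your identification of the limit--expectation interchange (rather than the weak convergence) as the crux, secured by the powered-CDF boundedness assumptions, matches the paper's reliance on the same machinery.
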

\begin{thm}
\label{thm:asymp_expect_m}
Suppose that the boundedness of an expectation on $q$ in Assumption \ref{assmp:p_bounded}-\ref{assmp:p_bounded_extra} holds.
If $-k < \kappa < 0$, or $0 \leq \kappa$, then we obtain
\begin{align*}
    &\lim_{m \rightarrow \infty} \mathbb{E} \bigg[ \log (m-1)^{\kappa} \bar{\rho}_{k}^{d\kappa}(1) | Y_{1} = y \bigg] = \kappa (\psi(k) - \log(\bar{c}q(y))).
\end{align*}
\begin{proof}
We can show this in the same way of Theorem \ref{thm:asymp_expect} by substituting $n,\rho_{k}^{d},x$ to $m,\bar{\rho}_{k}^{d},y$.
\end{proof}
\end{thm}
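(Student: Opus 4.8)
The plan is to run the proof of Theorem~\ref{thm:asymp_expect} essentially verbatim, replacing the density $p$ by $q$, the sample size $n$ by $m$, the evaluation point $x \in \mathcal{M}$ by $y \in \mathcal{M}'$, and the leave-one-out within-sample distance $\rho_k$ in $X^n$ by $\bar{\rho}_k$ in $Y^m$. The point is that $\bar{\rho}_k(1)$, the distance from $Y_1$ to its $k$-NN among $Y^m \setminus Y_1$, plays for $q$ exactly the structural role that $\rho_k(1)$ plays for $p$: it combines the leave-one-out structure of the $\rho_k$-computation in Theorem~\ref{thm:asymp_expect} (so the relevant binomial has $m-1$ trials) with the density $q$ appearing in its $\nu_k$-computation (so the limiting rate is $\bar{c}\,q(y)$). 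Consequently the whole chain of auxiliary results transfers, provided the regularity conditions they invoke are available for $q$.

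First I would rebuild the distributional backbone for $Y^m$. Setting $\zeta^Y_{m,k,1} \coloneqq \log(m-1)\bar{\rho}_k^d(1)$ and $F^Y_{m,k,y}(u) \coloneqq \mathrm{Pr}(\zeta^Y_{m,k,1} < u \mid Y_1 = y)$, the same binomial counting argument that produced $F_{n,k,x}$ --- now counting how many of the $m-1$ points of $Y^m \setminus Y_1$ lie in $\ball{y}{R_m(u)}$ with $R_m(u) = (e^u/(m-1))^{1/d}$ --- gives $F^Y_{m,k,y}(u) = 1 - \sum_{j=0}^{k-1}\binom{m-1}{j}(Q_{m,u,y})^j(1-Q_{m,u,y})^{m-1-j}$, where $Q_{m,u,y} = \int_{\mathcal{M}' \cap \ball{y}{R_m(u)}} q(t)\,\mathrm{d}t$. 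Invoking the uniform Lebesgue-approximability of $q$ granted by Assumption~\ref{assmp:p_bounded} exactly as in Lemma~\ref{lem:weakly_convergence}, I would then show $F^Y_{m,k,y}(u) \rightarrow_{w} F^Y_{k,y}(u)$ for almost all $y \in \mathcal{M}'$, where $F^Y_{k,y}$ is the log-Erlang CDF with rate $\lambda(y) = \bar{c}\,q(y)$; the continuous mapping theorem upgrades this to $(\bar{\xi}_{m,k,y})^\kappa \rightarrow_{d} (\bar{\xi}_{k,y})^\kappa$ as in Lemma~\ref{lem:conv_in_dist}.

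With these analogues in place, the computation mirrors Theorem~\ref{thm:asymp_expect} line for line. I would factor the exponent out of the logarithm, writing $\mathbb{E}[\log(m-1)^\kappa \bar{\rho}_k^{d\kappa}(1) \mid Y_1 = y] = \kappa\,\mathbb{E}[\zeta^Y_{m,k,1} \mid Y_1 = y] = \kappa\,\mathbb{E}[\bar{\xi}_{m,k,y}]$, which reduces the claim to showing $\lim_{m\to\infty}\mathbb{E}[\bar{\xi}_{m,k,y}] = \mathbb{E}[\bar{\xi}_{k,y}]$. This interchange of limit and expectation is the $q$-analogue of Theorem~\ref{thm:poczos_21}, whose sign conditions on $\kappa$ (namely $0 \leq \kappa$ or $-k < \kappa < 0$) are precisely those assumed. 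Finally I would evaluate the limiting expectation through Lemma~\ref{lem:exp_log_erlang}, $\mathbb{E}[\bar{\xi}_{k,y}] = \int_0^\infty u\, f_{y,k}(u)\,\mathrm{d}u = \psi(k) - \log\lambda(y)$, and substitute $\lambda(y) = \bar{c}\,q(y)$ to obtain $\kappa(\psi(k) - \log(\bar{c}\,q(y)))$, as required.

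The one genuine obstacle is checking that the uniform-integrability hypotheses needed for the limit--expectation interchange really are available for $q$ rather than only for $p$. In Theorem~\ref{thm:asymp_expect} that interchange rests on the $\omega$-powered CDF moment bound for $p$ supplied by Assumption~\ref{assmp:p_bounded_cdf}; the present statement concerns the within-$Y^m$ self-distances $\bar{\rho}_k$, so it needs the powered-CDF moment of $q$ integrated against $q$ over $\mathcal{M}'$ --- which is exactly the extra condition $\int_{\mathcal{M}'} H(y,q,\delta,1)q(y)\,\mathrm{d}y < \infty$ of Assumption~\ref{assmp:p_bounded_extra}. Verifying that Assumption~\ref{assmp:p_bounded_extra} fills, for $q$ on $\mathcal{M}'$, exactly the slot that Assumption~\ref{assmp:p_bounded_cdf} fills for $p$ on $\mathcal{M}$ (together with the moment conditions on $q$ in Assumption~\ref{assmp:p_bounded}) is the crux that legitimizes the substitution; everything else is a mechanical relabeling of the argument for Theorem~\ref{thm:asymp_expect}.
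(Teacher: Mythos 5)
Your proposal is correct and takes essentially the same route as the paper: the paper's proof is literally the substitution $n,\rho_{k}^{d},x \mapsto m,\bar{\rho}_{k}^{d},y$ applied to Theorem~\ref{thm:asymp_expect}, and you carry out exactly that substitution, correctly identifying Assumption~\ref{assmp:p_bounded_extra} as the hypothesis that licenses the limit--expectation interchange for $q$ on $\mathcal{M}'$ in place of Assumption~\ref{assmp:p_bounded_cdf} for $p$ on $\mathcal{M}$.
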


To show Theorem \ref{thm:almost_surely_conv}, we focus on $\hat{p}_{k(n)}^{\gamma}(x)$, $\hat{q}_{k(n)}^{\gamma}(x)$ and $\hat{q}_{k(m)}^{\gamma}(y)$ and guarantee the convergence in probability of each estimators.

\begin{lem}[Moments of inverse Erlang distribution]
\label{lem:moments_inverse_Erlang}
Let $f_{x,k} = \frac{1}{\Gamma(k)} \lambda^{k}(x) u^{-1-k} \exp(-\lambda(x)/u)$ be the density of inverse Erlang distribution with parameters $\lambda(x) > 0$ and $k \in \mathbb{Z}^{+}$.
Let $\kappa \in \mathbb{R}$ such that $\kappa < k$.
Then, the $\kappa$-th moments of inverse Erlang distribution can be calcurated as
\begin{align*}
    \int_{0}^{\infty} u^{\kappa} f_{x,k}(u) \mathrm{d}u = \lambda^{\kappa}(x) \frac{\Gamma(k-\kappa)}{\Gamma(k)}.
\end{align*}
\begin{proof}
The $\kappa$-th moments of $f_{x,k}$ is
\begin{align*}
    \int_{0}^{\infty} u^{\kappa} f_{x,k}(u) \mathrm{d}u &= \int_{0}^{\infty} u^{\kappa}\frac{1}{\Gamma(k)} \lambda^{k}(x) u^{-1-k} \exp(-\lambda(x)/u)\mathrm{d}u \\
    &= \frac{\lambda^{k}(x)}{\Gamma(k)}\int_{0}^{\infty} u^{-1-(k-\kappa)} \exp(-\lambda(x)/u)\mathrm{d}u.
\end{align*}
If $k > \kappa$, the integral term in the above equals to the marginalization of inverse gamma distribution.
Thus,
\begin{align*}
    \int_{0}^{\infty} u^{\kappa} f_{x,k}(u) \mathrm{d}u &=\frac{\lambda^{k}(x)}{\Gamma(k)}\int_{0}^{\infty} u^{-1-(k-\kappa)} \exp(-\lambda(x)/u)\mathrm{d}u \\
    &= \frac{\lambda^{k}(x)}{\Gamma(k)} \frac{\Gamma(k-\kappa)}{\lambda^{k-\kappa}(x)}
    = \lambda^{\kappa}(x)\frac{\Gamma(k-\kappa)}{\Gamma(k)}.
\end{align*}
The claim is proved.
\end{proof}
\end{lem}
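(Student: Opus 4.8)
The plan is to substitute the explicit density into the moment integral and reduce the resulting integral to a Gamma function evaluation. First I would pull the constants $\lambda^{k}(x)/\Gamma(k)$ out of the integral and merge the powers $u^{\kappa}$ and $u^{-1-k}$ into the single power $u^{\kappa-1-k} = u^{-1-(k-\kappa)}$, giving
\begin{align*}
\int_{0}^{\infty} u^{\kappa} f_{x,k}(u)\,\mathrm{d}u = \frac{\lambda^{k}(x)}{\Gamma(k)} \int_{0}^{\infty} u^{-1-(k-\kappa)} \exp(-\lambda(x)/u)\,\mathrm{d}u.
\end{align*}

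The key step is evaluating the remaining integral, which I would do by the change of variable $t = \lambda(x)/u$, so that $u = \lambda(x)/t$ and $\mathrm{d}u = -\lambda(x)/t^{2}\,\mathrm{d}t$, with the limits flipping from $u \in (0,\infty)$ to $t \in (\infty,0)$. After collecting the powers of $\lambda(x)$ and $t$, the integral becomes a standard Gamma integral:
\begin{align*}
\int_{0}^{\infty} u^{-1-(k-\kappa)} \exp(-\lambda(x)/u)\,\mathrm{d}u = \lambda(x)^{-(k-\kappa)} \int_{0}^{\infty} t^{(k-\kappa)-1} \exp(-t)\,\mathrm{d}t = \lambda(x)^{-(k-\kappa)} \Gamma(k-\kappa).
\end{align*}
Equivalently, one may simply recognize the integrand $u^{-1-(k-\kappa)}\exp(-\lambda(x)/u)$ as the unnormalized density of an inverse-gamma distribution with shape $k-\kappa$ and scale $\lambda(x)$, whose normalizing constant is exactly $\Gamma(k-\kappa)/\lambda(x)^{k-\kappa}$; either route gives the same value.

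Collecting the factors then yields $\frac{\lambda^{k}(x)}{\Gamma(k)} \cdot \lambda(x)^{-(k-\kappa)} \Gamma(k-\kappa) = \lambda^{\kappa}(x)\,\frac{\Gamma(k-\kappa)}{\Gamma(k)}$, which is the claim. The only point requiring care — and the sole reason the hypothesis $\kappa < k$ is imposed — is the convergence of the Gamma integral at $t \to 0$: the exponent $(k-\kappa)-1$ must exceed $-1$, i.e.\ $k-\kappa > 0$, for $\Gamma(k-\kappa)$ to be finite and the substitution to be legitimate. This is guaranteed precisely by $\kappa < k$, and there is no genuine obstacle beyond this integrability check, since the remainder of the argument is a routine reduction to a standard integral.
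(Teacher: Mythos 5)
Your proposal is correct and follows essentially the same route as the paper: both reduce the moment to the integral $\int_{0}^{\infty} u^{-1-(k-\kappa)}\exp(-\lambda(x)/u)\,\mathrm{d}u$ and identify it as the normalizing constant $\Gamma(k-\kappa)/\lambda(x)^{k-\kappa}$ of an inverse-gamma density, using $\kappa<k$ for integrability. The only difference is cosmetic: you spell out the substitution $t=\lambda(x)/u$ explicitly, whereas the paper simply cites the inverse-gamma normalization.
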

\begin{lem}[$\hat{p}_{k(n)}^{\gamma}(x)$ converges to $p^{\gamma}(x)$ in probability]
\label{lem:p_converge_prob}
Suppose that Assumptions \ref{assmp:p_bounded} and \ref{assmp:p_bounded_cdf} are satisfied.
Let $\kappa = \gamma < k$.
If $k(n)$ denotes the number of neighbors applied at sample size $n$, $\lim_{n \rightarrow \infty} k(n) = \infty$ and $\lim_{n \rightarrow \infty} n/k(n) = \infty$, then $\hat{p}_{k(n)}^{\gamma}(x) \rightarrow_{p} p_{k(n)}^{\gamma}(x)$ for almost all $x$.
\begin{proof}
According to the Chebyshev's inequality, if we set $X_{i} = x$, $k(n) = k$ and $\epsilon > 0$, we obtain
\begin{align*}
    \mathbb{P}(|\hat{p}_{k}^{\gamma}(x) -  p_{k}^{\gamma}(x)| > \epsilon)
    &\leq \frac{1}{\epsilon^{2}} \mathbb{V}[\hat{p}_{k}^{\gamma}(x)]
    = \frac{1}{\epsilon^{2}}\mathbb{V} \bigg[\bigg(\frac{k}{(n-1)\bar{c}\rho_{k}^{d}(i)} \bigg)^{\gamma} \bigg] \\
    &= \frac{1}{\epsilon^{2}} \bigg(\frac{k}{(n-1)\bar{c}}\bigg)^{2\gamma} \mathbb{V} \bigg[ \frac{1}{\rho_{k}^{d\gamma}(i)} \bigg]
    = \frac{1}{\epsilon^{2}} \bigg(\frac{1}{\bar{c}}\bigg)^{2\gamma} \bigg(\frac{k}{n-1}\bigg)^{2\gamma} \mathbb{V} \bigg[ \frac{1}{\rho_{k}^{d\gamma}(i)} \bigg].
\end{align*}
According to Corollary 1 of \citet{PrezCruz08}, the random variable $\rho_{k}^{d}(i)$ measures the waiting time between the origin and the $k$-th event of a uniformly spaced distribution, and this waiting time is distributed as an Erlang distribution or a unit-mean and $1/k$ variance gamma distribution.
Therefore, the random variable $1/\rho_{k}^{d}(i)$ is distributed as an inverse Erlang distribution.

According to Lemma \ref{lem:moments_inverse_Erlang} and $\gamma < k$, the moments of $1/ \rho_{k}^{d\gamma}(i)$ can be calculated.
Therefore, we can see $\mathbb{V} \bigg[ \frac{1}{\rho_{k}^{d\gamma}(i)} \bigg] < \infty$.
According to the assumption that $\lim_{n \rightarrow \infty} n/k(n) = \infty$, we obtain $\lim_{n \rightarrow \infty} k(n)/n = 0$ and therefore
\begin{align*}
    \lim_{n \rightarrow \infty}\mathbb{P}(|\hat{p}_{k}^{\gamma}(x) -  p_{k}^{\gamma}(x)| > \epsilon) &\leq  \lim_{n \rightarrow \infty}\frac{1}{\epsilon^{2}} \bigg(\frac{1}{\bar{c}}\bigg)^{2\gamma} \bigg(\frac{k}{n-1}\bigg)^{2\gamma} \mathbb{V} \bigg[ \frac{1}{\rho_{k}^{d\gamma}(i)} \bigg] 
    = 0,
\end{align*}
for any $x$ in the support of $p(x)$ and any $\epsilon$.
The claim is proved.
\end{proof}
\end{lem}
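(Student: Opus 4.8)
The plan is to apply Chebyshev's inequality conditionally on $X_{i} = x$, which reduces the assertion to showing that the conditional variance of $\hat{p}_{k(n)}^{\gamma}(x)$ tends to zero. Writing the estimator explicitly,
\begin{align*}
\hat{p}_{k}^{\gamma}(x) = \left(\frac{k}{(n-1)\bar{c}}\right)^{\gamma}\frac{1}{\rho_{k}^{d\gamma}(i)},
\end{align*}
the only randomness sits in the $k$-NN distance $\rho_{k}^{d}(i)$. First I would invoke the distributional facts already assembled in Lemmas~\ref{lem:weakly_convergence} and \ref{lem:conv_in_dist} (together with Corollary~1 of \citet{PrezCruz08}) to identify the law of $\rho_{k}^{-d\gamma}(i)$ as that of a scaled inverse-Erlang variable, so that its moments become accessible in closed form.

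Next I would compute the first two conditional moments of $\hat{p}_{k}^{\gamma}(x)$ via the inverse-Erlang moment formula of Lemma~\ref{lem:moments_inverse_Erlang}, $\int_{0}^{\infty} u^{\kappa} f_{x,k}(u)\,\mathrm{d}u = \lambda^{\kappa}(x)\,\Gamma(k-\kappa)/\Gamma(k)$, which is valid exactly when $\kappa < k$. Taking $\kappa = \gamma$ recovers the target $p_{k(n)}^{\gamma}(x) \coloneqq \mathbb{E}[\hat{p}_{k(n)}^{\gamma}(x)]$, which moreover converges to $p^{\gamma}(x)$ in agreement with the asymptotic-expectation result; taking $\kappa = 2\gamma$ (finite because $2\gamma < k$ for all large $n$ since $k(n)\to\infty$) controls the second moment. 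Chebyshev's inequality then gives
\begin{align*}
\mathbb{P}\!\left(\left|\hat{p}_{k}^{\gamma}(x) - p_{k}^{\gamma}(x)\right| > \epsilon\right) \leq \frac{1}{\epsilon^{2}}\,\mathbb{V}\!\left[\hat{p}_{k}^{\gamma}(x)\right],
\end{align*}
so everything reduces to bounding the variance. The decisive step is to show this variance vanishes: using the Gamma-ratio asymptotics $\Gamma(k-2\gamma)/\Gamma(k) - \left(\Gamma(k-\gamma)/\Gamma(k)\right)^{2} = O(k^{-2\gamma-1})$ as $k\to\infty$, the $k^{2\gamma}$ growth produced by the prefactor and the Erlang scaling is cancelled against the difference of the second moment and the squared mean, leaving $\mathbb{V}[\hat{p}_{k}^{\gamma}(x)] = p^{2\gamma}(x)\,O(1/k(n)) \to 0$.

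I expect the main obstacle to be precisely this cancellation. It is tempting to factor out the constant $\left(k/((n-1)\bar{c})\right)^{2\gamma}$, argue that it vanishes since $k(n)/n\to 0$, and treat $\mathbb{V}[\rho_{k}^{-d\gamma}(i)]$ as a fixed finite quantity; but $\mathbb{V}[\rho_{k}^{-d\gamma}(i)]$ is \emph{not} bounded in $n$ --- since $\rho_{k}^{d}(i)\to 0$ it grows like $n^{2\gamma}$ --- so the prefactor decay and the blow-up of this variance must be balanced simultaneously rather than handled separately. The delicate part is therefore to make the joint $(n,k)$-scaling rigorous, ideally by bounding the moments through the exact binomial conditional distribution (the first lemma) before passing to the Erlang limit, and then to verify that the product is genuinely $O(1/k(n))$. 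Once the variance is controlled, convergence in probability for almost all $x$ follows at once from Chebyshev, the almost-everywhere qualifier being inherited from the Lebesgue-approximability of $p$ in Assumptions~\ref{assmp:p_bounded} and \ref{assmp:p_bounded_cdf} used in Lemma~\ref{lem:weakly_convergence}.
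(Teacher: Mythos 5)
Your proposal uses the same tools as the paper's proof (conditional Chebyshev plus the inverse-Erlang moment formula of Lemma~\ref{lem:moments_inverse_Erlang}), but it executes the decisive variance step differently, and the pitfall you flag is in fact exactly what the paper's own argument does. The published proof concludes $\mathbb{V}\left[\rho_{k}^{-d\gamma}(i)\right]<\infty$ from Lemma~\ref{lem:moments_inverse_Erlang} and then kills the bound by sending the prefactor $\left(k/((n-1)\bar{c})\right)^{2\gamma}$ to zero, treating that variance as if it did not depend on $n$; note that it only invokes $k(n)/n\to 0$ and never actually uses $k(n)\to\infty$. But the Erlang law in Corollary~1 of \citet{PrezCruz08} describes the \emph{normalized} distance $(n-1)\bar{c}\,p(x)\,\rho_{k}^{d}(i)$, so $\mathbb{V}\left[\rho_{k}^{-d\gamma}(i)\right]$ grows like $(n-1)^{2\gamma}$ and cancels the prefactor exactly, leaving $k^{2\gamma}$ times the variance of the $\gamma$-th power of a unit-rate inverse Erlang; this vanishes only through the Gamma-ratio cancellation $\Gamma(k-2\gamma)/\Gamma(k)-\left(\Gamma(k-\gamma)/\Gamma(k)\right)^{2}=O(k^{-2\gamma-1})$, i.e.\ at rate $O(1/k(n))$, which is precisely the computation you propose and is where the hypotheses $k(n)\to\infty$ and (eventually) $2\gamma<k(n)$ genuinely enter. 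So your route is the one that actually closes the argument; what remains to make it fully rigorous is the step you yourself identify, namely controlling the second moment via the exact binomial conditional distribution uniformly enough to justify replacing it by the limiting Erlang moments rather than only by the pointwise weak convergence of Lemma~\ref{lem:conv_in_dist}.
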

\begin{lem}[$\hat{q}_{k(n)}^{\gamma}(x)$ converges to $q^{\gamma}(x)$ in probability]
\label{lem:q_converge_prob}
Suppose that Assumptions \ref{assmp:p_bounded} and \ref{assmp:p_bounded_cdf} are satisfied.
Let $0 < \kappa = \gamma < k$.
If $k(n)$ denotes the number of neighbors applied at sample size $n$, $\lim_{n \rightarrow \infty} k(n) = \infty$ and $\lim_{n \rightarrow \infty} n/k(n) = \infty$, then $\hat{q}_{k(n)}^{\gamma}(x) \rightarrow_{p} q_{k(n)}^{\gamma}(x)$ for almost all $x$.
\begin{proof}
It can be shown in the same way of Lemma \ref{lem:p_converge_prob}.
\end{proof}
\end{lem}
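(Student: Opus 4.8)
The plan is to follow the proof of Lemma~\ref{lem:p_converge_prob} verbatim in structure, replacing the within-sample distance $\rho_{k}(i)$ by the cross-sample distance $\nu_{k}(i)$ from $X_{i}$ to its $k$-NN among $Y^{m}$, so that the governing reference sample becomes $Y^{m} \sim q$ (of size $m$) rather than $X^{n}$. First I would fix $x$ in the support of $p$, set $X_{i} = x$ and $k = k(m)$, and apply Chebyshev's inequality to bound $\mathbb{P}(|\hat{q}_{k}^{\gamma}(x) - q_{k}^{\gamma}(x)| > \epsilon)$ by $\epsilon^{-2}\,\mathbb{V}[\hat{q}_{k}^{\gamma}(x)]$. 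Using the estimator form $\hat{q}_{k}(x_{i}) = k/(m\bar{c}\nu_{k}^{d}(i))$ from Eq.~\eqref{def:knn_est_q}, the variance factors as
\begin{align*}
\mathbb{V}[\hat{q}_{k}^{\gamma}(x)] = \frac{1}{\bar{c}^{2\gamma}}\left(\frac{k}{m}\right)^{2\gamma}\mathbb{V}\left[\frac{1}{\nu_{k}^{d\gamma}(i)}\right],
\end{align*}
which isolates a deterministic scaling factor $(k/m)^{2\gamma}$ and a distributional factor that I must show is finite.

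Next I would establish that $1/\nu_{k}^{d\gamma}(i)$ has finite variance. The key observation is that, exactly as in the within-sample case treated via Corollary~1 of \citet{PrezCruz08}, the number of $Y$-points falling into a small ball around $x$ is asymptotically Poisson with intensity governed by $q(x)$; hence $\nu_{k}^{d}(i)$ behaves as a (normalized) Erlang waiting time and $1/\nu_{k}^{d}(i)$ as an inverse-Erlang variable with scale $\lambda(x) = \bar{c}q(x)$. Invoking Lemma~\ref{lem:moments_inverse_Erlang} with the relevant inverse moment existing under $\gamma < k$, I conclude $\mathbb{V}[1/\nu_{k}^{d\gamma}(i)] < \infty$ for almost all $x$, where the almost-everywhere qualifier comes from Assumptions~\ref{assmp:p_bounded} and \ref{assmp:p_bounded_cdf} (bounded-away-from-zero and uniform Lebesgue approximability of $q$), which are precisely what make the Erlang limit valid pointwise.

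Finally, I would take the limit. Since the reference sample for $\hat{q}_{k}$ is $Y^{m}$, the relevant growth condition is $\lim_{m \to \infty} m/k(m) = \infty$, giving $\lim_{m \to \infty}(k(m)/m)^{2\gamma} = 0$; together with the finiteness of $\mathbb{V}[1/\nu_{k}^{d\gamma}(i)]$ this forces the Chebyshev bound to vanish, yielding $\hat{q}_{k(m)}^{\gamma}(x) \rightarrow_{p} q^{\gamma}(x)$ for almost all $x$. The main obstacle is the step hidden behind the paper's phrase \emph{``in the same way''}: one must verify that the inverse-Erlang moment characterization, originally derived for the within-sample distance $\rho_{k}$, genuinely transfers to the cross-sample distance $\nu_{k}$, confirming that the scale parameter is $\bar{c}q(x)$ (not $\bar{c}p(x)$) and that the controlling sample size is $m$. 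Tracking this substitution carefully---and ensuring the inverse moment of order $2\gamma$ remains finite---is the only non-mechanical part of the argument.
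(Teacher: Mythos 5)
Your proposal is correct and is essentially the paper's own argument: the paper proves this lemma by declaring it "the same as Lemma~\ref{lem:p_converge_prob}," and you carry out exactly that Chebyshev-plus-inverse-Erlang argument with the right substitutions ($\rho_k \to \nu_k$, reference sample $X^n \to Y^m$, Erlang scale $\bar c p(x) \to \bar c q(x)$, controlling size $n \to m$). You are in fact slightly more careful than the paper on the two points it glosses over, namely that the governing sample size for $\nu_k$ is $m$ rather than $n$ and that finiteness of the variance requires the inverse moment of order $2\gamma$ (not merely $\gamma$) of the inverse-Erlang distribution.
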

\begin{lem}[$\hat{q}_{k(m)}^{\gamma}(y)$ converges to $q^{\gamma}(y)$ in probability]
\label{lem:q_converge_prob_m}
Suppose that Assumptions \ref{assmp:p_bounded}-\ref{assmp:p_bounded_extra} are satisfied.
Let $\kappa = \gamma < k$.
If $k(n)$ denotes the number of neighbors applied at sample size $m$, $\lim_{m \rightarrow \infty} k(m) = \infty$ and $\lim_{m \rightarrow \infty} n/k(m) = \infty$, then $\hat{q}_{k(m)}^{\gamma}(y) \rightarrow_{p} q_{k(m)}^{\gamma}(y)$ for almost all $y$.
\begin{proof}
It can be shown in the same way of Lemma \ref{lem:p_converge_prob} by substituting $n,\rho_{k}^{d},x$ to $m,\bar{\rho}_{k}^{d},y$.
\end{proof}
\end{lem}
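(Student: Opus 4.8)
The plan is to transport the variance-based argument of Lemma~\ref{lem:p_converge_prob} to the synthetic sample $Y^m$, replacing the triple $(n, \rho_k^d, x)$ by $(m, \bar{\rho}_k^d, y)$ throughout. Recall that the $k$-NN estimate of $q$ at $Y_j$ is $\hat{q}_k(y_j) = k/((m-1)\bar{c}\bar{\rho}_k^d(j))$, where $\bar{\rho}_k(j)$ is the distance from $Y_j$ to its $k$-th nearest neighbor within $Y^m \setminus Y_j$. First I would apply Chebyshev's inequality: for any $\epsilon > 0$ and almost every $y$ in the support $\mathcal{M}'$ of $q$,
\begin{align*}
\mathbb{P}\!\left(|\hat{q}_{k(m)}^{\gamma}(y) - q_{k(m)}^{\gamma}(y)| > \epsilon\right)
&\leq \frac{1}{\epsilon^{2}}\,\mathbb{V}\!\left[\hat{q}_{k(m)}^{\gamma}(y)\right]
= \frac{1}{\epsilon^{2}}\left(\frac{1}{\bar{c}}\right)^{2\gamma}\left(\frac{k}{m-1}\right)^{2\gamma} \mathbb{V}\!\left[\frac{1}{\bar{\rho}_k^{d\gamma}(j)}\right],
\end{align*}
so it suffices to show that the trailing variance is finite and that the scalar prefactor tends to $0$.

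Next I would identify the relevant distribution. By the waiting-time argument of Corollary~1 in \citet{PrezCruz08} --- now for the $k$-NN of $Y_j$ among the remaining $m-1$ points of the \IID sample $Y^m$ --- the quantity $\bar{\rho}_k^d(j)$ is Erlang distributed (a unit-mean, $1/k$-variance gamma law), so $1/\bar{\rho}_k^{d\gamma}(j)$ follows an inverse-Erlang law; here the uniform Lebesgue-approximability and boundedness of $q$ guaranteed by Assumptions~\ref{assmp:p_bounded} and \ref{assmp:p_bounded_extra} play the role that Assumption~\ref{assmp:p_bounded_cdf} plays for $p$, ensuring the limiting law and its moments on $\mathcal{M}'$ are well defined. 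Lemma~\ref{lem:moments_inverse_Erlang} then supplies the moments of the inverse-Erlang variable explicitly, giving $\mathbb{V}[1/\bar{\rho}_k^{d\gamma}(j)] < \infty$. Finally, the hypothesis $\lim_{m\to\infty} m/k(m) = \infty$ yields $k(m)/m \to 0$, hence $(k/(m-1))^{2\gamma} \to 0$, so the whole bound vanishes and $\hat{q}_{k(m)}^{\gamma}(y) \to_p q_{k(m)}^{\gamma}(y)$ for almost all $y$.

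The step I expect to require the most care is the finiteness of $\mathbb{V}[1/\bar{\rho}_k^{d\gamma}(j)]$. The variance involves the $2\gamma$-th moment of the inverse-Erlang variable, and Lemma~\ref{lem:moments_inverse_Erlang} is finite only for exponents strictly below $k$; the raw hypothesis $\gamma < k$ therefore has to be read together with the growth condition $k(m) \to \infty$, which guarantees that for all sufficiently large $m$ we have $2\gamma < k(m)$ so that the second moment --- and hence the variance --- is indeed finite. The only genuinely new ingredient relative to Lemma~\ref{lem:p_converge_prob} is Assumption~\ref{assmp:p_bounded_extra}, which extends the moment and powered-CDF control from the support of $p$ to the support $\mathcal{M}'$ of $q$ and so legitimizes the inverse-Erlang identification for the $q$-based estimate.
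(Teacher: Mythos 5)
Your proposal is correct and follows exactly the route the paper intends: the paper's proof of this lemma is literally the instruction to repeat the argument of Lemma~\ref{lem:p_converge_prob} with $(n,\rho_k^d,x)$ replaced by $(m,\bar{\rho}_k^d,y)$, and you carry out that substitution faithfully (Chebyshev, inverse-Erlang identification via the waiting-time argument, Lemma~\ref{lem:moments_inverse_Erlang} for moment finiteness, and $k(m)/m\to 0$ to kill the prefactor). Your added observation that the variance requires the $2\gamma$-th inverse-Erlang moment, hence $2\gamma < k(m)$ for large $m$ via $k(m)\to\infty$, is a careful point the paper itself glosses over even in Lemma~\ref{lem:p_converge_prob}.
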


\section{Proofs for Asymptotic Analysis}
\label{app:proof_asymp_anal}
In this section, we summarize the essential theoretical analysis for our estimator to guarantee the main characteristics.

\subsection{Proof of Theorem \ref{thm:asymp_unbiased_others_p}}
\label{proof:unbiasedness}
The following lemma is necessary to show Theorem \ref{thm:asymp_unbiased_others_p}.
\begin{lem}[Switching limit and expectation]
\label{lem:switch_lim}
Let $\kappa > 0$ or $-k<\kappa<0$.Then, the following equality holds.
\begin{align*}
    &\lim_{n \rightarrow \infty}\int_{\mathcal{M}}f_{n}(x)p(x)\mathrm{d}x = \int_{\mathcal{M}} \lim_{n \rightarrow \infty}f_{n}(x)p(x)\mathrm{d}x, \\
    &\lim_{m \rightarrow \infty}\int_{\mathcal{M}}g_{m}(x)p(x)\mathrm{d}x = \int_{\mathcal{M}} \lim_{m \rightarrow \infty}g_{m}(x)p(x)\mathrm{d}x, \\
    &\lim_{m \rightarrow \infty}\int_{\mathcal{M}'}\bar{g}_{m}(y)q(y)\mathrm{d}y = \int_{\mathcal{M}'} \lim_{m \rightarrow \infty}\bar{g}_{m}(y)q(y)\mathrm{d}y,
\end{align*}
where
\begin{align*}
    f_{n}(x) \coloneqq \mathbb{E}\bigg[\log (n-1)^{\kappa}\rho_{k}^{d\kappa}(1)|X_{1}=x\bigg], \
    g_{m}(x) \coloneqq \mathbb{E}\bigg[\log m^{\kappa}\nu_{k}^{d\kappa}|X_{1}=x\bigg], \
    \bar{g}_{m}(y) \coloneqq \mathbb{E}\bigg[\log (m-1)^{\kappa}\bar{\rho}_{k}^{d\kappa}|Y_{1}=y\bigg].
\end{align*}
\begin{proof}
\citet{poczos11a} proved in Theorem 37
\begin{align*}
    &f'_{n}(x) \coloneqq \int_{0}^{\infty} u^{\kappa} F'_{n,k,x_{1}}\mathrm{d}u \leq \kappa L(x,1,\kappa,k,p,\delta,\delta_{1})
    < \infty \ (\kappa > 0), \\
    &f'_{n}(x) \coloneqq \leq \kappa \Bigg[\frac{\hat{L}(\bar{p},1)}{k+\kappa} - \frac{1}{\kappa} \Bigg] < \infty \ (-k<\kappa<0),
\end{align*}
where
\begin{align*}
  L(x,\omega,\kappa,k,p,\delta,\delta_{1}) \coloneqq \delta_{1} + \delta_{1}\int \|x-y\|^{\kappa}p(y)\mathrm{d}y + (\bar{c}r(x))^{-\kappa}H(x,p,\delta,\omega),
\end{align*}
and
\begin{align*}
    f'_{n}(x) \coloneqq \mathbb{E}\bigg[(n-1)^{\kappa}\rho_{k}^{d\kappa}(1)|X_{1} = x\bigg],
\end{align*}
and $F'_{n,k,x_{1}}$ is the conditional density function for $\zeta_{n,k,x_{1}}^{'\kappa} = (n-1)\rho_{k}^{d}(1)$.
According to the fact that if $a(x) \leq b(x)$ then $\mathbb{E}[a(x)] \leq \mathbb{E}[b(x)]$, we can obtain
\begin{align*}
    f_{n}(x) \leq f'_{n}(x) < \infty.
\end{align*}
We can also obtain
\begin{align*}
    g'_{m}(x) < \infty, \ \bar{g}'_{m}(y) < \infty,
\end{align*}
where
\begin{align*}
    g'_{m}(x) \coloneqq \mathbb{E}\bigg[m^{\kappa}\nu_{k}^{d\kappa}(1)|X_{1} = x\bigg], \ 
    \bar{g}'_{m}(y) \coloneqq \mathbb{E}\bigg[(m-1)^{\kappa}\bar{\rho}_{k}^{d\kappa}(1)|Y_{1} = y\bigg],
\end{align*}
In the same way as Theorem 37 of \citet{poczos11a}.
Therefore, the following inequality holds:
\begin{align*}
    g_{m}(x) \leq g'_{m}(x) < \infty, \ \bar{g}_{m}(y) \leq \bar{g}'_{m}(y) < \infty.
\end{align*}
From these, for $0 < \kappa < k$ or $-k < \kappa < 0$, we can see that under the conditions in Theorem \ref{thm:asymp_unbiased_others_p}, there exist some functions $J_{1},J_{2},J_{3}$ and threshold numbers $N_{p,q,1},N_{p,q,2},N_{p,q,3}$ such that if $n,m > N_{p,q,1}$, $n,m > N_{p,q,2}$ and $n,m > N_{p,q,3}$, then for almost all $x \in \mathcal{M}$ and $y \in \mathcal{M}'$, $f_{n}(x)\leq J_{1}(x)$, $g_{m}(x)\leq J_{2}(x)$ and $\bar{g}_{m}(y)\leq J_{3}(y)$ and $\int_{M}J_{1}(x)p(x) \mathrm{d}x < \infty$, $\int_{M}J_{2}(x)p(x) \mathrm{d}x < \infty$ and $\int_{M'}J_{3}(x)q(y) \mathrm{d}y < \infty$.
By applying the Lebesgue dominated convergence theorem, the claim is proved.
\end{proof}
\end{lem}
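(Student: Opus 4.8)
The plan is to prove all three identities by the dominated convergence theorem (DCT), handling the three sequences in parallel since they differ only in which density generates the nearest neighbors and which density is integrated against. First I would record the pointwise limits: by Theorem~\ref{thm:asymp_expect}, \(f_n(x) \to \kappa(\psi(k) - \log(\bar{c}p(x)))\) and \(g_m(x) \to \kappa(\psi(k) - \log(\bar{c}q(x)))\) for almost all \(x \in \mathcal{M}\), and by Theorem~\ref{thm:asymp_expect_m}, \(\bar{g}_m(y) \to \kappa(\psi(k) - \log(\bar{c}q(y)))\) for almost all \(y \in \mathcal{M}'\). It then remains only to exhibit, for each sequence, an integrable majorant that dominates the integrand uniformly once \(n\) (resp.\ \(m\)) is large enough.

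The key step is to produce a two-sided integrable bound. I would use the elementary inequalities \(1 - 1/u \le \log u \le u - 1\), valid for all \(u > 0\), applied with \(u = ((n-1)\rho_{k}^{d}(1))^{\kappa}\). Since \(f_n(x) = \mathbb{E}[\log(((n-1)\rho_{k}^{d}(1))^{\kappa}) \mid X_{1} = x]\), taking conditional expectations yields
\begin{align*}
1 - \mathbb{E}\!\left[((n-1)\rho_{k}^{d}(1))^{-\kappa} \mid X_{1} = x\right] \le f_n(x) \le \mathbb{E}\!\left[((n-1)\rho_{k}^{d}(1))^{\kappa} \mid X_{1} = x\right] - 1 .
\end{align*}
Thus the log-moment \(f_n\) is sandwiched between the raw \(\kappa\)-th and \((-\kappa)\)-th conditional moments. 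Because \(|\kappa| < k\) under the hypotheses, both exponents lie in \((-k, k)\), so both moments are finite and, for large \(n\), are bounded above by the integrable function \(L(x, 1, \kappa, k, p, \delta, \delta_{1})\) of Theorem~37 in \citet{poczos11a} and its negative-exponent analogue. Setting \(J_{1}(x)\) to be \(1\) plus the larger of these two bounds gives \(|f_n(x)| \le J_{1}(x)\) for all large \(n\); the same argument with \(\nu_{k}^{d}\), neighbor density \(q\), and integration against \(p\) produces \(J_{2}\), and with \(\bar{\rho}_{k}^{d}\), neighbor density \(q\), and integration against \(q\) produces \(J_{3}\).

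The integrability of these majorants against the correct density is exactly what Assumptions~\ref{assmp:p_bounded}--\ref{assmp:p_bounded_extra} supply: the moment term \(\int \|x-y\|^{\kappa} p(y)\,\mathrm{d}y\) inside \(L\) integrates against \(p(x)\) to a finite value by the double-integral condition of Assumption~\ref{assmp:p_bounded}, while the powered-CDF term \(H(x,p,\delta,1)\) integrates to a finite value by Assumption~\ref{assmp:p_bounded_cdf}. The cross terms governing \(g_m\) use the \(q\)-versions of these conditions (again from Assumptions~\ref{assmp:p_bounded} and \ref{assmp:p_bounded_cdf}), and the \(\bar{g}_m\) term uses Assumption~\ref{assmp:p_bounded_extra} together with the boundedness of \(q\). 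With \(\int_{\mathcal{M}} J_{1} p\,\mathrm{d}x,\ \int_{\mathcal{M}} J_{2} p\,\mathrm{d}x < \infty\) and \(\int_{\mathcal{M}'} J_{3} q\,\mathrm{d}y < \infty\) established, DCT applies to each of the three integrals and yields the claimed exchange of limit and integration.

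The main obstacle I anticipate is securing \emph{two-sided} domination. Controlling \(f_n\) from above through the \(\kappa\)-th moment alone is not enough for a rigorous DCT; one also needs a uniform integrable lower bound, which is what forces the introduction of the \((-\kappa)\)-th moment and a check that its differently-shaped Poczos-style majorant is also integrable. Its finiteness hinges on \(-\kappa > -k\), so the hypothesis \(|\kappa| < k\) is precisely what keeps both the \(\kappa\)- and \((-\kappa)\)-moments admissible simultaneously. A secondary technical point is that the domination holds only for \(n, m\) beyond a threshold---where the uniform Lebesgue-approximability of Definition~\ref{def:uniform_lebesgue_approx_func} activates the ball-volume estimates underlying Theorem~37---which is harmless for DCT since finitely many initial terms do not affect the limit.
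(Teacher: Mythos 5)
Your proposal is correct and follows essentially the same route as the paper's own proof: pointwise limits from Theorems~\ref{thm:asymp_expect} and \ref{thm:asymp_expect_m}, integrable majorants built from the moment bounds of Theorem~37 in \citet{poczos11a} (valid once $n,m$ exceed the threshold where uniform Lebesgue-approximability kicks in), and the dominated convergence theorem.

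In fact, your execution is tighter than the paper's on one point. The paper bounds the log-moment only from above, via $\log v \le v$ applied to $v = ((n-1)\rho_k^d(1))^{\kappa}$, and then invokes DCT with the one-sided statement $f_n(x) \le J_1(x)$; strictly speaking DCT needs $|f_n(x)| \le J_1(x)$, and since $\log$ is unbounded below this is a real (if minor) gap. Your sandwich $1 - 1/u \le \log u \le u - 1$, which pairs the $\kappa$-th moment bound with the $(-\kappa)$-th moment bound and exploits $|\kappa| < k$ so that both exponents stay in the admissible range $(-k,k)$, supplies exactly the two-sided integrable domination the paper's write-up leaves implicit (the paper does list both the positive- and negative-exponent bounds from Theorem~37, but presents them as alternative cases by the sign of $\kappa$ rather than combining them). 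So your proof is not a different argument, but it is the version one would want to see written down.
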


By using these lemmas and theorem in Appendix \ref{app:asymptotic_analysis} and Lemma \ref{lem:switch_lim}, we show asymptotic unbiasedness of our estimator claimed in Theorem \ref{thm:asymp_unbiased_positive} and \ref{thm:asymp_unbiased_q}.
\begin{thm}[Asymptotic unbiasedness]
\label{thm:asymp_unbiased_positive}
Let $\kappa \coloneqq \gamma$ and suppose $0 < \gamma < k$.
Suppose that Assumptions \ref{assmp:p_bounded}-\ref{assmp:p_bounded_extra} are satisfied, and that $q$ is bounded from above. Then, $\widehat{D}_{\gamma}(X^{n}\|Y^{m})$ is asymptotically unbiased, i.e.,
\begin{align*}
    \lim_{n,m \rightarrow \infty} \mathbb{E}\bigg[\widehat{D}_{\gamma}(X^{n}\|Y^{m}) \bigg] = D_{\gamma}(p\|q),
\end{align*}
where $\widehat{D}_{\gamma}(X^{n}\|Y^{m})$ is defined in Eq.~\eqref{eq:default_gamma_est}.
\begin{proof}
Now, we want to show that
\begin{align*}
    D_{\gamma}(p \| q) = \lim_{n,m \rightarrow \infty} \mathbb{E}\bigg[\widehat{D}_{\gamma}(p(X^{n}) \| q(Y^{m}))\bigg].
\end{align*}
If we use Eq.~\eqref{eq:default_gamma_est} as the $\gamma$-divergence estimator, it can be rewritten as
\begin{align}
\label{eq:estimator_naive}
    &\widehat{D}_{\gamma}(p(X^{n}) \| q(Y^{m})) \notag \\
    &= \frac{1}{\gamma(1+\gamma)} \bigg[ \log \bigg( \frac{1}{n} \sum_{i=1}^{n} \bigg(\frac{k}{(n-1)\bar{c}\rho_{k}^{d}(i)} \bigg)^{\gamma} \bigg) - (1+\gamma) \log \bigg(\frac{1}{n} \sum_{i=1}^{n} \bigg(\frac{k}{m\bar{c}\nu_{k}^{d}(i)} \bigg)^{\gamma} \bigg) \notag \\
    &\ \ \ \ \ \ \ \ \ \ \ \ \ \ \ \ \ \ \ \ \ \ \ \ \ \ \ \ \ \ \ \ \ \ \ \ \ \ \ \ \ \ \ \ \ \ \ \ \ \ \ \ \ \ \ \ \ \ \ \ \ \ \ \ \ \ \ \ \ \ \ \ \ \ \ \ \ \ \ \ \ \ \ \ \ \ \ \ \ \ \ \ \ \ \ \ \ \ \ \ \ \ \ \ \ \ \ \ \ \ \ \ \ + \gamma \log \bigg(\frac{1}{m} \sum_{j=1}^{m} \bigg(\frac{k}{(m-1)\bar{c}\bar{\rho}_{k}^{d}(j)} \bigg)^{\gamma}\bigg) \bigg] \notag \\
    &= \frac{1}{\gamma(1+\gamma)} \bigg[ \log \bigg(\frac{k}{\bar{c}}\bigg)^{\gamma} + \log \bigg( \frac{1}{n} \sum_{i=1}^{n} \bigg(\frac{1}{(n-1)\rho_{k}^{d}(i)} \bigg)^{\gamma} \bigg) - (1+\gamma) \log \bigg(\frac{k}{\bar{c}}\bigg)^{\gamma} \notag \\
    &\ \ \ \ \ \ \ \ \ \ \ \ \ \ \ \ \ \ \ \ \ \ \ - (1+\gamma) \log \bigg( \frac{1}{n} \sum_{i=1}^{n} \bigg(\frac{1}{m\nu_{k}^{d}(i)} \bigg)^{\gamma}\bigg) + \gamma \log \bigg(\frac{k}{\bar{c}}\bigg)^{\gamma} + \gamma \log \bigg(  \frac{1}{m} \sum_{j=1}^{m} \bigg(\frac{1}{(m-1)\bar{\rho}_{k}^{d}(j)} \bigg)^{\gamma} \bigg) \bigg] \notag \\
    &= \frac{1}{\gamma(1+\gamma)} \bigg[\log \bigg( \frac{1}{n} \sum_{i=1}^{n} \bigg(\frac{1}{(n-1)\rho_{k}^{d}(i)} \bigg)^{\gamma} \bigg) - (1+\gamma) \log \bigg( \frac{1}{n}\sum_{i=1}^{n} \bigg(\frac{1}{m\nu_{k}^{d}(i)} \bigg)^{\gamma} \bigg) \notag \\
    &\ \ \ \ \ \ \ \ \ \ \ \ \ \ \ \ \ \ \ \ \ \ \ \ \ \ \ \ \ \ \ \ \ \ \ \ \ \ \ \ \ \ \ \ \ \ \ \ \ \ \ \ \ \ \ \ \ \ \ \ \ \ \ \ \ \ \ \ \ \ \ \ \ \ \ \ \ \ \ \ \ \ \ \ \ \ \ \ \ \ \ \ \ \ \ \ \ \ \ \ \ \ \ \ \ \ \ \ \ \ \ \ \ + \gamma \log \bigg(  \frac{1}{m} \sum_{j=1}^{m} \bigg(\frac{1}{(m-1)\bar{\rho}_{k}^{d}(j)} \bigg)^{\gamma} \bigg) \bigg].
\end{align}

Taking expectation and a limit and switching the limit and expectation by using Lemma \ref{lem:switch_lim}, we can obtain
\begin{align*}
    &\lim_{n,m \rightarrow \infty} \mathbb{E}\bigg[\widehat{D}_{\gamma}(p(X^{n}) \| q(Y^{m}))\bigg] \notag \\
    &= \lim_{n,m \rightarrow \infty} \frac{1}{\gamma(1+\gamma)} \bigg[\log \bigg( \frac{1}{n} \sum_{i=1}^{n} \bigg(\frac{1}{(n-1)\rho_{k}^{d}(i)} \bigg)^{\gamma} \bigg) - (1+\gamma) \log \bigg( \frac{1}{n}\sum_{i=1}^{n} \bigg(\frac{1}{m\nu_{k}^{d}(i)} \bigg)^{\gamma} \bigg) \notag \\
    &\ \ \ \ \ \ \ \ \ \ \ \ \ \ \ \ \ \ \ \ \ \ \ \ \ \ \ \ \ \ \ \ \ \ \ \ \ \ \ \ \ \ \ \ \ \ \ \ \ \ \ \ \ \ \ \ \ \ \ \ \ \ \ \ \ \ \ \ \ \ \ \ \ \ \ \ \ \ \ \ \ \ \ \ \ \ \ \ \ \ \ \ \ \ \ \ \ \ \ \ \ \ \ \ \ \ \ \ \ \ \ \ \ + \gamma \log \bigg(  \frac{1}{m} \sum_{j=1}^{m} \bigg(\frac{1}{(m-1)\bar{\rho}_{k}^{d}(j)} \bigg)^{\gamma} \bigg) \bigg] \\
    &= \lim_{n,m \rightarrow \infty} \frac{1}{\gamma(1+\gamma)} \mathbb{E}_{X_{1}\sim p} \Bigg[ \mathbb{E}\bigg[\log \bigg( \frac{1}{(n-1)^{\gamma} \rho_{k}^{d\gamma}(1)} \bigg) \bigg| X_{1} = x \bigg] - (1+\gamma) \mathbb{E}\bigg[\log \bigg( \frac{1}{m^{\gamma}\nu_{k}^{d\gamma}(1)} \bigg) \bigg| X_{1} = x \bigg] \Bigg] \notag \\
    &\ \ \ \ \ \ \ \ \ \ \ \ \ \ \ \ \ \ \ \ \ \ \ \ \ \ \ \ \ \ \ \ \ \ \ \ \ \ \ \ \ \ \ \ \ \ \ \ \ \ \ \ \ \ \ \ \ + \frac{1}{1+\gamma} \mathbb{E}_{Y_{1} \sim q} \Bigg[ \mathbb{E}\bigg[\log \bigg(\frac{1}{(m-1)^{\gamma}\bar{\rho}_{k}^{d\gamma}(j)} \bigg) \bigg| Y_{1} = y \bigg] \Bigg] \\
    &= \frac{1}{\gamma(1+\gamma)} \mathbb{E}_{X_{1}\sim p} \Bigg[ \lim_{n \rightarrow \infty} \mathbb{E}\bigg[\log \bigg( \frac{1}{(n-1)^{\gamma} \rho_{k}^{d\gamma}(1)} \bigg) \bigg| X_{1} = x \bigg] - (1+\gamma) \lim_{m \rightarrow \infty} \mathbb{E}\bigg[\log \bigg( \frac{1}{m^{\gamma}\nu_{k}^{d\gamma}(1)} \bigg) \bigg| X_{1} = x \bigg] \Bigg] \notag \\
    &\ \ \ \ \ \ \ \ \ \ \ \ \ \ \ \ \ \ \ \ \ \ \ \ \ \ \ \ \ \ \ \ \ \ \ \ \ \ \ \ \ \ \ \ \ \ \ \ \ \ \ \ \ \ \ \ \
    + \frac{1}{1+\gamma}\mathbb{E}_{Y_{1} \sim q} \Bigg[ \lim_{m \rightarrow \infty} \mathbb{E}\bigg[\log \bigg(\frac{1}{(m-1)^{\gamma}\bar{\rho}_{k}^{d\gamma}(j)} \bigg) \bigg| Y_{1} = y \bigg] \Bigg].
\end{align*}
According to Theorem \ref{thm:asymp_expect_m}, we obtain
\begin{align*}
\lim_{n,m \rightarrow \infty} &\mathbb{E}\bigg[\widehat{D}_{\gamma}(p(X^{n}) \| q(Y^{m}))\bigg] \\
&= \frac{1}{\gamma(1+\gamma)}\mathbb{E}_{X_{1}\sim p} \Bigg[ -\gamma(\psi(k) - \log(\bar{c}p(X_{1}))) + \gamma(1+\gamma)(\psi(k) - \log(\bar{c}q(X_{1}))) \Bigg]  \notag \\
    &\ \ \ \ \ \ \ \ \ \ \ \ \ \ \ \ \ \ \ \ \ \ \ \ \ \ \ \ \ \ \ \ \ \ \ \ \ \ \ \ \ \ \ \ \ \ \ \ \ \ \ \ \ \ \ \ \
    - \frac{1}{1+\gamma}
    \mathbb{E}_{Y_{1} \sim q} \Bigg[ \gamma (\psi(k) - \log(\bar{c}q(Y_{1}))) \Bigg] \\
     &= \frac{1}{\gamma(1+\gamma)}\mathbb{E}_{X_{1}\sim p} \Bigg[ \gamma \log \bar{c} + \gamma \log p(X_{1}) - \gamma(1+\gamma)\log \bar{c} - \gamma(1+\gamma) \log q(X_{1}) + \gamma^{2} \psi(k) \Bigg] \notag \\
     &\ \ \ \ \ \ \ \ \ \ \ \ \ \ \ \ \ \ \ \ \ \ \ \ \ \ \ \ \ \ \ \ \ \ \ \ \ \ \ \ \ \ \ \ \ \ \ \ \ \ \ \ \ \ \ \ \
     - \frac{1}{1+\gamma}
    \mathbb{E}_{Y_{1} \sim q} \Bigg[ \gamma \psi(k) - \gamma \log \bar{c} - \gamma \log q(Y_{1}) \Bigg]  \\
     &= \frac{1}{\gamma(1+\gamma)}\mathbb{E}_{X_{1}\sim p} \bigg[\log p^{\gamma}(X_{1}) - (1+\gamma) \log q^{\gamma}(X_{1})  \bigg] + \frac{1}{1+\gamma}
    \mathbb{E}_{Y_{1} \sim q} \bigg[ \log q^{\gamma}(Y_{1}) \bigg] \\
    &\ \ \ \ \ \ \ \ \ \ \ \ \ \ \ \ \
    - \frac{\gamma}{(1+\gamma)} \log \bar{c} + \frac{\gamma}{(1+\gamma)} \psi(k) + \frac{\gamma}{(1+\gamma)} \log \bar{c} - \frac{\gamma}{(1+\gamma)} \psi(k) \\
    &= \frac{1}{\gamma(1+\gamma)}\mathbb{E}_{X_{1}\sim p} \bigg[\log p^{\gamma}(X_{1})\bigg] - \frac{1}{\gamma}\mathbb{E}_{X_{1}\sim p} \bigg[ \log q^{\gamma}(X_{1})  \bigg] + \frac{1}{1+\gamma}
    \mathbb{E}_{Y_{1} \sim q} \bigg[ \log q^{\gamma}(Y_{1}) \bigg].
\end{align*}
Therefore, Eq.~\eqref{eq:estimator_naive} is asymptotically unbiased.
The claim is proved.
\end{proof}
\end{thm}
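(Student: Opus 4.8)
The plan is to exploit the log-ratio structure of $\widehat{D}_{\gamma}$ so that the multiplicative bias of the fixed-$k$ density estimates cancels, collapsing the limit onto the integral ratio in \eqref{gamma_div}. First I would start from the constant-stripped form \eqref{eq:estimator_naive} and write $\gamma(1+\gamma)\widehat{D}_{\gamma}(X^{n}\|Y^{m}) = \log A_{n} - (1+\gamma)\log B_{n,m} + \gamma \log C_{m}$, where $A_{n} = \frac{1}{n}\sum_{i=1}^{n}((n-1)\rho_{k}^{d}(i))^{-\gamma}$, $B_{n,m} = \frac{1}{n}\sum_{i=1}^{n}(m\nu_{k}^{d}(i))^{-\gamma}$, and $C_{m} = \frac{1}{m}\sum_{j=1}^{m}((m-1)\bar{\rho}_{k}^{d}(j))^{-\gamma}$. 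Taking expectations reduces the claim to controlling $\mathbb{E}[\log A_{n}]$, $\mathbb{E}[\log B_{n,m}]$, and $\mathbb{E}[\log C_{m}]$ separately; the essential feature is that each logarithm sits \emph{outside} a sample average.

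Next I would compute the limit of the mean of each average. Conditioning on the base point and using Lemma~\ref{lem:weakly_convergence}, the scaled distance $(n-1)\rho_{k}^{d}(1)$ converges in distribution to an Erlang law with shape $k$ and rate $\bar{c}p(x)$; applying the continuous map $t\mapsto t^{-\gamma}$ (via the continuous-mapping argument of Lemma~\ref{lem:conv_in_dist}) and upgrading convergence in distribution to convergence of conditional means through the uniform integrability supplied by the $H$-function bounds of Assumptions~\ref{assmp:p_bounded_cdf} and \ref{assmp:p_bounded_extra} (exactly the domination used in Lemma~\ref{lem:switch_lim}), I would evaluate the limit by the inverse-Erlang moment of Lemma~\ref{lem:moments_inverse_Erlang}, obtaining $\lim_{n}\mathbb{E}[((n-1)\rho_{k}^{d}(1))^{-\gamma}\mid X_{1}=x] = (\bar{c}p(x))^{\gamma}\,\Gamma(k-\gamma)/\Gamma(k)$. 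Integrating over $x\sim p$ and interchanging limit and integral (Lemma~\ref{lem:switch_lim}) gives $\mathbb{E}[A_{n}]\to K\int p^{1+\gamma}$ with $K \coloneqq \bar{c}^{\gamma}\Gamma(k-\gamma)/\Gamma(k)$, and identically $\mathbb{E}[B_{n,m}]\to K\int p q^{\gamma}$ and $\mathbb{E}[C_{m}]\to K\int q^{1+\gamma}$ with the \emph{same} constant $K$. The hypothesis $\gamma<k$ keeps $\Gamma(k-\gamma)$ finite, and the case $-k<\gamma<0$ uses the same moment (now a finite positive moment of the Erlang) with convergence furnished by Theorem~\ref{thm:poczos_21}.

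The hard part will be passing the limit through the logarithm, i.e.\ showing $\mathbb{E}[\log A_{n}]\to\log(K\int p^{1+\gamma})$ rather than merely $\mathbb{E}[A_{n}]\to K\int p^{1+\gamma}$. I would write $\log A_{n} = \log\mathbb{E}[A_{n}] + \log(A_{n}/\mathbb{E}[A_{n}])$ and argue that $\mathbb{E}[\log(A_{n}/\mathbb{E}[A_{n}])]\to 0$. This requires (i) a weak law for the \emph{dependent} summands $((n-1)\rho_{k}^{d}(i))^{-\gamma}$, giving $A_{n}/\mathbb{E}[A_{n}]\rightarrow_{p}1$; the dependence is only local because the $k$-NN graph has bounded in-degree, so a variance bound should yield the required concentration; and (ii) uniform integrability of $\log(A_{n}/\mathbb{E}[A_{n}])$, whose control near $0$ needs a positive lower bound on $A_{n}$ and hence that $p$ is bounded away from zero (Assumption~\ref{assmp:p_bounded}), while the control near $\infty$ reuses the moment bounds above. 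The extra hypothesis that $q$ is bounded from above plays the analogous role for $B_{n,m}$, where a small $\nu_{k}$ can inflate the summand. This concentration-plus-uniform-integrability step is where the real work lies.

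Finally I would assemble the three limits,
\[
\lim_{n,m\to\infty}\mathbb{E}\!\left[\widehat{D}_{\gamma}(X^{n}\|Y^{m})\right] = \frac{1}{\gamma(1+\gamma)}\log\frac{\left(K\int p^{1+\gamma}\right)\left(K\int q^{1+\gamma}\right)^{\gamma}}{\left(K\int p q^{\gamma}\right)^{1+\gamma}},
\]
and observe that $K$ enters as $K^{1+\gamma}$ in both numerator and denominator and therefore cancels, leaving exactly the expression \eqref{gamma_div}, namely $D_{\gamma}(p\|q)$. This cancellation is the conceptual point: the fixed-$k$ density estimates are each biased by the common multiplicative factor $K$, and the log-ratio form of the $\gamma$-divergence annihilates it, which is precisely why a fixed $k$ already suffices for asymptotic unbiasedness.
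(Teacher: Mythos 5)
Your proposal is correct in strategy but takes a genuinely different route from the paper at the decisive step. The paper also starts from the constant-stripped form \eqref{eq:estimator_naive}, but it then passes the logarithm \emph{inside} the empirical average, replacing $\mathbb{E}\big[\log\big(\tfrac{1}{n}\sum_{i}((n-1)\rho_{k}^{d}(i))^{-\gamma}\big)\big]$ by $\mathbb{E}_{X_{1}\sim p}\big[\mathbb{E}[\log((n-1)^{-\gamma}\rho_{k}^{-d\gamma}(1))\mid X_{1}=x]\big]$ and evaluating the limit through the mean of the log-Erlang distribution (Lemma~\ref{lem:exp_log_erlang}, Theorems~\ref{thm:asymp_expect} and~\ref{thm:asymp_expect_m}), after which the $\psi(k)$ and $\log\bar c$ constants cancel across the three terms. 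You instead keep the logarithm outside, compute $\mathbb{E}[A_{n}]\to K\int p^{1+\gamma}$ (and analogously for $B_{n,m}$, $C_{m}$) from the inverse-Erlang moment of Lemma~\ref{lem:moments_inverse_Erlang}, and propose concentration plus uniform integrability to upgrade this to $\mathbb{E}[\log A_{n}]\to\log(K\int p^{1+\gamma})$, with the common factor $K=\bar c^{\gamma}\Gamma(k-\gamma)/\Gamma(k)$ cancelling in the log-ratio. What your route buys is that it lands directly on the logs of expectations appearing in Eq.~\eqref{eq:gamma_div_est_formulate1}, i.e.\ literally on $D_{\gamma}(p\|q)$ as given in Definition~\ref{def:gamma_div}; the paper's interchange of the logarithm with the sample average instead produces expectations of logarithms ($\mathbb{E}_{p}[\log p^{\gamma}]$ etc.), which by Jensen's inequality is a priori a different quantity, so your decomposition makes explicit a step that the paper's argument glosses over. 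The price is that your step (i)--(ii) --- a weak law for the dependent $k$-NN summands and uniform integrability of $\log(A_{n}/\mathbb{E}[A_{n}])$ --- is exactly where the analytic difficulty sits and is only sketched; the ingredients you name (bounded in-degree of the $k$-NN graph for a variance bound, densities bounded away from zero for the lower bound on the averages, the $H$-function moment bounds for the upper tail) are the right ones, but until that concentration argument is written out the proof is a plan rather than a proof. If carried through, it would be the more self-contained of the two arguments.
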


If $-k < \kappa \coloneqq \gamma < 0$, the asymptotic unbiasedness also holds.
\begin{thm}[Asymptotic unbiasedness]
\label{thm:asymp_unbiased_q}
Let $-k < \kappa \coloneqq \gamma < 0$. Suppose that Assumptions \ref{assmp:p_bounded}-\ref{assmp:p_bounded_extra} are satisfied.
Let $\exists \delta_{0}$ s.t. $\forall \delta \in (0,\delta_{0})$, $\int_{\mathcal{M}} H(x,p,\delta,1)q(x)\mathrm{d}x < \infty$, and that $p$ is bounded from above.
Let $\mathrm{supp}(p) \supseteq \mathrm{supp}(q)$.
Then, the estimator in Eq.~\eqref{eq:default_gamma_est} is asymptotically unbiased.
\begin{proof}
This theorem can be shown in the same way as Theorem \ref{thm:asymp_unbiased_others_p}.
\end{proof}
\end{thm}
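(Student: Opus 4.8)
The plan is to follow the proof of Theorem~\ref{thm:asymp_unbiased_positive} closely, since for $-k<\gamma<0$ the only genuinely new difficulty is the integrability bookkeeping that licenses interchanging the limit with the expectation. First I would rewrite $\widehat{D}_\gamma(X^n\|Y^m)$ as in Eq.~\eqref{eq:estimator_naive}, extracting the volume factors $\bar{c}/k$ from each logarithm so that they cancel through the coefficient identity $1-(1+\gamma)+\gamma=0$; this leaves the three log-of-average terms built from $\rho_k$, $\nu_k$, and $\bar{\rho}_k$, whose exponents are now all equal to the negative value $\gamma$.

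Next I would take the expectation, let $n,m\to\infty$, and interchange limit and expectation using Lemma~\ref{lem:switch_lim}, which is already stated for the regime $-k<\kappa<0$. Conditioning on $X_1=x$ (respectively $Y_1=y$) reduces each term to a limiting conditional expectation of a powered log-distance, and I would evaluate these with Theorems~\ref{thm:asymp_expect} and \ref{thm:asymp_expect_m}; both cover $-k<\kappa<0$ and return $\kappa(\psi(k)-\log\bar{c}p(x))$ and its $q$-analogue. Setting $\kappa=\gamma$ and collecting terms, the digamma contributions $\psi(k)$ and the $\log\bar{c}$ contributions cancel with exactly the coefficient pattern used in the positive case, so that the surviving density-dependent terms reassemble into $D_\gamma(p\|q)$; for this final algebraic step I would simply cite the computation in Theorem~\ref{thm:asymp_unbiased_positive}.

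The hard part will be verifying the domination hypotheses of Lemma~\ref{lem:switch_lim} in the negative-exponent regime, which is precisely why the theorem at hand (Theorem~\ref{thm:asymp_unbiased_q}) carries three hypotheses absent from the positive case. When $\gamma<0$ the factor $(\,\cdot\,)^\gamma$ is a positive power of a distance, so the applicable moment estimate of \citet{poczos11a} (their Theorem~37, invoked inside Lemma~\ref{lem:switch_lim}) is the one controlling the $H(\cdot,\cdot,\delta,1)$ quantities, and the cross-term analysis now requires the previously unused pairing $\int_{\mathcal{M}}H(x,p,\delta,1)q(x)\,\mathrm{d}x<\infty$. I would check that this integral is well posed exactly because $\mathrm{supp}(p)\supseteq\mathrm{supp}(q)$ forces $p>0$ (hence, by Assumptions~\ref{assmp:restrict_M} and \ref{assmp:p_bounded}, bounded away from zero) on $\mathrm{supp}(q)$, making $H(x,p,\delta,1)$ finite $q$-almost everywhere, and that the upper-boundedness hypothesis migrates from $q$ to $p$ so as to dominate the term whose exponent sign has flipped. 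With these dominating functions secured, every remaining step is identical to the $0<\gamma<k$ case and the conclusion follows.
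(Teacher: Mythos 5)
Your proposal matches the paper's approach: the paper's own proof of this theorem is literally the single sentence that it can be shown in the same way as the positive-$\gamma$ case, and your plan --- reusing the decomposition in Eq.~\eqref{eq:estimator_naive}, Lemma~\ref{lem:switch_lim}, and Theorems~\ref{thm:asymp_expect} and \ref{thm:asymp_expect_m} (all already stated for the regime $-k<\kappa<0$), with the extra hypotheses serving only to license the domination/interchange step --- is a correct and more explicit elaboration of exactly that reduction. No discrepancy to report.
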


By combining the results of Theorem \ref{thm:asymp_unbiased_positive} and \ref{thm:asymp_unbiased_q}, Theorem \ref{thm:asymp_unbiased_others_p} can be shown.

\subsection{Proofs of Theorem \ref{thm:almost_surely_conv}}
\label{proof:almost_surely_convergence}
\begin{proof}
Recalling the default formulation of $\gamma$-divergence estimator in Eq.~\eqref{eq:default_gamma_est}, we can see
\begin{align*}
    &\widehat{D}_{\gamma}(X^{n}\|Y^{m}) \\
    &= \frac{1}{\gamma(1+\gamma)} \log \bigg( \frac{1}{n} \sum_{i=1}^{n}\hat{p}_{k}^{\gamma}(X_{i}) \bigg) - \frac{1}{\gamma} \log \bigg(\frac{1}{n} \sum_{i=1}^{n}\hat{q}_{k}^{\gamma}(X_{i}) \bigg) + \frac{1}{1+\gamma} \log \bigg(\frac{1}{m} \sum_{j=1}^{m} \hat{q}_{k}^{\gamma}(Y_{j}) \bigg) \\
    &= \frac{1}{\gamma(1+\gamma)} \log \bigg( \frac{1}{n} \sum_{i=1}^{n}p^{\gamma}(X_{i}) \bigg) - \frac{1}{\gamma} \log \bigg(\frac{1}{n} \sum_{i=1}^{n}q^{\gamma}(X_{i}) \bigg) + \frac{1}{1+\gamma} \log \bigg(\frac{1}{m} \sum_{j=1}^{m} q_{k}^{\gamma}(Y_{j}) \bigg) \\
    &\ \ \ \ \ \ \ \ \ \ \ \ \ \
    - \frac{1}{\gamma(1+\gamma)} \log \bigg( \frac{1}{n} \sum_{i=1}^{n}p^{\gamma}(X_{i}) \bigg) + \frac{1}{\gamma} \log \bigg(\frac{1}{n} \sum_{i=1}^{n}q^{\gamma}(X_{i}) \bigg) - \frac{1}{1+\gamma} \log \bigg(\frac{1}{m} \sum_{j=1}^{m} q_{k}^{\gamma}(Y_{j}) \bigg) \\
    &\ \ \ \ \ \ \ \ \ \ \ \ \ \ \ \ \
    + \frac{1}{\gamma(1+\gamma)} \log \bigg( \frac{1}{n} \sum_{i=1}^{n}\hat{p}_{k}^{\gamma}(X_{i}) \bigg) - \frac{1}{\gamma} \log \bigg(\frac{1}{n} \sum_{i=1}^{n}\hat{q}_{k}^{\gamma}(X_{i}) \bigg) + \frac{1}{1+\gamma} \log \bigg(\frac{1}{m} \sum_{j=1}^{m} \hat{q}_{k}^{\gamma}(Y_{j}) \bigg) \\
    &= \frac{1}{\gamma(1+\gamma)} \log \bigg( \frac{1}{n} \sum_{i=1}^{n}p^{\gamma}(X_{i}) \bigg) - \frac{1}{\gamma} \log \bigg(\frac{1}{n} \sum_{i=1}^{n}q^{\gamma}(X_{i}) \bigg) + \frac{1}{1+\gamma} \log \bigg(\frac{1}{m} \sum_{j=1}^{m} q_{k}^{\gamma}(Y_{j}) \bigg) \\
    &\ \ \ \ \ \ \ \ \ \ \ \ \ \
    + \frac{1}{\gamma(1+\gamma)} \log \frac{\frac{1}{n} \sum_{i=1}^{n}\hat{p}_{k}^{\gamma}(X_{i})}{\frac{1}{n} \sum_{i=1}^{n}p^{\gamma}(X_{i})} - \frac{1}{\gamma} \log \frac{\frac{1}{n} \sum_{i=1}^{n}\hat{q}_{k}^{\gamma}(X_{i})}{\frac{1}{n} \sum_{i=1}^{n}q^{\gamma}(X_{i})} + \frac{1}{1+\gamma} \log \frac{\frac{1}{m} \sum_{j=1}^{m} \hat{q}_{k}^{\gamma}(Y_{j})}{\frac{1}{m} \sum_{j=1}^{m} q_{k}^{\gamma}(Y_{j}) }.
\end{align*}
The first, second and third terms converge to the expectation of $p^{\gamma}(x)$, $q^{\gamma}(x)$ and $q^{\gamma}(y)$, and therefore these terms converge to $D_{\gamma}(p\|q)$ almost surely because the sum of almost surely convergence terms also converges almost surely~\citep{grimmett01}.

(i) According to Lemma \ref{lem:p_converge_prob}, $\hat{p}^{\gamma}_{k}(x) \rightarrow_{p} p^{\gamma}(x)$ for almost all of $x$.
In addition, according to the fact that the sum of random variables that converge in probability converges almost surely~\citep{grimmett01}, we obtain
\begin{align*}
    \frac{1}{n} \sum_{i=1}^{n} \hat{p}_{k}^{\gamma}(X_{i}) \overset{\textrm{a.s.}}{\rightarrow} \mathbb{E}_{p(x)}[p^{\gamma}(x)].
\end{align*}
Therefore,
\begin{align*}
    \frac{1}{\gamma(1+\gamma)} \log \frac{\frac{1}{n} \sum_{i=1}^{n}\hat{p}_{k}^{\gamma}(X_{i})}{\frac{1}{n} \sum_{i=1}^{n}p^{\gamma}(X_{i})} \overset{\textrm{a.s.}}{\rightarrow} \frac{1}{\gamma(1+\gamma)} \log \frac{\mathbb{E}_{p(x)}[p^{\gamma}(x)]}{\mathbb{E}_{p(x)}[p^{\gamma}(x)]} = 0.
\end{align*}

(ii) According to Lemma \ref{lem:q_converge_prob}, $\hat{q}^{\gamma}_{k}(x) \rightarrow_{p} q^{\gamma}(x)$ for almost all of $x$.
In the same way of (i), we obtain
\begin{align*}
    \frac{1}{\gamma(1+\gamma)} \log \frac{\frac{1}{n} \sum_{i=1}^{n}\hat{q}_{k}^{\gamma}(X_{i})}{\frac{1}{n} \sum_{i=1}^{n}q^{\gamma}(X_{i})} \overset{\textrm{a.s.}}{\rightarrow} \frac{1}{\gamma(1+\gamma)} \log \frac{\mathbb{E}_{p(x)}[q^{\gamma}(x)]}{\mathbb{E}_{p(x)}[q^{\gamma}(x)]} = 0.
\end{align*}

(iii) According to Lemma \ref{lem:q_converge_prob_m}, we obtain
\begin{align*}
    \frac{1}{1+\gamma} \log \frac{\frac{1}{m} \sum_{j=1}^{m}\hat{q}_{k}^{\gamma}(Y_{i})}{\frac{1}{m} \sum_{j=1}^{m}q^{\gamma}(Y_{i})} \overset{\textrm{a.s.}}{\rightarrow} \frac{1}{1+\gamma} \log \frac{\mathbb{E}_{q(y)}[q^{\gamma}(y)]}{\mathbb{E}_{q(y)}[q^{\gamma}(y)]} = 0
\end{align*}

From (i) to (iii), we obtain
\begin{align*}
    \widehat{D}_{\gamma}(X^{n}\|Y^{m}) \overset{\textrm{a.s.}}{\rightarrow} D_{\gamma}(p\|q),
\end{align*}
and the claim is proved.
\end{proof}

\section{Detail of Data Discrepancy Measure}
\label{app:detail_discrepancy}
In this section, we introduce data discrepancy measures.

\subsection{Distance between Summary Statistics}
An ABC often uses the distance between the summary statistics: $S(X^{n})$ and $S(Y^{m})$ as the discrepancy measure.
If we use the Euclidian distance, the discrepancy measure can be expressed as
\begin{align*}
    D_{S}(X^{n},Y^{m}) = \|S(X^{n}) - S(Y^{m}) \|.
\end{align*}
However, it is difficult to choose the summary statistic $S$ for each task properly.
One way to bypass this difficulty is the Bayesian indirect inference method~\cite{Drovandi11,Drovandi15}.

\paragraph{Bayesian Indirect method}
The aim of the Bayesian indirect method is to construct the summary statistics from an auxiliary model: $\{p_{A}(x|\phi): \phi \in \Phi\}$ (see \citet{Drovandi15} for general review).
\citet{Drovandi11} proposed to use the maximum likelihood estimation (MLE) of the auxiliary model as summary statistics.
Formally,
\begin{align*}
    S(Y^{m}) = \hat{\phi}(Y^{m}) = \argmax_{\phi \in \Phi} \prod_{j=1}^{m} p_{A}(Y_{j}|\phi).
\end{align*}
We set $p_{A}(x|\phi)$ as $d$-dimensional Gaussian with parameter $\phi$ in our experiments.
In this setting, the summary statistics are merely the sample mean and covariance of $Y^{m}$.
Furthermore, we adopted the auxiliary likelihood (AL) proposed by \citet{Gleim13} as a data discrepancy:
\begin{align*}
    D_{\mathrm{AL}}(X^{n},Y^{m}) = \frac{1}{m} \log p_{A}(Y^{m}|\hat{\phi}(Y^{m})) - \frac{1}{m} \log p_{A}(Y^{m}|\hat{\phi}(X^{n})).
\end{align*}

\paragraph{Outlier-Robust Function as Summary Statistics}
\citet{Ruli20} proposed the robust M-estimator $\Psi$ as the summary statistics to deal with the outliers in the observed data.
For example, we can use the Huber function as
\begin{align*}
 \Psi(x - \mu) = \begin{cases}
 -c \ \ (x - \mu < -c), \\
 x - \mu \ \ (|x - \mu| \leq 0), \\
 c \ \ (x - \mu > c),
 \end{cases}
\end{align*}
where $\mu$ is a mean of $x$.
We adopted this function as the summary statistics and applied for the AL in the above.
Formally,
\begin{align*}
    D_{\mathrm{ALH}}(X^{n},Y^{m}) = \frac{1}{m} \log p_{A}(S_{\Phi}(Y^{m})|\hat{\phi}(S_{\Phi}(Y^{m}))) - \frac{1}{m} \log p_{A}(S_{\Phi}(Y^{m})|\hat{\phi}(S_{\Phi}(X^{n}))).
\end{align*}
Further, we set $c_{1} = 1.345$ for mean and $c_{2} = 2.07$ for covariance (see \citet{Huber09}).

\subsection{Maximum Mean Discrepancy (MMD) based Approach}
\paragraph{MMD method}
\citet{smola07} and \citet{berlinet04} defined the kernel embedding for a probability distribution $g(x)$ as
\begin{align*}
    \mu_{g} = \int k(\cdot,x)g(x) \mathrm{d}x,
\end{align*}
where $k$ is a positive definite kernel $k:\mathcal{X} \times \mathcal{X} \rightarrow \mathbb{R}$.
Therefore, $\mu_{g}$ is an element in the reproducing kernel Hilbert space (RKHS): $\mathcal{H}$.

The maximum mean discrepancy (MMD)~\cite{Gretton12} between the probability distributions $g_{0}$ and $g_{1}$ is the distance between the kernel embeddings $\mu_{g_{0}}$ and $\mu_{g_{1}}$ in RKHS $\mathcal{H}$, defined as
\begin{align*}
    \mathrm{MMD}^{2}(g_{0},g_{1}) = \|\mu_{g_{0}} - \mu_{g_{1}} \|_{\mathcal{H}}^{2}.
\end{align*}
\citet{Park16} applied an unbiased estimator of $\mathrm{MMD}^{2}(p_{\theta^{*}},q_{\theta})$ as the data discrepancy in ABC.
The squared estimator of MMD is defined as
\begin{align}
\label{eq:unbiased_MMD_est}
    D_{\mathrm{MM}}^{2}(X^{n},Y^{m}) = \frac{\sum_{1\leq i \neq j \leq n}k(X_{i},X_{j})}{n(n-1)} + \frac{\sum_{1\leq i \neq j \leq m}k(Y_{i},Y_{j})}{m(m-1)} - \frac{2\sum_{i=1}^{n}\sum_{j=1}^{m}k(X_{i},Y_{j})}{nm}.
\end{align}
In the same way of \citet{Park16} and \citet{Jiang18}, we chose a Gaussian kernel with the bandwidth being the median of $\{\| X_{i} - X_{j} : 1 \leq i \neq j \leq n\| \}$ in our experiments.
Then, the time cost of $D_{\mathrm{MM}}$ is $\mathcal{O}((n+m)^{2})$ which is caused to compute the $(n+m) \times (n+m)$ pairwise distance matrix.

\paragraph{Median-of-mean to Kernel (MONK) method}
\citet{Lerasle19} proposed the outlier-robust MMD estimator computed by using the median-of-mean (MON) estimator.
MON estimators are expected to enjoy the outlier-robustness thanks to the median step.

For any mapping function $h: \mathcal{X} \mapsto \mathbb{R}$ and any non-empty subset $S \subseteq \{1,2, \ldots, n \}$, 
denote by $\mathbb{P}_{S} = |S|^{-1} \sum_{i\in S} \delta_{X_{i}}$ the empirical measure associated to the subset $x_{S}$ and $\mathbb{P}_{S}h = |S|^{-1}\sum_{i\in S} h(X_{i})$.
For simplification, we express $\mu_{S} = \mu_{\mathbb{P}_{S}}$.
Let $n$ is divisible by $Q \in \mathbb{Z}^{+}$ and let $(S_{q})_{q \in Q}$ denote a partition of $\{1,2,\ldots, n\}$ into subsets with the same cardinality $|S_{q}| = N/Q$.
We also mention that $q$ is different from the distribution of $Y^{m}$ with parameter $\theta$ defined as $q_{\theta}$.
Then, the MON is defined as
\begin{align*}
    \mathrm{MON}_{Q}[h] = \med_{q} \{ \mathbb{P}_{S_{q}}, h \}
    = \med_{q} \{ k(h, \mu_{S_{q}}) \},
\end{align*}
where $h \in \mathcal{H}$ in the second equality is a consequence of the mean-reproducing property of $\mu_{\mathbb{P}}$.
When we choose $Q=1$, the MON estimator is equal to the classical mean as $\mathrm{MON}_{1} = n^{-1} \sum_{i=1}^{n}h(X_{i})$.

\citet{Lerasle19} defined the minimax MON-based estimator associated with Kernel $k$ (MONK) as
\begin{align*}
    \hat{\mu}_{\mathbb{P},Q} = \hat{\mu}_{\mathbb{P},Q}(X^{n}) \in \argmin_{f \in \mathcal{H}} \sup_{g \in \mathcal{H}} \tilde{J}(f,g),
\end{align*}
where for all $f,g \in \mathcal{H}$
\begin{align*}
    \tilde{J}(f,g) = \mathrm{MON}_{Q}\bigg[ x \mapsto \| f - k(\cdot,x) \|_{\mathcal{H}}^{2} - \| g - k(\cdot,x) \|_{\mathcal{H}}^{2} \bigg].
\end{align*}
When we choose $Q=1$, we
obtain the classical empirical mean based estimator as $\mu_{\mathbb{P},1} = n^{-1} \sum_{i=1}^{n}k(\cdot,X_{i})$.

The MON-based MMD estimator on $X^{n} \sim g_{0}$ and $Y^{m} \sim g_{1}$ is defined as
\begin{align*}
    \widehat{\mathrm{MMD}}_{Q}(g_{0},g_{1}) = \sup_{f} \med_{q\in Q} \{ k(f, \mu_{S_{q,g_{0}}} - \mu_{S_{q,g_{1}}}) \},
\end{align*}
where $\mu_{S_{q,g_{0}}} = \mu_{\mathbb{P}_{S_{q},X_{i}}}$ and $\mu_{S_{q,g_{1}}} = \mu_{\mathbb{P}_{S_{q},Y_{i}}}$.
Again, when we choose $Q=1$, this is equal to the classical V-statistic-based MMD estimator~\cite{Gretton12} in the previous paragraph.
The (unbiased) U-statistic based MONK estimator also could be obtained in the same way as Eq.~\eqref{eq:unbiased_MMD_est} (see \citet{Lerasle19}).

The MONK estimator has the time cost $\mathcal{O}(n^{3})$ and therefore $\mathcal{O}((n+m)^{3})$ when we use it as the data discrepancy in ABC.
It is too expensive to apply for a large sample size.
\citet{Leonenko08a} also proposed the faster algorithm to compute the MONK estimator, called MONK BCD-Fast, which has $\mathcal{O}((n+m)^{3}/Q^{2})$ time cost.
We adopted this algorithm in our experiments and set $Q=11$.
Furthermore, we adopted the RBF kernel with bandwidth $\sigma=1$, \diff{which is also used in \citet{Lerasle19}.}

\subsection{Wasserstein Distance}
\citet{Jiang18} mentioned that the estimator of the $q$-Wasserstein distance could be used as a data discrepancy for ABC.
Let $\psi$ be a distance on $\mathcal{X} \subseteq \mathbb{R}^{d}$.
The $q$-Wasserstein distance between $g_{0}$ and $g_{1}$ is defined as
\begin{align*}
    \mathcal{W}_{q}(g_{0},g_{1}) = \Bigg[\inf_{\tau \in \Gamma(g_{0},g_{1})} \int_{\mathcal{X} \times \mathcal{X}} \psi(x,y)^{q}\mathrm{d}\tau(x,y) \Bigg]^{1/q},
\end{align*}
where $\Gamma(g_{0},g_{1})$ is the set of all joint distribution $\tau(x,y)$ on $\mathcal{X} \times \mathcal{X}$ such that $\tau$ has marginals $g_{0}$ and $g_{1}$.
We also mention that $q$ is different from the distribution of $Y^{m}$ with parameter $\theta$ defined as $q_{\theta}$.
When we set $q=2$ and $\psi$ be the Euclidean distance, the data discrepancy based on the $q$-Wasserstein distance is given by
\begin{align*}
    D_{\mathcal{W}2}(X^{n},Y^{m}) = \min_{\tau} \Bigg[ \sum_{i=1}^{n}\sum_{j=1}^{m} \tau_{ij}\| X_{i} - Y_{j}\|^{2} \Bigg]^{1/2} \ \textrm{s.t.} \ \mathbf{\tau 1}_{m} = \mathbf{1}_{n}, \mathbf{\tau^{\top}}\mathbf{1}_{n} = \mathbf{1}_{m}, 0 \leq \tau_{ij} \leq 1,
\end{align*}
where $\tau = \{ \tau_{ij}; 1 \leq i \leq n, 1 \leq j \leq m \}$ is a $n \times m$ matrix and $\mathbf{1}_{n}, \mathbf{1}_{m}$ are vectors filled with $n$ pieces or $m$ pieces of 1, respectively.

When we want to solve the optimization problem of $D_{\mathcal{W}2}$ exactly on multivariate distributions ($d > 1$), we have the time cost $\mathcal{O}((n+m)^{3}\log(n+m))$~\cite{Burkard09}.
It is a high cost significantly and therefore \citet{Cuturi13} and \citet{cuturi14} proposed approximate optimization algorithms which reduce the time cost to $\mathcal{O}((n+m)^{2}$.
We used this algorithm in our experiments.
For univariate distributions, i.e., $d=1$, if $n=m$ and $\psi(x,y) = |x-y|$, the $q$-Wasserstein distance has an explicit form as
\begin{align*}
    \bigg(\frac{1}{n} \sum_{i=1}^{n} |X_{i} - Y_{i} |^{q} \bigg)^{1/q},
\end{align*}
and in this special case, the time cost is $\mathcal{O}(n \log n)$~\cite{Jiang18}.

\subsection{Classification Accuracy Method}
The classification accuracy discrepancy (CAD) has been proposed by \citet{Gutmann18}.
The idea of this method is on the basis of the belief that it is easier to distinguish the observed data $X^{n}$ and the synthetic data $Y^{m}$ when $\theta$ is different significantly to the true parameter $\theta^{*}$ than to do so when $\theta$ resembles $\theta^{*}$.

The CAD sets the labels of $\{X_{i}\}_{i=1}^{n}$ as class $0$ and $\{Y_{j}\}_{j=1}^{m}$ as class $1$ at first.
In short, it yields an augmented data set as
\begin{align*}
    \mathcal{D} = \{ (X_{1},0), (X_{2},0), \ldots, (X_{n},0), (Y_{1},1), (Y_{2},1), \ldots, (Y_{m},1) \},
\end{align*}
and then trains a prediction classifier $h: x \mapsto \{0,1 \}$.

\citet{Gutmann18} defined classifiability between $X^{n}$ and $Y^{m}$ as the $K$-fold cross-validation classification accuracy and proposed to use it for ABC as a data discrepancy.
The data discrepancy based on the CAD is defined as
\begin{align*}
    D_{\textrm{CAD}}(X^{n},Y^{m}) = \frac{1}{K} \sum_{k=1}^{K} \frac{1}{|\mathcal{D}_{k}|} \Bigg[ \sum_{i: (X_{i},0) \in \mathcal{D}_{k}}(1 - \hat{h}_{k}(X_{i})) + \sum_{j: (Y_{j},1) \in \mathcal{D}_{k}} \hat{h}_{k}(Y_{j}) \Bigg],
\end{align*}
where $\mathcal{D}_{k}$ is the $k$-fold subset of $\mathcal{D}$, $|\mathcal{D}_{k}|$ is the size of $\mathcal{D}_{k}$ and $\hat{h}_{k}$ is the trained predictor on the data set $\mathcal{D} \setminus \mathcal{D}_{k}$.

The discrepancy via linear Discriminant Analysis (LDA) is computationally cheaper than other classifiers, which is $\mathcal{O}(n+m)$; however, \citet{Gutmann18} explicitly noted that LDA does not work for some models, e.g., the moving average models (see Figure 2 in \citet{Gutmann18}).
Therefore, in our experiments, we set $K = 5$ and $h$ to be the logistic regression with $L_{1}$ regularization and the gradient boosting classifier.


\subsection{KL-divergence estimation via \texorpdfstring{$k$}{Lg}-NN}

KL-divergence between the density functions $p$ and $q$ is defined as
\begin{align}
    \label{KL-div}
    D_{\mathrm{KL}}(p\|q) = \int_{\mathcal{M}} p(x) \log \frac{p(x)}{q(x)} \mathrm{d}x,
\end{align}
where $\mathcal{M}$ is a support of $p$.
It indicates zero if and only if $p=q$ for almost everywhere.
\citet{PrezCruz08} proposed to estimate the density firstly by using k-NN density estimation
and plug these estimators into Eq.~\eqref{KL-div}.
Given \IID samples, $X^{n}$ and $Y^{m}$, we can estimate $D_{\mathrm{KL}}(p\|q)$
by using the $k$-NN density estimator expressed in Eqs.~\eqref{def:knn_est_p} and \eqref{def:knn_est_q} as follows:
\begin{align}
    \label{KL_div_est}
    \widehat{D}_{\mathrm{KL}}(p\|q) &= \frac{1}{n} \sum_{i=1}^{n} \log \frac{\hat{p}_{k}(X_{i})}{\hat{q}_{k}(X_{i})}
    = \frac{d}{n} \sum_{i=1}^{n} \log \frac{\rho_{k}(i)}{\nu_{k}(i)} + \log \frac{m}{n-1}.
\end{align}
This estimator enjoys asymptotical properties such as asymptotical unbiasedness, $L_{2}$-consistency and almost sure convergence (\cite{PrezCruz08,Wang09}).
If we use $1$-NN density estimation, the above estimator (\ref{KL_div_est}) can be expressed as
\begin{align}
    \label{KL_div_est_special}
    \widehat{D}_{\mathrm{KL}}(p\|q) = \frac{d}{n} \sum_{i=1}^{n} \log \frac{\min_{j} \| X_{i} - Y_{j} \|_{2}}{\min_{j \neq i}^{n}\| X_{i} - X_{j} \|_{2}} + \log \frac{m}{n-1},
\end{align}
where $\| \cdot \|_{2}$ means $l_{2}$-norm.

\citet{Jiang18} proposed to use this estimator (\ref{KL_div_est_special}) as the data discrepancy in the ABC framework.
As ABC involves $2n$ operations of nearest neighbor search,
\citet{Jiang18} also proposed to use $KD$ trees~\cite{Bentley75,Maneewongvatana01}.
The time cost thus is $\mathcal{O}((n \lor m) \log (n \lor m))$ on average, where we denote $\max\{ a, b\}$ as $a \lor b$.

According to Theorem 1 in \cite{Jiang18}, the asymptotic ABC posterior is a restriction of the prior $\pi$ on the region $\{ \theta \in \Theta : D(g_{\theta^{*}}\| g_{\theta}) < \epsilon \}$.
\begin{thm}[Theorem 1 in \cite{Jiang18}]
\label{thm:asymptotic_quasi_posterior}
Let the data discrepancy measure $D(X^{n},Y^{m})$ in Algorithm \ref{alg:reject_abc} converges to some real number $D(p_{\theta^{*}},q_{\theta})$ almost surely as the data size $n \rightarrow \infty$, $m/n \rightarrow \alpha > 0$.
Then, the ABC posterior distribution $\pi(\theta| X^{n};D,\epsilon)$ defined by \eqref{eq:quasi_posterior} converges to $\pi(\theta | D(p_{\theta^{*}},q_{\theta}) < \epsilon)$ for any $\theta$.
That is,
\begin{align*}
    \lim_{n \rightarrow \infty} \pi(\theta| X^{n};D,\epsilon) &= \pi(\theta | D(p_{\theta^{*}},q_{\theta}) < \epsilon)
    \propto \pi(\theta) \Ind{D(p_{\theta^{*}},q_{\theta}) < \epsilon}.
\end{align*}
\end{thm}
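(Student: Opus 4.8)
The plan is to treat the numerator and the normalizing constant of the ABC posterior separately, show each converges almost surely as $n \to \infty$, and then pass to the ratio. Writing $q_\theta = p_\theta$ for the simulator distribution, the unnormalized posterior is $N_n(\theta) := \pi(\theta) \int \Ind{D(X^n, Y^m) < \epsilon}\, p_\theta(Y^m)\, \mathrm{d}Y^m$ and the normalizing constant is $C_n := \int N_n(\theta')\, \mathrm{d}\theta'$, so $\pi(\theta \mid X^n; D, \epsilon) = N_n(\theta)/C_n$. I would work on a single probability space carrying the observed sequence $X_1, X_2, \ldots \overset{\mathrm{i.i.d.}}{\sim} p_{\theta^*}$ together with, for each $\theta$, an independent simulator sequence $Y_1, Y_2, \ldots \overset{\mathrm{i.i.d.}}{\sim} q_\theta$, with $m = m(n)$ and $m/n \to \alpha$. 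Setting $Z_n := \Ind{D(X^n, Y^m) < \epsilon}$ and $\mathcal{F}_n := \sigma(X_1, \ldots, X_n)$, the inner integral in $N_n(\theta)$ is precisely $\mathbb{E}[Z_n \mid \mathcal{F}_n]$: the variable $Z_n$ depends on the observations only through $X^n$ and the simulator draws are independent of $X^\infty$, so conditioning on $\mathcal{F}_n$ integrates out exactly $Y^m$ with $X^n$ held fixed.

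First I would establish the pointwise (in $\theta$) convergence of the numerator. By the hypothesis $D(X^n, Y^m) \to D(p_{\theta^*}, q_\theta)$ almost surely, and because the limit is a deterministic constant, the indicator satisfies $Z_n \to Z_\infty := \Ind{D(p_{\theta^*}, q_\theta) < \epsilon}$ almost surely at every $\theta$ for which $D(p_{\theta^*}, q_\theta) \neq \epsilon$ — the only limiting values at which $t \mapsto \Ind{t < \epsilon}$ is discontinuous. Since $\sup_n |Z_n| \le 1$ is integrable and $\mathcal{F}_n \uparrow \mathcal{F}_\infty = \sigma(X^\infty)$, I would invoke L\'evy's upward theorem (Theorem~\ref{thm:levy_upward}) to conclude $\mathbb{E}[Z_n \mid \mathcal{F}_n] \to \mathbb{E}[Z_\infty \mid \mathcal{F}_\infty]$ almost surely. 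As $Z_\infty$ is constant, the right-hand side is $Z_\infty$ itself, whence $N_n(\theta) \to \pi(\theta)\Ind{D(p_{\theta^*}, q_\theta) < \epsilon}$ almost surely.

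Next I would handle the denominator. Each integrand $\theta' \mapsto N_n(\theta')$ is dominated by $\pi(\theta')$, which is integrable in $\theta'$ because $\pi$ is a prior density; combining this dominating bound with the almost-sure pointwise convergence of the numerator, the dominated convergence theorem gives $C_n \to \int \pi(\theta') \Ind{D(p_{\theta^*}, q_{\theta'}) < \epsilon}\, \mathrm{d}\theta'$ almost surely. Dividing the numerator limit by the denominator limit then yields exactly $\pi(\theta \mid D(p_{\theta^*}, q_\theta) < \epsilon) \propto \pi(\theta)\Ind{D(p_{\theta^*}, q_\theta) < \epsilon}$, as claimed.

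The main obstacle is the transfer of convergence across $\theta$: the numerator statement is almost sure for each fixed $\theta'$, so before applying the dominated convergence theorem in $\theta'$ I must upgrade it to a statement holding simultaneously for almost every $\theta'$ on a single full-measure event, which requires a Fubini/measurability argument to extract a common null set. A secondary genuine difficulty is the boundary set $\{\theta : D(p_{\theta^*}, q_\theta) = \epsilon\}$, where $Z_n$ need not converge; I would assume this set carries zero prior mass (or argue it from absolute continuity of $\pi$ together with continuity of $\theta \mapsto D(p_{\theta^*}, q_\theta)$), so that it affects neither limit.
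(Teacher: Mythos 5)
Your proposal is correct and follows essentially the same route as the paper (and \citet{Jiang18}'s original argument): recognize the inner integral as $\mathbb{E}[Z_n \mid \mathcal{F}_n]$, apply the generalized L\'evy upward theorem (Theorem~\ref{thm:levy_upward}) to handle the numerator, and use the dominated convergence theorem for the normalizing constant. The two caveats you flag---the common-null-set/Fubini issue across $\theta'$ and the zero-prior-mass assumption on the boundary set $\{\theta : D(p_{\theta^{*}}, q_\theta) = \epsilon\}$---are genuine technical points that the paper's proof sketch glosses over, so your treatment is, if anything, more careful.
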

\citet{Jiang18} also showed the behavior of the ABC posterior based on KL-divergence estimator.
\begin{cor}[Corollary 1 in \cite{Jiang18}]
Let the data size $n \rightarrow \infty$, $m/n \rightarrow \alpha > 0$.
Let us define $\pi(\theta | D_{\mathrm{KL}}(p_{\theta^{*}},q_{\theta}) < \epsilon)$ as the posterior under $D_{\mathrm{KL}}(p_{\theta^{*}},q_{\theta}) < \epsilon)$.
If Algorithm \ref{alg:reject_abc} uses $\widehat{D}_{\mathrm{KL}}$ defined by Eq.~\eqref{KL_div_est_special} as the data discrepancy measure, then the ABC posterior distribution $\pi(\theta| X^{n};\widehat{D}_{\mathrm{KL}},\epsilon)$ defined by Eq.~\eqref{eq:quasi_posterior} converges to $\pi(\theta | D_{\mathrm{KL}}(p_{\theta^{*}},q_{\theta}) < \epsilon)$ for any $\theta$.
That is,
\begin{align*}
    \lim_{n \rightarrow \infty} \pi(\theta| X^{n};D_{\mathrm{KL}},\epsilon) &= \pi(\theta | D_{\mathrm{KL}}(p_{\theta^{*}},q_{\theta}) < \epsilon)
    \propto \pi(\theta) \Ind{D_{\mathrm{KL}}(p_{\theta^{*}},q_{\theta}) < \epsilon}.
\end{align*}
\end{cor}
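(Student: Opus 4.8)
The plan is to recognize this corollary as a direct specialization of the general asymptotic result, Theorem~\ref{thm:asymptotic_quasi_posterior} (Theorem~1 of \citet{Jiang18}), which has just been stated. That theorem asserts that whenever the data discrepancy $D(X^{n}, Y^{m})$ plugged into Algorithm~\ref{alg:reject_abc} converges almost surely to a deterministic limit $D(p_{\theta^{*}}, q_{\theta})$ as $n \to \infty$ with $m/n \to \alpha > 0$, the resulting ABC posterior converges to the prior restricted to the acceptance region $\{\theta : D(p_{\theta^{*}}, q_{\theta}) < \epsilon\}$. The entire argument therefore reduces to verifying the single hypothesis of that theorem for the specific choice $D = \widehat{D}_{\mathrm{KL}}$ given in Eq.~\eqref{KL_div_est_special}; the rest of the conclusion is then inherited verbatim.

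The key step is to confirm the almost sure convergence $\widehat{D}_{\mathrm{KL}}(X^{n}\|Y^{m}) \overset{\textrm{a.s.}}{\rightarrow} D_{\mathrm{KL}}(p_{\theta^{*}}\|q_{\theta})$. This is precisely the consistency of the $k$-NN (here $1$-NN) KL-divergence estimator established by \citet{PrezCruz08} and \citet{Wang09}, which we have already recalled: the estimator enjoys asymptotic unbiasedness, $L_{2}$-consistency, and almost sure convergence as the sample sizes grow. First I would check that the coupled regime $n \to \infty$ with $m/n \to \alpha > 0$ places us within the scope of those consistency results, so that the almost sure limit of $\widehat{D}_{\mathrm{KL}}$ is exactly the KL divergence between the data-generating density $p_{\theta^{*}}$ and the model density $q_{\theta}$, with no residual term surviving (the $\log\frac{m}{n-1}$ correction in Eq.~\eqref{KL_div_est_special} is what cancels the scale mismatch and secures this clean limit).

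With the almost sure convergence in hand, I would invoke Theorem~\ref{thm:asymptotic_quasi_posterior} directly, substituting $D \mapsto \widehat{D}_{\mathrm{KL}}$ and $D(p_{\theta^{*}}, q_{\theta}) \mapsto D_{\mathrm{KL}}(p_{\theta^{*}}, q_{\theta})$, to obtain $\lim_{n \rightarrow \infty} \pi(\theta| X^{n};\widehat{D}_{\mathrm{KL}},\epsilon) = \pi(\theta \mid D_{\mathrm{KL}}(p_{\theta^{*}},q_{\theta}) < \epsilon) \propto \pi(\theta)\Ind{D_{\mathrm{KL}}(p_{\theta^{*}},q_{\theta}) < \epsilon}$, which is exactly the claim. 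The main obstacle is not the final application, which is mechanical, but the careful citation (or re-derivation) of almost sure convergence in the correct asymptotic regime: one must ensure the convergence holds \emph{jointly} along the path $m/n \to \alpha$ rather than only marginally in $n$, since Theorem~\ref{thm:asymptotic_quasi_posterior} requires the discrepancy to converge under that single coupled limit.
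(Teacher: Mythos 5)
Your proposal is correct and matches the route the paper takes: the corollary is obtained by verifying the almost sure convergence of $\widehat{D}_{\mathrm{KL}}$ (cited from \citet{PrezCruz08} and \citet{Wang09}) and then invoking Theorem~\ref{thm:asymptotic_quasi_posterior} (Theorem~1 of \citet{Jiang18}), exactly as the paper does for its analogous $\gamma$-divergence result in Corollary~\ref{cor:asymp_quasi_posterior_gamma}. The paper itself states this corollary without reproducing a proof, but your argument is the intended one, and your caution about the convergence holding along the coupled regime $m/n \to \alpha$ is the right hypothesis to check.
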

It is known that the maximum likelihood estimator minimizes the KL-divergence between the empirical distribution of $p_{\theta^{*}}$ and $q_{\theta}$. ABC with $D_{\mathrm{KL}}$ shares the same idea to find $\theta$ with small KL-divergence.

\section{Details of Experimental Settings}
\label{app:detail_models_exp}
In this section, we summarize the details of the model settings we used in experiments.

\subsection{Gaussian Mixture Model (GM)}
The univariate Gaussian mixture model is the most fundamental benchmark model in ABC literature~\cite{Sisson07,Wilkinson13,Jiang18}.
We adopted a bivariate Gaussian mixture model with the true parameters $p^{*} = 0.3$, $\mu_{0}^{*} = (0.7, 0.7)$ and $\mu_{1}^{*} = (-0.7, -0.7)$, where $p^{*}$ means the mixture ratio and $\mu_{0}^{*}, \mu_{1}^{*}$ are sub-population means of Gaussian distribution.
Therefore, the set of the true parameter is $\theta^{*} = (p^{*},\mu_{0}^{*},\mu_{1}^{*})$
The generative process of data is as follows:
\begin{align*}
    &Z \sim \mathrm{Bernoulli}(p), \\
    &[X|Z=0] \sim \mathcal{N}(\mu_{0},[0.5,-0.3; -0.3,0.5]), \\
    &[X|Z=1] \sim \mathcal{N}(\mu_{1},[0.25,0; 0,0.25]).
\end{align*}

We set the $n=500$ observed data and the prior on the unknown parameter $\theta = (p, \mu_{0},\mu_{1})$ as $p \sim \mathrm{Uniform}[0,1]$ and $\mu_{0},\mu_{1} \sim \mathrm{Uniform}[-1,1]^{2}$.

\subsection{\textit{M}/\textit{G}/1-queueing Model (MG1)}
Queuing models are usually easy to simulate from; however,
it is difficult to conduct inference because these have no intractable likelihoods.
The $M$/$G$/$1$-queuing model well has been studied in ABC context~\cite{Blum10,fearnhead12,Jiang18}.
The $M$, $G$ and $1$ means \emph{Memoryless} which follows some arrival process, \emph{General  holding time distribution} and \emph{single server}, respectively.
In this model, the service times follows $\mathrm{Uniform}[\theta_{1},\theta_{2}]$ and the inter arrival times are exponentially distributed with rate $\theta_{3}$. Each datum is a $5$-dimensional vector consisting of the first five inter departure times $x = (x_{1}, x_{2}, x_{3}, x_{4}, x_{5})$ after the queue starts from empty~\cite{Jiang18}.

We adopted this model with the true parameters $\theta^{*} = (1,5,0.2)$.
We set the $n=500$ observed data and the prior on the unknown parameter $\theta = (\theta_{1}, \theta_{2},\theta_{3})$ as $\theta_{1} \sim \mathrm{Uniform}[0,10]$, $\theta_{2}-\theta_{1} \sim \mathrm{Uniform}[0,10]$ and $\theta_{3} \sim \mathrm{Uniform}[0,0.5]$.

\subsection{Bivariate Beta Model (BB)}
The bivariate beta model was proposed as a model with $8$ parameters $\theta = (\theta_{1},\ldots,\theta_{8})$ by \citet{Arnold11}.
The generative process of data is as follows:
\begin{align*}
    &U_{i} \sim \mathrm{Gamma}(\theta_{i},1) \ (i = 1,\ldots,8), \\
    &V_{1} = \frac{U_{1}+U_{5}+U_{7}}{U_{3}+U_{6}+U_{8}}, \\
    &V_{2} = \frac{U_{2}+U_{5}+U_{8}}{U_{4}+U_{6}+U_{7}}, \\
    &Z_{1} = \frac{V_{1}}{1+V_{1}}, \\
    &Z_{2} = \frac{V_{2}}{1+V_{2}}.
\end{align*}
Then, $Z = (Z_{1},Z_{2})$ follows a bivariate beta distribution.
\citet{Crackel17} reconsidered as a $5$-parameter sub-model by restricting $\theta_{3},\theta_{4},\theta_{5} = 0$. \citet{Jiang18} used the $5$-parameter models for ABC experiments and therefore we also adopted this with the true parameter $\theta^{*} = (3,2.5,2,1.5,1)$ as a benchmark model.

We set the $n=500$ observed data and the prior on the unknown parameter $\theta = (\theta_{1}, \theta_{2},\theta_{6},\theta_{7},\theta_{8})$ as $\theta_{1}, \theta_{2},\theta_{6},\theta_{7},\theta_{8} \sim \mathrm{Uniform}[0,5]^{5}$.

\subsection{Moving-average Model of Order 2 (MA2)}
\citet{Marin12} used the moving-average model of order 2 as a benchmark model.
We adopted this model with $10$-length time series and unobserved noise error term $Z_{j}$, which follows Student's t-distribution with $5$ degrees of freedom.
Therefore, the generative process of data is
\begin{align*}
    Y_{j} = Y_{j} + \theta_{1}Y_{j-1} + \theta_{2}Y_{j-2} \ \ (j=1,2,\ldots, 10).
\end{align*}
We also assumed this model has the true parameter $\theta^{*} = (0.6,0.2)$.
We then set the $n=200$ observed data and the prior on the unknown parameter $\theta = (\theta_{1},\theta_{2})$ as $\theta_{1},\theta_{2} \sim \mathrm{Uniform}[-2,2] \times \mathrm{Uniform}[-1,1]$.

\subsection{Multivariate \texorpdfstring{$g$}{Lg}-and-\texorpdfstring{$k$}{Lg} Distribution (GK)}
The univariate $g$-and-$k$ distribution is defined by its inverse distribution function as
\begin{align*}
    F^{-1}(x) = A + B \bigg[ 1 + c \frac{1 - \exp(-gz_{x})}{1 + \exp(-gz_{x})} \bigg](1+z_{x}^{2})^{k}z_{x},
\end{align*}
where $z_{x}$ is the $x$-th quantile of the standard normal distribution, and the parameters $A,B,g,k$ are related to location, scale, skewness and kurtosis, respectively.
The hyper-parameter $c$ is conventionally chose as $c=0.8$~\cite{fearnhead12}.
As the inversion transform method can conveniently sample from this distribution by drawing $Z \sim N(0,1)$ \IID and then transforming them to be $g$-and-$k$ distributed random variables.
\citet{rayner02} mentioned that the univariate $g$-and-$k$ distribution had no analytical form of the density function, and the numerical evaluation of the likelihood function is costly.
Therefore, ABC is often used on it~\cite{Peters06,fearnhead12,Allingham09}.
Furthermore, \citet{Drovandi11} and \cite{Li15} has also considered the multivariate $g$-and-$k$ distribution.

In our experiments, we set a $5$-dimensional $g$-and-$k$ distribution.
The generative steps are as follows:
\begin{align*}
    &\textrm{Draw:} \ \ Z=(Z_{1},\ldots,Z_{5}) \sim \mathcal{N}(0,\Sigma), \\
    &\textrm{Transform:} \ Z,
\end{align*}
where $\Sigma$ is sparse matrix which has $\Sigma_{ii} = 1$ and $\Sigma_{ii} = \rho$ if $|i-j| =1$ or $0$ otherwise.
We used the transformation for $Z$ that changes marginally as the univariate $g$-and-$k$ distribution does.
We also adopted this model with the true parameters $\theta^{*} = (A^{*},B^{*},g^{*},k^{*},\rho^{*})$, where $A^{*} = 3$, $B^{*} = 1$, $g^{*} = 2$, $k^{*} = 0.5$ and $\rho^{*} = -0.3$.
We set the $n=500$ observed data and the prior on the unknown parameter $\theta = (A, B, g, k, \rho)$ as $A,B,g,k \sim \mathrm{Uniform}[0,4]$ and $\rho$ is sampled from $\mathrm{Uniform}[0,1]$ and is transformed by $2\sqrt{3}(\rho-0.5) / 3$.


\clearpage
\section{Additional Results for Experiments in Section \ref{sec:experiments}}
\label{app:full_MSE_scores}
We summarize the additional mean-squared-error (MSE) results for the experiments in Section \ref{sec:experiments}.
Furthermore, we report the simulation error results based on the energy distance.

\subsection{MSEs for All Parameters}

The following table shows the experimental results of MSEs for all parameters in the experiments of Section \ref{sec:experiments}.
From these results, our method almost outperforms the other baseline methods, especially when the observed data have heavy contamination.

\begin{table}[th]
\label{table:benchmark}
\centering
\caption{Experimental results of $8$ baseline methods for $5$ benchmark models on MSE and standard error of all parameters. We performed ABC over $10$ trials on $10$ different datasets. Lower values are better. The scores of $\gamma$-divergence estimator are picked up from the all of experimental results in Figure \ref{fig:gamma_gaussian}-\ref{fig:gamma_gk}. Bold-faces indicate the best score per contamination rate.
}
\scalebox{0.88}{
\begin{tabular}{ccccccc}
\hline
Discrepancy measure                                                  & Outlier                     & GM & MG1 & BB & MA2 & GK \\ \hline \hline
\multicolumn{1}{c|}{\multirow{3}{*}{AL (Indirect)}}                       & \multicolumn{1}{c|}{$0$\%}  &$0.350 \ (0.419)$    &$0.940 \ (0.851)$     &$0.946 \ (0.412)$    & $0.006 \ (0.004)$    &$0.155 \ (0.144)$    \\
\multicolumn{1}{c|}{}                                                & \multicolumn{1}{c|}{$10$\%} &$0.805 \ (0.669)$    &$0.556 \ (0.448)$     &$1.538 \ (0.251)$    &$1.094 \ (0.033)$     &$0.870 \ (0.275)$    \\
\multicolumn{1}{c|}{}                                                & \multicolumn{1}{c|}{$20$\%} &$0.734 \ (0.882)$    &$2.888 \ (1.222)$     &$1.557 \ (0.229)$    &$1.125 \ (0.022)$     &$1.374 \ (0.439)$    \\ \hline
\multicolumn{1}{c|}{\multirow{3}{*}{AL with Huber (Robust Indirect)}}                & \multicolumn{1}{c|}{$0$\%}  &$0.097 \ (0.261)$    &$0.734 \ (1.369)$     &$1.092 \ (0.456)$    &$0.029 \ (0.030)$     &$0.199 \ (0.116)$    \\
\multicolumn{1}{c|}{}                                                & \multicolumn{1}{c|}{$10$\%} &$0.920 \ (0.033)$    &$0.370 \ (0.369)$     &$1.948 \ (0.140)$    &$1.017 \ (0.154)$     &$1.066 \ (0.180)$    \\
\multicolumn{1}{c|}{}                                                & \multicolumn{1}{c|}{$20$\%} &$1.000 \ (0.025)$    &$0.836 \ (0.567)$     &$2.441 \ (0.700)$    &$2.275 \ (0.998)$     &$0.872 \ (0.300)$    \\ \hline
\multicolumn{1}{c|}{\multirow{3}{*}{Classification ($L_{1}$ + Logistic)}} & \multicolumn{1}{c|}{$0$\%}  &$1.324 \ (0.088)$    &$4.018 \ (0.664)$     &$1.076 \ (0.430)$    &$0.459 \ (0.410)$     &$1.076 \ (0.384)$    \\
\multicolumn{1}{c|}{}                                                & \multicolumn{1}{c|}{$10$\%} &$0.270 \ (0.242)$    &$6.422 \ (0.554)$     &$0.680 \ (0.213)$    &$0.757 \ (0.138)$     &$1.240 \ (0.290)$    \\
\multicolumn{1}{c|}{}                                                & \multicolumn{1}{c|}{$20$\%} &$0.212 \ (0.250)$    &$8.394 \ (0.051)$     &$0.709 \ (0.276)$    &$0.810 \ (0.112)$     &$1.477 \ (0.145)$    \\ \hline
\multicolumn{1}{c|}{\multirow{3}{*}{Classification (Boosting)}}      & \multicolumn{1}{c|}{$0$\%}  &$1.564 \ (0.075)$    &$0.022 \ (0.033)$     &$\mathbf{0.204 \ (0.123)}$    &$\mathbf{0.004 \ (0.002)}$     &$\mathbf{0.074 \ (0.076)}$    \\
\multicolumn{1}{c|}{}                                                & \multicolumn{1}{c|}{$10$\%} &$1.495 \ (0.218)$    &$0.005 \ (0.006)$     & $\mathbf{0.315 \ (0.334)}$   &$\mathbf{0.005 \ (0.005)}$     &$0.187 \ (0.121)$    \\
\multicolumn{1}{c|}{}                                                & \multicolumn{1}{c|}{$20$\%} &$0.639 \ (0.686)$    &$0.017 \ (0.017)$     &$0.346 \ (0.136)$    &$0.008 \ (0.007)$     &$0.179 \ (0.090)$    \\ \hline
\multicolumn{1}{c|}{\multirow{3}{*}{MMD}}                            & \multicolumn{1}{c|}{$0$\%}  &$0.054 \ (0.105)$    &$0.617 \ (0.413)$     &$0.326 \ (0.179)$    &$\mathbf{0.004 \ (0.003)}$     &$0.240 \ (0.141)$    \\
\multicolumn{1}{c|}{}                                                & \multicolumn{1}{c|}{$10$\%} &$0.760 \ (0.500)$    &$0.333 \ (0.229)$     &$0.366 \ (0.253)$    &$0.079 \ (0.024)$     &$\mathbf{0.165 \ (0.094)}$    \\
\multicolumn{1}{c|}{}                                                & \multicolumn{1}{c|}{$20$\%} &$1.342 \ (0.339)$    &$1.237 \ (0.764)$     &$0.823 \ (0.175)$    &$0.382 \ (0.054)$     &$0.559 \ (0.281)$    \\ \hline
\multicolumn{1}{c|}{\multirow{3}{*}{MONK-BCD Fast}}                  & \multicolumn{1}{c|}{$0$\%}  &$0.647 \ (0.203)$    &$0.113 \ (0.115)$     &$0.424 \ (0.205)$    &$0.049 \ (0.040)$     &$0.362 \ (0.348)$    \\
\multicolumn{1}{c|}{}                                                & \multicolumn{1}{c|}{$10$\%} &$0.719 \ (0.164)$    &$0.114 \ (0.145)$     &$0.524 \ (0.243)$    &$0.054 \ (0.060)$     &$0.326 \ (0.110)$    \\
\multicolumn{1}{c|}{}                                                & \multicolumn{1}{c|}{$20$\%} &$0.714 \ (0.211)$    &$0.160 \ (0.204)$     &$0.753 \ (0.403)$    &$0.102 \ (0.077)$     &$0.282 \ (0.139)$    \\ \hline
\multicolumn{1}{c|}{\multirow{3}{*}{$q$-Wasserstein}}                & \multicolumn{1}{c|}{$0$\%}  &$0.009 \ (0.011)$    &$0.419 \ (0.235)$     &$0.317 \ (0.210)$    &$0.009 \ (0.006)$     &$0.189 \ (0.153)$    \\
\multicolumn{1}{c|}{}                                                & \multicolumn{1}{c|}{$10$\%} &$1.349 \ (0.311)$    &$0.188 \ (0.110)$     &$1.880 \ (0.165)$    &$0.255 \ (0.051)$     &$0.305 \ (0.129)$    \\
\multicolumn{1}{c|}{}                                                & \multicolumn{1}{c|}{$20$\%} &$1.371 \ (0.296)$    &$3.384 \ (1.116)$     &$1.967 \ (0.257)$    &$0.432 \ (0.104)$     &$0.585 \ (0.252)$    \\ \hline
\multicolumn{1}{c|}{\multirow{3}{*}{KL-divergence}}                  & \multicolumn{1}{c|}{$0$\%}  &$0.005 \ (0.003)$    &$0.089 \ (0.058)$     &$0.406 \ (0.129)$    &$\mathbf{0.004 \ (0.004)}$     &$0.240 \ (0.152)$    \\
\multicolumn{1}{c|}{}                                                & \multicolumn{1}{c|}{$10$\%} &$0.007 \ (0.004)$    &$0.102 \ (0.064)$     &$0.346 \ (0.123)$    &$0.012 \ (0.006)$     &$0.377 \ (0.159)$    \\
\multicolumn{1}{c|}{}                                                & \multicolumn{1}{c|}{$20$\%} &$\mathbf{0.004 \ (0.003)}$    &$0.113 \ (0.069)$     &$\mathbf{0.270 \ (0.132)}$    &$0.051 \ (0.027)$     &$0.578 \ (0.290)$    \\ \hline
\multicolumn{1}{c|}{\multirow{3}{*}{$\gamma$-divergence (proposed)}}            & \multicolumn{1}{c|}{$0$\%}  &$\mathbf{0.002 \ (0.006)}$    &$\mathbf{0.003 \ (0.025)}$     &$0.405 \ (0.194)$    &$0.005 \ (0.008)$     &$0.260 \ (0.140)$    \\
\multicolumn{1}{c|}{}                                                & \multicolumn{1}{c|}{$10$\%} &$\mathbf{0.004 \ (0.002)}$    &$\mathbf{0.001 \ (0.025)}$     &$0.418 \ (0.150)$    &$\mathbf{0.005 \ (0.080)}$     &$0.228 \ (0.140)$    \\
\multicolumn{1}{c|}{}                                                & \multicolumn{1}{c|}{$20$\%} &$\mathbf{0.004 \ (0.002)}$    &$\mathbf{0.003 \ (0.017)}$     &$0.314 \ (0.296)$    &$\mathbf{0.004 \ (0.010)}$     &$\mathbf{0.170 \ (0.146)}$
\end{tabular}
}
\end{table}

\begin{figure}
    \centering
    \includegraphics[scale=0.25]{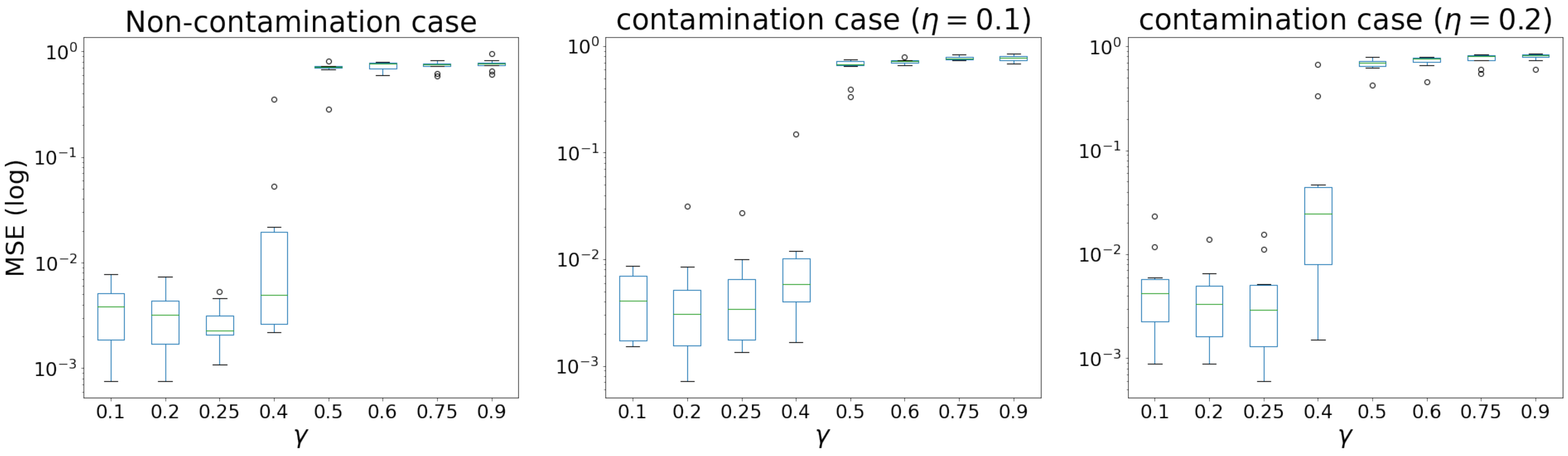}
    \caption{All of the experimental results of our method for the GM model based on MSE.}
    \label{fig:gamma_gaussian}
\end{figure}

\begin{figure}
    \centering
    \includegraphics[scale=0.25]{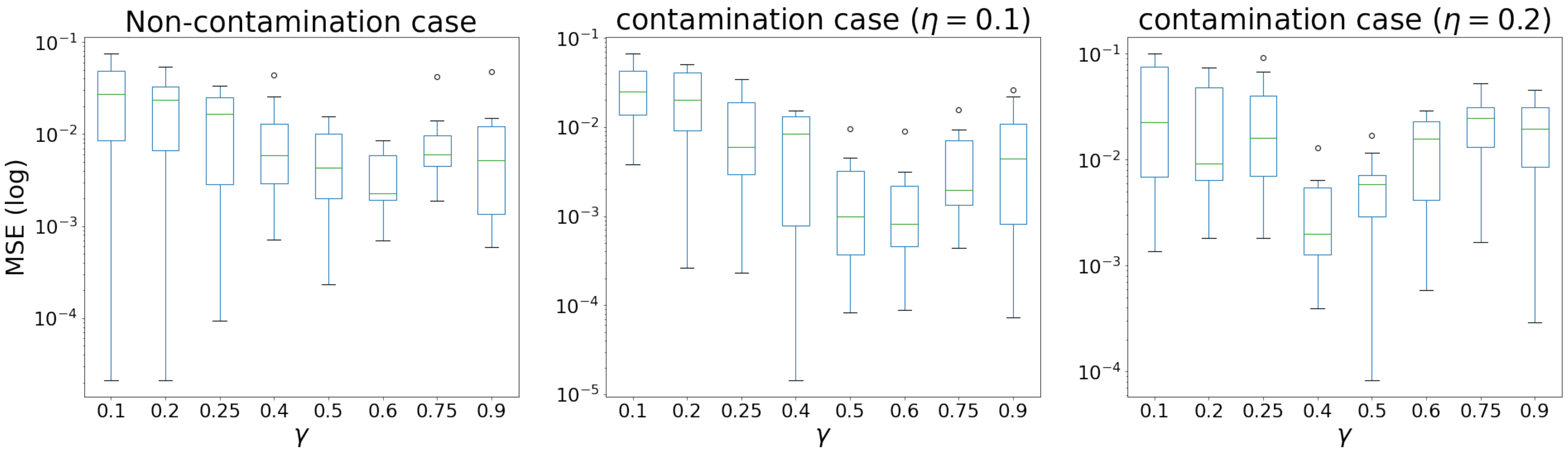}
    \caption{All of the experimental results of our method for the MG1 model based on MSE.}
    \label{fig:gamma_mg1}
\end{figure}

\begin{figure}
    \centering
    \includegraphics[scale=0.25]{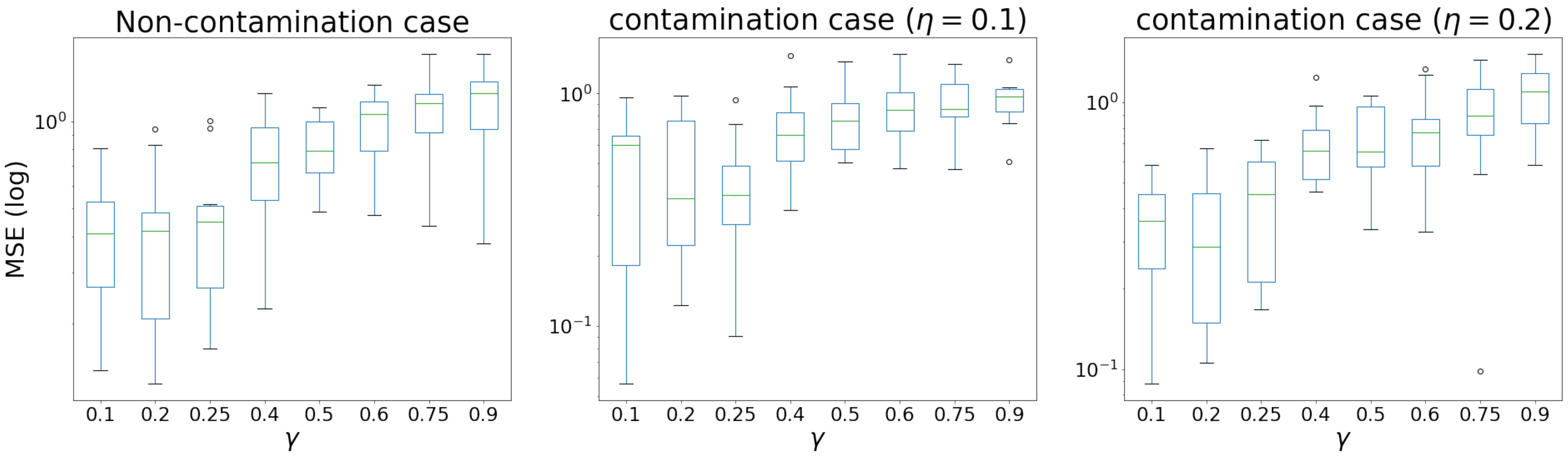}
    \caption{All of the experimental results of our method for the BB model based on MSE.}
    \label{fig:gamma_bb}
\end{figure}

\begin{figure}
    \centering
    \includegraphics[scale=0.25]{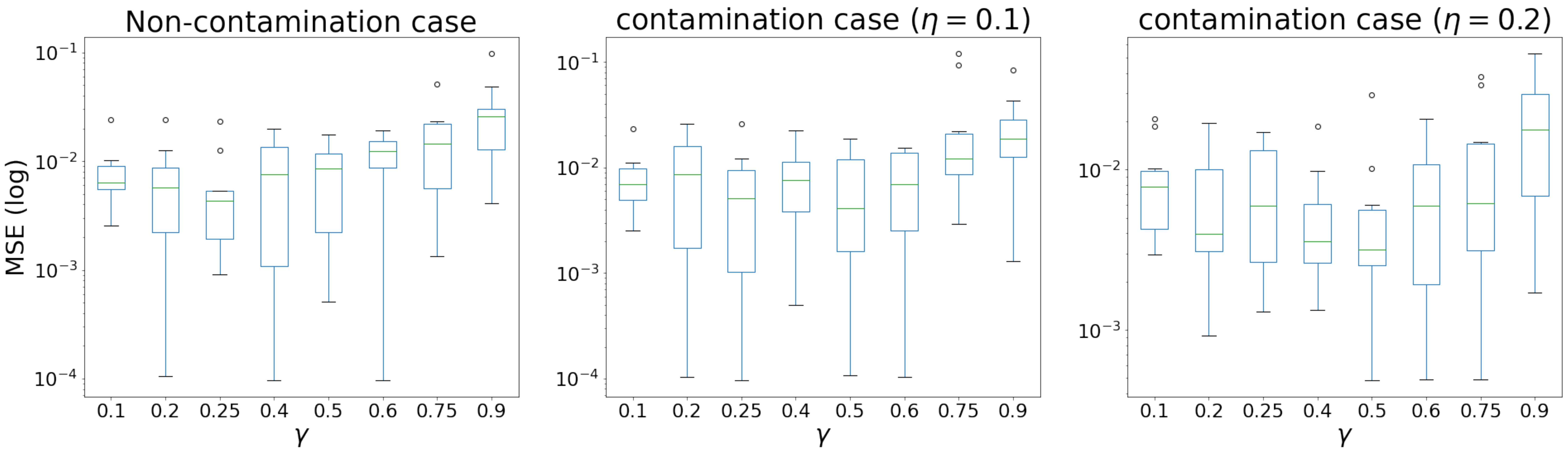}
    \caption{All of the experimental results of our method for the MA2 model based on MSE.}
    \label{fig:gamma_ma2}
\end{figure}

\begin{figure}
    \centering
    \includegraphics[scale=0.25]{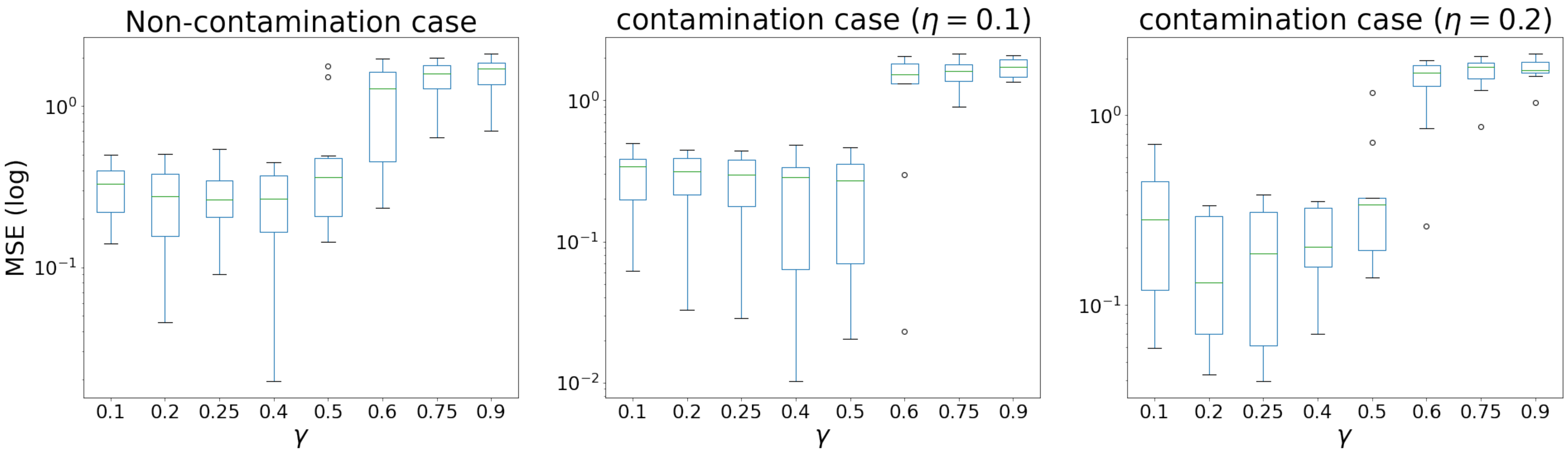}
    \caption{All of the experimental results of our method for the GK model based on MSE.}
    \label{fig:gamma_gk}
\end{figure}

\clearpage
\begin{figure}
    \centering
    \includegraphics[scale=0.29]{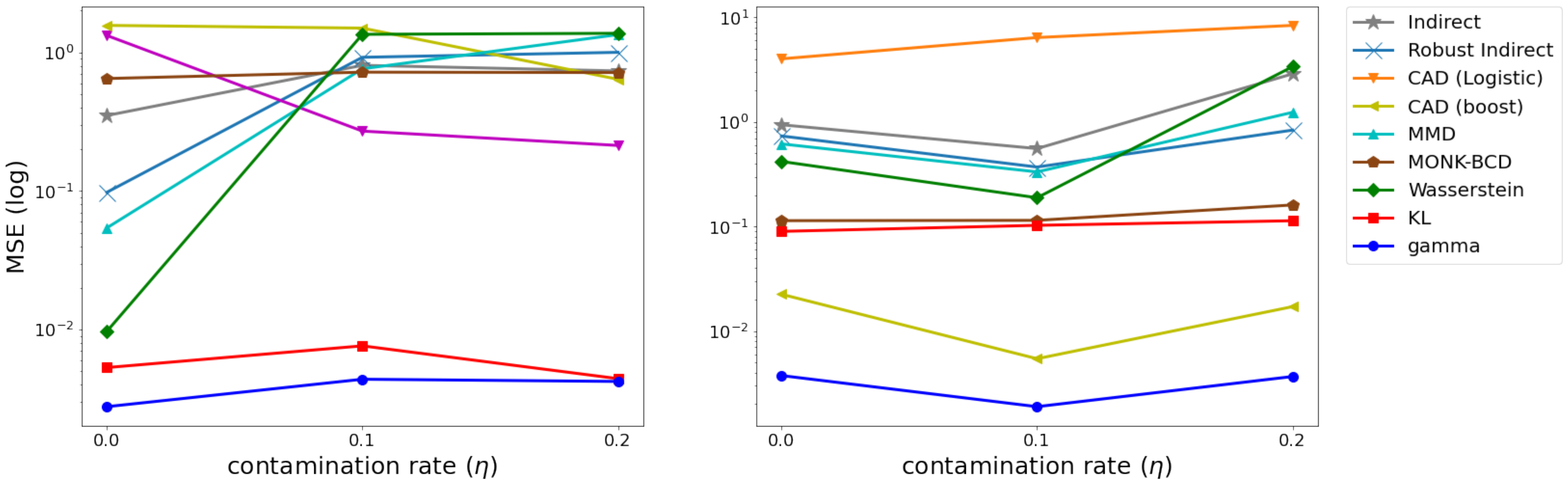}
    \caption{Experimental results for the GM and the MG1 model based on MSE.}
    \label{fig:mse_gm_mg1}
\end{figure}

\begin{figure}
    \centering
    \includegraphics[scale=0.32]{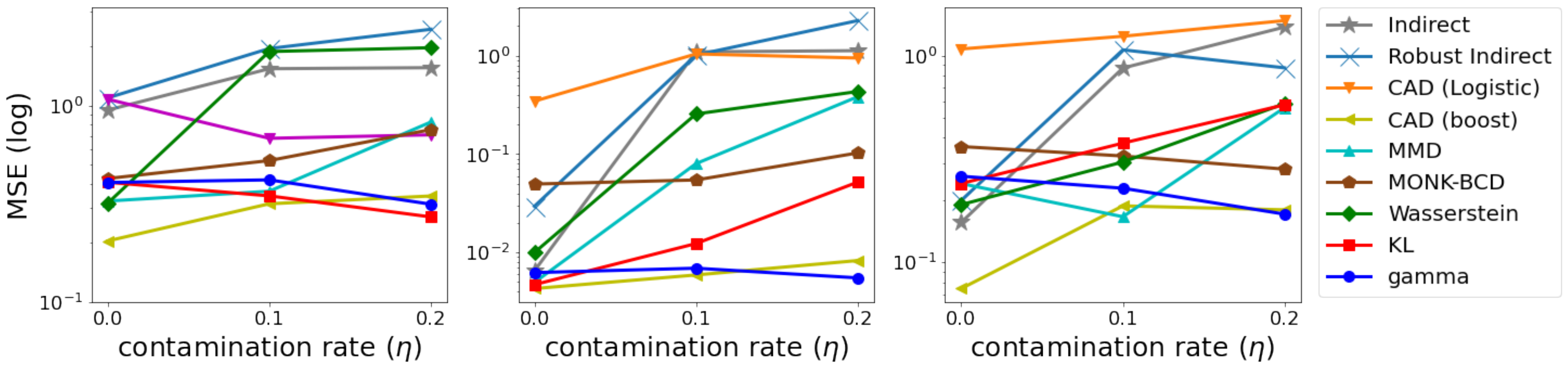}
    \caption{Experimental results for the BB, the MA2, and the GK model based on MSE.}
    \label{fig:mse_bmg}
\end{figure}

\clearpage
\subsection{MSEs for Individual Parameters and Simulation Error}
\label{app:indivi_MSE}
Here, we report the MSE results for each parameter and simulation error in all experiments in Section~\ref{sec:experiments}.

\subsubsection{Gaussian Mixture Model (GM)}
The following table shows the experimental results of MSEs for each parameter in Gaussian mixture experiments.
From these results, our method achieves almost a better performance than that of the other baseline methods, especially when the observed data have heavy contamination.

\begin{table}[th]
\label{table:benchmark_gm}
\centering
\caption{Experimental results of $8$ baseline methods for Gaussian mixture model on MSE and standard error of each parameter. We performed ABC over $10$ trials on $10$ different datasets. Lower values are better. The scores for $\gamma$-divergence estimator are picked up from the all of experimental results in Figure \ref{fig:gamma_gaussian}-\ref{fig:gamma_gk}. Bold-faces indicate the best score per contamination rate.}
\scalebox{0.88}{
\begin{tabular}{ccccccc}
\hline
Discrepancy measure                                            & Outlier                     & $p$      & $\mu_{0\{0\}}$     & $\mu_{0\{1\}}$      & $\mu_{1\{0\}}$       & $\mu_{1\{1\}}$      \\ \hline \hline
\multicolumn{1}{c|}{\multirow{3}{*}{AL (Indirect)}}            & \multicolumn{1}{c|}{$0\%$}  & $0.024 \ (0.028)$ & $0.868 \ (1.047)$ & $0.851 \ (1.027)$ & $0.003 \ (0.004)$ & $\mathbf{0.001 \ (0.001)}$ \\
\multicolumn{1}{c|}{}                                          & \multicolumn{1}{c|}{$10\%$} & $0.060 \ (0.030)$ & $0.898 \ (1.093)$ & $0.867 \ (1.046)$ & $0.912 \ (1.107)$   & $1.290 \ (1.047)$ \\
\multicolumn{1}{c|}{}                                          & \multicolumn{1}{c|}{$20\%$} & $0.044 \ (0.021)$ & $0.879 \ (1.067)$ & $0.924 \ (1.129)$ & $0.907 \ (1.100)$   & $0.915 \ (1.122)$ \\ \hline
\multicolumn{1}{c|}{\multirow{3}{*}{AL with Huber (Robust Indirect)}} & \multicolumn{1}{c|}{$0\%$}  & $0.008 \ (0.008)$ & $0.252 \ (0.697)$ & $0.223 \ (0.608)$ & $0.002 \ (0.002)$   & $\mathbf{0.001 \ (0.002)}$ \\
\multicolumn{1}{c|}{}                                          & \multicolumn{1}{c|}{$10\%$} & $0.112 \ (0.141)$ & $0.007 \ (0.006)$ & $0.006 \ (0.004)$ & $2.225 \ (0.159)$   & $2.252 \ (0.055)$ \\
\multicolumn{1}{c|}{}                                          & \multicolumn{1}{c|}{$20\%$} & $0.169 \ (0.151)$ & $0.025 \ (0.003)$ & $0.020 \ (0.006)$ & $2.399 \ (0.058)$   & $2.387 \ (0.078)$ \\ \hline
\multicolumn{1}{c|}{\multirow{3}{*}{Classification ($L_{1}$ + Logistic)}}           & \multicolumn{1}{c|}{$0\%$}  & $0.060 \ (0.023)$ & $2.249 \ (0.152)$ & $2.224 \ (0.139)$ & $1.015 \ (0.333)$   & $1.070 \ (0.351)$ \\
\multicolumn{1}{c|}{}                                          & \multicolumn{1}{c|}{$10\%$} & $0.047 \ (0.024)$ & $0.187 \ (0.545)$ & $\mathbf{0.005 \ (0.006)}$ & $0.951 \ (0.945)$   & $0.158 \ (0.468)$ \\
\multicolumn{1}{c|}{}                                          & \multicolumn{1}{c|}{$20\%$} & $0.079 \ (0.046)$ & $0.219 \ (0.609)$ & $0.014 \ (0.007)$ & $0.545 \ (0.829)$   & $0.206 \ (0.613)$ \\ \hline
\multicolumn{1}{c|}{\multirow{3}{*}{Classification (Boosting)}}           & \multicolumn{1}{c|}{$0\%$}  & $0.179 \ (0.021)$ & $2.010 \ (0.169)$ & $2.026 \ (0.128)$ & $1.802 \ (0.273)$   & $1.804 \ (0.216)$ \\
\multicolumn{1}{c|}{}                                          & \multicolumn{1}{c|}{$10\%$} & $0.162 \ (0.034)$ & $2.031 \ (0.107)$ & $1.952 \ (0.103)$ & $1.668 \ (0.581)$   & $1.663 \ (0.570)$ \\
\multicolumn{1}{c|}{}                                          & \multicolumn{1}{c|}{$20\%$} & $0.067 \ (0.039)$ & $0.955 \ (0.963)$ & $0.959 \ (0.962)$ & $0.603 \ (0.924)$   & $0.610 \ (0.934)$ \\ \hline
\multicolumn{1}{c|}{\multirow{3}{*}{MMD}}                      & \multicolumn{1}{c|}{$0\%$}  & $0.005 \ (0.004)$ & $0.247 \ (0.529)$ & $0.013 \ (0.008)$ & $\mathbf{0.001 \ (0.001)}$  & $0.002 \ (0.002)$ \\
\multicolumn{1}{c|}{}                                          & \multicolumn{1}{c|}{$10\%$} & $0.082 \ (0.061)$ & $1.657 \ (0.831)$ & $1.402 \ (0.920)$ & $0.320 \ (0.637)$   & $0.340 \ (0.675)$ \\
\multicolumn{1}{c|}{}                                          & \multicolumn{1}{c|}{$20\%$} & $0.114 \ (0.041)$ & $2.141 \ (0.116)$ & $2.130 \ (0.138)$ & $1.241 \ (0.835)$   & $1.084 \ (0.897)$ \\ \hline
\multicolumn{1}{c|}{\multirow{3}{*}{MONK-Fast}}                & \multicolumn{1}{c|}{$0\%$}  & $0.009 \ (0.005)$ & $1.592 \ (0.577)$ & $1.620 \ (0.494)$ & $0.007 \ (0.011)$   & $0.005 \ (0.008)$ \\
\multicolumn{1}{c|}{}                                          & \multicolumn{1}{c|}{$10\%$} & $0.013 \ (0.011)$ & $1.699 \ (0.227)$ & $1.689 \ (0.383)$ & $\mathbf{0.002 \ (0.002)}$   & $0.192 \ (0.562)$ \\
\multicolumn{1}{c|}{}                                          & \multicolumn{1}{c|}{$20\%$} & $0.032 \ (0.042)$ & $1.792 \ (0.268)$ & $1.547 \ (0.556)$ & $0.193 \ (0.561)$   & $0.007 \ (0.006)$ \\ \hline
\multicolumn{1}{c|}{\multirow{3}{*}{$q$-Wasserstein}}          & \multicolumn{1}{c|}{$0\%$}  & $\mathbf{0.001 \ (0.001)}$ & $0.023 \ (0.032)$ & $0.018 \ (0.029)$ & $\mathbf{0.001 \ (0.001)}$   & $0.003 \ (0.004)$ \\
\multicolumn{1}{c|}{}                                          & \multicolumn{1}{c|}{$10\%$} & $0.044 \ (0.026)$ & $0.978 \ (0.777)$ & $0.880 \ (0.859)$ & $2.411 \ (0.058)$   & $2.430 \ (0.049)$ \\
\multicolumn{1}{c|}{}                                          & \multicolumn{1}{c|}{$20\%$} & $0.018 \ (0.018)$ & $1.004 \ (0.804)$ & $0.767 \ (0.654)$ & $2.550 \ (0.050)$   & $2.518 \ (0.068)$ \\ \hline
\multicolumn{1}{c|}{\multirow{3}{*}{KL-divergence}}            & \multicolumn{1}{c|}{$0\%$}  & $0.003 \ (0.002)$ & $0.010 \ (0.018)$ & $0.003 \ (0.003)$ & $0.002 \ (0.003)$   & $0.007 \ (0.005)$ \\
\multicolumn{1}{c|}{}                                          & \multicolumn{1}{c|}{$10\%$} & $0.007 \ (0.018)$ & $0.010 \ (0.013)$ & $0.011 \ (0.010)$ & $0.004 \ (0.004)$   & $0.004 \ (0.005)$ \\
\multicolumn{1}{c|}{}                                          & \multicolumn{1}{c|}{$20\%$} & $\mathbf{0.004 \ (0.003)}$ & $0.004 \ (0.004)$ & $\mathbf{0.006 \ (0.013)}$ & $0.003 \ (0.006)$   & $\mathbf{0.002 \ (0.004)}$ \\ \hline
\multicolumn{1}{c|}{\multirow{3}{*}{$\gamma$-divergence (proposed)}}            & \multicolumn{1}{c|}{$0\%$}  & $0.003 \ (0.002)$ & $\mathbf{0.006 \ (0.007)}$ & $\mathbf{0.001 \ (0.001)}$ & $\mathbf{0.001 \ (0.001)}$   & $\mathbf{0.001 \ (0.001)}$ \\
\multicolumn{1}{c|}{}                                          & \multicolumn{1}{c|}{$10\%$} & $\mathbf{0.002 \ (0.004)}$ & $\mathbf{0.006 \ (0.007)}$ & $0.007 \ (0.008)$ & $\mathbf{0.002 \ (0.002)}$   & $\mathbf{0.003 \ (0.004)}$ \\
\multicolumn{1}{c|}{}                                          & \multicolumn{1}{c|}{$20\%$} & $\mathbf{0.001 \ (0.001)}$ & $0.005 \ (0.006)$ & $0.009 \ (0.018)$ & $\mathbf{0.002 \ (0.002)}$   & $\mathbf{0.002 \ (0.003)}$
\end{tabular}
}
\end{table}

\clearpage
\subsubsection{\textit{M}/\textit{G}/1-queueing Model (MG1)}
The following table shows the experimental results of MSEs for each parameter in \textit{M}/\textit{G}/1-queueing Model experiments.
From these results, our method achieves almost a better performance than that of the other baseline methods, especially when the observed data have heavy contamination.

\begin{table}[th]
\label{table:benchmark_MG1}
\centering
\caption{Experimental results of $8$ baseline methods for $M/G/1$-queueing model on MSE and standard error of each parameter. We performed ABC over $10$ trials on $10$ different datasets. Lower values are better. The scores for $\gamma$-divergence estimator are picked up from the all of experimental results in Figure \ref{fig:gamma_gaussian}-\ref{fig:gamma_gk}. Bold-faces indicate the best score per contamination rate.}
\begin{tabular}{ccccc}
\hline
Discrepancy measure                                                       & Outlier                     & $\theta_{1}$ & $\theta_{2}$ & $\theta_{3}$ \\ \hline \hline
\multicolumn{1}{c|}{\multirow{3}{*}{AL (Indirect)}}                       & \multicolumn{1}{c|}{$0$\%}  &$0.083 \ (0.069)$              &$2.737 \ (2.547)$              &$\mathbf{0.0001 \ (0.0002)}$              \\
\multicolumn{1}{c|}{}                                                     & \multicolumn{1}{c|}{$10$\%} &$1.008 \ (0.749)$              &$0.660 \ (0.845)$              &$0.0009 \ (0.0008)$              \\
\multicolumn{1}{c|}{}                                                     & \multicolumn{1}{c|}{$20$\%} &$4.804 \ (3.593)$              &$3.859 \ (2.911)$              &$0.003 \ (0.001)$              \\ \hline
\multicolumn{1}{c|}{\multirow{3}{*}{AL with Huber (Robust Indirect)}}     & \multicolumn{1}{c|}{$0$\%}  &$0.202 \ (0.242)$              &$2.001 \ (4.142)$              &$\mathbf{0.0001 \ (0.0002)}$              \\
\multicolumn{1}{c|}{}                                                     & \multicolumn{1}{c|}{$10$\%} &$0.998 \ (1.802)$              &$0.113 \ (0.118)$              &$0.0007 \ (0.0003)$              \\
\multicolumn{1}{c|}{}                                                     & \multicolumn{1}{c|}{$20$\%} &$1.339 \ (1.221)$              &$1.167 \ (1.119)$              &$0.001 \ (0.0007)$              \\ \hline
\multicolumn{1}{c|}{\multirow{3}{*}{Classification ($L_{1}$ + Logistic)}} & \multicolumn{1}{c|}{$0$\%}  &$0.078 \ (0.082)$              &$11.961 \ (1.990)$              &$0.016 \ (0.015)$              \\
\multicolumn{1}{c|}{}                                                     & \multicolumn{1}{c|}{$10$\%} &$0.078 \ (0.077)$              &$19.180 \ (1.692)$              &$0.009 \ (0.010)$              \\
\multicolumn{1}{c|}{}                                                     & \multicolumn{1}{c|}{$20$\%} &$0.308 \ (0.429)$              &$24.861 \ (0.414)$              &$0.013 \ (0.015)$              \\ \hline
\multicolumn{1}{c|}{\multirow{3}{*}{Classification (Boosting)}}           & \multicolumn{1}{c|}{$0$\%}  &$0.015 \ (0.022)$              &$0.051 \ (0.081)$              &$0.0002 \ (0.0002)$              \\
\multicolumn{1}{c|}{}                                                     & \multicolumn{1}{c|}{$10$\%} &$0.013 \ (0.018)$              &$0.002 \ (0.006)$              &$0.0008 \ (0.0006)$              \\
\multicolumn{1}{c|}{}                                                     & \multicolumn{1}{c|}{$20$\%} &$0.022 \ (0.038)$              &$0.027 \ (0.049)$              &$0.0009 \ (0.0004)$              \\ \hline
\multicolumn{1}{c|}{\multirow{3}{*}{MMD}}                                 & \multicolumn{1}{c|}{$0$\%}  &$0.528 \ (0.567)$              &$1.323 \ (0.873)$              &$\mathbf{0.0001 \ (>1e-6)}$              \\
\multicolumn{1}{c|}{}                                                     & \multicolumn{1}{c|}{$10$\%} &$0.630 \ (0.667)$              &$0.368 \ (0.346)$              &$0.0004 \ (0.0003)$              \\
\multicolumn{1}{c|}{}                                                     & \multicolumn{1}{c|}{$20$\%} &$0.655 \ (0.732)$              &$3.053 \ (2.477)$              &$0.002 \ (0.0004)$              \\ \hline
\multicolumn{1}{c|}{\multirow{3}{*}{MONK-BCD Fast}}                       & \multicolumn{1}{c|}{$0$\%}  &$0.019 \ (0.024)$              &$0.318 \ (0.335)$              &$0.003 \ (0.004)$              \\
\multicolumn{1}{c|}{}                                                     & \multicolumn{1}{c|}{$10$\%} &$0.043 \ (0.031)$              &$0.298 \ (0.447)$              &$0.001 \ (0.002)$              \\
\multicolumn{1}{c|}{}                                                     & \multicolumn{1}{c|}{$20$\%} &$0.182 \ (0.284)$              &$0.295 \ (0.579)$              &$0.004 \ (0.005)$              \\ \hline
\multicolumn{1}{c|}{\multirow{3}{*}{$q$-Wasserstein}}                     & \multicolumn{1}{c|}{$0$\%}  &$0.174 \ (0.221)$              &$1.082 \ (0.772)$              &$\mathbf{0.0001 \ (>1e-6)}$              \\
\multicolumn{1}{c|}{}                                                     & \multicolumn{1}{c|}{$10$\%} &$0.175 \ (0.201)$              &$0.389 \ (0.321)$              &$\mathbf{0.00009 \ (>1e-6)}$              \\
\multicolumn{1}{c|}{}                                                     & \multicolumn{1}{c|}{$20$\%} &$0.393 \ (0.547)$              &$9.758 \ (3.177)$              &$0.0008 \ (>1e-6)$              \\ \hline
\multicolumn{1}{c|}{\multirow{3}{*}{KL-divergence}}                       & \multicolumn{1}{c|}{$0$\%}  &$0.124 \ (0.186)$              &$0.145 \ (0.139)$              &$\mathbf{0.0001 \ (0.0002)}$              \\
\multicolumn{1}{c|}{}                                                     & \multicolumn{1}{c|}{$10$\%} &$0.160 \ (0.132)$              &$0.147 \ (0.142)$              &$0.0002 \ (0.0003)$              \\
\multicolumn{1}{c|}{}                                                     & \multicolumn{1}{c|}{$20$\%} &$0.249 \ (0.185)$              &$0.090 \ (0.060)$              &$0.001 \ (0.0008)$              \\ \hline
\multicolumn{1}{c|}{\multirow{3}{*}{$\gamma$-divergence}}                 & \multicolumn{1}{c|}{$0$\%}  &$\mathbf{0.009 \ (0.007)}$              &$\mathbf{>1e-5 \ (0.0001)}$              &$0.001 \ (0.002)$              \\
\multicolumn{1}{c|}{}                                                     & \multicolumn{1}{c|}{$10$\%} &$\mathbf{0.005 \ (0.007)}$              &$\mathbf{>1e-5 \ (0.0002)}$              &$0.0003 \ (0.0002)$              \\
\multicolumn{1}{c|}{}                                                     & \multicolumn{1}{c|}{$20$\%} &$\mathbf{0.008 \ (0.010)}$              &$\mathbf{0.002 \ (0.003)}$              &$\mathbf{0.0002 \ (0.0003)}$             
\end{tabular}
\end{table}

\clearpage
\subsubsection{Bivariate Beta Model (BB)}
The following table shows the experimental results of MSEs for each parameter in bivariate-beta model experiments.
From these results, our method fails to reduce the effects of outliers.
Furthermore, the KL-divergence method works well, even if the observed data are heavily contaminated.
We will investigate the reason why this phenomenon occurs as future work.
We believe this may be due to the way the contamination of the data occurs.

\begin{table}[th]
\label{table:benchmark_bb}
\centering
\caption{Experimental results of $8$ baseline methods for the Bivariate-Beta model on MSE and standard error of each parameter. We performed ABC over $10$ trials on $10$ different datasets. Lower values are better. The scores for $\gamma$-divergence estimator are picked up from the all of experimental results in Figure \ref{fig:gamma_gaussian}-\ref{fig:gamma_gk}. Bold-faces indicate the best score per contamination rate.}
\scalebox{0.88}{
\begin{tabular}{ccccccc}
\hline
Discrepancy measure                                                       & Outlier                     & $\theta_{1}$ & $\theta_{2}$ & $\theta_{6}$ & $\theta_{7}$ & $\theta_{8}$ \\ \hline \hline
\multicolumn{1}{c|}{\multirow{3}{*}{AL (Indirect)}}                       & \multicolumn{1}{c|}{$0$\%}  &$1.065 \ (0.538)$              &$1.304 \ (0.927)$              &$1.365 \ (1.228)$              &$0.823 \ (0.617)$              &$0.175 \ (0.092)$              \\
\multicolumn{1}{c|}{}                                                     & \multicolumn{1}{c|}{$10$\%} &$0.852 \ (0.645)$              &$1.713 \ (1.110)$              &$3.066 \ (0.245)$              &$1.621 \ (0.086)$              &$0.438 \ (0.156)$              \\
\multicolumn{1}{c|}{}                                                     & \multicolumn{1}{c|}{$20$\%} &$0.768 \ (0.419)$              &$2.044 \ (1.026)$              &$2.908 \ (0.159)$              &$1.618 \ (0.185)$              &$0.446 \ (0.142)$              \\ \hline
\multicolumn{1}{c|}{\multirow{3}{*}{AL with Huber (Robust Indirect)}}     & \multicolumn{1}{c|}{$0$\%}  &$0.788 \ (0.466)$              &$1.763 \ (0.770)$              &$2.038 \ (1.576)$              &$0.800 \ (0.799)$              &$0.071 \ (0.082)$              \\
\multicolumn{1}{c|}{}                                                     & \multicolumn{1}{c|}{$10$\%} &$1.917 \ (0.546)$              &$3.883 \ (0.503)$              &$2.279 \ (0.462)$              &$0.992 \ (0.266)$              &$0.668 \ (0.092)$              \\
\multicolumn{1}{c|}{}                                                     & \multicolumn{1}{c|}{$20$\%} &$2.125 \ (2.138)$              &$1.504 \ (1.028)$              &$2.028 \ (1.549)$              &$2.892 \ (3.149)$              &$3.656 \ (2.863)$              \\ \hline
\multicolumn{1}{c|}{\multirow{3}{*}{Classification ($L_{1}$ + Logistic)}} & \multicolumn{1}{c|}{$0$\%}  &$1.135 \ (0.464)$              &$1.757 \ (1.118)$              &$1.918 \ (1.291)$              &$0.412 \ (0.397)$              &$0.158 \ (0.235)$              \\
\multicolumn{1}{c|}{}                                                     & \multicolumn{1}{c|}{$10$\%} &$0.833 \ (0.668)$              &$0.848 \ (0.692)$              &$0.589 \ (0.669)$              &$0.687 \ (0.358)$              &$0.443 \ (0.143)$              \\
\multicolumn{1}{c|}{}                                                     & \multicolumn{1}{c|}{$20$\%} &$0.715 \ (0.451)$              &$1.994 \ (1.213)$              &$\mathbf{0.141 \ (0.149)}$              &$0.312 \ (0.266)$              &$0.381 \ (0.148)$              \\ \hline
\multicolumn{1}{c|}{\multirow{3}{*}{Classification (Boosting)}} & \multicolumn{1}{c|}{$0$\%}  &$\mathbf{0.309 \ (0.396)}$              &$\mathbf{0.482 \ (0.460)}$              &$0.113 \ (0.138)$              &$\mathbf{0.036 \ (0.034)}$              &$0.080 \ (0.049)$              \\
\multicolumn{1}{c|}{}                                                     & \multicolumn{1}{c|}{$10$\%} &$0.622 \ (1.000)$              &$\mathbf{0.328 \ (0.530)}$              &$0.268 \ (0.287)$              &$0.315 \ (0.251)$              &$\mathbf{0.044 \ (0.071)}$              \\
\multicolumn{1}{c|}{}                                                     & \multicolumn{1}{c|}{$20$\%} &$0.571 \ (0.461)$              &$\mathbf{0.307 \ (0.337)}$              &$0.210 \ (0.145)$              &$0.546 \ (0.470)$              &$0.095 \ (0.129)$              \\ \hline
\multicolumn{1}{c|}{\multirow{3}{*}{MMD}}                                 & \multicolumn{1}{c|}{$0$\%}  &$0.756 \ (0.593)$              &$0.668 \ (0.370)$              &$\mathbf{0.085 \ (0.094)}$              &$0.059 \ (0.073)$              &$0.061 \ (0.064)$              \\
\multicolumn{1}{c|}{}                                                     & \multicolumn{1}{c|}{$10$\%} &$0.653 \ (0.984)$              &$0.458 \ (0.527)$              &$0.245 \ (0.267)$              &$0.391 \ (0.247)$              &$0.081 \ (0.100)$              \\
\multicolumn{1}{c|}{}                                                     & \multicolumn{1}{c|}{$20$\%} &$0.774 \ (0.581)$              &$0.980 \ (0.784)$              &$1.320 \ (0.517)$              &$0.796 \ (0.431)$              &$0.246 \ (0.163)$              \\ \hline
\multicolumn{1}{c|}{\multirow{3}{*}{MONK-BCD Fast}}                       & \multicolumn{1}{c|}{$0$\%}  &$0.729 \ (0.365)$              &$0.564 \ (0.611)$              &$0.538 \ (0.896)$              &$0.220 \ (0.128)$              &$0.071 \ (0.109)$              \\
\multicolumn{1}{c|}{}                                                     & \multicolumn{1}{c|}{$10$\%} &$0.792 \ (0.638)$              &$0.931 \ (0.916)$              &$0.678 \ (0.898)$              &$\mathbf{0.138 \ (0.146)}$              & $0.079 \ (0.084)$             \\
\multicolumn{1}{c|}{}                                                     & \multicolumn{1}{c|}{$20$\%} &$0.851 \ (0.709)$              &$1.270 \ (0.950)$              &$1.189 \ (1.080)$              &$0.359 \ (0.802)$              &$0.096 \ (0.073)$              \\ \hline
\multicolumn{1}{c|}{\multirow{3}{*}{$q$-Wasserstein}}                     & \multicolumn{1}{c|}{$0$\%}  &$0.373 \ (0.414)$              &$0.635 \ (0.622)$              &$0.379 \ (0.331)$              &$0.128 \ (0.116)$              &$0.070 \ (0.106)$              \\
\multicolumn{1}{c|}{}                                                     & \multicolumn{1}{c|}{$10$\%} &$1.663 \ (0.553)$              &$3.042 \ (0.736)$              &$2.774 \ (0.320)$              &$1.364 \ (0.206)$              &$0.559 \ (0.139)$              \\
\multicolumn{1}{c|}{}                                                     & \multicolumn{1}{c|}{$20$\%} &$1.871 \ (0.322)$              &$3.255 \ (1.137)$              &$2.688 \ (0.322)$              &$1.392 \ (0.127)$              &$0.629 \ (0.079)$              \\ \hline
\multicolumn{1}{c|}{\multirow{3}{*}{KL-divergence}}                       & \multicolumn{1}{c|}{$0$\%}  &$0.794 \ (0.503)$              &$0.871 \ (0.408)$              &$0.214 \ (0.207)$              &$0.065 \ (0.064)$              &$0.086 \ (0.090)$              \\
\multicolumn{1}{c|}{}                                                     & \multicolumn{1}{c|}{$10$\%} &$\mathbf{0.323 \ (0.341)}$              &$0.911 \ (0.734)$              &$\mathbf{0.238 \ (0.323)}$              &$0.205 \ (0.206)$              &$0.055 \ (0.090)$              \\
\multicolumn{1}{c|}{}                                                     & \multicolumn{1}{c|}{$20$\%} &$0.568 \ (0.344)$              &$0.439 \ (0.383)$              &$0.222 \ (0.257)$              &$\mathbf{0.049 \ (0.050)}$              &$\mathbf{0.074 \ (0.085)}$              \\ \hline
\multicolumn{1}{c|}{\multirow{3}{*}{$\gamma$-divergence}}                 & \multicolumn{1}{c|}{$0$\%}  &$0.639 \ (0.599)$              &$1.114 \ (0.632)$              &$0.169 \ (0.255)$              &$0.051 \ (0.050)$              &$\mathbf{0.052 \ (0.101)}$              \\
\multicolumn{1}{c|}{}                                                     & \multicolumn{1}{c|}{$10$\%} &$0.897 \ (0.500)$              &$0.551 \ (0.581)$              &$0.377 \ (0.514)$              &$0.162 \ (0.205)$              &$0.102 \ (0.133)$              \\
\multicolumn{1}{c|}{}                                                     & \multicolumn{1}{c|}{$20$\%} &$\mathbf{0.350 \ (0.356)}$              &$0.689 \ (0.552)$              &$0.359 \ (0.314)$              &$0.096 \ (0.114)$              &$\mathbf{0.074 \ (0.082)}$             
\end{tabular}
}
\end{table}

\clearpage
\subsubsection{Moving-average Model of Order 2 (MA2)}
The following table shows the experimental results of MSEs for each parameter in the Moving-average Model of Order 2 experiments.
From these results, our method achieves almost a better performance than that of the other baseline methods, especially when the observed data have heavy contamination.

\begin{table}[th]
\label{tab:ma2_mse_indivi}
\centering
\caption{Experimental results of $8$ baseline methods for the Moving-average model of order $2$ on MSE and standard error of each parameter. We performed ABC over $10$ trials in $10$ different datasets. Lower values are better. The scores for $\gamma$-divergence estimator are picked up the best score from all of the experimental results in Figure \ref{fig:gamma_gaussian}-\ref{fig:gamma_gk}. Bold-faces indicate the best score per contamination rate.}
\begin{tabular}{cccc}
\hline
Discrepancy measure                                                  & Outlier                     & $\theta_{1}$ & $\theta_{2}$ \\ \hline \hline
\multicolumn{1}{c|}{\multirow{3}{*}{Indirect}}                       & \multicolumn{1}{c|}{$0$\%}  &$0.008 \ (0.008)$              &$0.004 \ (0.002)$              \\
\multicolumn{1}{c|}{}                                                & \multicolumn{1}{c|}{$10$\%} & $1.679 \ (0.060)$             &$0.508 \ (0.029)$              \\
\multicolumn{1}{c|}{}                                                & \multicolumn{1}{c|}{$20$\%} &$1.737 \ (0.047)$              &$0.514 \ (0.018)$              \\ \hline
\multicolumn{1}{c|}{\multirow{3}{*}{Robust Indirect}}                & \multicolumn{1}{c|}{$0$\%}  &$0.035 \ (0.032)$              &$0.023 \ (0.030)$              \\
\multicolumn{1}{c|}{}                                                & \multicolumn{1}{c|}{$10$\%} &$1.563 \ (0.100)$              & $0.470 \ (0.270)$             \\
\multicolumn{1}{c|}{}                                                & \multicolumn{1}{c|}{$20$\%} &$4.251 \ (1.966)$              & $0.299 \ (0.189)$             \\ \hline
\multicolumn{1}{c|}{\multirow{3}{*}{Classification (L1 + Logistic)}} & \multicolumn{1}{c|}{$0$\%}  & $0.775 \ (0.772)$             &$0.143 \ (0.119)$              \\
\multicolumn{1}{c|}{}                                                & \multicolumn{1}{c|}{$10$\%} &$1.023 \ (0.127)$              &$0.491 \ (0.271)$              \\
\multicolumn{1}{c|}{}                                                & \multicolumn{1}{c|}{$20$\%} &$1.395 \ (0.134)$              &$0.226 \ (0.151)$              \\ \hline
\multicolumn{1}{c|}{\multirow{3}{*}{Classification (Boosting)}}      & \multicolumn{1}{c|}{$0$\%}  & $0.004 \ (0.002)$             & $0.004 \ (0.003)$             \\
\multicolumn{1}{c|}{}                                                & \multicolumn{1}{c|}{$10$\%} & $\mathbf{0.004 \ (0.005)}$             &$0.007 \ (0.008)$              \\
\multicolumn{1}{c|}{}                                                & \multicolumn{1}{c|}{$20$\%} &$\mathbf{0.006 \ (0.006)}$              &$0.009 \ (0.015)$              \\ \hline
\multicolumn{1}{c|}{\multirow{3}{*}{MMD}}                            & \multicolumn{1}{c|}{$0$\%}  &$0.006 \ (0.006)$              &$\mathbf{0.002 \ (0.002)}$              \\
\multicolumn{1}{c|}{}                                                & \multicolumn{1}{c|}{$10$\%} &$0.121 \ (0.025)$              &$0.038 \ (0.036)$              \\
\multicolumn{1}{c|}{}                                                & \multicolumn{1}{c|}{$20$\%} &$0.547 \ (0.089)$              &$0.218 \ (0.063)$             \\ \hline
\multicolumn{1}{c|}{\multirow{3}{*}{MONK-BCD Fast}}                  & \multicolumn{1}{c|}{$0$\%}  &$0.063 \ (0.064)$              &$0.035 \ (0.047)$              \\
\multicolumn{1}{c|}{}                                                & \multicolumn{1}{c|}{$10$\%} &$0.086 \ (0.110)$              &$0.022 \ (0.032)$              \\
\multicolumn{1}{c|}{}                                                & \multicolumn{1}{c|}{$20$\%} &$0.170 \ (0.151)$              & $0.034 \ (0.028)$             \\ \hline
\multicolumn{1}{c|}{\multirow{3}{*}{$q$-Wasserstein}}                & \multicolumn{1}{c|}{$0$\%}  &$0.017 \ (0.013)$              & $\mathbf{0.002 \ (0.004)}$             \\
\multicolumn{1}{c|}{}                                                & \multicolumn{1}{c|}{$10$\%} &$0.153 \ (0.050)$              &$0.357 \ (0.088)$             \\
\multicolumn{1}{c|}{}                                                & \multicolumn{1}{c|}{$20$\%} &$0.423 \ (0.134)$              &$0.442 \ (0.102)$              \\ \hline
\multicolumn{1}{c|}{\multirow{3}{*}{KL-divergence}}                  & \multicolumn{1}{c|}{$0$\%}  &$0.004 \ (0.005)$              &$0.004 \ (0.004)$              \\
\multicolumn{1}{c|}{}                                                & \multicolumn{1}{c|}{$10$\%} &$0.007 \ (0.008)$              &$0.016 \ (0.007)$              \\
\multicolumn{1}{c|}{}                                                & \multicolumn{1}{c|}{$20$\%} & $0.045 \ (0.025)$             &$0.058 \ (0.034)$              \\ \hline
\multicolumn{1}{c|}{\multirow{3}{*}{$\gamma$-divergence (proposed)}}            & \multicolumn{1}{c|}{$0$\%}  &$\mathbf{0.003 \ (0.005)}$              &$0.008 \ (0.009)$              \\
\multicolumn{1}{c|}{}                                                & \multicolumn{1}{c|}{$10$\%} &$0.008 \ (0.006)$              & $\mathbf{0.002 \ (0.002)}$             \\
\multicolumn{1}{c|}{}                                                & \multicolumn{1}{c|}{$20$\%} &$\mathbf{0.006 \ (0.005)}$              & $\mathbf{0.003 \ (0.003)}$            
\end{tabular}
\end{table}

\clearpage
\subsubsection{Multivariate \texorpdfstring{$g$}{Lg}-and-\texorpdfstring{$k$}{Lg} Distribution (GK)}
The following table shows the experimental results of MSEs for each parameter in Multivariate \texorpdfstring{$g$}{Lg}-and-\texorpdfstring{$k$}{Lg} Distribution model experiments.
From these results, our method achieves almost a better performance than that of the other baseline methods, especially when the observed data have heavy contamination.

\begin{table}[th]
\label{tab:GK_mse_indivi}
\centering
\caption{Experimental results of $8$ baseline methods for the Multivariate $g$-and-$k$ distribution model on MSE and standard error of each parameter. We performed ABC over $10$ trials on $10$ different datasets. Lower values are better. The scores for $\gamma$-divergence estimator are picked up the best score from all of the experimental results in Figure \ref{fig:gamma_gaussian}-\ref{fig:gamma_gk}. Bold-faces indicate the best score per contamination rate.}
\scalebox{0.88}{
\begin{tabular}{ccccccc}
\hline
Discrepancy measure                                                       & Outlier                     & $A$ & $B$ & $g$ & $k$ & $\rho$ \\ \hline \hline
\multicolumn{1}{c|}{\multirow{3}{*}{AL (Indirect)}}                       & \multicolumn{1}{c|}{$0$\%}  &$0.080 \ (0.116)$     &$0.119 \ (0.116)$     &$0.505 \ (0.697)$     &$0.063 \ (0.030)$     &$0.009 \ (0.013)$        \\
\multicolumn{1}{c|}{}                                                     & \multicolumn{1}{c|}{$10$\%} &$0.294 \ (0.537)$     &$3.135 \ (1.220)$     &$0.796 \ (0.584)$     &$0.088 \ (0.026)$     &$0.039 \ (0.007)$        \\
\multicolumn{1}{c|}{}                                                     & \multicolumn{1}{c|}{$20$\%} &$1.209 \ (1.668)$     &$4.985 \ (1.142)$     &$0.600 \ (0.569)$     &$0.039 \ (0.036)$     &$0.039 \ (0.005)$        \\ \hline
\multicolumn{1}{c|}{\multirow{3}{*}{AL with Huber (Robust Indirect)}}     & \multicolumn{1}{c|}{$0$\%}  &$0.052 \ (0.053)$     &$0.151 \ (0.172)$     &$0.763 \ (0.534)$     &$\mathbf{0.020 \ (0.016)}$     &$0.008 \ (0.010)$        \\
\multicolumn{1}{c|}{}                                                     & \multicolumn{1}{c|}{$10$\%} &$0.150 \ (0.099)$     &$4.531 \ (0.899)$     &$0.606 \ (0.556)$     &$\mathbf{0.003 \ (0.006)}$     &$0.039 \ (0.003)$        \\
\multicolumn{1}{c|}{}                                                     & \multicolumn{1}{c|}{$20$\%} &$0.248 \ (0.121)$     &$3.439 \ (1.686)$     &$0.546 \ (0.399)$     &$0.110 \ (0.074)$     &$0.017 \ (0.004)$        \\ \hline
\multicolumn{1}{c|}{\multirow{3}{*}{Classification ($L_{1}$ + Logistic)}} & \multicolumn{1}{c|}{$0$\%}  &$0.109 \ (0.045)$     &$0.340 \ (0.083)$     &$1.732 \ (0.739)$     &$2.910 \ (1.808)$     &$0.290 \ (0.232)$        \\
\multicolumn{1}{c|}{}                                                     & \multicolumn{1}{c|}{$10$\%} &$0.397 \ (0.131)$     &$3.217 \ (1.354)$     &$2.362 \ (0.215)$     &$0.016 \ (0.012)$     &$0.209 \ (0.144)$        \\
\multicolumn{1}{c|}{}                                                     & \multicolumn{1}{c|}{$20$\%} &$0.201 \ (0.113)$     &$5.401 \ (0.802)$     &$1.632 \ (0.591)$     &$0.019 \ (0.023)$     &$0.130 \ (0.089)$        \\ \hline
\multicolumn{1}{c|}{\multirow{3}{*}{Classification (Boosting)}}           & \multicolumn{1}{c|}{$0$\%}  &$0.009 \ (0.009)$     &$\mathbf{0.016 \ (0.014)}$     &$\mathbf{0.317 \ (0.382)}$     &$\mathbf{0.020 \ (0.022)}$     &$0.008 \ (0.004)$        \\
\multicolumn{1}{c|}{}                                                     & \multicolumn{1}{c|}{$10$\%} &$0.024 \ (0.028)$     &$0.285 \ (0.279)$     &$0.588 \ (0.511)$     &$0.020 \ (0.017)$     &$0.017 \ (0.008)$        \\
\multicolumn{1}{c|}{}                                                     & \multicolumn{1}{c|}{$20$\%} &$0.035 \ (0.036)$     &$0.377 \ (0.312)$     &$\mathbf{0.447 \ (0.448)}$     &$0.014 \ (0.018)$     &$0.020 \ (0.005)$        \\ \hline
\multicolumn{1}{c|}{\multirow{3}{*}{MMD}}                                 & \multicolumn{1}{c|}{$0$\%}  &$0.021 \ (0.019)$     &$0.130 \ (0.128)$     &$0.958 \ (0.711)$     &$0.063 \ (0.122)$     &$0.026 \ (0.053)$        \\
\multicolumn{1}{c|}{}                                                     & \multicolumn{1}{c|}{$10$\%} &$0.054 \ (0.028)$     &$0.190 \ (0.196)$     &$\mathbf{0.526 \ (0.441)}$     &$0.040 \ (0.041)$     &$0.018 \ (0.005)$        \\
\multicolumn{1}{c|}{}                                                     & \multicolumn{1}{c|}{$20$\%} &$0.299 \ (0.166)$     &$1.729 \ (1.117)$     &$0.714 \ (0.483)$     &$0.021 \ (0.024)$     &$0.033 \ (0.006)$        \\ \hline
\multicolumn{1}{c|}{\multirow{3}{*}{MONK-BCD Fast}}                       & \multicolumn{1}{c|}{$0$\%}  &$0.009 \ (0.011)$     &$0.071 \ (0.146)$     &$0.593 \ (0.457)$     &$1.063 \ (1.919)$     &$0.076 \ (0.143)$        \\
\multicolumn{1}{c|}{}                                                     & \multicolumn{1}{c|}{$10$\%} &$\mathbf{0.009 \ (0.008)}$     &$0.114 \ (0.160)$     &$1.175 \ (0.402)$     &$0.316 \ (0.253)$     &$0.018 \ (0.025)$        \\
\multicolumn{1}{c|}{}                                                     & \multicolumn{1}{c|}{$20$\%} &$0.016 \ (0.013)$     &$0.195 \ (0.494)$     &$0.842 \ (0.526)$     &$0.222 \ (0.237)$     &$0.133 \ (0.162)$        \\ \hline
\multicolumn{1}{c|}{\multirow{3}{*}{$q$-Wasserstein}}                     & \multicolumn{1}{c|}{$0$\%}  &$0.028 \ (0.037)$     &$0.025 \ (0.022)$     &$0.859 \ (0.769)$     &$0.028 \ (0.030)$     &$\mathbf{0.006 \ (0.010)}$        \\
\multicolumn{1}{c|}{}                                                     & \multicolumn{1}{c|}{$10$\%} &$0.190 \ (0.156)$     &$0.502 \ (0.414)$     &$0.722 \ (0.675)$     &$0.087 \ (0.035)$     &$0.023 \ (0.010)$        \\
\multicolumn{1}{c|}{}                                                     & \multicolumn{1}{c|}{$20$\%} &$0.530 \ (0.133)$     &$1.474 \ (0.769)$     &$0.790 \ (0.827)$     &$0.109 \ (0.038)$     &$0.022 \ (0.007)$        \\ \hline
\multicolumn{1}{c|}{\multirow{3}{*}{KL-divergence}}                       & \multicolumn{1}{c|}{$0$\%}  &$\mathbf{0.007 \ (0.006)}$     &$0.042 \ (0.040)$     &$1.103 \ (0.752)$     &$0.040 \ (0.032)$     &$\mathbf{0.006 \ (0.006)}$        \\
\multicolumn{1}{c|}{}                                                     & \multicolumn{1}{c|}{$10$\%} &$0.015 \ (0.022)$     &$0.149 \ (0.348)$     &$1.663 \ (0.545)$     &$0.038 \ (0.028)$     &$0.018 \ (0.010)$        \\
\multicolumn{1}{c|}{}                                                     & \multicolumn{1}{c|}{$20$\%} &$0.066 \ (0.069)$     &$0.993 \ (1.128)$     &$1.766 \ (0.732)$     &$0.030 \ (0.024)$     &$0.033 \ (0.004)$        \\ \hline
\multicolumn{1}{c|}{\multirow{3}{*}{$\gamma$-divergence}}                 & \multicolumn{1}{c|}{$0$\%}  &$0.046 \ (0.016)$     &$0.065 \ (0.038)$     &$1.105 \ (0.591)$     &$0.080 \ (0.135)$     &$\mathbf{0.006 \ (0.005)}$        \\
\multicolumn{1}{c|}{}                                                     & \multicolumn{1}{c|}{$10$\%} &$0.033 \ (0.014)$     &$\mathbf{0.041 \ (0.039)}$     &$1.028 \ (0.757)$     &$0.030 \ (0.029)$     &$\mathbf{0.007 \ (0.006)}$        \\
\multicolumn{1}{c|}{}                                                     & \multicolumn{1}{c|}{$20$\%} &$\mathbf{0.008 \ (0.008)}$     &$\mathbf{0.020 \ (0.016)}$     &$0.809 \ (0.575)$     &$\mathbf{0.007 \ (0.008)}$     &$\mathbf{0.009 \ (0.004)}$       
\end{tabular}
}
\end{table}

\clearpage
\subsubsection{All of Simulation Error}
\label{app:simulation_error}
The following table shows the experimental results of simulation errors (energy distance) in the experiments of Section \ref{sec:experiments}.
From these results, our method also outperforms the other baseline methods, especially when the observed data have heavy contamination.

\begin{table}[th]
\label{table:benchmark_simulation}
\centering
\caption{Experimental results of $8$ baseline methods for $5$ benchmark models on simulation error (energy distance) and its standard error. We performed ABC over $10$ trials on $10$ different datasets. Lower values are better. The scores of $\gamma$-divergence estimator are picked up from the all of experimental results in Figure \ref{fig:gamma_gm_energy}-\ref{fig:gamma_gk_energy}. Bold-faces indicate the best score per contamination rate.
}
\scalebox{0.88}{
\begin{tabular}{ccccccc}
\hline
Discrepancy measure                                                  & Outlier                     & GM & MG1 & BB & MA2 & GK \\ \hline \hline
\multicolumn{1}{c|}{\multirow{3}{*}{AL (Indirect)}}                       & \multicolumn{1}{c|}{$0$\%}  &$0.199 \ (0.169)$    &$0.270 \ (0.114)$     &$0.070 \ (0.030)$    & $0.056 \ (0.017)$    &$0.260 \ (0.093)$    \\
\multicolumn{1}{c|}{}                                                & \multicolumn{1}{c|}{$10$\%} &$0.349 \ (0.244)$    &$0.408 \ (0.182)$     &$0.254 \ (0.034)$    &$0.475 \ (0.014)$     &$0.460 \ (0.211)$    \\
\multicolumn{1}{c|}{}                                                & \multicolumn{1}{c|}{$20$\%} &$0.263 \ (0.082)$    &$0.906 \ (0.225)$     &$0.251 \ (0.026)$    &$0.501 \ (0.022)$     &$0.724 \ (0.351)$    \\ \hline
\multicolumn{1}{c|}{\multirow{3}{*}{AL with Huber (Robust Indirect)}}                & \multicolumn{1}{c|}{$0$\%}  &$0.200 \ (0.221)$    &$0.223 \ (0.090)$     &$0.063 \ (0.019)$    &$0.064 \ (0.015)$     &$0.300 \ (0.144)$    \\
\multicolumn{1}{c|}{}                                                & \multicolumn{1}{c|}{$10$\%} &$0.966 \ (0.040)$    &$0.345 \ (0.101)$     &$0.300 \ (0.018)$    &$0.466 \ (0.033)$     &$0.470 \ (0.038)$    \\
\multicolumn{1}{c|}{}                                                & \multicolumn{1}{c|}{$20$\%} &$1.005 \ (0.043)$    &$0.509 \ (0.168)$     &$0.232 \ (0.061)$    &$0.403 \ (0.028)$     &$0.689 \ (0.124)$    \\ \hline
\multicolumn{1}{c|}{\multirow{3}{*}{Classification ($L_{1}$ + Logistic)}} & \multicolumn{1}{c|}{$0$\%}  &$0.157 \ (0.027)$    &$0.453 \ (0.019)$     &$0.066 \ (0.021)$    &$0.180 \ (0.065)$     &$0.422 \ (0.086)$    \\
\multicolumn{1}{c|}{}                                                & \multicolumn{1}{c|}{$10$\%} &$0.434 \ (0.148)$    &$0.605 \ (0.049)$     &$0.132 \ (0.039)$    &$0.354 \ (0.028)$     &$0.678 \ (0.124)$    \\
\multicolumn{1}{c|}{}                                                & \multicolumn{1}{c|}{$20$\%} &$0.443 \ (0.125)$    &$0.779 \ (0.079)$     &$0.142 \ (0.028)$    &$0.413 \ (0.040)$     &$0.873 \ (0.058)$    \\ \hline
\multicolumn{1}{c|}{\multirow{3}{*}{Classification (Boosting)}}      & \multicolumn{1}{c|}{$0$\%}  &$0.112 \ (0.006)$    &$0.169 \ (0.104)$     &$0.042 \ (0.030)$    &$\mathbf{0.048 \ (0.018)}$     &$\mathbf{0.138 \ (0.048)}$    \\
\multicolumn{1}{c|}{}                                                & \multicolumn{1}{c|}{$10$\%} &$0.150 \ (0.062)$    &$0.359 \ (0.130)$     & $0.049 \ (0.023)$   &$\mathbf{0.052 \ (0.015)}$     &$0.193 \ (0.068)$    \\
\multicolumn{1}{c|}{}                                                & \multicolumn{1}{c|}{$20$\%} &$0.273 \ (0.100)$    &$0.293 \ (0.106)$     &$0.052 \ (0.024)$    &$0.062 \ (0.024)$     &$0.181 \ (0.069)$    \\ \hline
\multicolumn{1}{c|}{\multirow{3}{*}{MMD}}                            & \multicolumn{1}{c|}{$0$\%}  &$0.059 \ (0.042)$    &$0.233 \ (0.075)$     &$0.055 \ (0.026)$    &$0.055 \ (0.024)$     &$0.231 \ (0.076)$    \\
\multicolumn{1}{c|}{}                                                & \multicolumn{1}{c|}{$10$\%} &$0.249 \ (0.120)$    &$0.275 \ (0.098)$     & $0.048 \ (0.023)$   &$0.121 \ (0.026)$     &$0.317 \ (0.093)$    \\
\multicolumn{1}{c|}{}                                                & \multicolumn{1}{c|}{$20$\%} &$0.229 \ (0.119)$    &$0.593 \ (0.084)$     &$0.070 \ (0.025)$    &$0.262 \ (0.031)$     &$0.419 \ (0.116)$    \\ \hline
\multicolumn{1}{c|}{\multirow{3}{*}{MONK-BCD Fast}}                  & \multicolumn{1}{c|}{$0$\%}  &$0.283 \ (0.069)$    &$0.312 \ (0.130)$     &$0.049 \ (0.030)$    &$0.066 \ (0.017)$     &$0.393 \ (0.402)$    \\
\multicolumn{1}{c|}{}                                                & \multicolumn{1}{c|}{$10$\%} &$0.279 \ (0.086)$    &$0.266 \ (0.189)$     &$0.069 \ (0.041)$    &$0.073 \ (0.032)$     &$0.295 \ (0.156)$    \\
\multicolumn{1}{c|}{}                                                & \multicolumn{1}{c|}{$20$\%} &$0.262 \ (0.091)$    &$0.362 \ (0.251)$     &$0.075 \ (0.040)$    &$0.094 \ (0.031)$     &$0.245 \ (0.174)$    \\ \hline
\multicolumn{1}{c|}{\multirow{3}{*}{$q$-Wasserstein}}                & \multicolumn{1}{c|}{$0$\%}  &$\mathbf{0.051 \ (0.021)}$    &$0.200 \ (0.060)$     &$\mathbf{0.037 \ (0.018)}$    &$0.066 \ (0.027)$     &$0.215 \ (0.082)$    \\
\multicolumn{1}{c|}{}                                                & \multicolumn{1}{c|}{$10$\%} &$0.671 \ (0.289)$    &$0.175 \ (0.041)$     &$0.298 \ (0.020)$    &$0.238 \ (0.035)$     &$0.344 \ (0.088)$    \\
\multicolumn{1}{c|}{}                                                & \multicolumn{1}{c|}{$20$\%} &$0.755 \ (0.213)$    &$0.599 \ (0.098)$     &$0.307 \ (0.018)$    &$0.320 \ (0.051)$     &$0.587 \ (0.067)$    \\ \hline
\multicolumn{1}{c|}{\multirow{3}{*}{KL-divergence}}                  & \multicolumn{1}{c|}{$0$\%}  &$0.066 \ (0.024)$    &$0.125 \ (0.050)$     &$0.055 \ (0.024)$    &$0.064 \ (0.016)$     &$0.198 \ (0.074)$    \\
\multicolumn{1}{c|}{}                                                & \multicolumn{1}{c|}{$10$\%} &$0.098 \ (0.079)$    &$0.178 \ (0.094)$     &$\mathbf{0.041 \ (0.015)}$    &$0.073 \ (0.027)$     &$0.155 \ (0.081)$    \\
\multicolumn{1}{c|}{}                                                & \multicolumn{1}{c|}{$20$\%} &$0.085 \ (0.044)$    &$0.322 \ (0.123)$     &$\mathbf{0.038 \ (0.025)}$    &$0.117 \ (0.022)$     &$0.271 \ (0.100)$    \\ \hline
\multicolumn{1}{c|}{\multirow{3}{*}{$\gamma$-divergence (proposed)}}            & \multicolumn{1}{c|}{$0$\%}  &$0.060 \ (0.028)$    &$\mathbf{0.096 \ (0.042)}$     &$0.066 \ (0.026)$    &$0.049 \ (0.018)$     &$0.195 \ (0.030)$    \\
\multicolumn{1}{c|}{}                                                & \multicolumn{1}{c|}{$10$\%} &$\mathbf{0.076 \ (0.044)}$    &$\mathbf{0.099 \ (0.041)}$     &$0.048 \ (0.022)$    &$0.055 \ (0.018)$     &$\mathbf{0.138 \ (0.047)}$    \\
\multicolumn{1}{c|}{}                                                & \multicolumn{1}{c|}{$20$\%} &$\mathbf{0.060 \ (0.018)}$    &$\mathbf{0.121 \ (0.085)}$     &$0.043 \ (0.019)$    &$\mathbf{0.060 \ (0.017)}$     &$\mathbf{0.140 \ (0.439)}$
\end{tabular}
}
\end{table}

\begin{figure}
    \centering
    \includegraphics[scale=0.25]{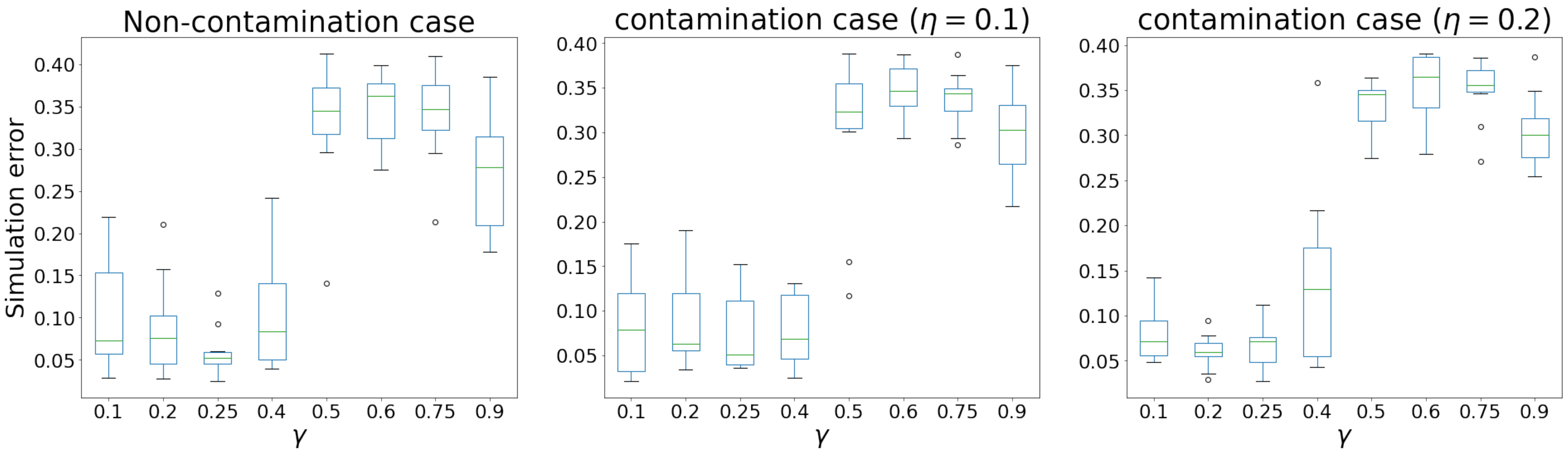}
    \caption{All of the experimental results of our method for the GM model based on simulation error.}
    \label{fig:gamma_gm_energy}
\end{figure}

\begin{figure}
    \centering
    \includegraphics[scale=0.25]{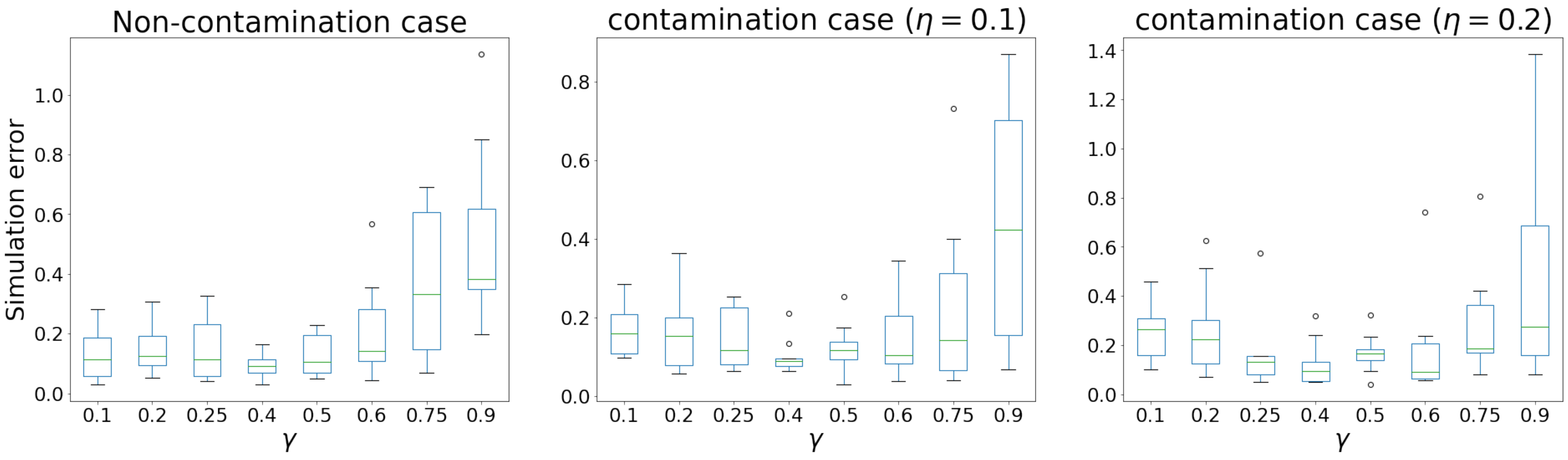}
    \caption{All of the experimental results of our method for the MG1 model based on simulation error.}
    \label{fig:gamma_mg1_energy}
\end{figure}

\begin{figure}
    \centering
    \includegraphics[scale=0.25]{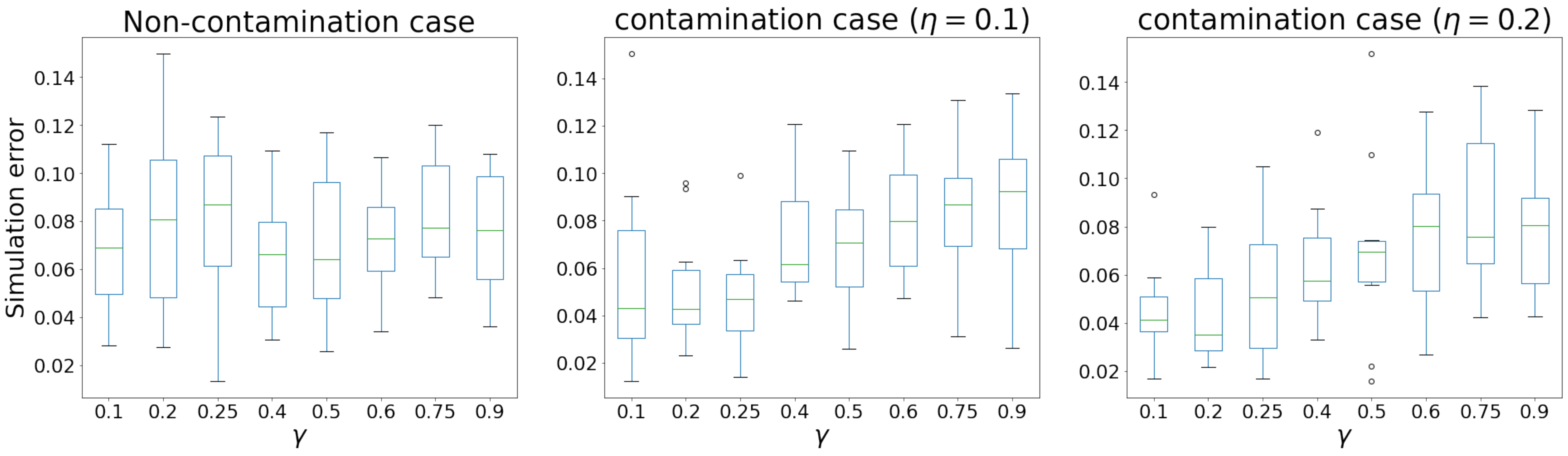}
    \caption{All of the experimental results of our method for the BB model based on simulation error.}
    \label{fig:gamma_bb_energy}
\end{figure}

\begin{figure}[htbp]
    \centering
    \includegraphics[scale=0.25]{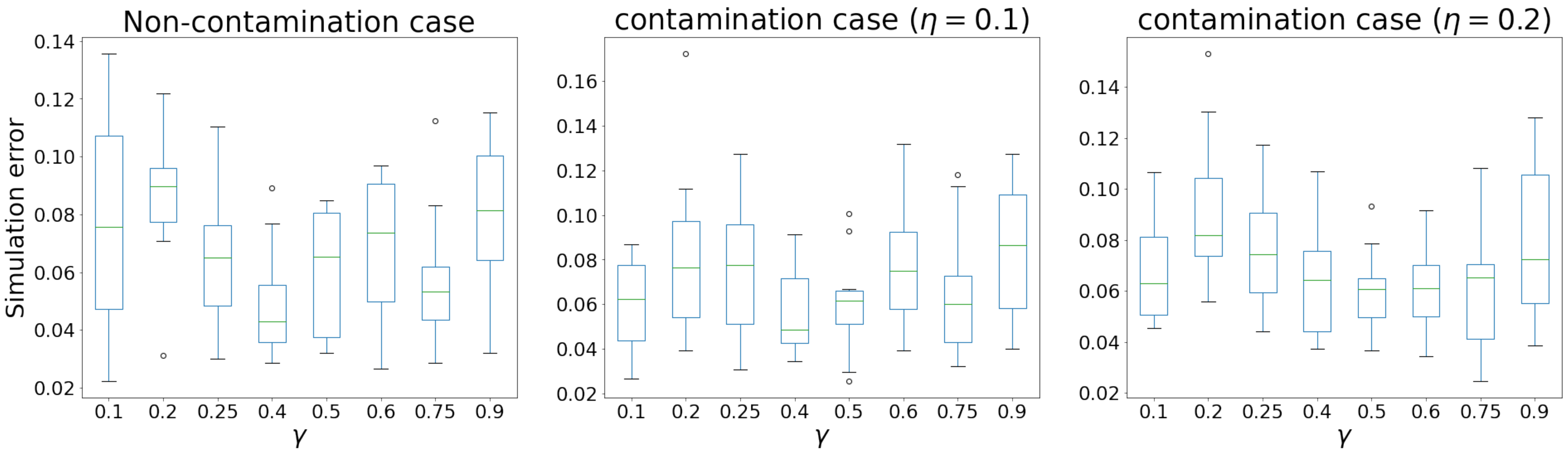}
    \caption{All of the experimental results of our method for the MA2 model based on simulation error.}
    \label{fig:gamma_ma2_energy}
\end{figure}
\begin{figure}[htbp]
    \centering
    \includegraphics[scale=0.25]{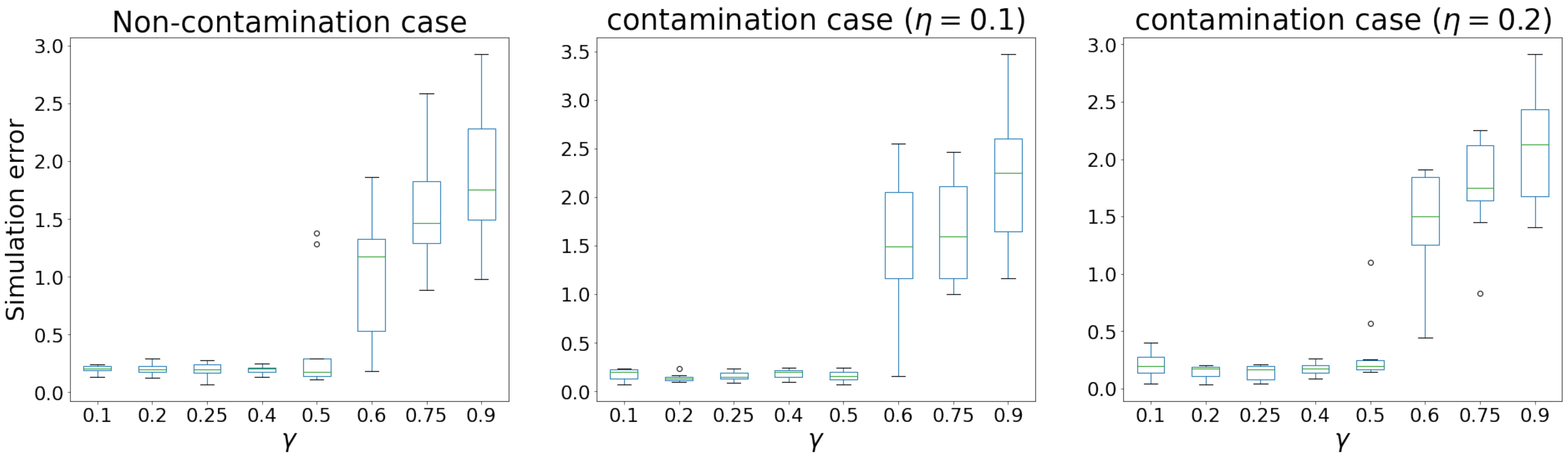}
    \caption{All of the experimental results of our method for the GK model based on simulation error.}
    \label{fig:gamma_gk_energy}
\end{figure}


\clearpage
\subsection{ABC posterior via our method and the second-best method}
\label{app:abc_post}
\diff{In this section, we report the ABC posterior distributions of our method for all experiments in Section~\ref{sec:experiments} when $\eta = 0.2$, and compare them with those of the second-best method.}

\subsubsection{Gaussian Mixture Model (GM)}

\begin{figure}[th]
    \centering
    \includegraphics[scale=0.23]{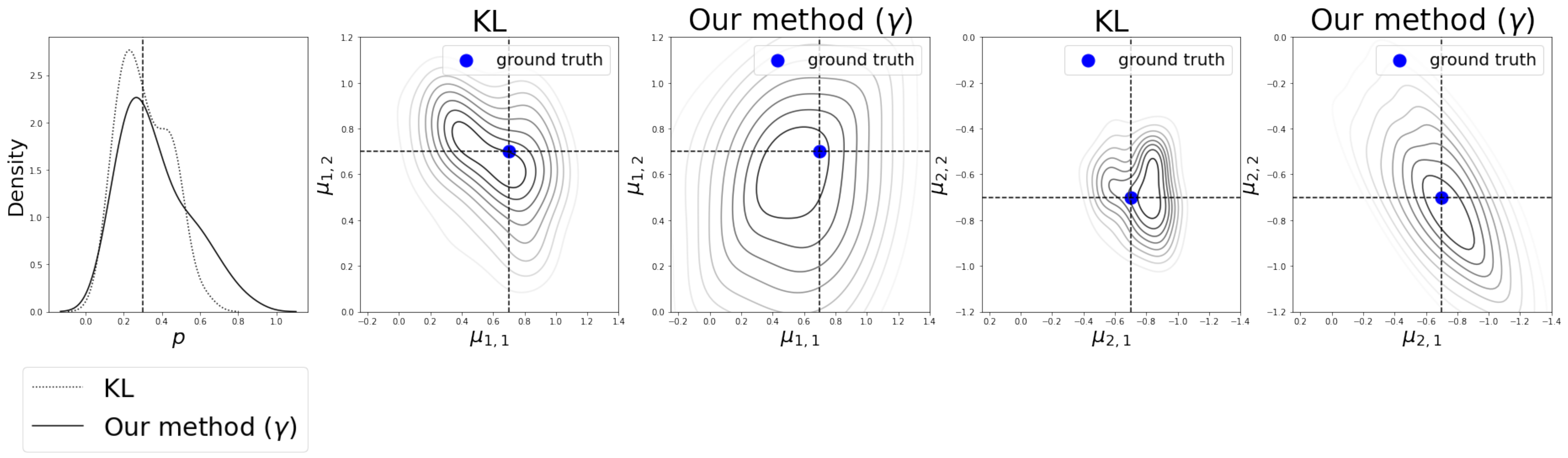}
    \caption{ABC posterior via our method and KL method.}
    \label{fig:pos_gm}
\end{figure}

\subsubsection{\textit{M}/\textit{G}/1-queueing Model (MG1)}
\begin{figure}[th]
    \centering
    \includegraphics[scale=0.23]{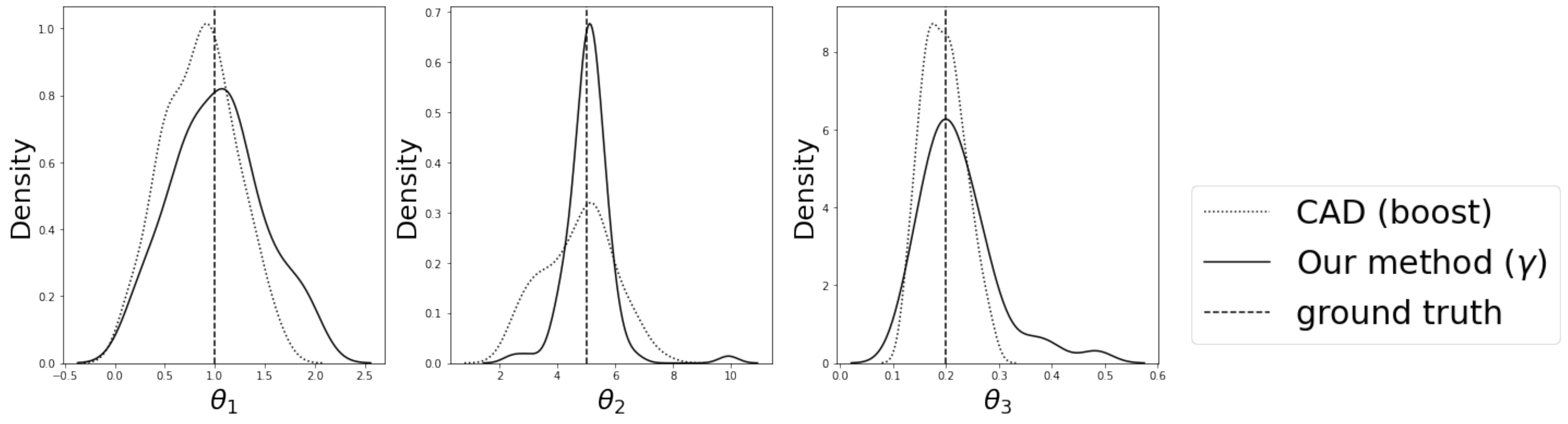}
    \caption{ABC posterior via our method and classification method with boosting.}
    \label{fig:pos_mg}
\end{figure}

\subsubsection{Bivariate Beta Model (BB)}
\begin{figure}[th]
    \centering
    \includegraphics[scale=0.23]{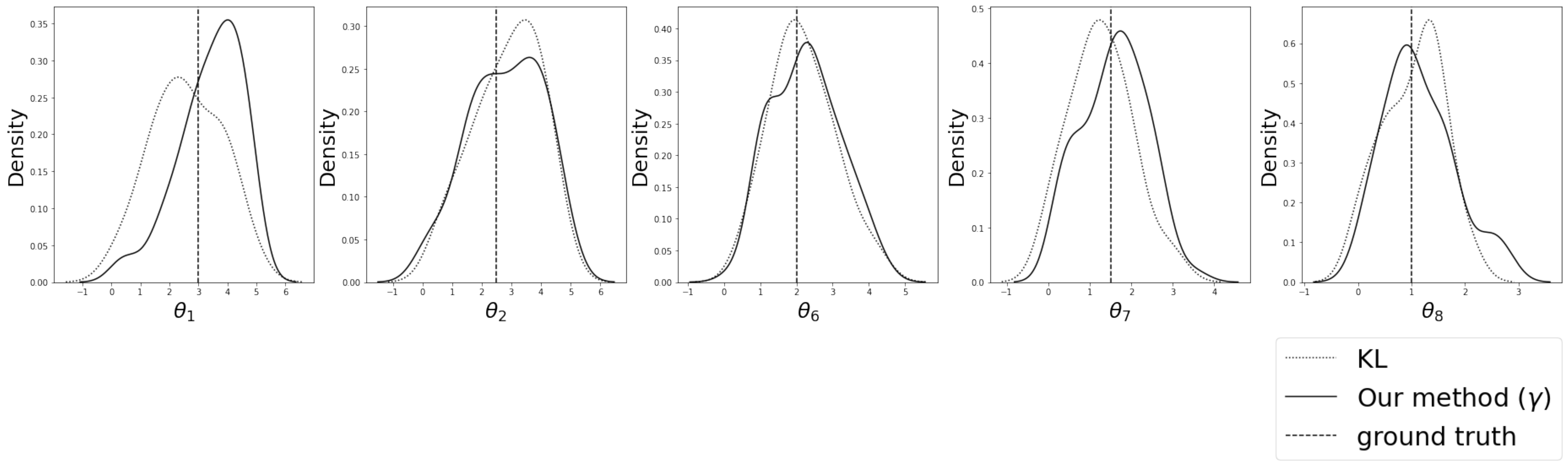}
    \caption{ABC posterior via our method and KL method.}
    \label{fig:pos_bb}
\end{figure}

\clearpage
\subsubsection{Moving-average Model of Order 2 (MA2)}
\begin{figure}[th]
    \centering
    \includegraphics[scale=0.23]{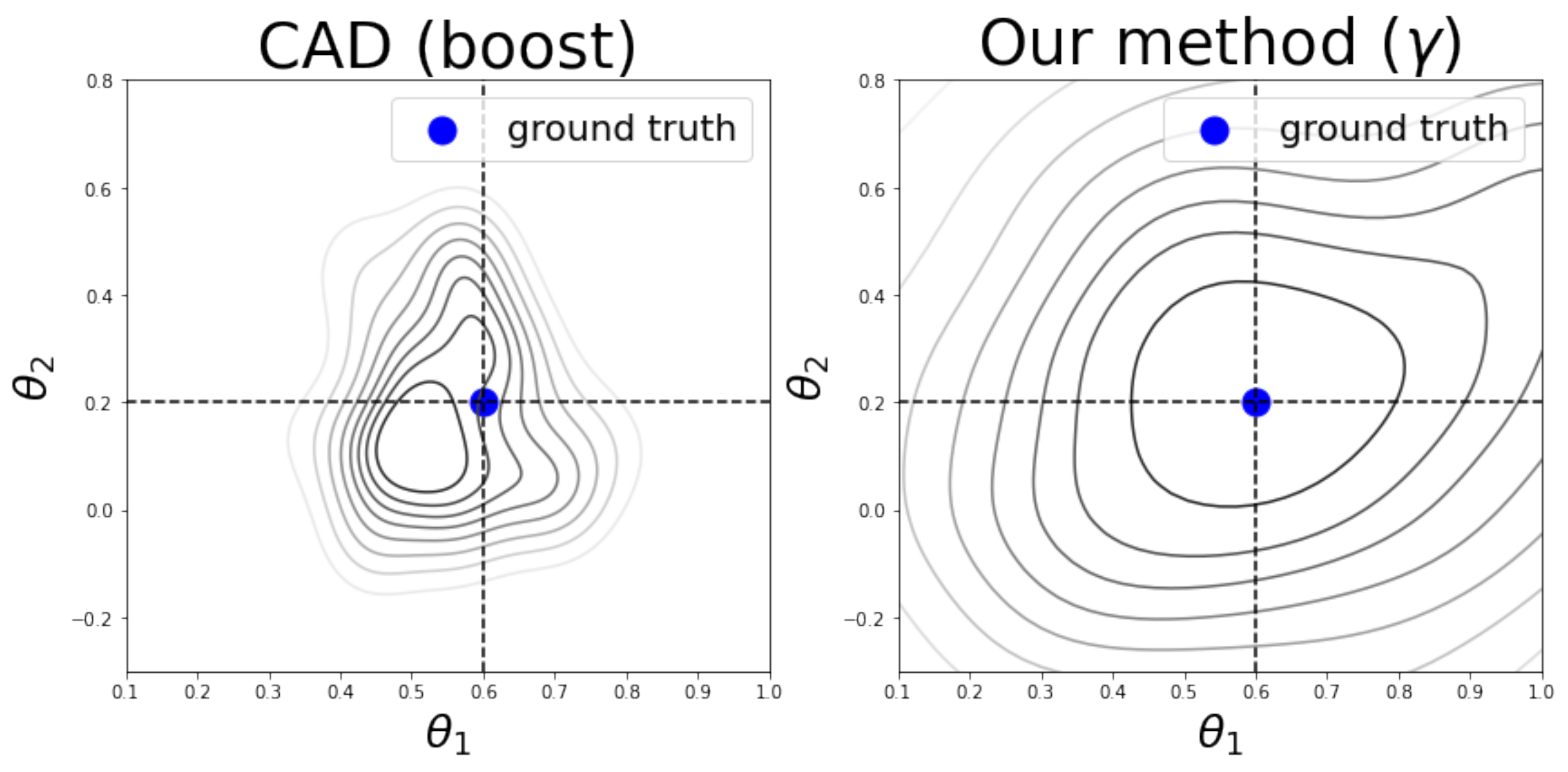}
    \caption{ABC posterior via our method and classification method with boosting.}
    \label{fig:pos_ma2}
\end{figure}

\subsubsection{Multivariate \texorpdfstring{$g$}{Lg}-and-\texorpdfstring{$k$}{Lg} Distribution (GK)}

\begin{figure}[th]
    \centering
    \includegraphics[scale=0.23]{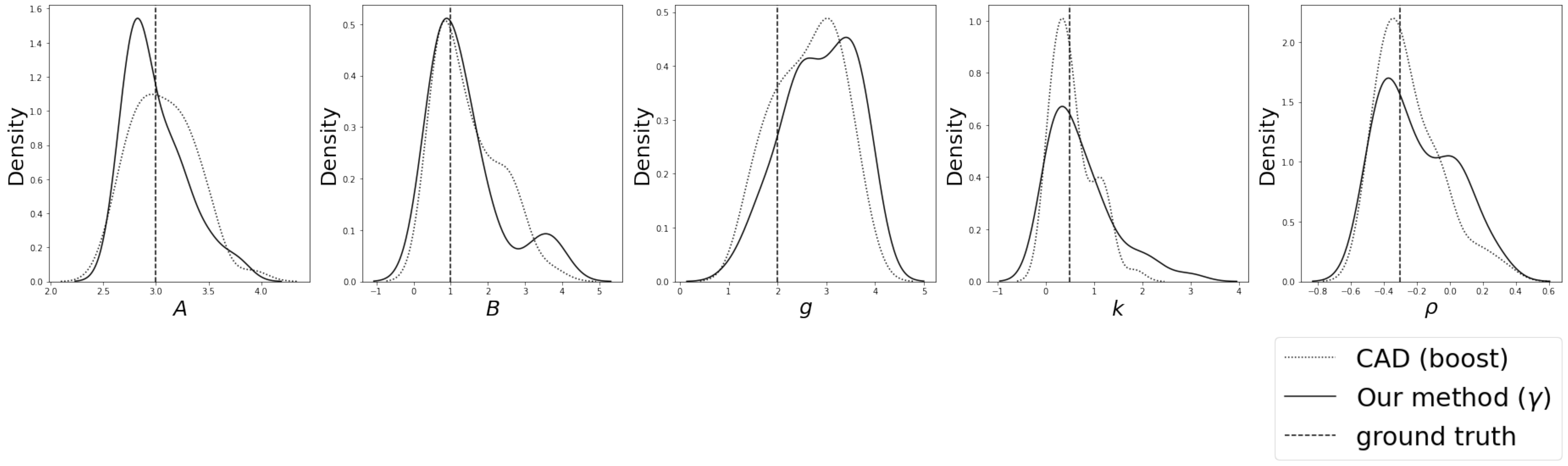}
    \caption{ABC posterior via our method and classification method with boosting.}
    \label{fig:pos_gk}
\end{figure}

\end{document}